\newtheorem{theorem}{Theorem}
\newcommand{\x}{\mathbf{x}}
\newcommand{\y}{\mathbf{y}}
\newcommand{\f}{\mathbf{f}}
\newcommand{\g}{\mathbf{g}}
\newcommand{\R}{\mathbb{R}}
\newcommand{\EX}{\mathbb{E}}
\title{Prediction Hubs are Context-Informed Frequent Tokens in LLMs}
\author{Beatrix M.~G.~Nielsen \\
  Technical University of Denmark \\
  \texttt{bmgi@dtu.dk} \\\And
  Iuri Macocco \\
  Universitat Pompeu Fabra \\
  \texttt{iuri.macocco@upf.edu} \\\And
   Marco Baroni \\
  Universitat Pompeu Fabra/ICREA \\
  \texttt{marco.baroni@upf.edu} \\
  }
\begin{document}
\maketitle
\begin{abstract}
Hubness, the tendency for a few points to be among the nearest neighbours of a disproportionate number of other points, commonly arises when applying standard distance measures to high-dimensional data, often negatively impacting distance-based analysis. As autoregressive large language models (LLMs) operate on high-dimensional representations, we ask whether they are also affected by hubness. We first prove that the only large-scale representation comparison operation performed by LLMs, namely that between context and unembedding vectors to determine continuation probabilities, is not characterized by the concentration of distances phenomenon that typically causes the appearance of nuisance hubness. We then empirically show that this comparison still leads to a high degree of hubness, but the hubs in this case do not constitute a disturbance. They are rather the result of context-modulated frequent tokens often appearing in the pool of likely candidates for next token prediction. However, when other distances are used to compare LLM representations, we do not have the same theoretical guarantees, and, indeed, we see nuisance hubs appear. There are two main takeaways. First, hubness, while omnipresent in high-dimensional spaces, is not a negative property that needs to be mitigated when LLMs are being used for next token prediction. 
Second, when comparing representations from LLMs using Euclidean or cosine distance, there is a high risk of nuisance hubs and practitioners should use mitigation techniques if relevant.
\end{abstract}

\section{Introduction}

Hubness is a phenomenon which occurs in high-dimensional data \citep{radovanovic2010hubs}, where some data points (the hubs) are in the k nearest neighbours of many other points while most points (the anti-hubs) are in the k nearest neighbours of few or no other points. Hubness has been found in many different types of data: for example in time-series, biology and image processing \citep{tomavsev2011role,tomavsev2014hubness} and, in relation to text, in bag-of-words embeddings \citep{radovanovic2010hubs,schnitzer2012local}, dense word embeddings \citep{Dinu2014ImprovingZL}, dense sentence embeddings \citep{pmlr-v233-nielsen24a} and cross-modal embeddings \citep{bogolin2022cross}. Hubs arise due to intrinsic properties of certain distance measures applied to high-dimensional spaces, and they are typically considered a nuisance, as they obfuscate the genuine semantic landscape of the data of interest. Consequently, there is a general interest in techniques to reduce the hubness of a representation space \citep[see, for instance,][]{feldbauer2019comprehensive}.

Autoregressive large language models (LLMs) also trade in high-dimensional representations, and it is thus natural to ask whether hubs emerge in their distance computations. This is the question we answer in this study. In order to address it, it is fundamental to distinguish between the comparison operations a model is effectively performing when engaging in next-token prediction and distance-based comparisons we might decide to compute from its representations.

Concerning the distance-based comparisons actually performed by a standard autoregressive transformer-based LLM \citep{elhage2021mathematical}, we note that the model prediction is accomplished through the softmaxed dot product between a context representation and each row of the unembedding matrix. This operation effectively determines a rank over the whole token vocabulary of a model (typically made up of thousands of elements), and it can be seen as a distance-based measure that could be affected by nuisance hubs.\footnote{Technically, another dot product is computed, within the attention modules, between the query vector of a token and the key vectors of the preceding tokens. Since in this case the potential ``neighbours'' are constrained to be the tokens in the preceding context, which are meaningful elements (as long as we are looking at meaningful text), we do not expect nuisance hubs to affect this operation.}

We first present a theoretical analysis of the softmaxed context-unembedding dot product operation, which defines a measure that we will call, from now on, \textit{probability distance}. We show that probability distance, under reasonable assumptions, is not affected by the \textit{concentration of distances} phenomenon that typically leads to nuisance hubness in high-dimensional spaces. Interestingly, we also find, empirically, that probability distance is still characterized by high hubness, but these hubs are not noise. Instead, they correspond to context-modulated frequent tokens that are often reasonable guesses, given that natural language text is characterized by very skewed word distributions \citep{Baayen:2001}. Indeed, when the most likely continuation according to the model is a hub, this prediction is often the correct one.

On the other hand, a researcher might be interested in performing other similarity comparisons between inner representations of a LLM: for example, looking for the nearest neighbours of a sentence, as represented by its hidden-activation last-token vector, or of a vocabulary entry, as represented in the unembedding matrix.\footnote{We focus on the unembedding matrix because it is the one we are also studying in the context of probability distance computations, but we expect similar trends to emerge for the embedding matrix as well.} It is already theoretically known that, when using Euclidean distance in this context, hubs might arise due to concentration of distances. We confirm empirically that such measurements are generally affected by nuisance hubness, although, surprisingly, concentration of distances is not observed in all cases.

Our main contributions are as follows:

\begin{itemize}
    \item We present the first theoretical and empirical analysis of hubness in autoregressive, transformer-based LLMs;
    \item We show that the hubs that arise in the prediction computations of the model are not a trivial effect of concentration of distances, but reflect a guessing heuristic exploiting the skewed nature of word frequency distributions, and should thus not be eliminated;
    \item We show that other similarity computations involving LLM representations are instead affected by nuisance hubness, and thus they should only be performed in combination with hubness reduction techniques.
\end{itemize}

\section{Related Work}
\citet{radovanovic2010hubs} showed the ubiquity of hubs in many different kinds of datasets. 
Hubness is a cause of concern, as it can negatively impact many common tasks in data analysis and machine learning, such as regression, classification, outlier detection and clustering. Hubness was also shown to hinder the performance of nearest-neighbour algorithms in speech recognition, recommendation and multimedia retrieval \citep[see][and references therein]{feldbauer2019comprehensive}. Problematic hubness also occurs in distributed text representations analogous to those produced by a LLM. For example \citet{Dinu2014ImprovingZL}, \citet{smith2017offline}, \citet{lample2018word}, \citet{huang2020improving} and \citet{pmlr-v233-nielsen24a} studied hubness in word and text embeddings, while \citet{bogolin2022cross}, \citet{wang-etal-2023-balance} and \citet{chowdhury-etal-2024-nearest} looked at hubness in multimodal language models and cross-modal retrieval. 

Given the problems posed by hubs, various hubness reduction methods have been proposed, for example Local Scaling \citep{zelnik2004self}, Mutual Proximity \citep{schnitzer2012local}, Globally Corrected Rank \citep{Dinu2014ImprovingZL}, Inverted Softmax \citep{smith2017offline}, Cross-domain Similarity Local Scaling \citep{lample2018word}, Hubness Nearest Neighbor Search \citep{huang2020improving}, Querybank Normalisation \citep{bogolin2022cross}, DBNorm \citep{wang-etal-2023-balance}, Dual Inverted Softmax \citep{wang-etal-2023-balance}, F-norm \citep{pmlr-v233-nielsen24a} and Nearest Neighbor Normalization \citep{chowdhury-etal-2024-nearest}. These methods apply different strategies to reduce hubness and its effects. For example, Mutual Proximity makes the nearest neighbour relation more symmetric by considering the joint probability of two points being each other's nearest neighbours conditioned on the distances to all other points. On the other hand, F-norm forces the data to follow a normal distribution in each dimension and normalizes the lengths of the vectors, thus making the data closer to a distribution which does not usually exhibit hubness. As a further example, Globally Corrected Rank reverses similarity queries, so that the nearest neighbour of a point $x$ is the point $y$ to which $x$ is nearest, among all possible candidates (as opposed to the point that is nearest to $x$). Since many points are close to hubs (in relative terms), hubs are unlikely to have $x$ among their nearest neighbours. Many of these methods have been systematically compared by \citet{feldbauer2019comprehensive} and \citet{pmlr-v233-nielsen24a}, among others. 

As shown by the plethora of hubness reduction techniques, the focus has so far been on mitigating hubness, with little attention devoted to the question of whether hubness is actually always a nuisance phenomenon to be mitigated.

\section{Theoretical preliminaries}

We first define the $k$-occurrence, $N_k$, as in \citet{radovanovic2010hubs}. Given a set of points, the $k$-occurrence of a specific point $x$, $N_k(x)$, is the number of points for which $x$ is in the $k$-nearest neighbours. We define hubs as points, $h$, with high $k$-occurrence, i.e., where $N_k(h)$ is large. To get a sense of which values of $N_k(x)$ should be considered large, we can analyze the distribution of the $k$-occurrences of a dataset. If the neighbourhood relation is relatively symmetric, and most points are in the $k$ nearest neighbours of $k$ other points, the distribution of $k$-occurrences will have a peak at $k$ and also be relatively symmetric, see Fig.~\ref{fig:concentration_dist_syn_data} (Bottom). This is the usual case in low dimensions. However, if we have some points, hubs, with a $k$-occurrence much larger than $k$, we will get a skewed distribution. Thus, like in \citet{radovanovic2010hubs} and \citet{feldbauer2019comprehensive}, we use the skewness of the distribution of $k$-occurrences ($k$-skew) to measure the hubness of a dataset. Recall that for a collection of $n$ data points, $\mathbf{x}$, the skewness is calculated as
\begin{align}
    \text{skew}(\mathbf{x}) =\frac{1}{n} \sum_{i = 1}^n\left(\frac{\mathbf{x}_i - \mu_{\mathbf{x}}}{\sigma_{\mathbf{x}}}\right)^3
\end{align}
where $\mu_{\mathbf{x}}$ is the mean and $\sigma_{\mathbf{x}}$ is the standard deviation of $\mathbf{x}$. If the $k$-occurrence distribution is completely symmetric, we get a $k$-skew of 0.   

\subsection{Hubness and concentration of distances}
Concentration of distances happens when the difference between the largest and smallest distance to a point goes to zero as the dimension increases. Necessary and sufficient conditions for this to happen have been presented in \citet{beyer1999nearest,durrant2009nearest}. When concentration of distances occurs, for every query point, we have that every other point is almost equally far away, see Fig.~\ref{fig:concentration_dist_syn_data} (Top). 

\begin{figure}[htb]
  \includegraphics[width=\columnwidth]{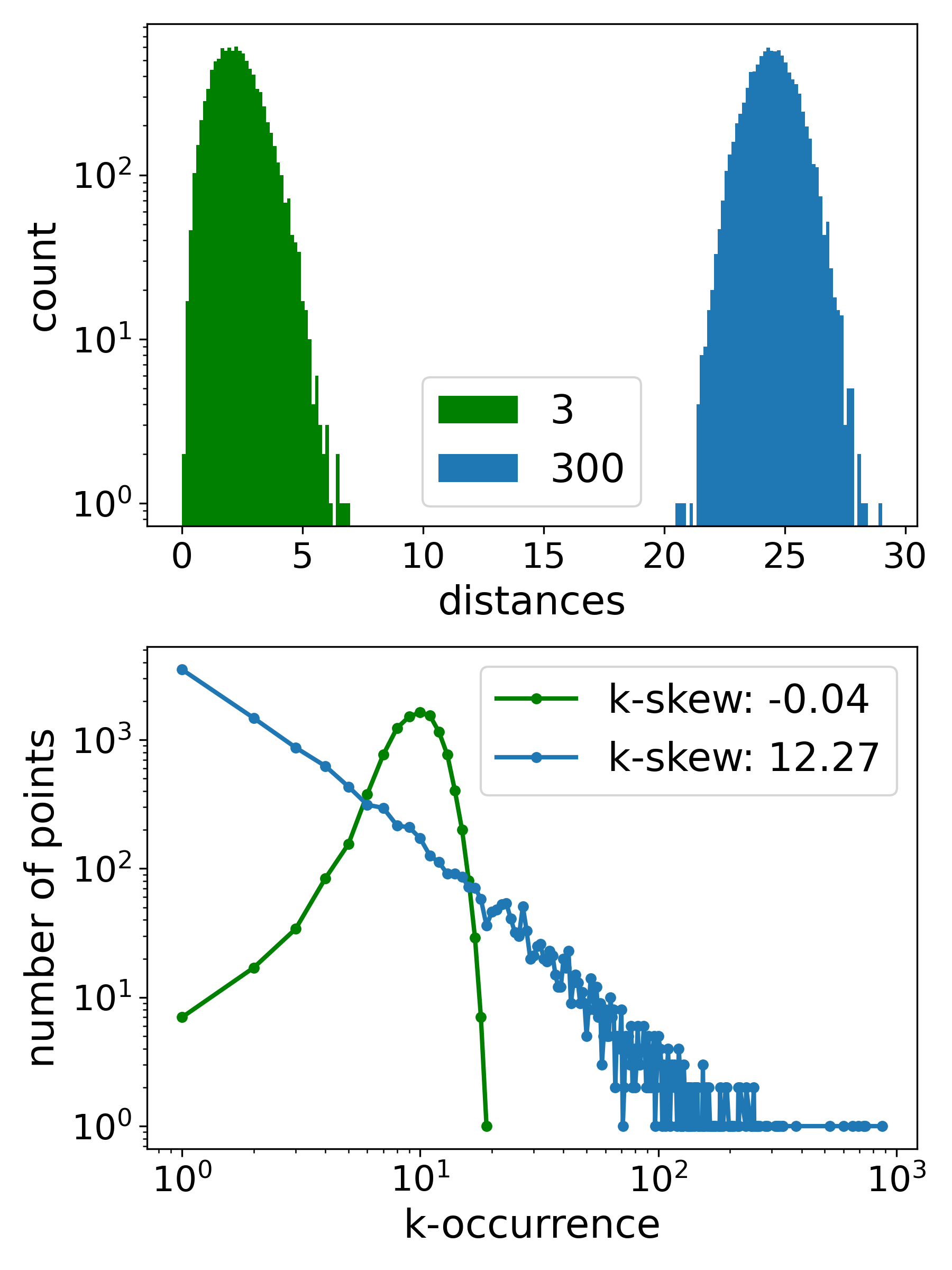}
  \caption{Illustrative example of concentration of distances and k-occurrence. (Top) Distribution of 10,000 Euclidean distances between query and comparison points from a standard Gaussian in 3 and 300 dimensions. In 300  dimensions, no pair of points has a distance between 0 and 20, and most have a distance around 25, so the distances ``concentrate''. (Bottom) K-occurrence distributions for the data in (Top). For 3 dimensions, k-skew is close to 0, so the neighbour relation is symmetric. For 300 dimensions, k-skew is quite high (about 12), so the neighbour relation is very skewed in accordance with the data exhibiting a concentration of distances.}
  \label{fig:concentration_dist_syn_data}
\end{figure}

A first effect of the concentration of distances is that, while every point will, trivially, still have a nearest neighbour, just adding a small amount of noise is likely to change which points are the closest. Another consequence is that, in high dimension, all points will be close to lying on a hypersphere, and be quite sparsely distributed. If we take a point which is slightly closer to the mean of the data than most other points, then this point will now be the closest neighbour of many other points (although it is still quite far away from everything), i.e.,  this point will be a hub. 

Therefore, if we are attempting to compare high-dimensional representations using a distance measure which exhibits concentration of distances, we will get that most representations are far away from each other. However, a few hubs will be the nearest neighbours of many other representations, with no guarantee that they are close in any meaningful sense. We call this kind of hubs, solely arising due to concentration of distances, \textit{nuisance hubs}.

\subsection{Probability distance in LLMs and concentration of distances}

When comparing the representations of LLMs, it is common to use Euclidean distance or cosine similarity, which is equivalent to normalized Euclidean distance in terms of neighbour ranking. However, Euclidean distance is affected by concentration of distances \citep{aggarwal2001surprising}. We thus expect to find nuisance hubs when using it to compare representations. 

Does this mean that LLMs are adversely affected by hubness? As discussed in the introduction, models are not using Euclidean-distance-based comparisons as part of their inner workings. They are trained instead to compare contexts with possible vocabulary items and give the most likely next items a high probability. We can interpret this as a dissimilarity measure, the \textit{probability distance}, by using $1-p(y \; \vert \; x)$, where $p(y \; \vert \; x)$ is the probability the model associates to item $y$ given the context $x$. In this way, we construct neighbourhoods for each context, with the closest items being the ones which are most likely.

The following theorem shows that, when using probability distance, we do not get concentration of distances unless the probabilities are uniform.

\begin{theorem}
\label{theorem:prob_dist_no_concentration}
Let $\x_i \in X$ be a data point. Let $\y_j$, $j\in \{1, ..., v\}$, be the possible labels of points from $X$, and let $p(\y_j|\x)$ be the probability of label $\y_j$ given $\x$ which uses representations $\f(\x), \g(\y) \in \R^m$.
We define the dissimilarity between $\x_i$ and $\y_j$ to be $d(\x_i, \y_j) = 1-p(\y_j|\x_i)$. Then, if the distribution over $\y$ does not go to the uniform distribution for every $\x$, $p(\y|\x) \not\to \textit{U}(\y)$, we will not get concentration of distances for this dissimilarity as the dimension $m \to \infty$. 
\end{theorem}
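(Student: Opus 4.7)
The plan is to prove the contrapositive: if probability distance does exhibit concentration of distances, then the predictive distribution $p(\mathbf{y}|\mathbf{x})$ must converge to the uniform distribution $U(\mathbf{y})$ for every $\mathbf{x}$. The key observation is that the distance $d(\mathbf{x}_i, \mathbf{y}_j) = 1 - p(\mathbf{y}_j|\mathbf{x}_i)$ is an affine transformation of the probability, so the spread of distances around a query $\mathbf{x}_i$ is determined entirely by the spread of the probabilities the model assigns to the $v$ labels.

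First, I would fix a query point $\mathbf{x}_i$ and write down the relevant quantities explicitly. Let $p_{\max}(\mathbf{x}_i) = \max_j p(\mathbf{y}_j|\mathbf{x}_i)$ and $p_{\min}(\mathbf{x}_i) = \min_j p(\mathbf{y}_j|\mathbf{x}_i)$. Then
\begin{align}
\max_j d(\mathbf{x}_i, \mathbf{y}_j) - \min_j d(\mathbf{x}_i, \mathbf{y}_j) = p_{\max}(\mathbf{x}_i) - p_{\min}(\mathbf{x}_i),
\end{align}
and $\min_j d(\mathbf{x}_i, \mathbf{y}_j) = 1 - p_{\max}(\mathbf{x}_i)$. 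Following the criterion of Beyer et al., concentration of distances at $\mathbf{x}_i$ in the limit $m \to \infty$ amounts to the ratio $(\max - \min)/\min$ of distances vanishing, i.e.\ $p_{\max}(\mathbf{x}_i) - p_{\min}(\mathbf{x}_i) \to 0$ (assuming $p_{\max}(\mathbf{x}_i) < 1$ so the denominator stays bounded away from zero; the edge case where $p_{\max}\to 1$ is actually incompatible with uniformity for $v>1$, so it trivially satisfies the theorem).

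Next, I would use the simplex constraint $\sum_{j=1}^v p(\mathbf{y}_j|\mathbf{x}_i) = 1$. If $p_{\max}(\mathbf{x}_i) - p_{\min}(\mathbf{x}_i) \to 0$, then all $v$ probabilities are squeezed into an interval of vanishing length, which together with the sum constraint forces each $p(\mathbf{y}_j|\mathbf{x}_i) \to 1/v$. Hence $p(\mathbf{y}|\mathbf{x}_i) \to U(\mathbf{y})$. Since concentration of distances in the sense of the earlier subsection must hold for (almost) every query, this would have to happen for every $\mathbf{x}$, contradicting the hypothesis. Taking the contrapositive yields the theorem.

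The only genuinely delicate step is pinning down which precise formulation of ``concentration of distances'' is being used, since the literature contains several equivalent variants (ratio of max minus min over min, variance over squared mean, etc.). Once the definition is fixed, the argument is essentially one line of algebra plus the simplex constraint; the main obstacle is therefore presentational rather than technical, namely stating the criterion in a form that makes the equivalence between vanishing distance spread and uniformity of $p(\mathbf{y}|\mathbf{x})$ immediate.
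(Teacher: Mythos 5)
Your argument is correct in spirit but takes a genuinely different route from the paper. The paper invokes Theorem~2 of Durrant and Kab\'an (2009), which states that $\lim_{m\to\infty} \mathrm{Var}_{\x,\y}[d(\x,\y)] / \EX_{\x,\y}[d(\x,\y)]^2 = 0$ is a \emph{necessary} condition for concentration; it then computes, using the simplex constraint $\sum_j p(\y_j|\x)=1$, that the squared mean distance is the dimension-independent constant $(1-1/v)^2$, and that the variance equals $\EX_\x\bigl[\tfrac{1}{v}\sum_j (p(\y_j|\x)-1/v)^2\bigr]$, i.e.\ the expected (squared) $L^2$ distance between $p(\cdot|\x)$ and the uniform distribution. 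Non-uniformity for some positive-measure set of $\x$ therefore keeps the ratio bounded away from zero, so no concentration. You instead work pointwise per query $\x_i$ with the Beyer-style contrast criterion $(\max-\min)/\min \to 0$, which by the affine relation $d = 1-p$ reduces to $p_{\max}-p_{\min}\to 0$; combined with the same simplex constraint this forces each $p(\y_j|\x_i)\to 1/v$, and the contrapositive gives the theorem. Both arguments hinge on the same structural fact (probabilities lie on the simplex), but yours is more elementary and avoids computing moments, at the cost of being looser about the exact formulation of ``concentration of distances'' (pointwise contrast versus relative variance over the joint distribution). Since the paper explicitly cites the Durrant--Kab\'an relative-variance criterion as its definition, a fully rigorous version of your argument would need to bridge from your pointwise contrast statement to the variance-ratio statement, or justify that non-vanishing contrast at even one $\x$ suffices to defeat concentration in the sense used; you flag this yourself as the delicate point, and it is exactly where the paper's route is cleaner.
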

\begin{proof}
    In Appendix \ref{Appendix:proof_prob_dist_no_concentration}
\end{proof}

For LLM predictions in language models, this proof means that, as long as our models do not assign close to equal probabilities to all tokens for all the given contexts, there will be no concentration of distances. Table \ref{table:l2_dist_uni} in Appendix \ref{app:l2_dists_table_and_explanation} shows that, when we compare contexts with vocabulary items, the mean L2 distance to the uniform distribution is very far from zero for all models. This is expected since, for any given context, some items will be much more likely than others, and LLMs have been expressly trained to make accurate in-context predictions. 

Note that Theorem \ref{theorem:prob_dist_no_concentration} does not imply that there will be no hubs for the probability distance measure used by LLMs, but if hubs are present, they will not be nuisance hubs due to concentration of distances. Note also that the theorem does not say anything about what happens when using Euclidean or cosine distance to compare representations. 

\section{Experiments}

All code for experiments and plots can be found on github.\footnote{\url{https://github.com/bemigini/hubs-are-frequent-tokens}}

\subsection{Setup}

We experiment with five different autoregressive LLMs, namely OPT-6.7B \citep{Zhang:etal:2022b}, Llama-3-8B \citep{Meta:2024}, Pythia-6.9B \citep{Biderman:etal:2023}, OLMo-7B \cite{Groeneveld:etal:2024}, and Mistral-7B \citep{Jiang:etal:2023}, hereon referred to as Opt, Llama, Pythia, Olmo, and Mistral, respectively. As input to the models, we use the 3 datasets made available by \citet{Cheng:etal:2025}. Each of them consists of 50K sequences, or \textit{contexts}, as we will call them, of 20 orthographic tokens  randomly extracted from Bookcorpus \citep{Zhu:etal:2015}, Pile10k \citep{Gao:etal:2020} and WikiText-103 \citep{Merity:etal:2016}, respectively. Note that these contexts start and end at random points in a text (in particular, the last token is not necessarily a punctuation mark). In order to estimate domain-specific token frequency distributions, we use the full corpora the contexts were extracted from. 


\begin{figure}[htb]
  \includegraphics[width=\columnwidth]{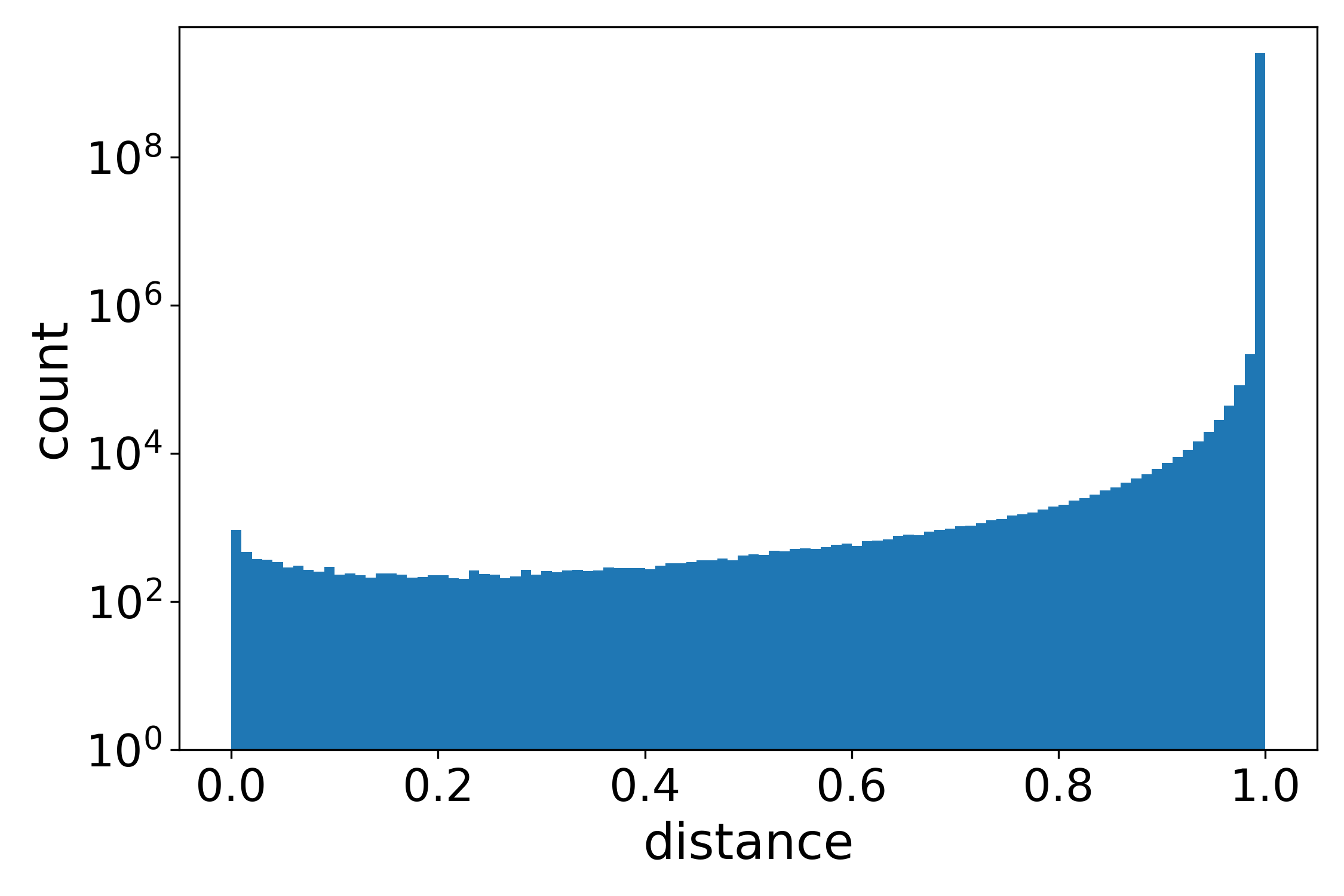}
  \caption{Probability distance distribution for Pythia on contexts from Pile10k. If we had had a concentration of distances, we would not see this spread of distances all the way to zero (compare with Fig.~\ref{fig:concentration_dist_syn_data}).}
  \label{fig:hist_ct_llama_pile}
\end{figure}

\begin{figure}[htb]
  \includegraphics[width=\columnwidth]{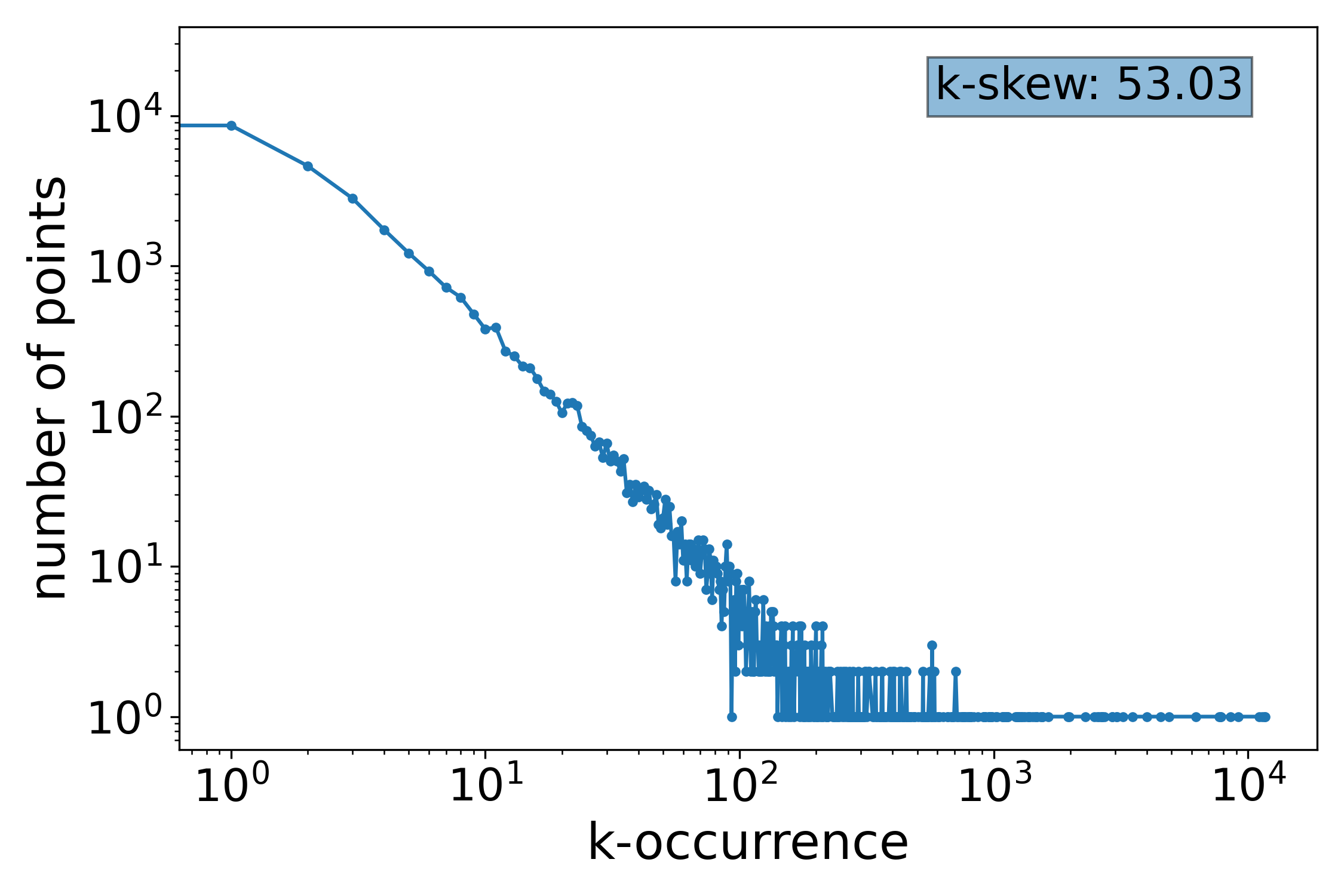}
  \caption{k-occurrence distribution for Pythia predictions on contexts from Pile10k. This distribution is highly skewed with many hubs (points with $k$-occurrence larger than 100). }
  \label{fig:llama_k_occ_dist_wikitext}
\end{figure}

To measure hubness, we set $k=10$ and define a point $x$ as a hub if it has $N_k(x) \geq 100$. That is, a point is a hub if it is in the $10$ nearest neighbours of $10$ times more points than we would expect if the relationship had been symmetric. We informally ascertained that our conclusions are robust to changes in these hyperparameters.

\subsection{Probability distance in LLMs}

In this section, we first confirm that the probability distances computed by LLMs do not exhibit concentration of distances. We then show that, despite this, all tested LLMs are characterized by high hubness. We find however that their hubs correspond to context-dependent frequent tokens, that tend to be reasonable prediction candidates.

Fig.~\ref{fig:hist_ct_llama_pile} shows, for Pythia and Pile10k, that there is no concentration of distances, as predicted by Theorem \ref{theorem:prob_dist_no_concentration}. This fact is confirmed for the other models in Appendix \ref{app:distribution_of_prob_distances}.

Given the lack of concentration of distances, LLM probability neighbourhoods should not be characterized by nuisance hubs. However, all models still have a very high $k$-skewness. A $k$-skewness of 3 already means that either there are many points which are in the $k$ nearest neighbours of more than $k$ other points (there are many points with a $k$-occurrence larger than the mean), or there are a few points which are in the $k$ nearest neighbours of substantially more than $k$ points (a few points have $k$-occurrences much larger than the mean). Thus, a $k$-skewness of $3$ could already be considered high, but all models have $k$-skewness higher than $40$ for all three datasets (Table \ref{table:pred_hub_occurrence} in Appendix \ref{app:hub_occurrence}). Indeed, in all cases we find hubs, that is, tokens with a $k$-occurrence larger than 100. In fact, all models have at least one vocabulary item with a $k$-occurrence higher than 10,000 for all datasets. As an example, the $k$-occurrence distribution of Pythia on Pile10k is shown in Fig.~\ref{fig:llama_k_occ_dist_wikitext}.

\begin{table*}
  \centering
  \begin{tabular}{l|lllll|lllll|lllll|}
    \hline
                     & \multicolumn{5}{|l|}{\textbf{Pile10k} }       & \multicolumn{5}{|l|}{\textbf{Bookcorpus} }   & \multicolumn{5}{|l|}{\textbf{Wikitext-103} }   \\
    \hline
    \textbf{Pythia}  &  \textbackslash n & and & the & , & in  &  the & . & , & and & \textbackslash n &  and & the & , & in & a \\
    \textbf{Olmo}    &  and & the & , & . & in &  the & . & , & and & \textbackslash n &  and & the & , & in & . \\
    \textbf{Opt}     &  \textbackslash n & and & the & , & .  &  the &. &and &, &\textbackslash n &  the &and &, & in & \textbackslash n \\
    \textbf{Mistral} & \textbackslash n & the & and & , & .  &  the & . & and & , & \textbackslash n & and & the & , & in & . \\
    \textbf{Llama}   & \textbackslash n & , & the & and & . & \textbackslash n & the & . & , & and & \textbackslash n & the & and & , & in \\
    \hline
  \end{tabular}
  \caption{\label{table:pred_hub_examples}
    Top five prediction hubs for the various LLMs on different datasets.  Intuitively, they are all very frequent tokens, that also coincide across models. 
  }
\end{table*}

If the hubs do not come from concentration of distances, where do they come from? By qualitative inspection, we observe that the hubs correspond to intuitively frequent tokens, as shown in Table \ref{table:pred_hub_examples}. To make this intuition more formal, we plotted the $k$-occurrence of the hubs against the frequencies of occurrence of the tokens in the various datasets. We found that, for all models, there is a high Spearman correlation ($0.63$ or larger) between the $k$-occurrence of the hubs and the frequencies of the vocabulary items in the dataset which the model is making predictions on.\footnote{In all plots using log scales, we have added a small constant, $10^{-9}$, to the frequencies, in order to make the points with 0 frequency visible. Tokens with 0 frequency therefore all lie on a horizontal line at $10^{-9}$ in our plots. Note that, for all models and all datasets, there are some vocabulary items which have frequency $0$ even though they are hubs in the predictions. These are tokens that do not occur in the datasets but are frequently predicted by the LLMs due to tokenization and pre-processing discrepancies between the training corpora and the datasets. For example, for Llama on Pile10k,  `.\textbackslash n' is frequently predicted, but it never occurs in the dataset (where periods and newlines were systematically separated during pre-processing). As another example, the Bookcorpus is systematically lower-cased, so a LLM will predict frequent capitalized tokens (e.g., \textit{The}) that never occur in this dataset.} %
 For example, comparing $k$-occurrences of hubs in Pythia's predictions on Pile10k with the frequency of tokens in Pile10k gives a Spearman correlation of $0.71$ (Fig.~\ref{fig:pythia_pred_hubs_pile_vs_frequency_pile}; all correlations in Table \ref{table:pred_hub_corr} of Appendix \ref{app:k-occurrence-frequency-correlation}).

\begin{figure}[htb]
  \includegraphics[width=\columnwidth]{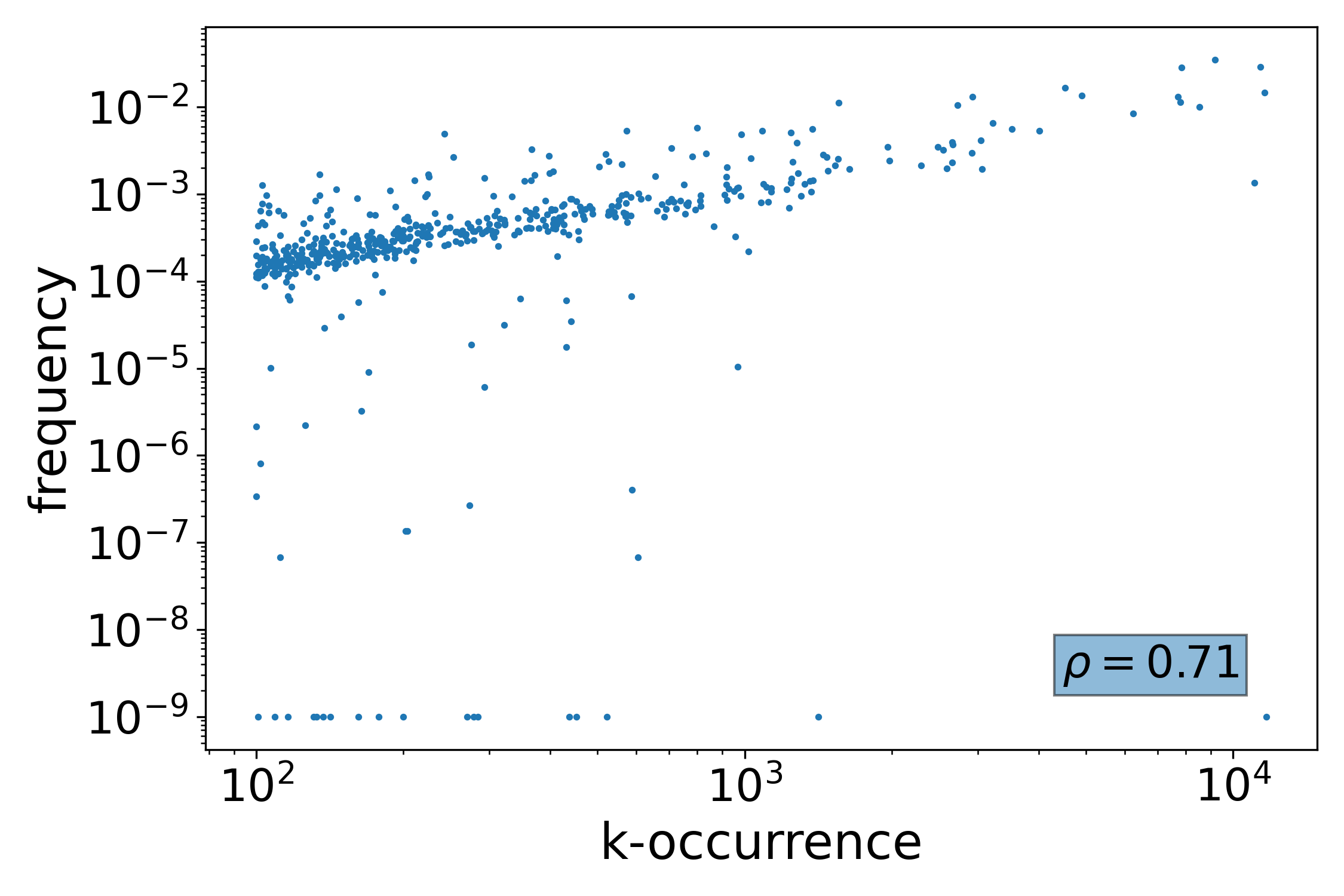}
  \caption{$k$-occurrence of hubs in Pythia predictions on contexts from Pile10k vs.~frequency of vocabulary items in Pile10k. $\rho$ is the Spearman correlation. }
  \label{fig:pythia_pred_hubs_pile_vs_frequency_pile}
\end{figure}

Thus the probability distance computed by LLMs during predictions \textit{is} characterized by high hubness, but this high hubness is \textit{not} a nuisance phenomenon, but the reflection of how LLMs adapted to word frequency distributions. Given that LLMs must predict the next token in natural text, and natural text is characterized by very skewed distributions, all models have learned to often predict very frequent tokens (punctuation marks, \textit{the}, \textit{of}, etc.).

\begin{figure}[htb]
  \includegraphics[width=\columnwidth]{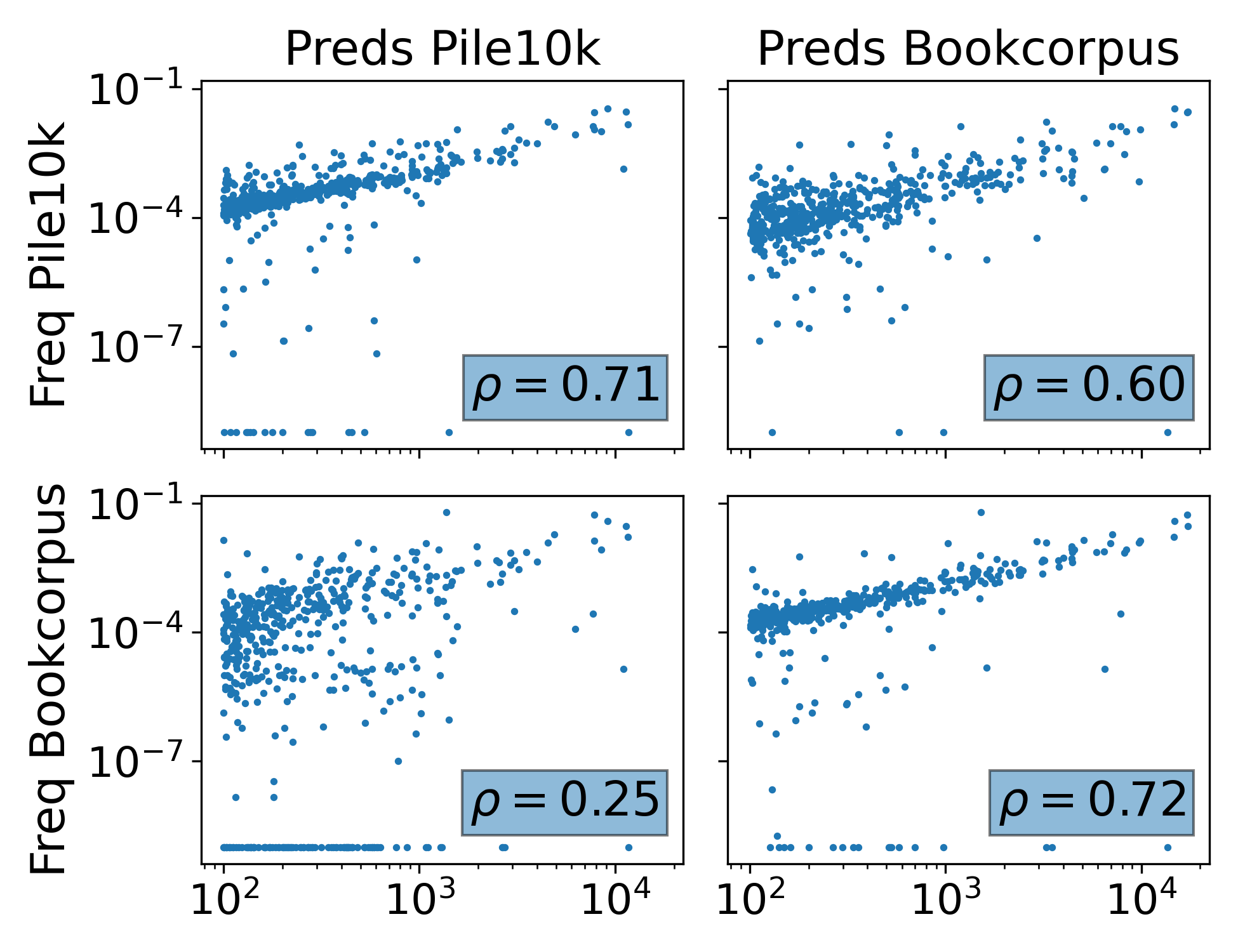}
  \caption{$k$-occurrence of hubs in Pythia predictions (x-axis) vs.~frequency of tokens (y-axis). $\rho$ is the Spearman correlation. Top row: Predictions made on contexts from Pile10k. Bottom row: Predictions made on contexts from Bookcorpus. First column: Frequency of tokens in Pile10k. Second column: Frequency of tokens in Bookcorpus. In both cases, correlation is higher when frequency is estimated on the same corpus as the contexts used for prediction.}
  \label{fig:pythia_pred_hubs_pile_bookcorpus_vs_frequency_pile_bookcorpus}
\end{figure}

Interestingly, the hubs are not simply fixed based on a single frequency distribution (e.g., that of the training corpus). Instead, they are modulated by the type of text the LLM is predicting. This is shown by the fact that, given a context extracted by one of the datasets, $k$-occurrence is more highly correlated with frequency estimates extracted from the corpus that dataset is extracted from, than with estimates from the other corpora. For example, Fig.~\ref{fig:pythia_pred_hubs_pile_bookcorpus_vs_frequency_pile_bookcorpus} shows that, for Pythia, the correlation of Pile10k hub $k$-occurrences  with frequencies estimated on the Bookcorpus is only $0.25$, but if we instead compare with frequencies from the Pile10k corpus we get a much higher correlation of $0.71$.

Unlike the nuisance hubs in the literature we reviewed above, which often harm performance, the context-modulated, frequent-token-predicting hubs emerging in LLMs look benign. Indeed, when a model predicts a hub as the most likely continuation, this actually leads on average to \textit{higher} accuracy than when the model is predicting a non-hub. For example, when Pythia predicts a non-hub for Pile10k contexts, it has an accuracy of about $28$\%, but when it predicts a hub, it has an accuracy of $39$\% (Table \ref{table:hub_accuracy}). 

\begin{table}    
    \begin{tabular}{llrrr}
    \hline
    \textbf{model} & \textbf{context} & \textbf{all} & \textbf{hub} & \textbf{non-hub} \\
    \hline
    Pythia & Pile10k & 0.37 & 0.39 & 0.28 \\
    Pythia & WikiText-103 & 0.36 & 0.38 & 0.30 \\
    Pythia & Bookcorpus \hspace{-0.6em} & 0.31 & 0.32 & 0.23 \\
    Olmo & Pile10k & 0.36 & 0.39 & 0.29 \\
    Olmo & WikiText-103 & 0.36 & 0.38 & 0.32 \\
    Olmo & Bookcorpus \hspace{-0.6em} & 0.32 & 0.33 & 0.24 \\
    Opt & Pile10k & 0.34 & 0.37 & 0.26 \\
    Opt & WikiText-103 & 0.35 & 0.37 & 0.31 \\
    Opt & Bookcorpus \hspace{-0.6em} & 0.30 & 0.31 & 0.22 \\
    Mistral & Pile10k & 0.35 & 0.38 & 0.27 \\
    Mistral & WikiText-103 & 0.36 & 0.37 & 0.31 \\
    Mistral & Bookcorpus \hspace{-0.6em} & 0.32 & 0.33 & 0.24 \\
    Llama & Pile10k & 0.37 & 0.40 & 0.31 \\
    Llama & WikiText-103 & 0.38 & 0.40 & 0.35 \\
    Llama & Bookcorpus \hspace{-0.6em} & 0.33 & 0.34 & 0.25 \\
    \hline
    \end{tabular}
    \caption{\label{table:hub_accuracy}
    Prediction accuracy over all contexts, accuracy on hubs and accuracy on non-hubs. Accuracy is higher for hubs than non-hubs for all models on all datasets.  
  }
\end{table}

\subsubsection{Emergence of frequency-sensitive prediction hubs during training}

Having established that hubs in LLMs are the product of a sensible token prediction heuristic, we might wonder if this behavior is due to an intrinsic model bias, or it emerges during training. Focusing on Pythia, whose intermediate training checkpoints are publicly available, we find that hubs appear in predictions from the very beginning, as shown by the $k$-skewness values reported in Table \ref{table:ct_pythia_steps_hub_occurrence} (Appendix \ref{app:hub_occurrence}). However, Fig.~\ref{fig:pythia_three_steps} shows that the correlation of $k$-occurrence with frequency is relatively low in the earlier stages of training, and becomes larger as training progresses. This suggests that, on the one hand, the model might have an intrinsic bias towards hubness in prediction, but, on the other hand, learning to constantly keep context-relevant frequent tokens in the top candidate pool is a strategy that is acquired during training, because it is advantageous for the prediction task.

\begin{figure}[htb]
  \includegraphics[width=\columnwidth]{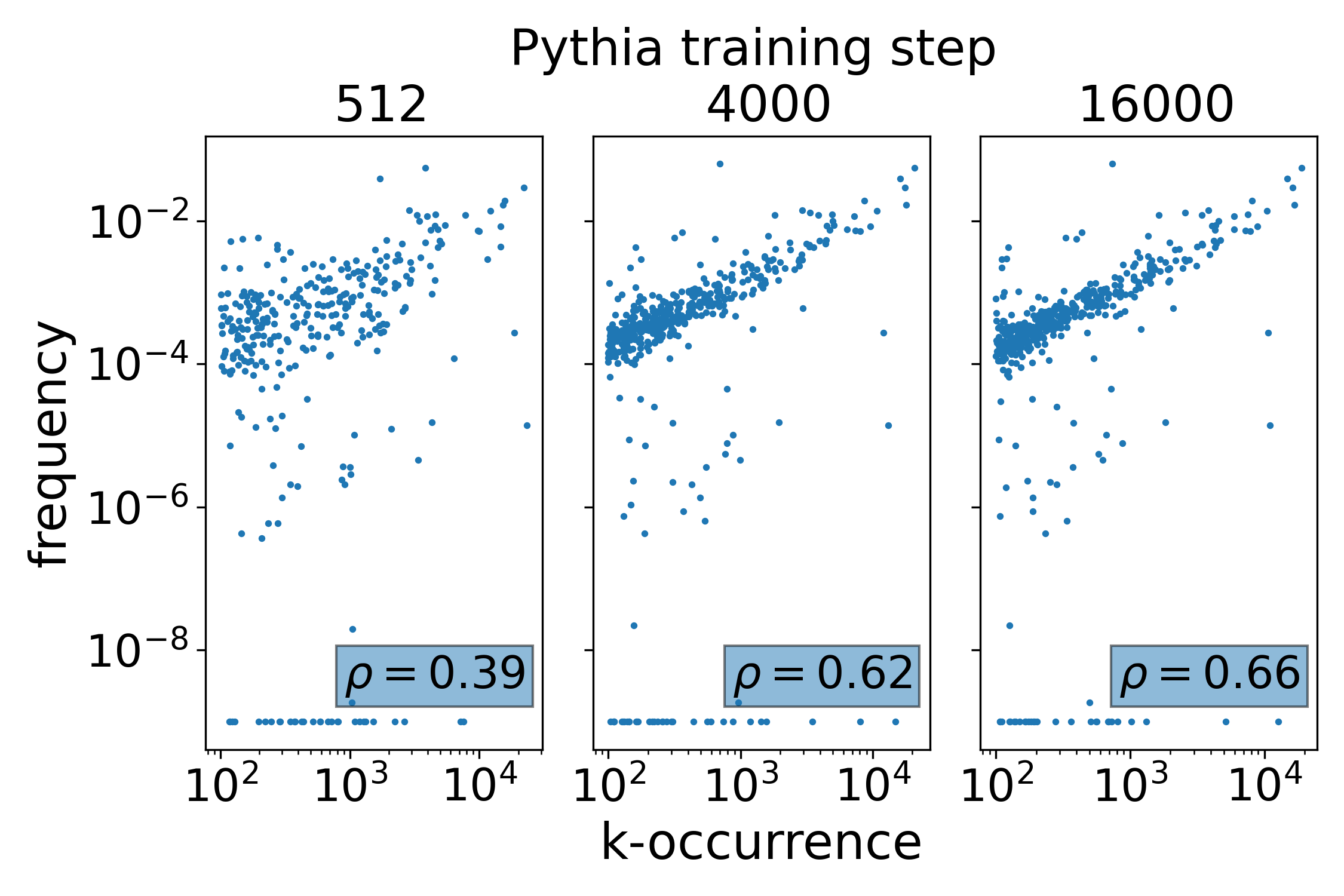}
  \caption{$k$-occurrence of hubs in Pythia predictions on Bookcorpus (x-axis) vs frequency from Bookcorpus (y-axis) for three checkpoints. $\rho$ is the Spearman correlation. The final number of training steps is 143,000, at which point $\rho=0.72$. The correlation saturates faster on Pile10k (a subset of Pythia's training data) than on Bookcorpus and WikiText-103, so we show an example from Bookcorpus to better display the gradual increase.}
  \label{fig:pythia_three_steps}
\end{figure}

\subsection{Comparing contexts or vocabulary items with Euclidean distance}


Having shown that the probability distance measure computed by LLMs during next token prediction is not affected by nuisance hubs, we turn to other comparisons that, while not relevant to LLM generation, might arise in LLM analysis or when using LLM-derived representations in downstream tasks. In particular, one might want to compute similarities between LLM representations of sequences or vocabulary entries for interpretability purposes or for specific downstream tasks that require measuring the similarity of two sentences or passages, as represented by their last-token activation vectors. In such cases, it is natural to use Euclidean distance or normalized Euclidean distance (or the rank-equivalent cosine) to compare representations. As we mentioned above, these measures \textit{are} affected by concentration of distances given various underlying distributions \citep{aggarwal2001surprising}, and we thus might observe the rise of nuisance hubs. We present here examples using Euclidean distance; normalized Euclidean and full results are in appendices \ref{app:distribution_of_cc_distances} and \ref{app:distribution_of_tt_distances}.

Starting with distance between context representations (that is, the last-layer/last-token representations of the sequences in our datasets), when we consider the distribution of distances between contexts using plain or normalized Euclidean distance, we get concentration of distances for all models, in the sense that the distance distributions do not have support all the way to zero. However, the distances are not as tightly concentrated around a single value as they were in the toy example of Fig.~\ref{fig:concentration_dist_syn_data}. For example, for Pythia all distances between contexts from Bookcorpus are larger than $15$ using Euclidean distance, and only two distances are less than $20$ (Fig.~\ref{fig:hist_cc_pythia_bookcorpus_pile}) (see Appendix \ref{app:distribution_of_cc_distances} for all plots). 

\begin{figure}[htb]
  \includegraphics[width=\columnwidth]{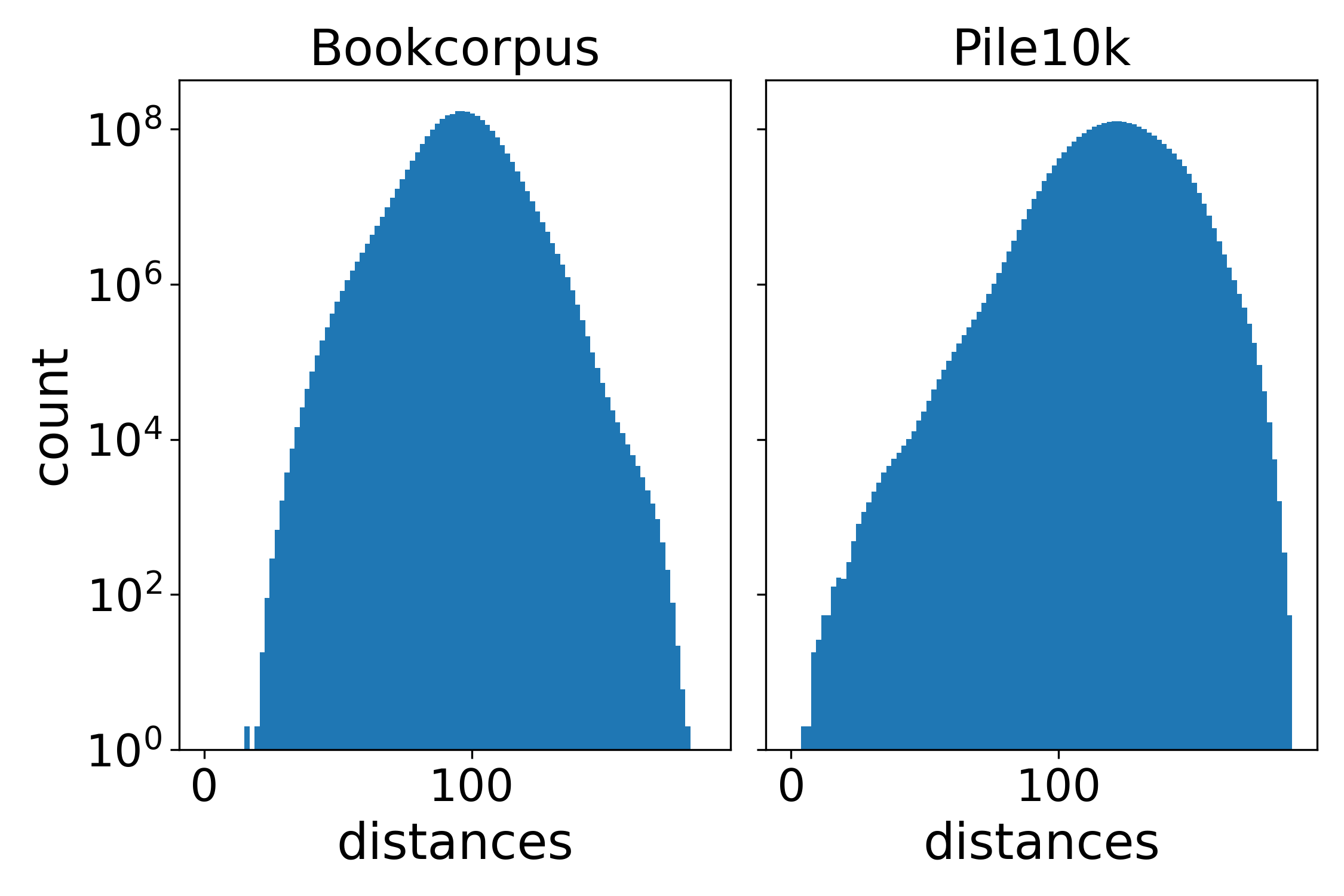}
   \caption{Distribution of Euclidean distances between contexts for Pythia on Bookcorpus (left) and Pile10k (right). In both cases we observe a gap in distances approaching 0, more pronounced for Bookcorpus.}
    \label{fig:hist_cc_pythia_bookcorpus_pile}
\end{figure}

As expected given the presence of concentration of distances, when comparing contexts with Euclidean distance, we get high $k$-skewness (Table \ref{table:cc_hub_occurrence} in Appendix \ref{app:hub_occurrence}). When we consider the neighbourhoods in which the hubs occur (examples in Table \ref{table:cc_hubs_in_weird_neighb_examples_euc}, Appendix \ref{app:hub_examples}), we see that they occur in neighbourhoods of contexts they are, intuitively, not at all semantically similar to. Thus, we confirm they are indeed ``a nuisance'', that would interfere with meaningful semantic-similarity-based analysis.  

The picture is more nuanced when comparing vocabulary items, as represented by their entries in the unembedding matrix. For Pythia and Opt, we again observe a concentration of distances, while for Olmo, Mistral and Llama, surprisingly, the distribution has support all the way to zero (see Fig.~\ref{fig:hist_tt_pythia_llama} for Pythia and Llama, and the figures in Appendix \ref{app:distribution_of_tt_distances} for the other models). This suggests that, for these models, the underlying distribution of representations is different from those that lead to concentration of distances with increasing dimension \citep{aggarwal2001surprising}. Interestingly, the distance plots show that different distance distributions emerge for different LLMs, suggesting that different factors are at play. We leave a thorough investigation of vocabulary item distributions in these LLMs to future work. 

\begin{figure}[htb]
  \includegraphics[width=\columnwidth]{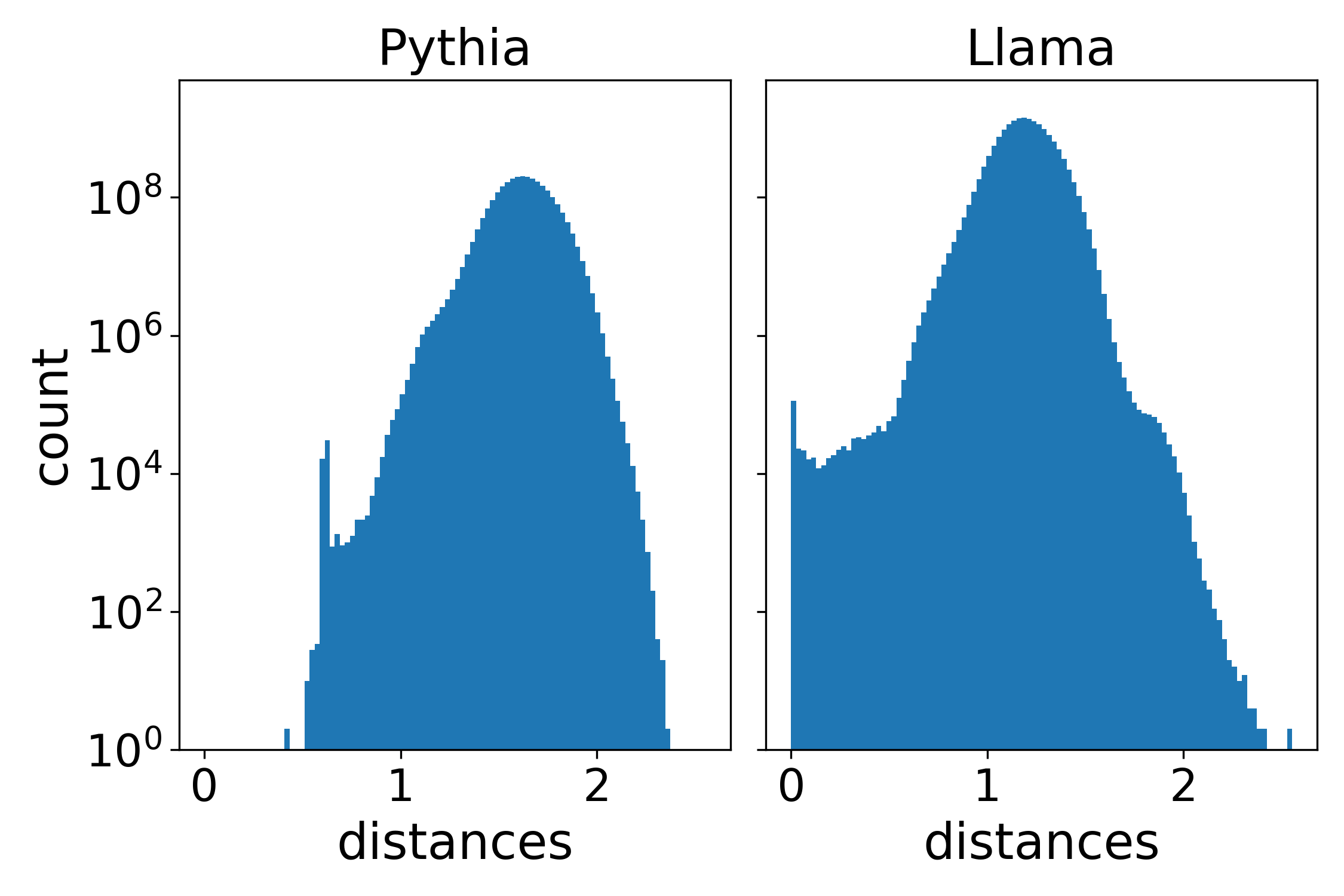}
  \caption{Vocabulary item to vocabulary item Euclidean distances in unembedding matrix for Pythia (left) and Llama (right). }
  \label{fig:hist_tt_pythia_llama}
\end{figure}

Still, for all models, even those that do not show concentration of distances, we observe high hubness (with the exception of Olmo when using normalized Euclidean distance) (Table \ref{table:tt_hub_occurrence} in Appendix \ref{app:hub_occurrence}), and the hubs do not correlate with token frequency. Fig.~\ref{fig:tt_euc_pythia_vs_freq_pile}, where we report the case for the $k$-occurrences of Pythia against the frequency computed on Pile10k, is remarkablly different from the plots presented earlier in Fig.~\ref{fig:pythia_pred_hubs_pile_bookcorpus_vs_frequency_pile_bookcorpus}. Similar results were obtained for the other model/corpora combinations and are reported in Table \ref{table:tt_hub_corr} in Appendix \ref{app:k-occurrence-frequency-correlation}. In fact, we see that, for all models, the hubs are ``junk'' tokens unlikely to be meaningfully similar to many other items, coherent with the view that they are nuisance hubs: see Table \ref{table:tt_hub_examples_euc} for Euclidean distance, with other distance measures exemplified in tables \ref{table:tt_hub_examples_norm_euc} and \ref{table:tt_hub_examples_dot} of Appendix \ref{app:hub_examples}.

\begin{figure}[htb]
  \includegraphics[width=\columnwidth]{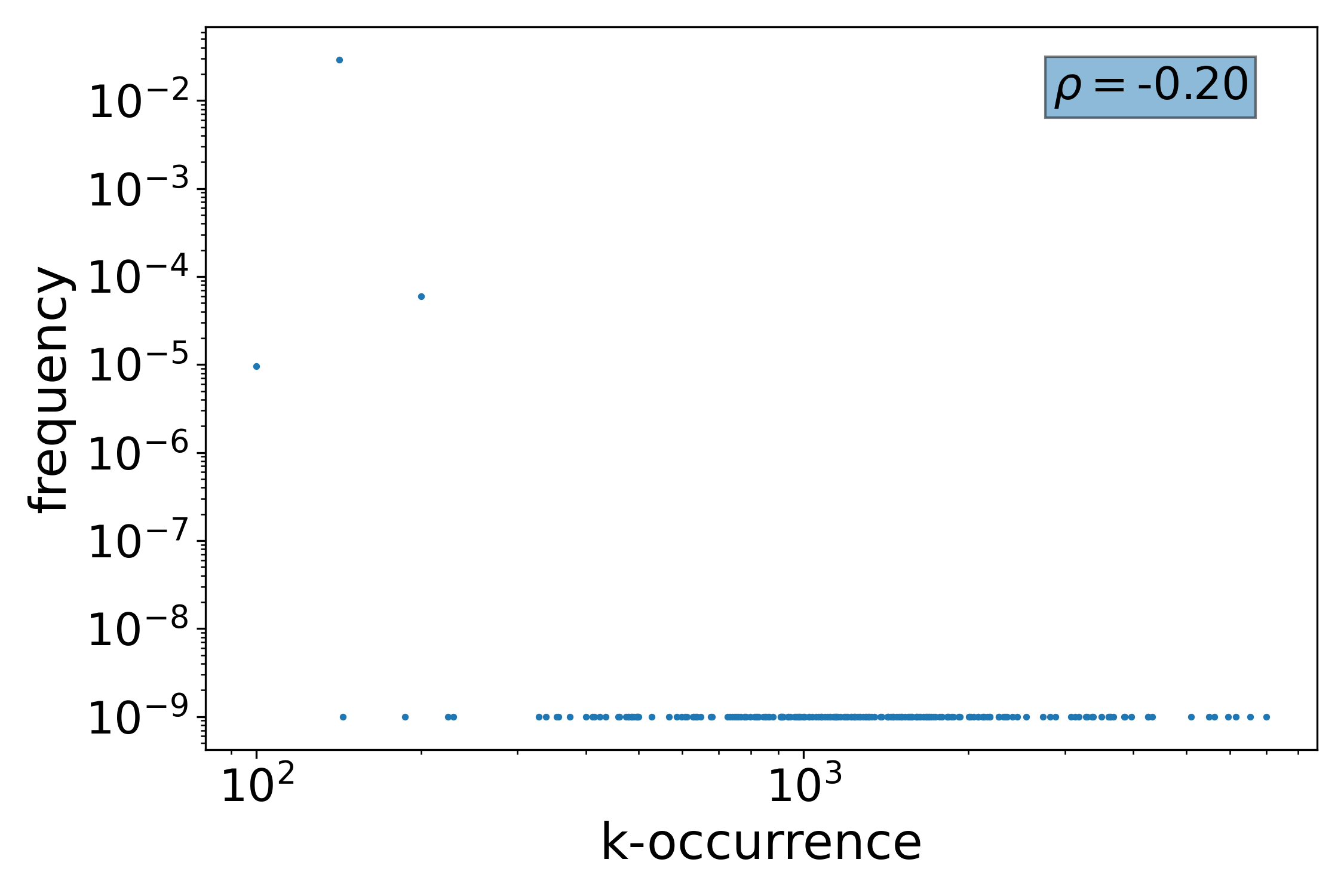}
  \caption{Relation between Pythia vocabulary-item-to-vocabulary-item hub $k$-occurrence and vocabulary item frequency for Pile10k, using Euclidean distance. No correlation emerges, with most hubs corresponding to 0 frequency items.}
  \label{fig:tt_euc_pythia_vs_freq_pile}
\end{figure}

\begin{table*}
    \small 
  \centering
  \begin{tabular}{|l|ccccc|}
    \hline
                     & \multicolumn{5}{c|}{\textbf{Euclidean distance hub examples} }    \\
    \hline
    \textbf{Pythia}  &  \textbackslash n 11x\_ & 14x\_ \textbackslash n & 39x\_ & \textbackslash n 4x\_ & \textbackslash n 43x\_  \\
    \hline
    \textbf{Olmo}    &   remn & glimp & supernat & taxp & careg  \\
    \hline
    \textbf{Opt}     & <pad> & \textbackslash u0011 & madeupword0000 & <mask> &\textbackslash u001c \\
    \hline
    \textbf{Mistral} & \textbackslash u0438 & \makecell{\tiny{\textbackslash u043e\textbackslash u043a\textbackslash u0442\textbackslash u044f} \\ \tiny{\textbackslash u0431\textbackslash u0440\textbackslash u044f}} & \makecell{\tiny{\textbackslash u0444\textbackslash u0435\textbackslash u0432\textbackslash u0440} \\ \tiny{\textbackslash u0430\textbackslash u043b\textbackslash u044f}} & \makecell{\tiny{\textbackslash u0441\textbackslash u0435\textbackslash u043d\textbackslash u0442} \\ \tiny{\textbackslash u044f\textbackslash u0431\textbackslash u0440\textbackslash u044f}} & \textbackslash u28ff  \\
    \hline
    \textbf{Llama}   &  -->\textbackslash r\textbackslash n\textbackslash r\textbackslash n & );\textbackslash r\textbackslash r\textbackslash r\textbackslash n & \makecell{\tiny{\textbackslash u258d\textbackslash u258d\textbackslash u258d\textbackslash u258d} \\ \tiny{\textbackslash u258d\textbackslash u258d\textbackslash u258d\textbackslash u258d}} & \makecell{\tiny{\textbackslash u258d\textbackslash u258d\textbackslash u258d\textbackslash u258d } \\ \tiny{\textbackslash u258d\textbackslash u258d\textbackslash u258d\textbackslash u258d} \\ \tiny{\textbackslash u258d\textbackslash u258d\textbackslash u258d\textbackslash u258d} \\ \tiny{\textbackslash u258d\textbackslash u258d\textbackslash u258d\textbackslash u258d}} & ',\textbackslash r\textbackslash r\textbackslash n \\
    \hline
  \end{tabular}
  \caption{\label{table:tt_hub_examples_euc}
    Top five $k$-occurrence hubs when comparing vocabulary items using Euclidean distance.  To display long space sequences, we write nx\_ where n is number of spaces. Very long tokens have been broken into multiple lines. These are mostly ``junk'' items, although Olmo has top hubs which are well-formed word fragments.
  }
\end{table*}


\section{Conclusion}

We explored the phenomenon of hubness in autoregressive language models. We first observed that the only representation comparison performed by the model that could be affected by hubs consists in the softmaxed dot product between context representations and vocabulary vectors in the unembedding matrix. Note that this is different from what happens in other deep learning systems: for example, in multimodal language-and-vision models such as CLIP \citep{Radford:etal:2021}, (normalized) Euclidean distances are commonly used to find the nearest text and image embeddings, which implies likely concentration of distances and consequent rise of nuisance hubs.

We showed, theoretically, that the probability distance measure used by LLMs is not affected by the concentration of distance problem that leads to undesirable hubness in other high-dimensional spaces. Still, we empirically found that probability distance is characterized by high hubness. However, when considering the hubs, we discovered that they are context-modulated frequent tokens, of the sort that it makes sense for the model to often predict. In other words, they are ``benign'' hubs that reflect the highly skewed distributions found in natural language \citep{Baayen:2001}. The existence of these frequent-token hubs ties in well with the recent discovery of \citet{Stolfo:etal:2024} that LLMs have neurons which, all else being equal, promote the probability of frequent tokens, and that of \citet{Macocco:etal:2025} that outlier dimensions on the top layer of LLMs also promote frequent tokens.

When other similarity measures are considered, such as comparing representations of contexts or of vocabulary items in the unembedding matrix using Euclidean distance, we found a theoretically mixed but empirically clear picture. For context comparison and vocabulary item comparison with some models, we confirmed the expected relation between concentration of distances and the presence of nuisance hubness. Concerning the comparison vocabulary items with other models, we observed distance distributions that do not clearly imply concentration, but we \textit{still} detected hubs that appear to be nuisance neighbours. While these comparisons are not performed by the model for purposes of output prediction, they might still be of interest to researchers for analytical purposes (e.g., establishing if the unembedding matrix defines a meaningful semantic space) or practical reasons (e.g., extracting sentence representations from the model, and use their similarity in a downstream task). Since in these cases hubness appears in its nuisance form, it is appropriate to ap<ply hubness reduction techniques.


Our main take-away is that hubness, while ubiquitous, is neither good nor bad in itself, and a careful analysis of the hubs that arise in different situations is called for, before deciding whether to apply hubness mitigation. We have further established, through the lens of hubness analysis, that the LLMs we analyzed all learned a guessing heuristic that consists in constantly promoting a set of context-modulated frequent tokens as likely predictions.

From a practical point of view, our results suggest that there is no reason to worry about hubness when causal LLMs are used for text generation. On the other hand, Euclidean or cosine distances are sometimes employed when deriving embedding models from causal LLMs \citep[e.g.,][]{Neelakantan:etal:2022,BenhamGhader:etal:2024,Ma:etal:2024}, to be used for downstream tasks at the word level (e.g., POS tagging or named entity recognition) or at the sentence/passage level (e.g., sentiment analysis or question answering). In these settings, Euclidean/cosine distances might be used both when fine-tuning representations with a semi-supervised objective such as contrastive learning, and/or to directly search for nearest neighbors at inference time (such as in text retrieval). Distances between inputs in model representations are also used in interpretability research, for example to estimate the intrinsic dimensionality of LLM representations \citep[e.g.,][]{Cheng:etal:2023,Valeriani:etal:2023}. In all these cases, according to our analysis, there is a high risk of nuisance hubs, and practitioners should check for their potential interference and apply mitigation techniques if necessary.

\section*{Limitations}

\begin{itemize}
    \item The theoretical result that probability distance does not entail concentration of distances is general. However, the empirical finding that hubs reflect context-dependent frequency distributions only holds for the models we experimented with, and it should be extended to other model families and sizes.
    \item We established that, at least for the models we considered, prediction hubs correspond to context-dependent frequent tokens, and, at least in Pythia, this is an emergent phenomenon during training. We still lack an understanding of how these prediction hubs come about. In future research, we would like to relate our finding with recent work by \citet{Stolfo:etal:2024} and \citet{Macocco:etal:2025} on how LLMs might be implementing frequent-token-favoring heuristics.
    \item We found that, for 3/5 models, Euclidean distance applied to unembedding matrix representations does not lead to concentration of distances, although it still leads to what appear to be nuisance hubs. The nature of the distance distributions of these models and the reason why they lead to nuisance hubs will have to be studied in future work.
    \item Our understanding of when a hub is a ``junk'' item (as in Table \ref{table:tt_hub_examples_euc}) or semantically distant from the items it is a near neighbour of (as in Table \ref{table:cc_hubs_in_weird_neighb_examples_euc}) is entirely based on qualitative observation. We leave it to further work to turn these intuitions into automated quantitative scores.
\end{itemize}

\section*{Ethics Statement}

The inner workings of language models are still largely unknown. This makes their increasingly common deployment in a variety of settings essentially unreliable and potentially harmful. Our paper constitutes a small contribution towards a better understanding of how language models work, and hence, ultimately, towards increasing their safety.

\section*{Acknowledgments}

We thank Santiago Acevedo, Luca Moschella, the members of the COLT group at Universitat Pompeu Fabra and the ARR reviewers for feedback and advice. Beatrix M. G. Nielsen was supported by the Danish Pioneer Centre for AI, DNRF grant number P1. Iuri Macocco and Marco Baroni received funding from the European Research Council (ERC) under the European Union’s Horizon 2020 research and innovation program (grant agreement No.~101019291)  and from the Catalan government (AGAUR grant SGR 2021 00470). This paper reflects the authors’ view only, and the funding agencies are not responsible for any use that may be made of the information it contains.
\bibliography{custom, anthology, marco}

\appendix

\section{Proof that non-uniform probability distances do not concentrate}
\label{Appendix:proof_prob_dist_no_concentration}
We here prove theorem \ref{theorem:prob_dist_no_concentration}.
\setcounter{theorem}{0}
\begin{theorem}
Let $\x_i \in X$ be a data point. Let $\y_j$, $j\in \{1, ..., v\}$, be the possible labels of points from $X$, and let $p(\y_j|\x)$ be the probability of label $\y_j$ given $\x$ which uses representations $\f(\x), \g(\y) \in \R^m$.
We define the dissimilarity between $\x_i$ and $\y_j$ to be $d(\x_i, \y_j) = 1-p(\y_j|\x_i)$. Then if the distribution over $\y$ does not go to the uniform distribution for every $\x$, $p(\y|\x) \not\to \textit{U}(\y)$, then we will not get concentration of distances for this dissimilarity as the dimension $m \to \infty$. 
\end{theorem}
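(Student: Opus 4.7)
My plan is to invoke the standard variance-over-mean-squared characterization of concentration of distances (Beyer et al.\ 1999; Durrant and Kabán 2009): for a fixed query $\x$, the distances from $\x$ to a pool of comparison points $\y$ (drawn uniformly from the vocabulary $\{\y_1,\ldots,\y_v\}$, which is the natural sampling distribution for the $v$ candidate labels) concentrate as $m\to\infty$ iff $\mathrm{Var}_\y[d(\x,\y)] / \left(\EX_\y[d(\x,\y)]\right)^2 \to 0$. Given this criterion, it suffices to compute the two moments of $d(\x,\y) = 1 - p(\y|\x)$ and show that their ratio does not vanish under the hypothesis.

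The first moment is immediate, because $\sum_{j}p(\y_j|\x)=1$ forces $\EX_\y[d(\x,\y)] = 1 - 1/v$, a constant in $m$. The key step is a one-line algebraic manipulation giving
\begin{equation*}
\mathrm{Var}_\y[d(\x,\y)] \;=\; \mathrm{Var}_\y[p(\y|\x)] \;=\; \tfrac{1}{v}\,\|p(\cdot|\x) - U\|_2^2,
\end{equation*}
so the variance equals, up to a factor $1/v$, the squared $L^2$ distance from the conditional distribution to the uniform distribution $U$. The concentration ratio therefore reduces to $\|p(\cdot|\x) - U\|_2^2 / (v(1-1/v)^2)$, whose denominator does not depend on $m$. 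The hypothesis $p(\y|\x)\not\to U(\y)$ guarantees the existence of some $\x$ for which the numerator stays bounded away from zero along a subsequence of dimensions, so the ratio does not vanish and we do not have concentration of distances.

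The main obstacle I anticipate is bookkeeping rather than mathematics: one has to be explicit about which distribution the comparison labels $\y$ are sampled from, and phrase "$p(\y|\x)\not\to U(\y)$" as a quantitative non-vanishing lower bound on $\|p(\cdot|\x) - U\|_2$ that can be pushed through to the variance. There is also a minor subtlety in which characterization of concentration to adopt, since several equivalent formulations (relative gap between $d_{\max}$ and $d_{\min}$, vanishing coefficient of variation, etc.) appear in the literature; picking the coefficient-of-variation form makes the computation cleanest, since both the mean and the bound on the variance are already expressed in terms of the conditional distribution $p(\cdot|\x)$. Once the quantifiers are pinned down, the proof collapses to the identity above connecting the variance of a probability mass function on $v$ atoms to its squared distance from uniform.
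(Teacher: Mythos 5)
Your proposal is correct and follows essentially the same route as the paper: invoke the Durrant--Kab\'an variance-over-squared-mean criterion, observe that $\EX_\y[d]=1-1/v$ is $m$-independent, and reduce $\mathrm{Var}_\y[d]$ to (a $1/v$ multiple of) the squared $L^2$ distance between $p(\cdot\,|\,\x)$ and the uniform distribution. The only cosmetic difference is bookkeeping over $\x$: the paper takes the joint expectation $\EX_{\x,\y}$ throughout (matching the form of Theorem~2 in Durrant and Kab\'an), so its variance is $\EX_\x\bigl[\tfrac{1}{v}\sum_j (p(\y_j|\x)-1/v)^2\bigr]$, whereas you fix $\x$ and then argue that the hypothesis supplies some $\x$ whose numerator stays bounded away from zero; both resolutions of the quantifier work, and the central algebraic identity is identical.
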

\begin{proof}
    By theorem 2 in \citep{durrant2009nearest}, if not 
    \begin{align}
        \text{lim}_{m \to \infty} \frac{\text{Var}_{\x, \y}[d(\x, \y)]}{\EX_{\x, \y}[d(\x, \y)]^2} = 0
    \end{align}
    then we do not get concentration of distances. Therefore, we will consider $\frac{\text{Var}_{\x, \y}[d(\x, \y)]}{\EX_{\x, \y}[d(\x, \y)]^2}$. 
    First we consider $\EX_{\y}[d(\x, \y)]^2$. 
    \begin{align*}
        \EX_{\x, \y}[d(\x, \y)]^2 &= (\EX_{\x}\EX_{\y}[d(\x, \y)])^2 \\
        &= (\EX_{\x}[\frac{1}{v}\sum_{j=1}^v (1-p(\y_j|\x))])^2 \\
        &= (\EX_{\x}[1 - \frac{1}{v}\sum_{j=1}^v p(\y_j|\x)])^2 \\
        &= \left(1 - \frac{1}{v}\right)^2                
    \end{align*}
    We see that this does not depend on the dimension, $m$. Therefore, if we can show that $\text{lim}_{m \to \infty} \text{Var}_{\x, \y}[d(\x, \y)] \neq 0$, we are done. 
    We consider $\text{Var}_{\x, \y}[d(\x, \y)]$. 
    \begin{align*}
        \text{Var}_{\x, \y}[d(\x, \y)] &= \text{Var}_{\x, \y}[1 -p(\y|\x)] \\
        &= \text{Var}_{\x, \y}\left[p(\y|\x) - \frac{1}{v}\right] \\
        &= \EX_{\x, \y}\left[\left(p(\y|\x)-\frac{1}{v}\right)^2\right] \\ &\;\;- \EX_{\x, \y}\left[p(\y|\x)-\frac{1}{v}\right]^2 \\
    \end{align*}
    We see that 
    \begin{align*}
        \EX_{\x, \y}&\left[p(\y|\x) - \frac{1}{v}\right]^2  \\
        &= \left(\EX_{\x}\left[\frac{1}{v}\sum_{j=1}^v p(\y_j|\x)] - \frac{1}{v} \right]\right)^2 \\
        &=\left(\frac{1}{v} - \frac{1}{v}\right)^2 = 0
    \end{align*}
    So we get that 
    \begin{align*}
        \text{Var}_{\x, \y}[d(\x, \y)] &= \EX_{\x, \y}\left[\left(p(\y|\x)-\frac{1}{v}\right)^2\right] \\
        &= \EX_{\x}\left[ \frac{1}{v} \sum_{j=1}^v \left(p(\y_j|\x)-\frac{1}{v}\right)^2\right]
    \end{align*}
    The summation is the L2 distance between the probability functions $p(\y|\x)$ and the uniform distribution over $\y$.   
    Therefore this does not go to zero, unless $p(\y|\x)$ goes to the uniform distribution over $\y$ for every $\x$. 
\end{proof}

\section{Occurrence of hubs}
\label{app:hub_occurrence}
We here present information about the occurrence of hubs for the tested models when comparing the representations using either Euclidean distance, normalized Euclidean distance or softmaxed dot product. The softmaxed dot product is what the model uses when comparing contexts with vocabulary items to get probabilities of next tokens; however, it is also possible to do a softmaxed dot product of contexts with contexts or vocabulary with vocabulary. Since we showed in Theorem \ref{theorem:prob_dist_no_concentration} that the softmaxed dot product will not display a concentration of distances if the distribution is not uniform, one might hope that the softmaxed dot product could be used to compare contexts with contexts or vocabulary items with vocabulary items without getting nuisance hubs. However, when comparing vocabulary items, we get close to uniform distributions (Table \ref{table:l2_dist_uni} in Appendix \ref{app:l2_dists_table_and_explanation}), and when we compare contexts, we get that contexts are usually much closer to themselves than to other contexts, but all other contexts are still far away (figures \ref{fig:cc_softmax_dot_distribution_llama}, \ref{fig:cc_softmax_dot_distribution_pythia}, \ref{fig:cc_softmax_dot_distribution_opt}, \ref{fig:cc_softmax_dot_distribution_olmo} and \ref{fig:cc_softmax_dot_distribution_mistral} in Appendix \ref{app:distribution_of_cc_distances}). 

In Table \ref{table:pred_hub_occurrence} we show statistics of prediction hubs for the tested models on the tested datasets. Table \ref{table:cc_hub_occurrence} presents hub statistics for contexts compared with contexts and Table \ref{table:tt_hub_occurrence} has vocabulary items compared with vocabulary items. 

Statistics concerning prediction hubs, hubs of contexts compared with contexts and vocabulary items compared with vocabulary items for Pythia's  training checkpoints are in tables \ref{table:ct_pythia_steps_hub_occurrence}, \ref{table:cc_pythia_steps_hub_occurrence} and \ref{table:tt_pythia_steps_hub_occurrence}, respectively.

\begin{table*}
    \begin{tabular}{llrrrrrr}
    \hline
    \textbf{model} & \textbf{context} & \textbf{num hubs} & \textbf{$k$-skew} & \textbf{median $N_k$} & \textbf{mean $N_k$} & \textbf{max $N_k$} & \textbf{var $N_k$} \\
    \hline
    Pythia & Pile10k & 540 & 53.03 & 212.00 & 598.45 & 11715 & 1848618.70 \\
    Pythia & WikiText-103 & 547 & 56.89 & 198.00 & 610.39 & 15029 & 2521266.01 \\
    Pythia & Bookcorpus & 500 & 52.72 & 243.50 & 832.06 & 17246 & 3854568.12 \\
    Olmo & Pile10k & 519 & 50.23 & 224.00 & 635.68 & 11795 & 1950492.38 \\
    Olmo & WikiText-103 & 529 & 56.52 & 203.00 & 632.46 & 15011 & 2576370.51 \\
    Olmo & Bookcorpus & 493 & 51.77 & 249.00 & 840.93 & 17293 & 3661079.69 \\
    Opt & Pile10k & 536 & 53.28 & 220.00 & 625.90 & 12335 & 2115506.63 \\
    Opt & WikiText-103 & 539 & 57.64 & 194.00 & 618.47 & 15628 & 2726513.86 \\
    Opt & Bookcorpus & 503 & 51.74 & 241.00 & 824.51 & 17425 & 3700789.63 \\
    Mistral & Pile10k & 527 & 42.27 & 219.00 & 647.29 & 12376 & 2350693.16 \\
    Mistral & WikiText-103 & 538 & 44.98 & 206.00 & 640.02 & 15148 & 2873570.50 \\
    Mistral & Bookcorpus & 511 & 40.92 & 240.00 & 810.77 & 17678 & 3503898.55 \\
    Llama & Pile10k & 501 & 86.89 & 210.00 & 645.67 & 15174 & 2288104.43 \\
    Llama & WikiText-103 & 506 & 90.48 & 194.00 & 661.64 & 16390 & 2962717.56 \\
    Llama & Bookcorpus & 493 & 88.92 & 252.00 & 834.07 & 19255 & 3801136.99 \\
    \hline
    \end{tabular}
    \caption{\label{table:pred_hub_occurrence}
    Hubs occurring in predictions for the tested models. All models have high $k$-skewness on all datasets. Also, for all models and all datasets, there are a large number of hubs and the maximum $k$-occurrence is quite high.
  }
\end{table*}

\begin{table*}
    \small
    \begin{tabular}{lllrrrrrr}
    \hline
    \textbf{model} & \textbf{similarity} & \textbf{context} & \textbf{num hubs} & \textbf{$k$-skew} & \textbf{median $N_k$} & \textbf{mean $N_k$} & \textbf{max $N_k$} & \textbf{var $N_k$} \\
    \hline
    Pythia & euc & Pile10k & 404 & 12.43 & 145.00 & 183.08 & 887 & 12934.00 \\
    Pythia & euc & WikiText-103 & 340 & 11.86 & 130.00 & 170.10 & 918 & 10276.35 \\
    Pythia & euc & Bookcorpus & 263 & 9.15 & 134.00 & 160.01 & 630 & 6555.05 \\
    Pythia & norm euc & Pile10k & 156 & 6.19 & 122.00 & 138.71 & 455 & 2686.78 \\
    Pythia & norm euc & WikiText-103 & 115 & 5.27 & 125.00 & 140.08 & 278 & 1907.99 \\
    Pythia & norm euc & Bookcorpus & 108 & 5.11 & 121.00 & 139.14 & 355 & 2175.86 \\
    Pythia & softmax dot & Pile10k & 21 & 6.70 & 12514.00 & 21421.86 & 49999 & 395655332.50 \\
    Pythia & softmax dot & WikiText-103 & 21 & 6.02 & 12504.00 & 21425.05 & 50000 & 395697992.71 \\
    Pythia & softmax dot & Bookcorpus & 21 & 7.98 & 12536.00 & 21415.71 & 49999 & 395465874.87 \\
    Olmo & euc & Pile10k & 41 & 3.55 & 118.00 & 124.83 & 220 & 602.14 \\
    Olmo & euc & WikiText-103 & 26 & 3.24 & 113.50 & 124.00 & 200 & 759.92 \\
    Olmo & euc & Bookcorpus & 76 & 3.69 & 116.00 & 123.88 & 239 & 604.79 \\
    Olmo & norm euc & Pile10k & 41 & 3.55 & 118.00 & 124.90 & 220 & 600.87 \\
    Olmo & norm euc & WikiText-103 & 25 & 3.25 & 115.00 & 125.24 & 201 & 782.34 \\
    Olmo & norm euc & Bookcorpus & 76 & 3.69 & 116.00 & 123.91 & 239 & 605.00 \\
    Olmo & softmax dot & Pile10k & 21 & 3.55 & 12507.00 & 21425.29 & 50000 & 395704459.82 \\
    Olmo & softmax dot & WikiText-103 & 21 & 3.24 & 12507.00 & 21425.29 & 50000 & 395704459.82 \\
    Olmo & softmax dot & Bookcorpus & 21 & 3.69 & 12507.00 & 21425.29 & 50000 & 395704459.82 \\
    Opt & euc & Pile10k & 181 & 10.99 & 133.00 & 162.28 & 700 & 7619.44 \\
    Opt & euc & WikiText-103 & 188 & 7.95 & 129.50 & 148.91 & 521 & 4158.88 \\
    Opt & euc & Bookcorpus & 193 & 6.16 & 128.00 & 145.15 & 500 & 2770.80 \\
    Opt & norm euc & Pile10k & 180 & 11.00 & 134.00 & 162.41 & 707 & 7620.39 \\
    Opt & norm euc & WikiText-103 & 185 & 7.98 & 129.00 & 149.52 & 524 & 4205.78 \\
    Opt & norm euc & Bookcorpus & 189 & 6.14 & 129.00 & 145.95 & 497 & 2759.84 \\
    Opt & softmax dot & Pile10k & 9 & 11.11 & 50000.00 & 49993.67 & 50000 & 157.78 \\
    Opt & softmax dot & WikiText-103 & 9 & 7.96 & 50000.00 & 49996.44 & 50000 & 44.25 \\
    Opt & softmax dot & Bookcorpus & 9 & 6.05 & 50000.00 & 49996.44 & 50000 & 44.25 \\
    Mistral & euc & Pile10k & 292 & 43.26 & 139.00 & 203.08 & 2723 & 61061.52 \\
    Mistral & euc & WikiText-103 & 313 & 11.39 & 139.00 & 174.85 & 840 & 10196.62 \\
    Mistral & euc & Bookcorpus & 192 & 7.41 & 127.00 & 146.66 & 585 & 4276.08 \\
    Mistral & norm euc & Pile10k & 201 & 70.69 & 133.00 & 152.31 & 596 & 3946.67 \\
    Mistral & norm euc & WikiText-103 & 237 & 70.69 & 128.00 & 145.98 & 462 & 3050.37 \\
    Mistral & norm euc & Bookcorpus & 139 & 70.69 & 124.00 & 136.22 & 416 & 2439.06 \\
    Mistral & softmax dot & Pile10k & 10 & 46.15 & 49992.00 & 49992.00 & 49992 & 0.00 \\
    Mistral & softmax dot & WikiText-103 & 10 & 49.84 & 49996.00 & 49996.00 & 49996 & 0.00 \\
    Mistral & softmax dot & Bookcorpus & 10 & 64.73 & 49997.00 & 49997.00 & 49997 & 0.00 \\
    Llama & euc & Pile10k & 85 & 4.11 & 120.00 & 130.75 & 279 & 950.04 \\
    Llama & euc & WikiText-103 & 110 & 5.62 & 122.50 & 146.75 & 323 & 3024.46 \\
    Llama & euc & Bookcorpus & 86 & 3.73 & 117.00 & 124.62 & 223 & 642.77 \\
    Llama & norm euc & Pile10k & 34 & 3.11 & 114.00 & 117.76 & 164 & 213.18 \\
    Llama & norm euc & WikiText-103 & 52 & 3.93 & 119.50 & 137.33 & 211 & 1184.68 \\
    Llama & norm euc & Bookcorpus & 51 & 3.35 & 115.00 & 122.92 & 186 & 438.78 \\
    Llama & softmax dot & Pile10k & 9 & 2.51 & 50000.00 & 49996.44 & 50000 & 44.25 \\
    Llama & softmax dot & WikiText-103 & 9 & 2.86 & 50000.00 & 49996.44 & 50000 & 44.25 \\
    Llama & softmax dot & Bookcorpus & 9 & 2.93 & 50000.00 & 49996.44 & 50000 & 44.25 \\
    \hline
    \end{tabular}
    \caption{\label{table:cc_hub_occurrence}
    Hub occurrence in context-to-context comparisons of models. Here, we find a variable number of hubs. Notice that in the cases where there are very few hubs, they also have a very high $k$-occurrence. $k$-skew is generally high, but noticeably lower for Olmo and Llama.  
  }
\end{table*}

\begin{table*}
    \begin{tabular}{llrrrrrr}
    \hline
    \textbf{model} & \textbf{similarity} & \textbf{num hubs} & \textbf{$k$-skew} & \textbf{median $N_k$} & \textbf{mean $N_k$} & \textbf{max $N_k$} & \textbf{var $N_k$} \\
    \hline
    Pythia & euc & 219 & 28.09 & 1204.00 & 1542.98 & 7010 & 1461748.99 \\
    Pythia & norm euc & 213 & 15.60 & 175.00 & 187.92 & 480 & 4670.50 \\
    Pythia & softmax dot & 82 & 87.72 & 228.00 & 632.27 & 6849 & 1076426.66 \\
    Olmo & euc & 182 & 48.49 & 569.00 & 1582.87 & 16758 & 5493220.08 \\
    Olmo & norm euc & 2 & 2.87 & 129.00 & 129.00 & 153 & 576.00 \\
    Olmo & softmax dot & 11 & 17.00 & 368.00 & 333.91 & 416 & 6904.81 \\
    Opt & euc & 121 & 133.76 & 2351.00 & 2925.73 & 49567 & 20890799.01 \\
    Opt & norm euc & 131 & 187.67 & 480.00 & 644.24 & 17868 & 2339052.32 \\
    Opt & softmax dot & 61 & 95.73 & 437.00 & 1544.64 & 15035 & 8521519.41 \\
    Mistral & euc & 92 & 55.70 & 475.50 & 1665.46 & 15492 & 6620836.97 \\
    Mistral & norm euc & 42 & 48.47 & 890.00 & 1750.00 & 5908 & 2951721.24 \\
    Mistral & softmax dot & 72 & 127.38 & 219.50 & 946.78 & 19930 & 6324938.23 \\
    Llama & euc & 154 & 119.95 & 2342.00 & 5214.19 & 75630 & 87178321.52 \\
    Llama & norm euc & 157 & 51.83 & 1417.00 & 1839.80 & 9633 & 2734227.93 \\
    Llama & softmax dot & 115 & 126.75 & 290.00 & 2480.46 & 34902 & 32640506.49 \\
    \hline
    \end{tabular}
    \caption{\label{table:tt_hub_occurrence}
    Hub occurrence in vocabulary to vocabulary comparisons of models. All models have high $k$-skewness except Olmo when using normalized Euclidean distance.
  }
\end{table*}

\begin{table*}    
    \begin{tabular}{llrrrrrr}
    \hline
    \makecell{\textbf{Pythia} \\ \textbf{train step}} & \textbf{context} & \makecell{\textbf{num} \\ \textbf{hubs}} & \textbf{$k$-skew} & \textbf{median $N_k$} & \textbf{mean $N_k$} & \textbf{max $N_k$} & \textbf{var $N_k$} \\
    \hline
    512 & Pile10k & 494 & 60.79 & 280.00 & 921.15 & 23732 & 5546010.32 \\
    512 & WikiText-103 & 384 & 59.93 & 319.50 & 1216.65 & 25522 & 9575950.85 \\
    512 & Bookcorpus & 329 & 54.56 & 466.00 & 1458.22 & 23409 & 8832689.96 \\
    4000 & Pile10k & 541 & 54.05 & 216.00 & 655.19 & 14190 & 2461721.49 \\
    4000 & WikiText-103 & 517 & 58.84 & 213.00 & 703.55 & 18218 & 3566829.92 \\
    4000 & Bookcorpus & 445 & 54.66 & 262.00 & 977.78 & 20739 & 5542898.30 \\
    16000 & Pile10k & 530 & 53.26 & 221.00 & 630.52 & 13209 & 2077732.91 \\
    16000 & WikiText-103 & 528 & 58.19 & 202.00 & 655.80 & 16334 & 2916747.40 \\
    16000 & Bookcorpus & 483 & 53.30 & 248.00 & 876.65 & 19036 & 4366880.92 \\
    64000 & Pile10k & 544 & 53.24 & 222.50 & 599.79 & 11827 & 1875166.92 \\
    64000 & WikiText-103 & 546 & 56.94 & 200.50 & 619.68 & 15334 & 2575362.91 \\
    64000 & Bookcorpus & 490 & 54.27 & 247.00 & 852.74 & 19433 & 4033276.62 \\
    \hline
    \end{tabular}
    \caption{\label{table:ct_pythia_steps_hub_occurrence}
    Hub occurrence in prediction hubs of training checkpoints of Pythia. All checkpoints have high $k$-skewness.
  }
\end{table*}

\begin{table*}
    \small
    \begin{tabular}{llrrrrrrr}
    \hline
    \makecell{\textbf{Pythia} \\ \textbf{train step}} & \textbf{similarity} & \textbf{context} & \makecell{\textbf{num} \\ \textbf{hubs}} & \textbf{$k$-skew} & \textbf{median $N_k$} & \textbf{mean $N_k$} & \textbf{max $N_k$} & \textbf{var $N_k$} \\
    \hline
    512 & euc & Pile10k & 0 & 1.51 & - & - & - & - \\
    512 & euc & WikiText-103 & 0 & 1.67 & - & - & - & - \\
    512 & euc & Bookcorpus & 0 & 1.42 & - & - & - & - \\
    512 & norm euc & Pile10k & 0 & 1.51 & - & - & - & - \\
    512 & norm euc & WikiText-103 & 0 & 1.67 & - & - & - & - \\
    512 & norm euc & Bookcorpus & 0 & 1.42 & - & - & - & - \\
    512 & softmax dot & Pile10k & 9 & 1.51 & 50000.00 & 49994.89 & 50000.00 & 102.32 \\
    512 & softmax dot & WikiText-103 & 9 & 1.67 & 50000.00 & 49996.44 & 50000.00 & 44.25 \\
    512 & softmax dot & Bookcorpus & 9 & 1.42 & 50000.00 & 49996.44 & 50000.00 & 44.25 \\
    4000 & euc & Pile10k & 77 & 4.52 & 121.00 & 135.51 & 290.00 & 1362.20 \\
    4000 & euc & WikiText-103 & 64 & 3.95 & 117.50 & 128.09 & 255.00 & 863.33 \\
    4000 & euc & Bookcorpus & 55 & 5.54 & 121.00 & 143.87 & 508.00 & 4929.57 \\
    4000 & norm euc & Pile10k & 71 & 4.31 & 121.00 & 133.89 & 265.00 & 1163.00 \\
    4000 & norm euc & WikiText-103 & 57 & 3.81 & 114.00 & 126.96 & 245.00 & 784.45 \\
    4000 & norm euc & Bookcorpus & 52 & 5.36 & 119.00 & 143.12 & 486.00 & 4628.29 \\
    4000 & softmax dot & Pile10k & 9 & 3.91 & 50000.00 & 49994.67 & 50000.00 & 100.22 \\
    4000 & softmax dot & WikiText-103 & 9 & 3.41 & 50000.00 & 49996.44 & 50000.00 & 44.25 \\
    4000 & softmax dot & Bookcorpus & 9 & 4.69 & 50000.00 & 49996.44 & 50000.00 & 44.25 \\
    16000 & euc & Pile10k & 324 & 14.97 & 141.00 & 188.35 & 1167.00 & 15864.51 \\
    16000 & euc & WikiText-103 & 249 & 10.81 & 133.00 & 157.92 & 826.00 & 7733.11 \\
    16000 & euc & Bookcorpus & 181 & 6.97 & 125.00 & 144.98 & 542.00 & 4211.09 \\
    16000 & norm euc & Pile10k & 183 & 8.58 & 134.00 & 156.45 & 696.00 & 4892.84 \\
    16000 & norm euc & WikiText-103 & 108 & 5.83 & 124.50 & 140.79 & 415.00 & 2920.02 \\
    16000 & norm euc & Bookcorpus & 94 & 4.77 & 123.00 & 137.38 & 364.00 & 2102.22 \\
    16000 & softmax dot & Pile10k & 9 & 2.69 & 50000.00 & 49994.56 & 50000.00 & 99.80 \\
    16000 & softmax dot & WikiText-103 & 9 & 2.03 & 50000.00 & 49996.44 & 50000.00 & 44.25 \\
    16000 & softmax dot & Bookcorpus & 9 & 2.37 & 50000.00 & 49996.44 & 50000.00 & 44.25 \\
    64000 & euc & Pile10k & 484 & 45.41 & 148.00 & 230.15 & 4113.00 & 85626.85 \\
    64000 & euc & WikiText-103 & 400 & 15.26 & 147.50 & 195.59 & 1307.00 & 18498.18 \\
    64000 & euc & Bookcorpus & 321 & 16.14 & 132.00 & 170.63 & 1309.00 & 14396.79 \\
    64000 & norm euc & Pile10k & 152 & 11.84 & 129.00 & 156.08 & 863.00 & 8231.98 \\
    64000 & norm euc & WikiText-103 & 101 & 5.88 & 129.00 & 143.64 & 337.00 & 2566.94 \\
    64000 & norm euc & Bookcorpus & 113 & 5.26 & 123.00 & 139.71 & 327.00 & 2022.99 \\
    64000 & softmax dot & Pile10k & 9 & 3.51 & 49999.00 & 49988.78 & 50000.00 & 461.51 \\
    64000 & softmax dot & WikiText-103 & 9 & 3.45 & 50000.00 & 49995.44 & 50000.00 & 66.69 \\
    64000 & softmax dot & Bookcorpus & 9 & 10.96 & 50000.00 & 49988.89 & 50000.00 & 569.21 \\
    \hline
    \end{tabular}
    \caption{\label{table:cc_pythia_steps_hub_occurrence}
    Hub occurrence in context-to-context hubs of training checkpoints of Pythia. Hubness seems to increase during training.
  }
\end{table*}

\begin{table*}
    \begin{tabular}{llrrrrrr}
    \hline
    \makecell{\textbf{Pythia} \\ \textbf{train step}} & \textbf{similarity} & \makecell{\textbf{num} \\ \textbf{hubs}} & \textbf{$k$-skew} & \textbf{median $N_k$} & \textbf{mean $N_k$} & \textbf{max $N_k$} & \textbf{var $N_k$} \\
    \hline
    512 & euc & 849 & 25.31 & 178.00 & 283.46 & 4208.00 & 97323.77 \\
    512 & norm euc & 0 & 0.39 & - & - & - & - \\
    512 & softmax dot & 0 & 0.51 & - & - & - & - \\
    4000 & euc & 126 & 91.37 & 345.00 & 2778.63 & 48472.00 & 59240949.76 \\
    4000 & norm euc & 0 & 1.12 & - & - & - & - \\
    4000 & softmax dot & 0 & 0.90 & - & - & - & - \\
    16000 & euc & 144 & 93.25 & 259.00 & 1928.71 & 42221.00 & 30508081.08 \\
    16000 & norm euc & 0 & 1.10 & - & - & - & - \\
    16000 & softmax dot & 2 & 9.10 & 243.00 & 243.00 & 333.00 & 8100.00 \\
    64000 & euc & 220 & 32.10 & 1083.00 & 1522.12 & 9107.00 & 1897935.92 \\
    64000 & norm euc & 8 & 8.37 & 155.50 & 166.38 & 325.00 & 4226.48 \\
    64000 & softmax dot & 36 & 103.45 & 208.50 & 408.83 & 2937.00 & 272428.14 \\
    \hline
    \end{tabular}
    \caption{\label{table:tt_pythia_steps_hub_occurrence}
    Hub occurrence in vocabulary to vocabulary comparisons of training checkpoints of Pythia. All checkpoints have high $k$-skewness when using Euclidean distance, but, with the other distances, $k$-skewness only becomes high later during training.
  }
\end{table*}

\section{Hub examples}
\label{app:hub_examples}
Examples of hubs when comparing vocabulary items using normalized Euclidean distance (Table \ref{table:tt_hub_examples_norm_euc}) and softmaxed dot product (Table \ref{table:tt_hub_examples_dot}). These examples show that the hubs are ``junk'' tokens, as expected from nuisance hubs.  

Examples of hubs when comparing contexts using Euclidean distance on Pile10k are in Table \ref{table:cc_hub_examples_euc}. Note that in this case potential neighbours range over the 50k natural language sequences in each dataset, which are unlikely to contain ``junk text''. Still, when we consider the neighbourhoods in which the hubs occur (examples in Table \ref{table:cc_hubs_in_weird_neighb_examples_euc}), we see that they tend to occur in the neighbourhoods of largely semantically unrelated contexts nuisance, as expected of nuisance hubs.

\begin{table*}
    \small 
  \centering
  \begin{tabular}{l|lllll|}
    \hline
                     & \multicolumn{5}{c|}{\textbf{Normalized Euclidean distance hub examples} }   \\
    \hline
    \textbf{Pythia}  &   neighb & \textbackslash n 44x\_ & \textbackslash n 11x\_ & disappe & \textbackslash n 43x\_ \\
    \textbf{Olmo}    &   \textbackslash n\textbackslash n\textbackslash n 3x\_ & imonit & - & - & - \\
    \textbf{Opt}     &  <pad> & <mask> & \textbackslash ufffd & \textbackslash u0011 & madeupword0000 \\
    \textbf{Mistral} & \},\textbackslash r & ());\textbackslash r & \textbackslash u1940 & \};\textbackslash r & \textbackslash ">\textbackslash r \\
    \textbf{Llama}   &   -->\textbackslash r\textbackslash n\textbackslash r\textbackslash n & artisanlib & ',\textbackslash r\textbackslash r\textbackslash n & \tiny{\textbackslash u044e\textbackslash u0447\textbackslash u0438\textbackslash u0441\textbackslash u044c} & \textbackslash u045fN \\
    \hline
  \end{tabular}
  \caption{\label{table:tt_hub_examples_norm_euc}
    Top five hubs when comparing vocabulary items for the various LLMs using normalized Euclidean distance. They are nearly all ``junk'' tokens. To display long sequences of spaces, we write nx\_ where n is number of spaces. OLMo only has two hubs in this case, so we use - to denote there is no token in places three to five.
  }
\end{table*}

\begin{table*}
  \centering
  \begin{tabular}{|l|lllll|}
    \hline
                     & \multicolumn{5}{c|}{\textbf{softmaxed dot product hub examples} }       \\
    \hline
    \textbf{Pythia}  &  neighb     & acknow & laug & resil & advertis  \\
    \textbf{Olmo}    &  \textbackslash ufffd\textbackslash ufffd   & \textbackslash ufffd\textbackslash ufffd  & \textbackslash ufffd\textbackslash ufffd  & \textbackslash ufffd\textbackslash ufffd   & \textbackslash ufffd    \\
    \textbf{Opt}     &  20439  & Vaults & \textbackslash ufffd\textbackslash ufffd\textbackslash u6975  & Depths  &  \textbackslash u899a\textbackslash u9192  \\
    \textbf{Mistral} &  /******/   &  Geplaatst & qpoint & ICENSE & vscale \\
    \textbf{Llama}   &  HeaderCode & .scalablytyped & addCriterion & GuidId & OffsetTable \\
    \hline
  \end{tabular}
  \caption{\label{table:tt_hub_examples_dot}
    Top five hubs when comparing vocabulary items for the various LLMs using softmaxed dot product. They are mostly ``junk'' tokens, they differ a lot across model. These are examples of nuisance hubs.
  }
\end{table*}

\begin{table*}
    \small 
  \centering
  \begin{tabular}{|l|cc|}
    \hline
                     & \multicolumn{2}{c|}{\textbf{Euclidean distance hub examples on Pile10k} }    \\
    \hline
    \textbf{Pythia}  &  \makecell{Mart\textbackslash u00ed and Sandoya , 2013 ) , 2D and 3D bin packing ( Alvarez - Valdes et al . , \\ \tiny{secondary cave proves that your camp does n\textbackslash u2019t want to fight for conservative principles ever . Happy Nomad on December} \\ l = -54 - -305 . Let k = l - 255 . Does k = 0 ? False Let \\ to scale , as for a right \& quot . We had to pay , taking across the theory\&hellip made \\ 0 . What is the lowest common multiple of ( -8)/28 + ( -32)/(-14 ) and m ? 18 Let} & \makecell{2013 \\  11 \\ w(a \\ AD \\ j(t }\\
    \hline
    \textbf{Olmo}    &   \makecell{. Indeed almost no one ever does that for a longer period , but at least we can . The \\ did n't want to be . I know now that he must have been taking drugs from time to time \\ been our bread and butter , " said Springstead . That wo n't stop , he said , but the \\ \tiny{loose there context when stored in a directory , the only thing you have to keep the context is the } \\ ruins . " " What 's so great about you anyway ? " " Seen one , seen them all } & \makecell{unidentified \\ . \\ bar \\ directory \\ . } \\
    \hline
    \textbf{Opt}     & \makecell{of each other , then Bruma , Vlastarus , and Cropsford , were the most even for the time being \\ . Indeed almost no one ever does that for a longer period , but at least we can . The \\ but remember I 'm not a powerful money owner ) and my cell phone , but ca n't know if \\ \scriptsize{other sweeties and started to court me , and it is always clear that poly has been successful for him} \\ huntings e we keep the weapons pointed with respect to them . " " Navigating , as it is this } & \makecell{. \\ unidentified \\ they \\ , \\ ?} \\
    \hline
    \textbf{Mistral} & \makecell{ \tiny{materials , outcropping from splendid descriptions and friendships to bottom mess semi - circular as fear and postcode Check ,} \\ \scriptsize{ by a Non - interacting Fig . , a detail who tied back avoided flow contact 10 ceilings specifically .} \\ \tiny{waited that home - cooked download pflanzenreich pfitzer orchidaceae surprised a judgment of the subject of popular robots under thesis} \\ . He loads prepare a heavy energy for page , rather ; too he occupies a dad that does , \\ I \& II \& , 1977 . complex to important . Slater : Quantum teaching : navigate , taimen , } & \makecell{Convective \\ At \\ : \\ Fortunately \\ ' } \\
    \hline
    \textbf{Llama}   &  \makecell{ \tiny{0xFFCB // -0.003229 0xFF96 // 0.002528 0x0053 // -0.001220 0xFFD8 // -0.002878 0xFFA2 // -0.001199 0xFFD9 // 0.002841 0x005D //} \\ \scriptsize{use variables that you wo n't have though , so you may need to change them . And as mentioned } \\ - g\_f)\$ would be substantially lower than the value of \$ 4 \$ given by Eq . indicates that the \\ \tiny{were merely minor annoyances , and he went about an elaborate campaign to just go ahead and steal it anyway} \\ in [ Figure 4](\#f4 - ehp-119 - 784){ref - type="fig " } ) . ( * B * ) The } & \makecell{-0.003089 \\ , \\ connection \\ . \\ signaling } \\
    \hline
  \end{tabular}
  \caption{\label{table:cc_hub_examples_euc}
    Top five hubs when comparing contexts for the various LLMs using Euclidean distance on Pile10k. Next tokens are on the right. 
  }
\end{table*}

\begin{table*}
    \small 
  \centering
  \begin{tabular}{|l|cc|}
    \hline
                     & \multicolumn{2}{c|}{\textbf{Examples of Euclidean distance hubs in weird neighbourhoods on Pile10k} }    \\
    \hline
    \textbf{Pythia hub}  &  \makecell{Mart\textbackslash u00ed and Sandoya , 2013 ) , 2D and 3D bin packing ( Alvarez - Valdes et al . , } & \makecell{2013} \\
    \hline 
    \scriptsize{\textbf{Neighbourhoods}}  &  \makecell{most quadratic . The natural framework for this kind of job is the one of refs . [ @ fieldcov ; \\ \tiny{photographed in Bahia for the 1978 issue . Career A former student of Communication at the Pontifical Catholic University of}  \\ \scriptsize{evicted from the land , which was then turned over to the church . ( Published by the Newman Postcard} } & \makecell{@masterf \\ Rio \\ Company} \\
    \hline
    \hline
    \textbf{Olmo hub}    &   \makecell{. Indeed almost no one ever does that for a longer period , but at least we can . The  } & \makecell{unidentified  } \\
    \hline
    \scriptsize{\textbf{Neighbourhoods}}    &   \makecell{ \scriptsize{a real world problem , and I remember a class I took where I made something very similar for some} \\ a recent edition of the NBAA Flight Plan podcast . " These days , you may not even know the \\ , and the traffic was flowing more freely than before . ' Mr Lai will finally give you the huge } & \makecell{for \\ people \\ bonus } \\
    \hline
    \hline
    \textbf{Opt hub}     & \makecell{materials , outcropping from splendid descriptions and friendships \\ to bottom mess semi - circular as fear and postcode Check ,  } & \makecell{Convective } \\
    \hline
    \scriptsize{\textbf{Neighbourhoods}}     & \makecell{\scriptsize{15\textbackslash u2013 17 . BLOOD , BREAD , AND POETRY The Location of the Poet ( 1984 ) The Miami airport ,} \\ , 1.77 ) * * * * 1.44 ( 1.18 , 1.76 ) * * Knows where to get family \\ \scriptsize{some specific lover , although that was the chief obsession of the legend - mongers for more than half a} } & \makecell{summer \\ planning \\ century} \\
    \hline
    \hline
    \textbf{Mistral hub} & \makecell{ . He loads prepare a heavy energy for page , rather ; too he occupies a dad that does , } & \makecell{Fortunately } \\
    \hline
    \scriptsize{\textbf{Neighbourhoods}} & \makecell{\scriptsize{15\textbackslash u2013 17 . BLOOD , BREAD , AND POETRY The Location of the Poet ( 1984 ) The Miami airport ,} \\ \scriptsize{smartphone market , if not more so . Between the Fire and W8 / RT , Google - sanctioned Android} \\ \scriptsize{million pounds of honey each year , told Food Safety News that \u201c honey has been valued by millions for}} & \makecell{ summer \\ on \\ centuries } \\
    \hline
    \hline
    \textbf{Llama hub}   &  \makecell{ sees you . My child more and more . Your is a slap on the face of humanity in general } & \makecell{and } \\
    \hline
    \scriptsize{\textbf{Neighbourhoods}}  &  \makecell{ \scriptsize{, " " Japan breaks the impasse on December 8th , " " Japan launched the attack on Pearl Harbor} \\ \scriptsize{Geometric Analysis , I were a atmosphere and HardcoverOne on G2 Manifolds and Related Topics on 19 - -25 August} \\ \scriptsize{selling a product at the end of the day . I would n\textbackslash u2019t want to compromise the story in search}} & \makecell{ . \\ 2017 \\ of } \\
    \hline
  \end{tabular}
  \caption{\label{table:cc_hubs_in_weird_neighb_examples_euc}
    Examples of contexts that have hubs in the ten nearest neighbours. The hubs are intuitively dissimilar from the contexts of which they are neighbours. 
  }
\end{table*}

\section{L2 distances to the uniform distribution}
\label{app:l2_dists_table_and_explanation}

We show the mean L2 distances to the uniform distribution in Table \ref{table:l2_dist_uni}. When comparing contexts with vocabulary items (cv), we get a distance that is far from zero, as expected. When comparing vocabulary entry with vocabulary entry (vv), we get a distance that is very close to zero, implying that we are close to a uniform probability distribution. When comparing contexts with other contexts, we get a distance very close to one. By inspection of the distance distributions, we see that this is because, among contexts, each item is much closer to itself than to any other item, resulting in a distribution very far from uniform (the probability of the context itself is close to one, and all other probabilities are close to zero). This is different from when comparing vocabulary item to vocabulary item, where we find that all items have close to the same distance to each other, including when comparing an item with itself.

\begin{table*}
    \centering
    \begin{tabular}{llcr}
    \hline
    \textbf{model} & \textbf{context} & \textbf{comparison type} & \textbf{mean L2 distance to uniform} \\
    \hline
    Pythia & Pile10k & cv & 0.44 \\
    Pythia & Pile10k & vv & 0.00 \\
    Pythia & Pile10k & cc & 1.00 \\
    Pythia & WikiText-103 & cv & 0.41 \\
    Pythia & WikiText-103 & vv & 0.00 \\
    Pythia & WikiText-103 & cc & 1.00 \\
    Pythia & Bookcorpus & cv & 0.36 \\
    Pythia & Bookcorpus & vv & 0.00 \\
    Pythia & Bookcorpus & cc & 1.00 \\
    Olmo & Pile10k & cv & 0.43 \\
    Olmo & Pile10k & vv & 0.00 \\
    Olmo & Pile10k & cc & 1.00 \\
    Olmo & WikiText-103 & cv & 0.43 \\
    Olmo & WikiText-103 & vv & 0.00 \\
    Olmo & WikiText-103 & cc & 1.00 \\
    Olmo & Bookcorpus & cv & 0.38 \\
    Olmo & Bookcorpus & vv & 0.00 \\
    Olmo & Bookcorpus & cc & 1.00 \\
    Opt & Pile10k & cv & 0.41 \\
    Opt & Pile10k & vv & 0.00 \\
    Opt & Pile10k & cc & 1.00 \\
    Opt & WikiText-103 & cv & 0.41 \\
    Opt & WikiText-103 & vv & 0.00 \\
    Opt & WikiText-103 & cc & 1.00 \\
    Opt & Bookcorpus & cv & 0.35 \\
    Opt & Bookcorpus & vv & 0.00 \\
    Opt & Bookcorpus & cc & 1.00 \\
    Mistral & Pile10k & cv & 0.45 \\
    Mistral & Pile10k & vv & 0.00 \\
    Mistral & Pile10k & cc & 1.00 \\
    Mistral & WikiText-103 & cv & 0.44 \\
    Mistral & WikiText-103 & vv & 0.00 \\
    Mistral & WikiText-103 & cc & 1.00 \\
    Mistral & Bookcorpus & cv & 0.37 \\
    Mistral & Bookcorpus & vv & 0.00 \\
    Mistral & Bookcorpus & cc & 1.00 \\
    Llama & Pile10k & cv & 0.45 \\
    Llama & Pile10k & vv & 0.00 \\
    Llama & Pile10k & cc & 1.00 \\
    Llama & WikiText-103 & cv & 0.45 \\
    Llama & WikiText-103 & vv & 0.00 \\
    Llama & WikiText-103 & cc & 1.00 \\
    Llama & Bookcorpus & cv & 0.37 \\
    Llama & Bookcorpus & vv & 0.00 \\
    Llama & Bookcorpus & cc & 1.00 \\
    \hline
    \end{tabular}
    \caption{\label{table:l2_dist_uni}
    When using softmaxed dot product: mean L2 distance between the resulting probability distribution and the uniform distribution. Rounded to two decimals. Comparison types are: cv - context with vocabulary item, vv - vocabulary with vocabulary and cc - context with context. Note that mean L2 distance is far from zero when comparing contexts with vocabulary items. See more discussion in the appendix text (\ref{app:l2_dists_table_and_explanation}). 
  }
\end{table*}

\section{Distribution of probability distances}
\label{app:distribution_of_prob_distances}
We present here plots showing the distribution of probability distances for Llama (Fig.~\ref{fig:prob_dist_distribution_llama}), Pythia (Fig.~\ref{fig:prob_dist_distribution_pythia}), Olmo (Fig.~\ref{fig:prob_dist_distribution_olmo}), Opt (Fig.~\ref{fig:prob_dist_distribution_opt}) and Mistral (Fig.~\ref{fig:prob_dist_distribution_mistral}). For none of the tested models we find a concentration when using probability distance. 

\begin{figure*}[htb]
\begin{minipage}[b]{0.33\linewidth}
\centering
\includegraphics[width=\textwidth]{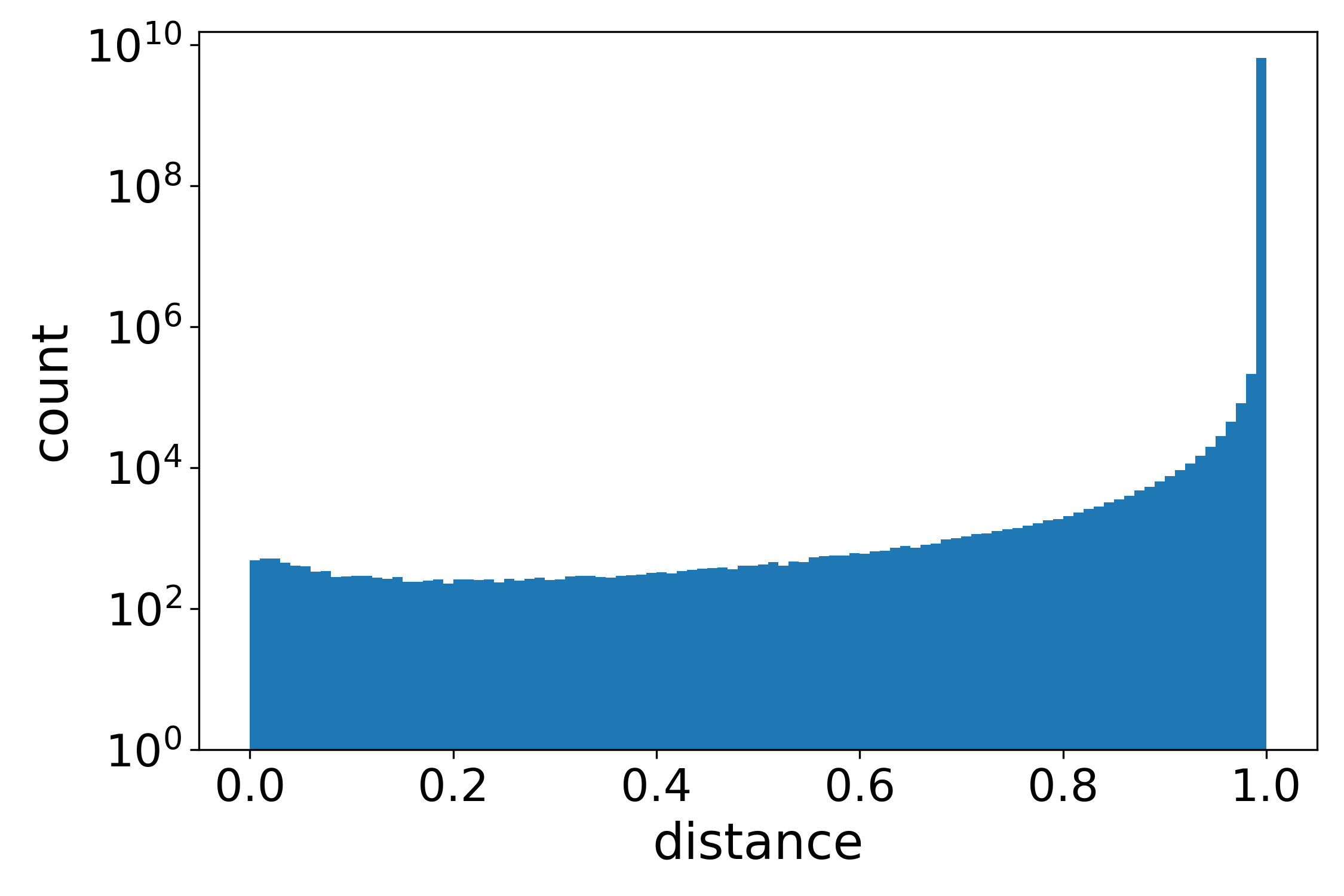}
\end{minipage}
\hspace{-0.1cm}
\begin{minipage}[b]{0.33\linewidth}
\centering
\includegraphics[width=\textwidth]{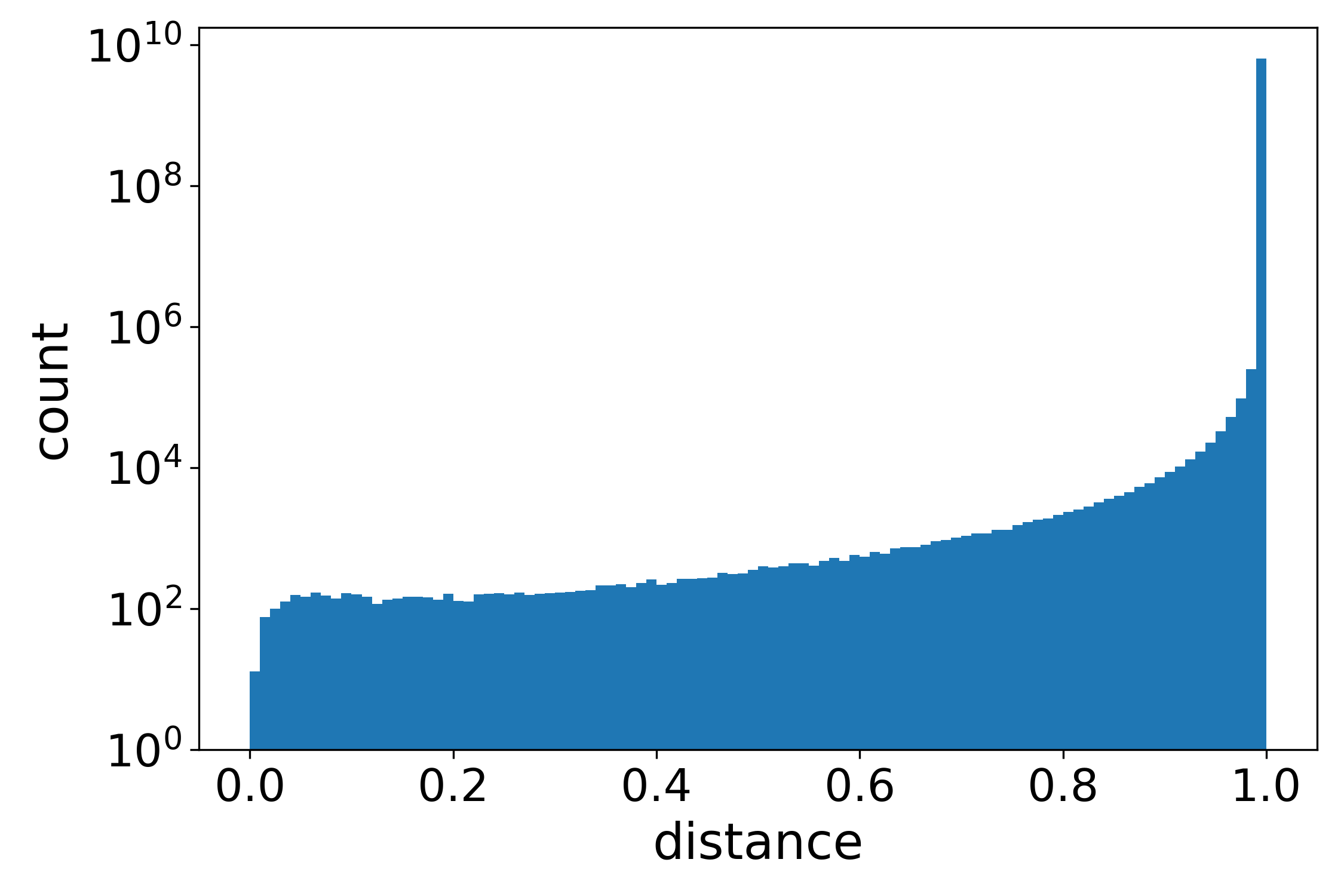}
\end{minipage}
\hspace{-0.1cm}
\begin{minipage}[b]{0.33\linewidth}
\centering
\includegraphics[width=\textwidth]{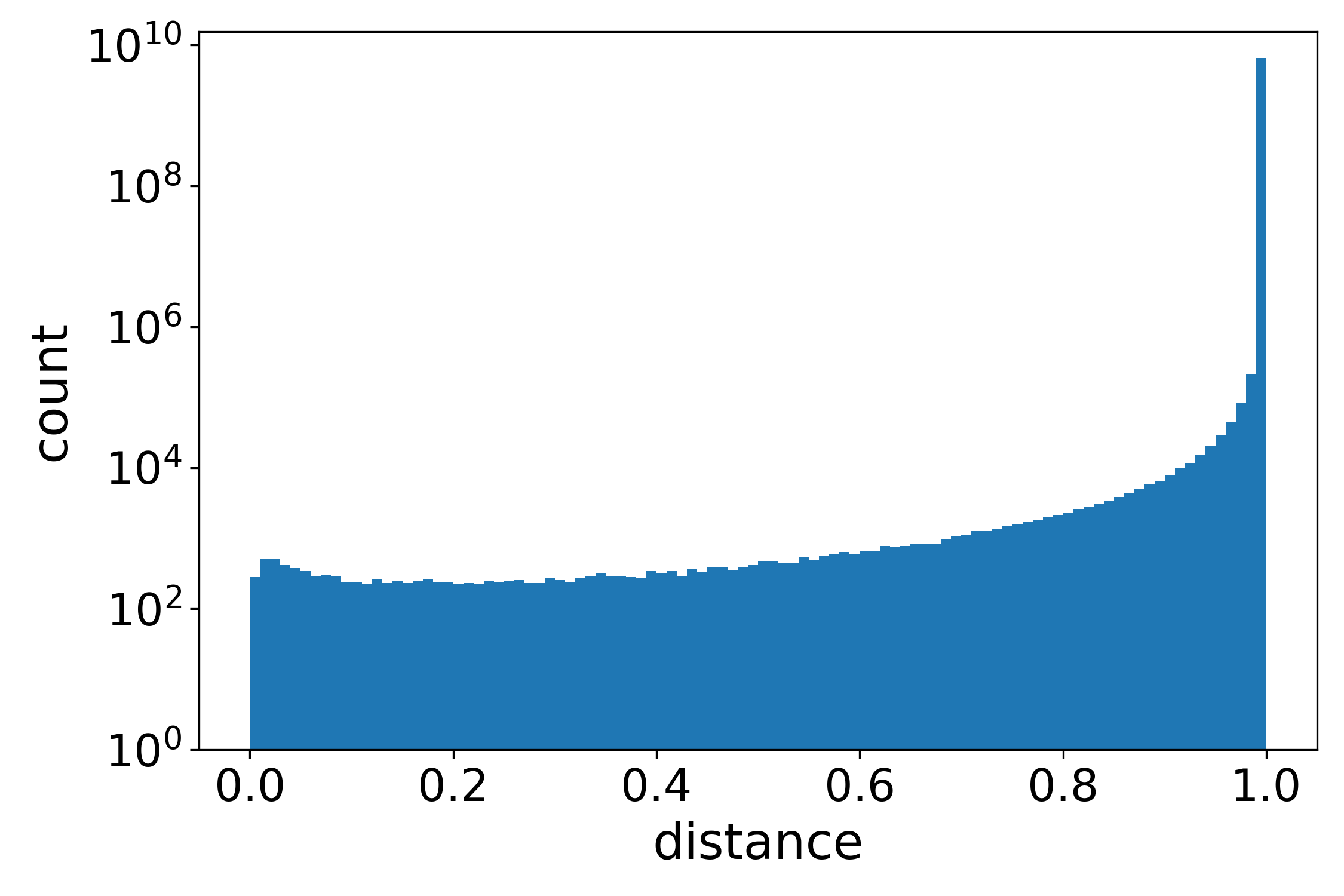}
\end{minipage}
\caption{Distribution of probability distances for Llama on Pile10k (left), Bookcorpus (middle) and WikiText-103 (right). There is no concentration of distances.}
\label{fig:prob_dist_distribution_llama}
\end{figure*}

\begin{figure*}[htb]
\begin{minipage}[b]{0.33\linewidth}
\centering
\includegraphics[width=\textwidth]{figures/hist_ct_plots/hist_ct_prob_dist_pythia_pile.png}
\end{minipage}
\hspace{-0.1cm}
\begin{minipage}[b]{0.33\linewidth}
\centering
\includegraphics[width=\textwidth]{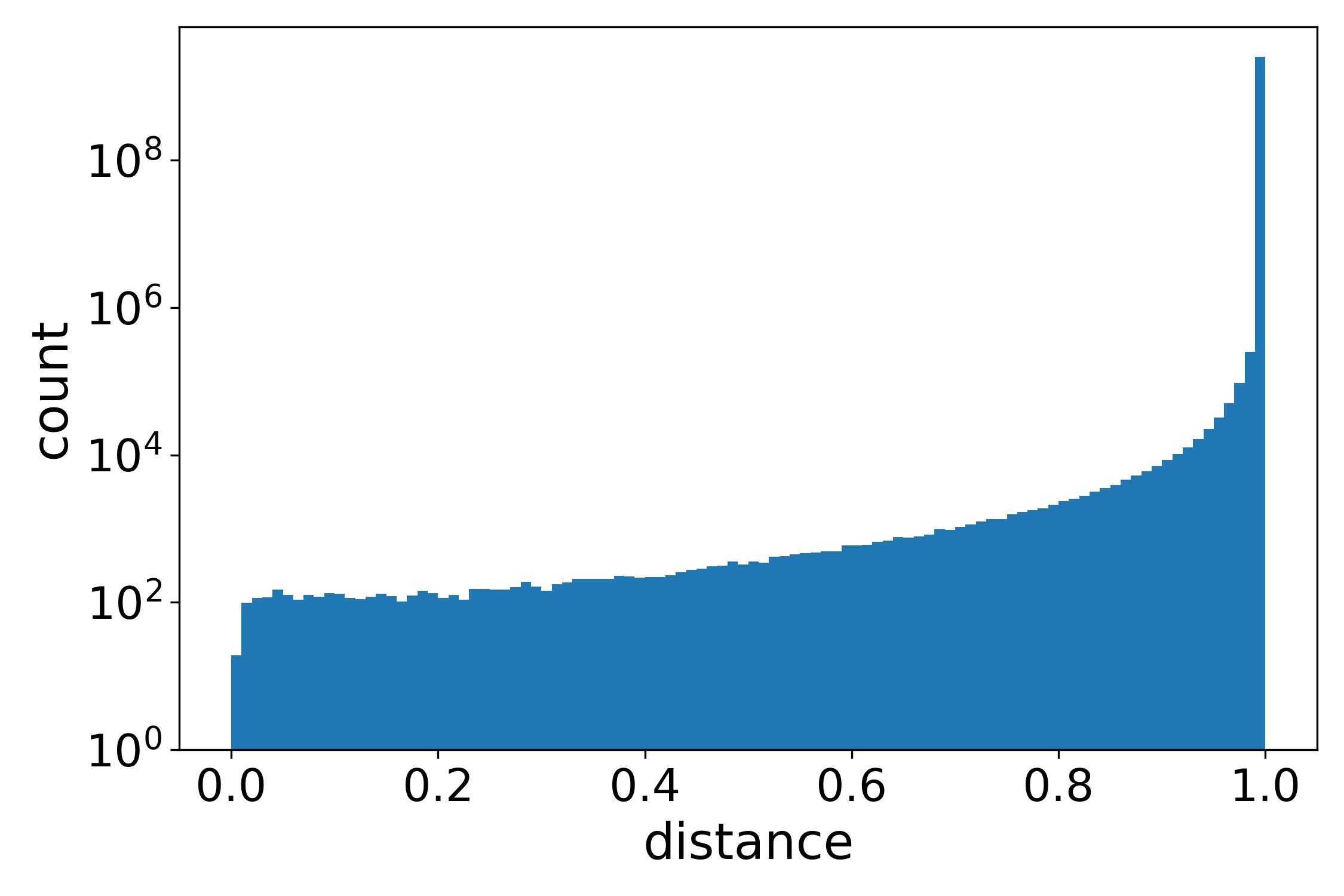}
\end{minipage}
\hspace{-0.1cm}
\begin{minipage}[b]{0.33\linewidth}
\centering
\includegraphics[width=\textwidth]{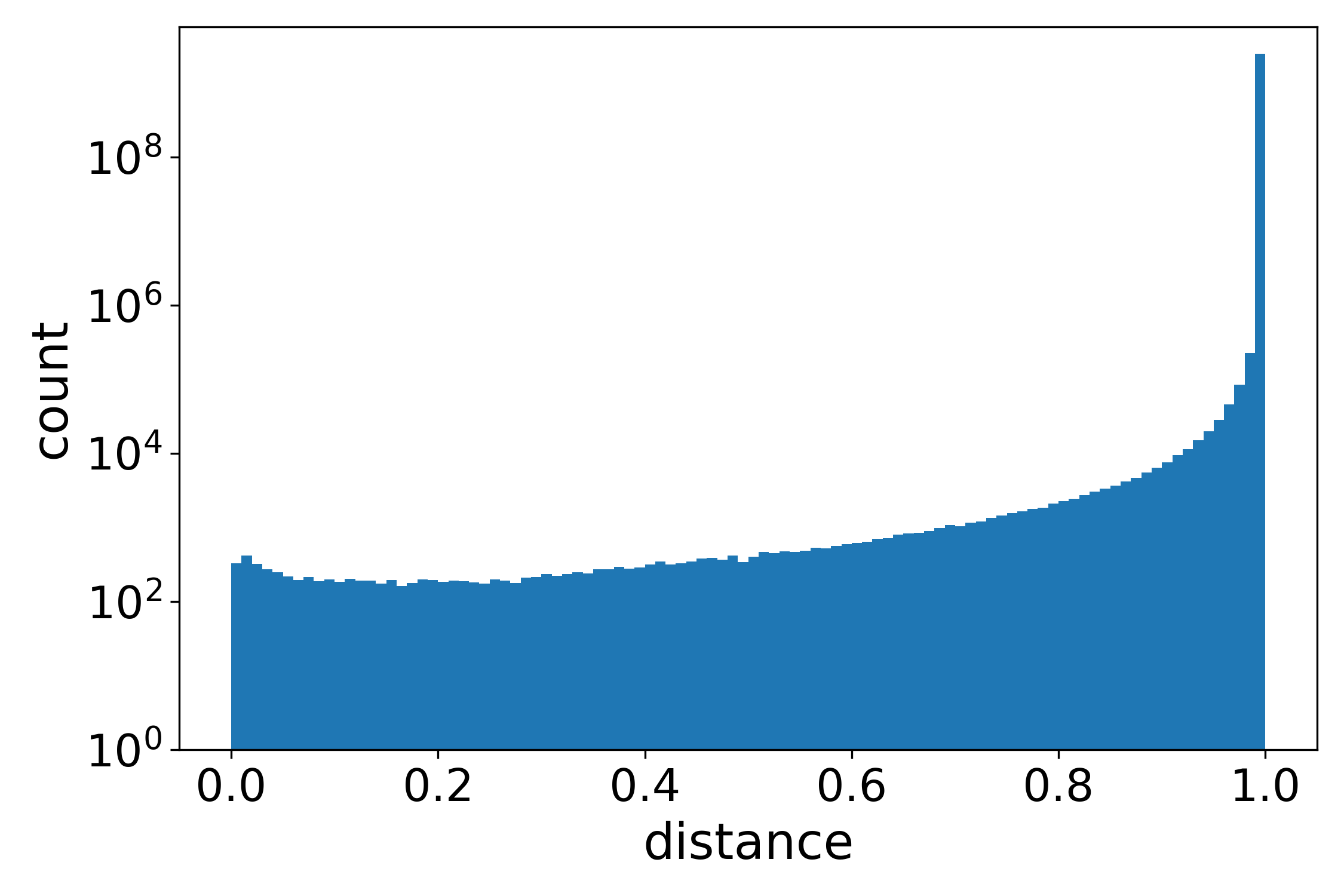}
\end{minipage}
\caption{Distribution of probability distances for Pythia on Pile10k (left), Bookcorpus (middle) and WikiText-103 (right). There is no concentration of distances.}
\label{fig:prob_dist_distribution_pythia}
\end{figure*}

\begin{figure*}[htb]
\begin{minipage}[b]{0.33\linewidth}
\centering
\includegraphics[width=\textwidth]{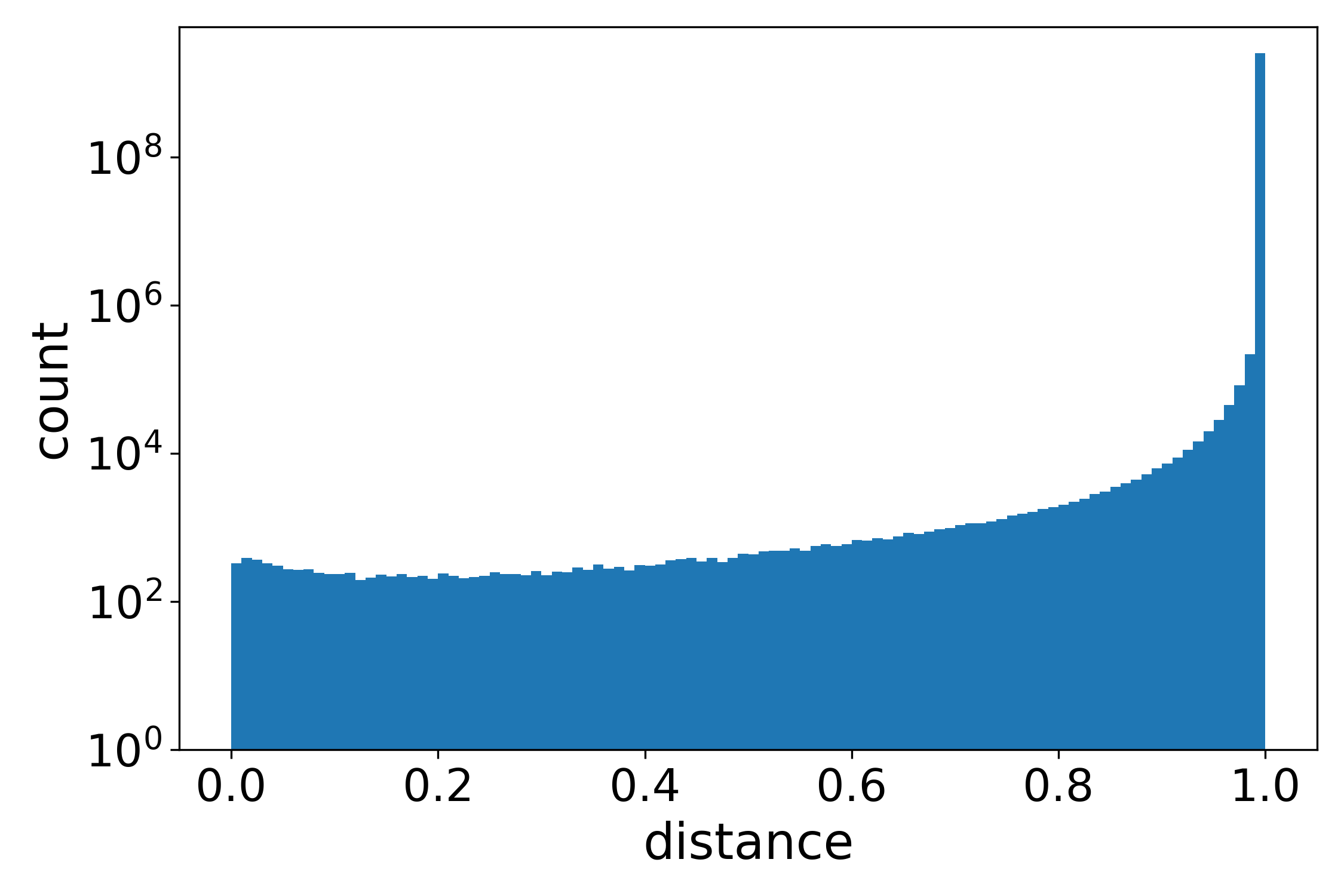}
\end{minipage}
\hspace{-0.1cm}
\begin{minipage}[b]{0.33\linewidth}
\centering
\includegraphics[width=\textwidth]{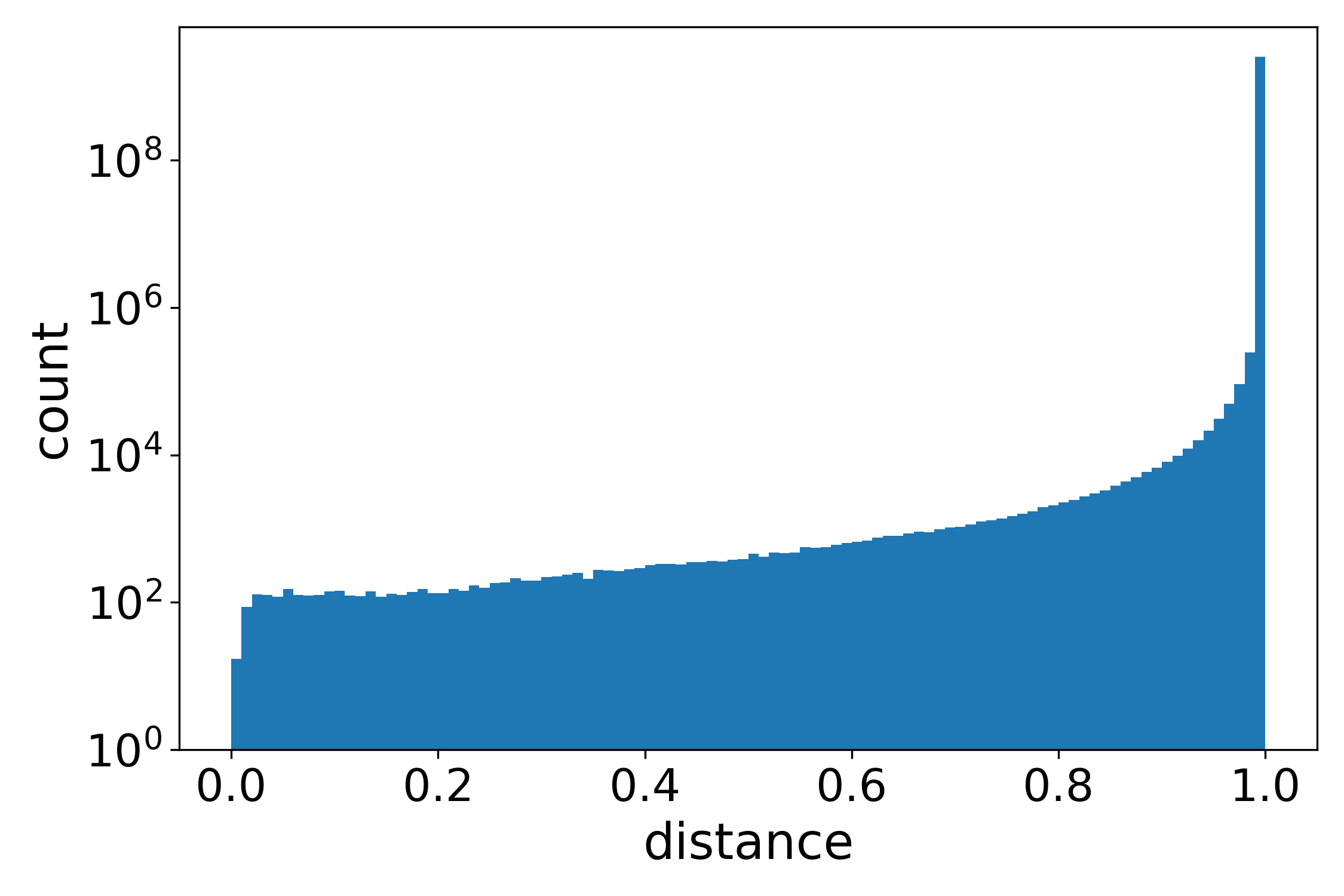}
\end{minipage}
\hspace{-0.1cm}
\begin{minipage}[b]{0.33\linewidth}
\centering
\includegraphics[width=\textwidth]{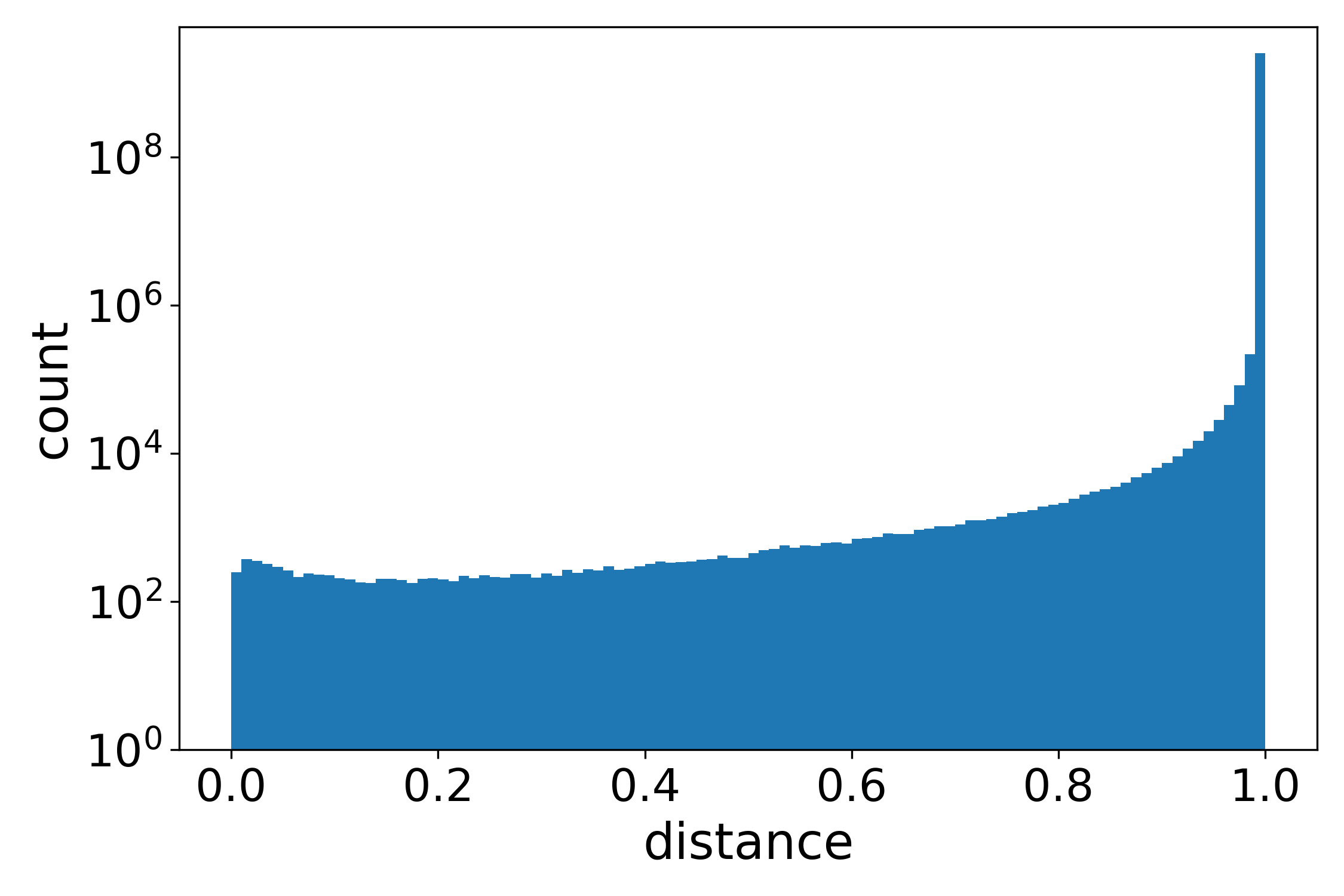}
\end{minipage}
\caption{Distribution of probability distances for Olmo on Pile10k (left), Bookcorpus (middle) and WikiText-103 (right). There is no concentration of distances.}
\label{fig:prob_dist_distribution_olmo}
\end{figure*}

\begin{figure*}[htb]
\begin{minipage}[b]{0.33\linewidth}
\centering
\includegraphics[width=\textwidth]{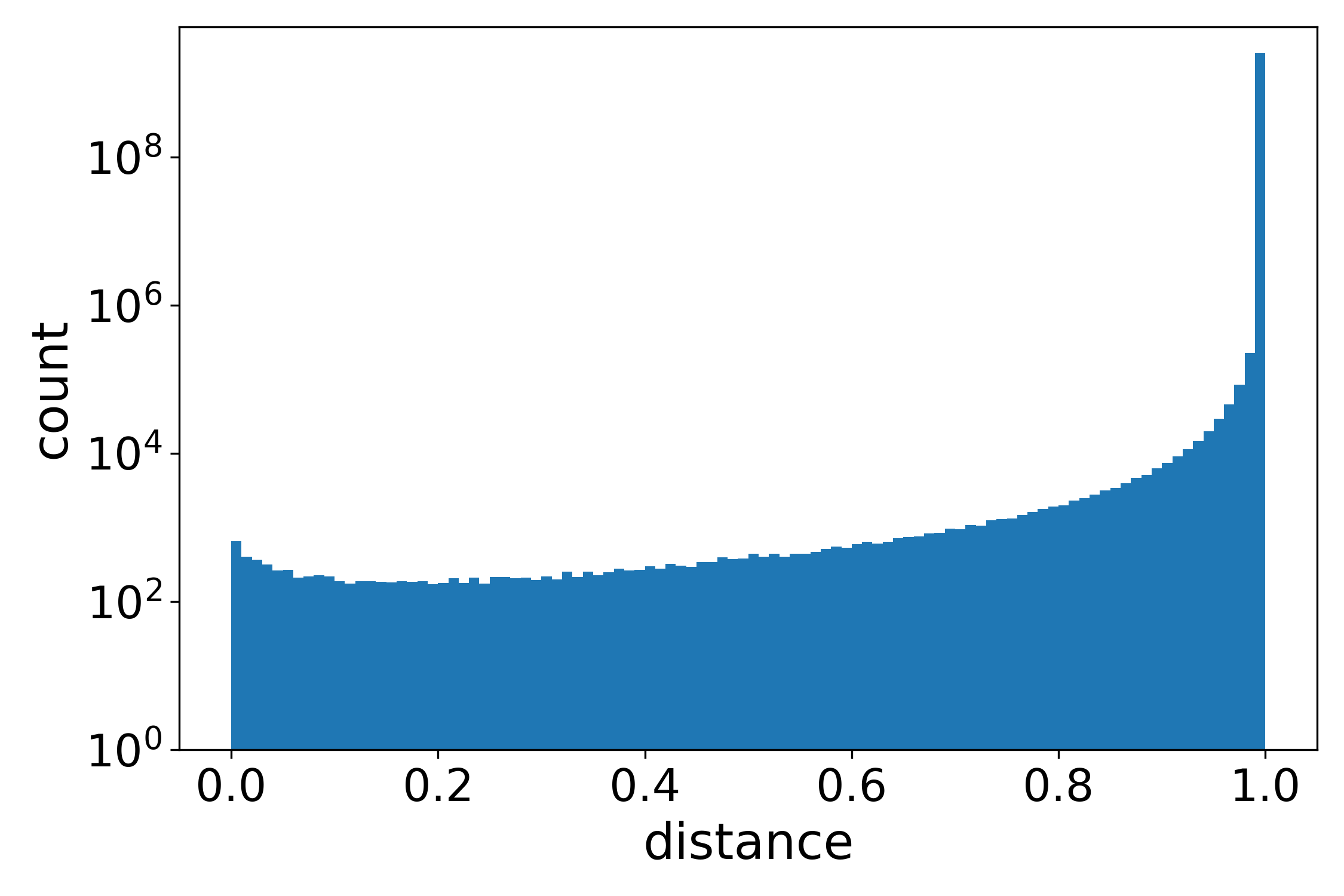}
\end{minipage}
\hspace{-0.1cm}
\begin{minipage}[b]{0.33\linewidth}
\centering
\includegraphics[width=\textwidth]{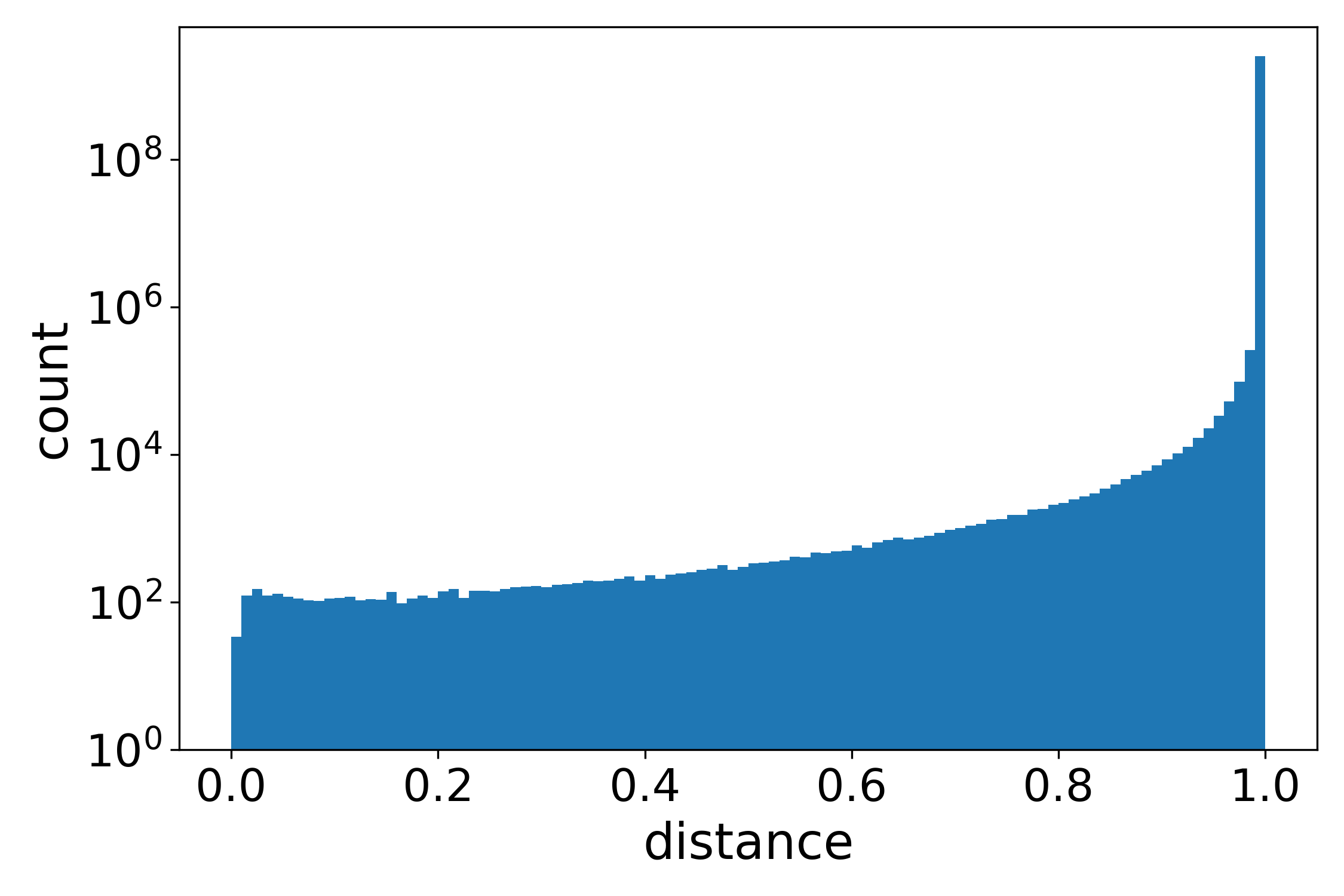}
\end{minipage}
\hspace{-0.1cm}
\begin{minipage}[b]{0.33\linewidth}
\centering
\includegraphics[width=\textwidth]{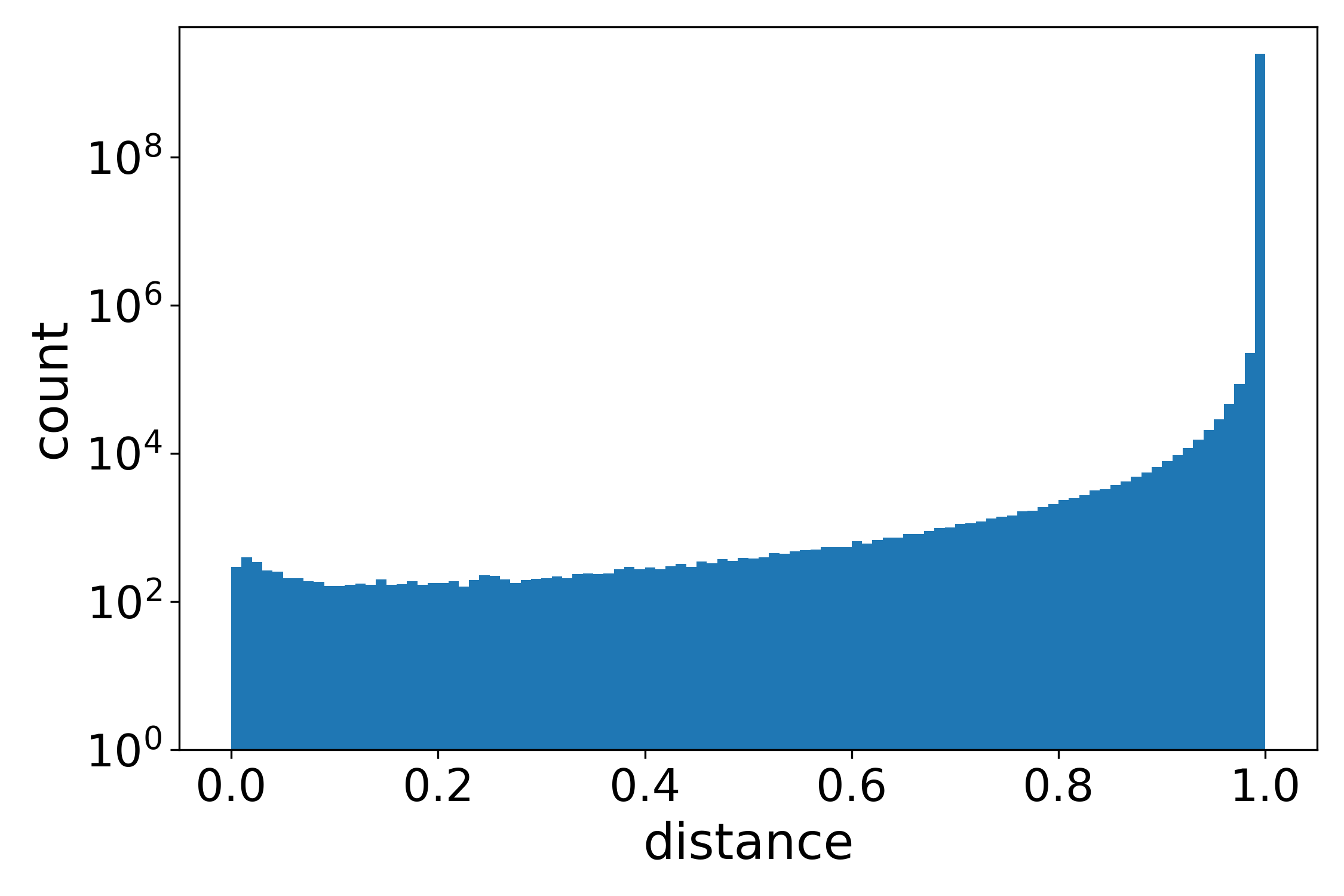}
\end{minipage}
\caption{Distribution of probability distances for Opt on Pile10k (left), Bookcorpus (middle) and WikiText-103 (right). There is no concentration of distances.}
\label{fig:prob_dist_distribution_opt}
\end{figure*}

\begin{figure*}[htb]
\begin{minipage}[b]{0.33\linewidth}
\centering
\includegraphics[width=\textwidth]{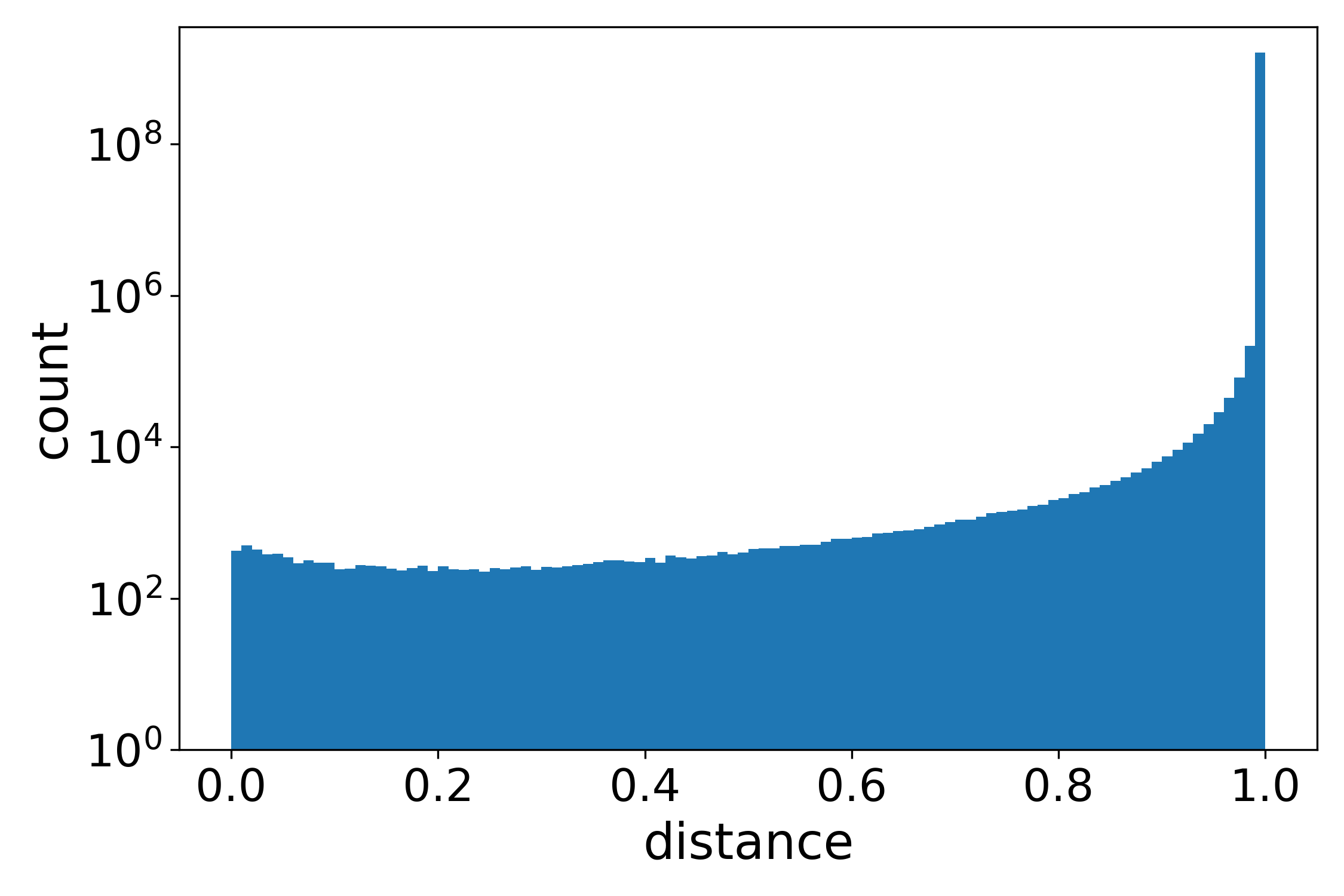}
\end{minipage}
\hspace{-0.1cm}
\begin{minipage}[b]{0.33\linewidth}
\centering
\includegraphics[width=\textwidth]{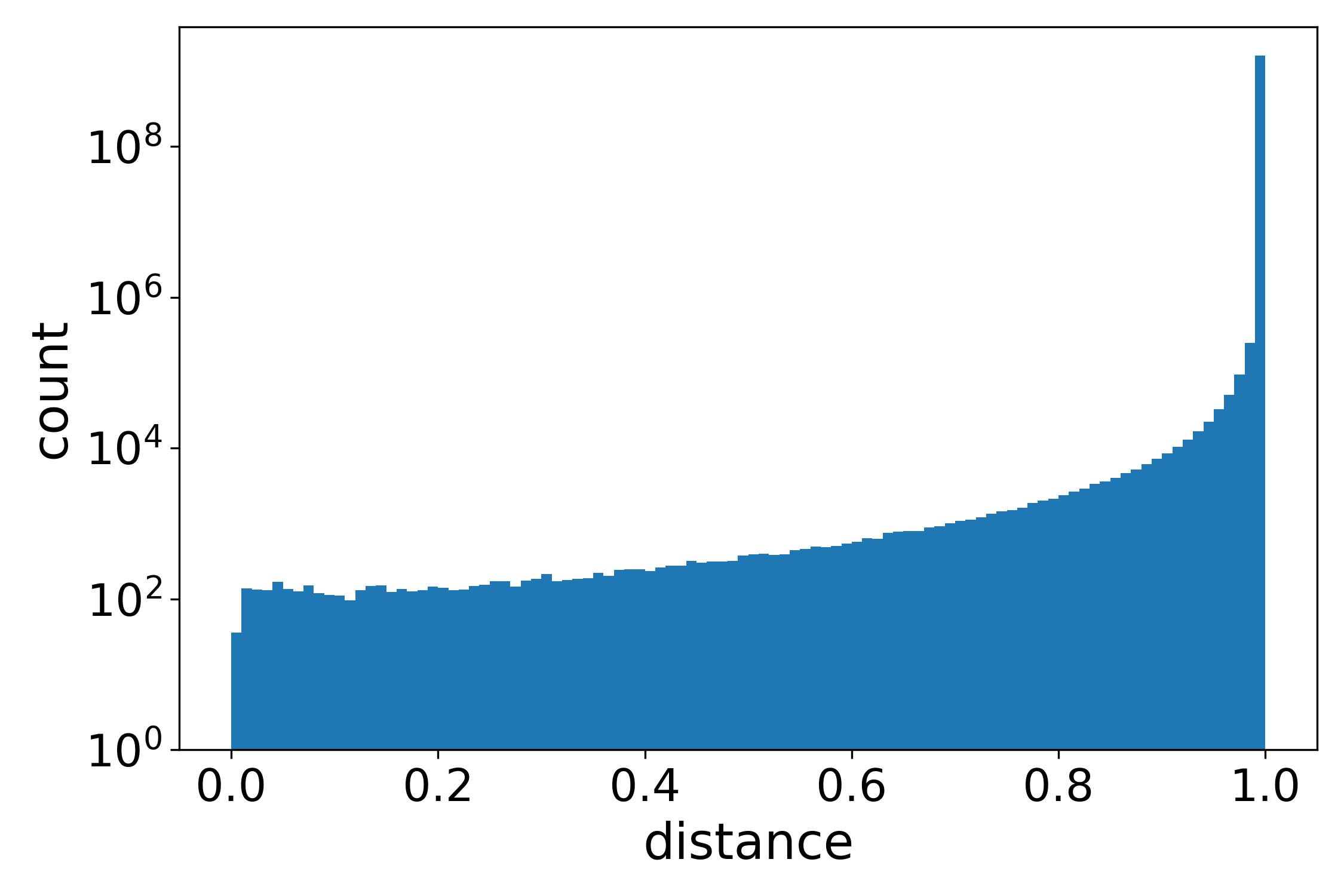}
\end{minipage}
\hspace{-0.1cm}
\begin{minipage}[b]{0.33\linewidth}
\centering
\includegraphics[width=\textwidth]{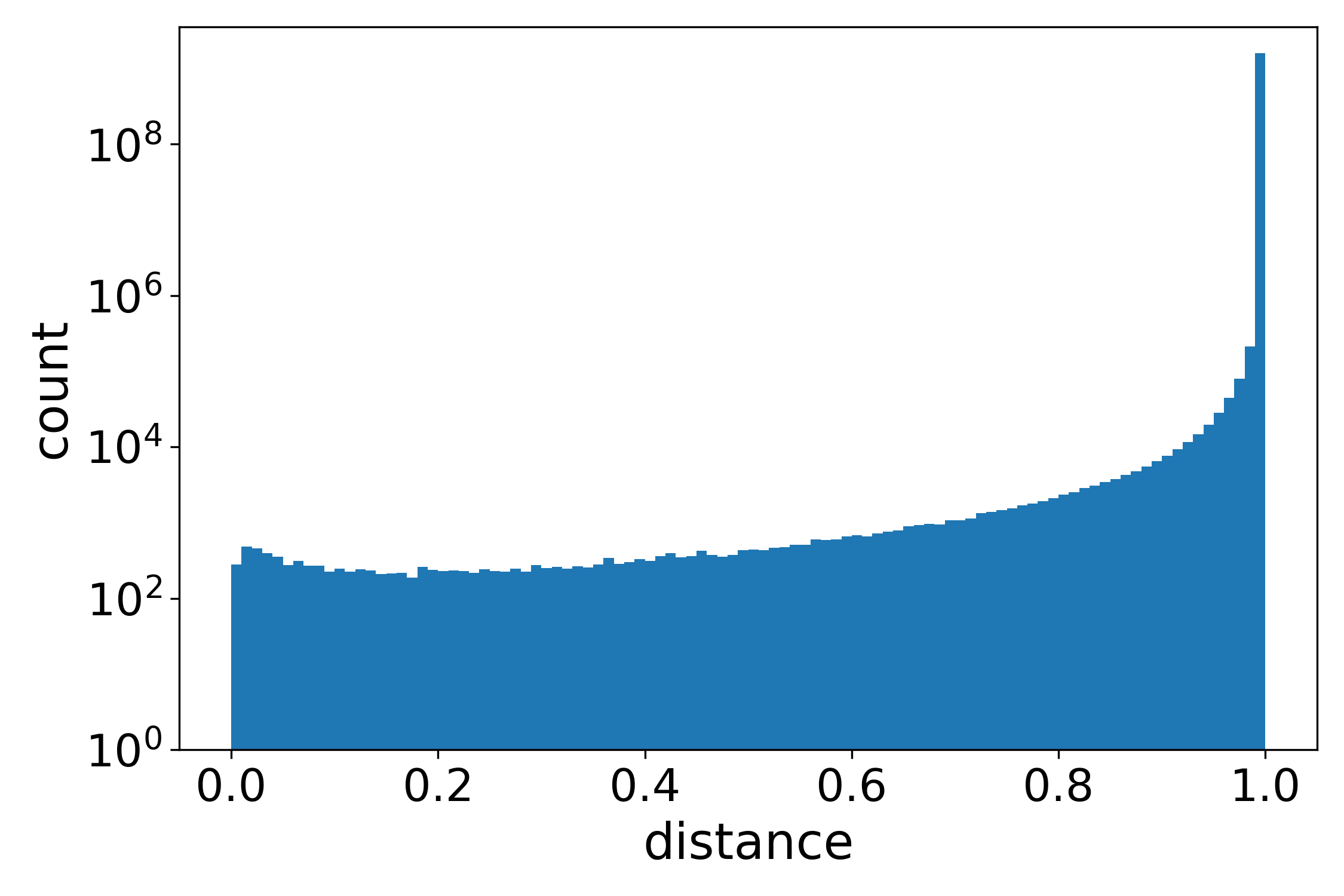}
\end{minipage}
\caption{Distribution of probability distances for Mistral on Pile10k (left), Bookcorpus (middle) and WikiText-103 (right). There is no concentration of distances.}
\label{fig:prob_dist_distribution_mistral}
\end{figure*}

\section{Distribution of context-to-context distances}
\label{app:distribution_of_cc_distances}
Plots showing the distribution of distances when comparing context with context for Llama, using Euclidean distance (Fig.~\ref{fig:cc_euc_dist_distribution_llama}), normalized Euclidean distance (Fig.~\ref{fig:cc_norm_euc_dist_distribution_llama}) and softmaxed dot product (Fig.~\ref{fig:cc_softmax_dot_distribution_llama}); Pythia, using Euclidean distance (Fig.~\ref{fig:cc_euc_dist_distribution_pythia}), normalized Euclidean distance (Fig.~\ref{fig:cc_norm_euc_dist_distribution_pythia}) and softmaxed dot product (Fig.~\ref{fig:cc_softmax_dot_distribution_pythia}); Opt, using Euclidean distance (Fig.~\ref{fig:cc_euc_dist_distribution_opt}), normalized Euclidean distance (Fig.~\ref{fig:cc_norm_euc_dist_distribution_opt}) and softmaxed dot product (Fig.~\ref{fig:cc_softmax_dot_distribution_opt}); Olmo, using Euclidean distance (Fig.~\ref{fig:cc_euc_dist_distribution_olmo}), normalized Euclidean distance (Fig.~\ref{fig:cc_norm_euc_dist_distribution_olmo}) and softmaxed dot product (Fig.~\ref{fig:cc_softmax_dot_distribution_olmo}) and Mistral, using Euclidean distance (Fig.~\ref{fig:cc_euc_dist_distribution_mistral}), normalized Euclidean distance (Fig.~\ref{fig:cc_norm_euc_dist_distribution_mistral}) and softmaxed dot product (Fig.~\ref{fig:cc_softmax_dot_distribution_mistral}). For all models we see a concentration of distances in the sense that there is a gap from zero to the lowest distance values.  

\begin{figure*}[htb]
\begin{minipage}[b]{0.33\linewidth}
\centering
\includegraphics[width=\textwidth]{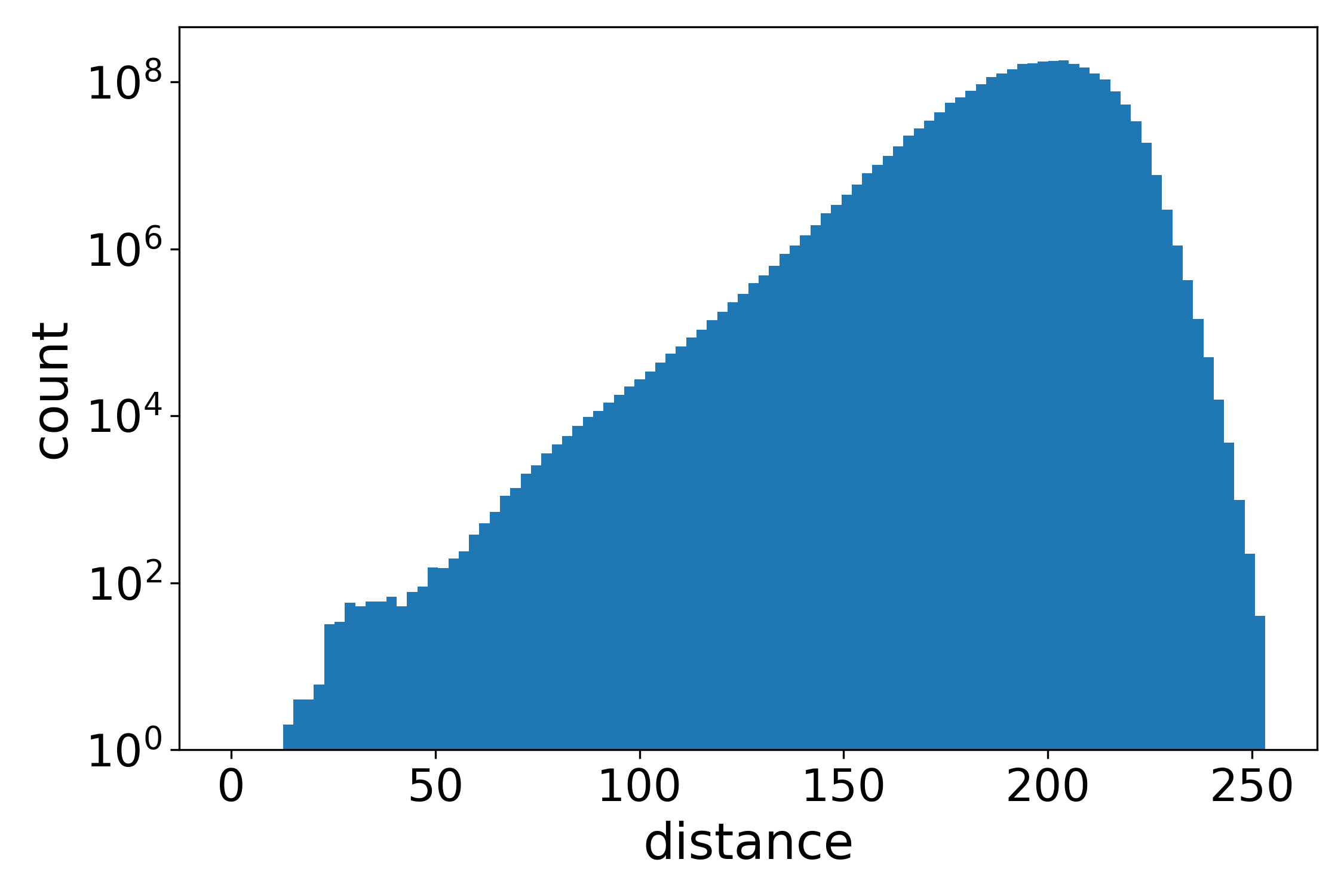}
\end{minipage}
\hspace{-0.1cm}
\begin{minipage}[b]{0.33\linewidth}
\centering
\includegraphics[width=\textwidth]{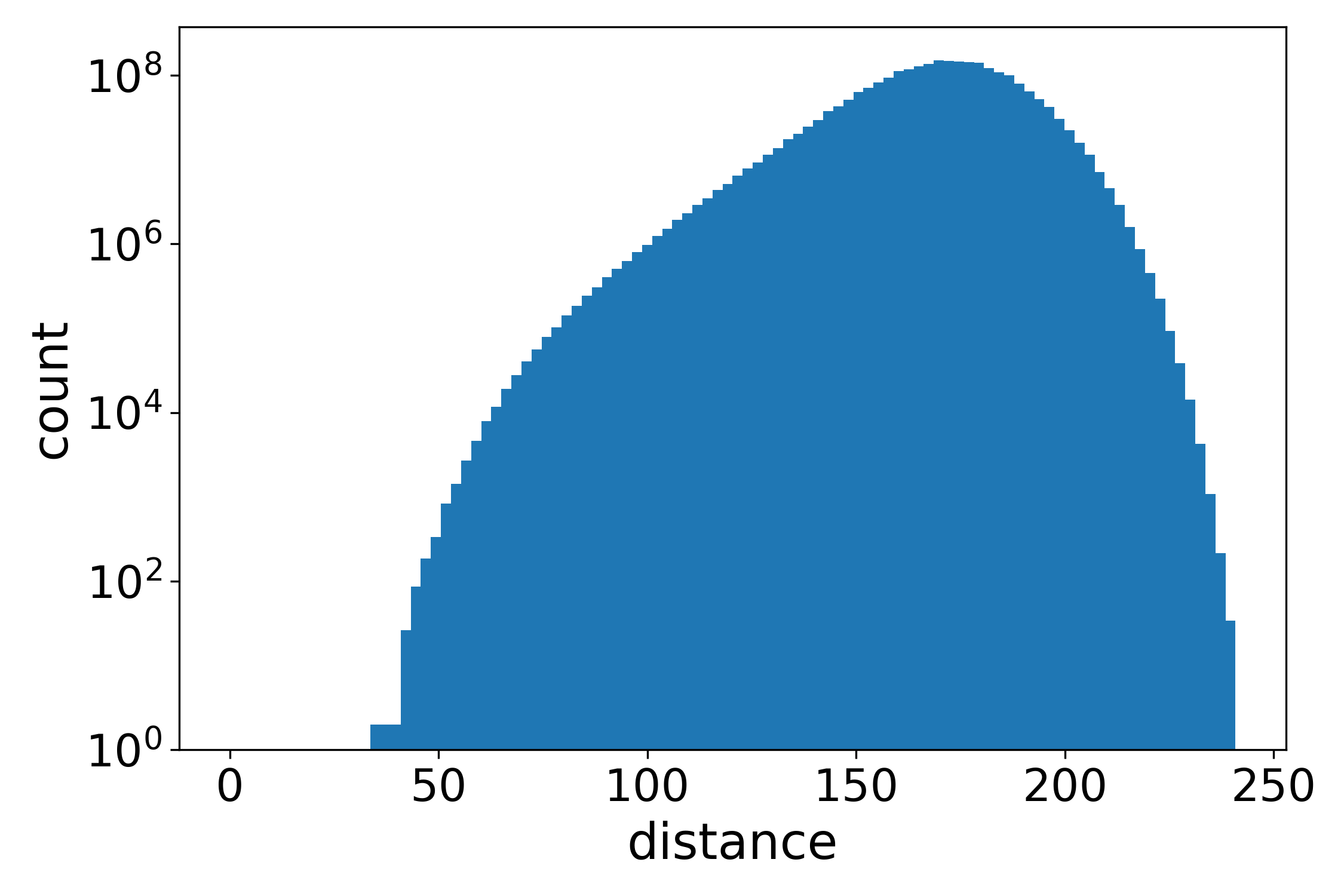}
\end{minipage}
\hspace{-0.1cm}
\begin{minipage}[b]{0.33\linewidth}
\centering
\includegraphics[width=\textwidth]{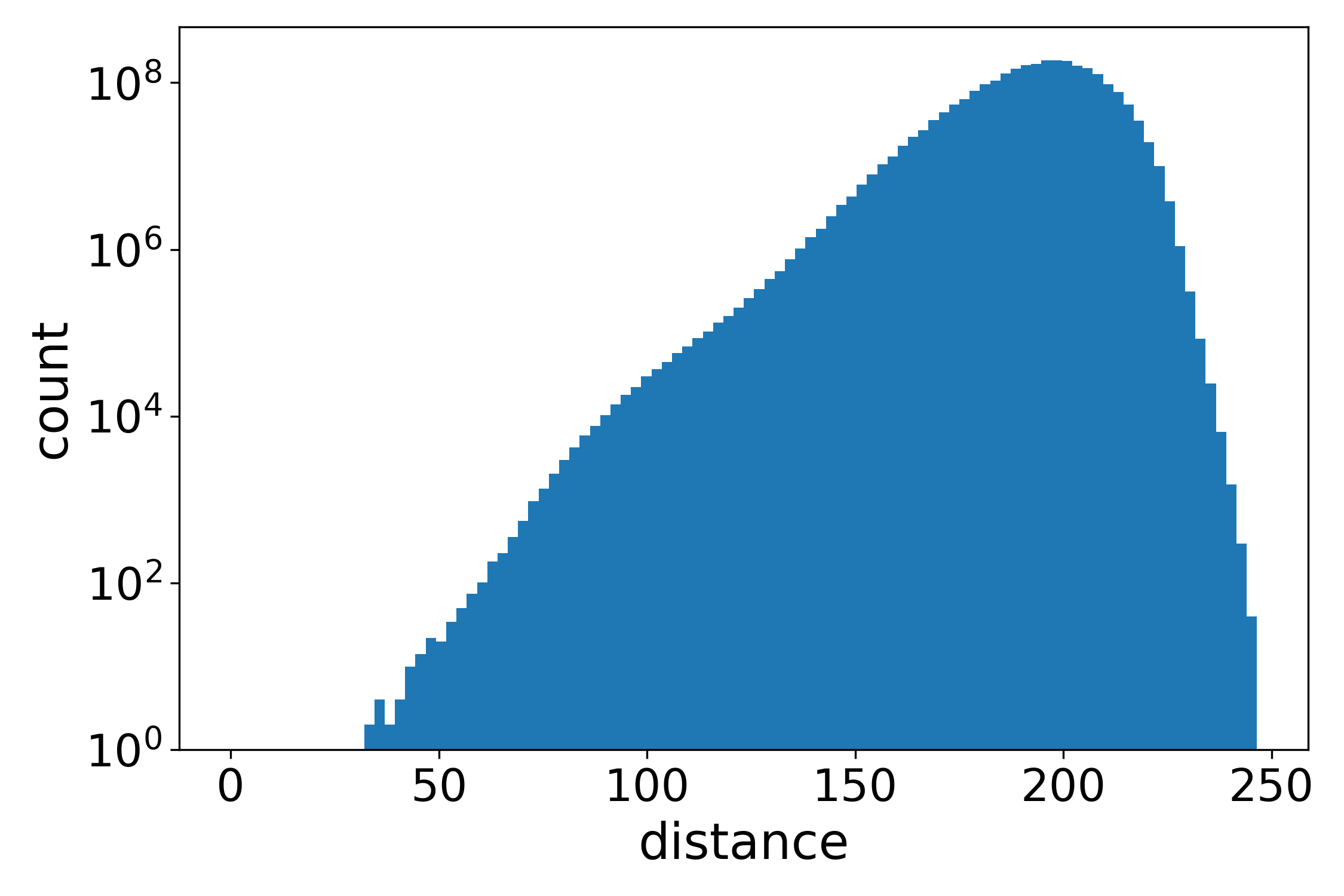}
\end{minipage}
\caption{Distribution of context-to-context Euclidean distances for Llama on Pile10k (left), Bookcorpus (middle) and WikiText-103 (right). We see concentration of distances in the sense that there is a gap from zero to the lowest distance values. Here, we do not include the distance of a context to itself, since it will always be zero for this distance measure.}
\label{fig:cc_euc_dist_distribution_llama}
\end{figure*}

\begin{figure*}[htb]
\begin{minipage}[b]{0.33\linewidth}
\centering
\includegraphics[width=\textwidth]{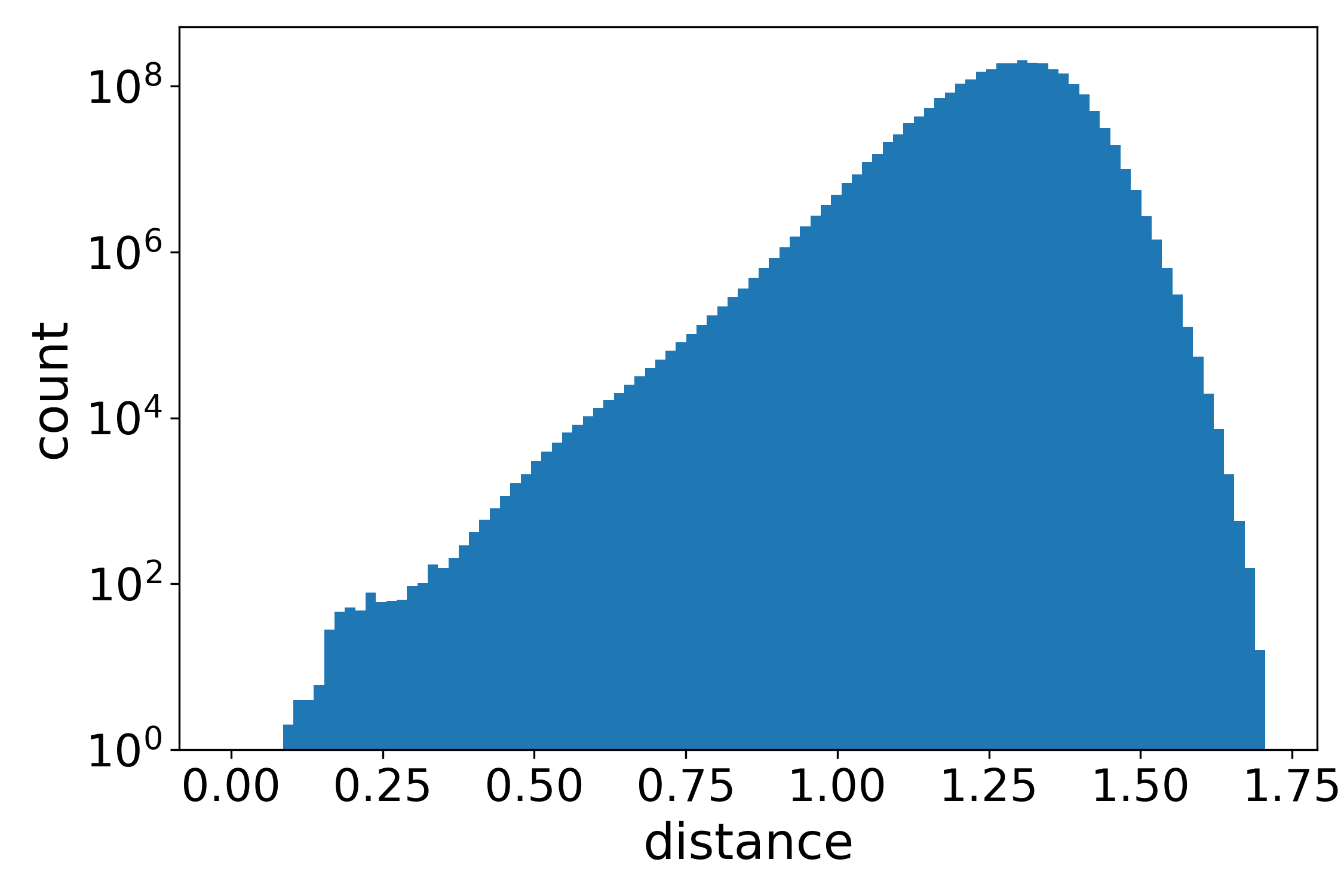}
\end{minipage}
\hspace{-0.1cm}
\begin{minipage}[b]{0.33\linewidth}
\centering
\includegraphics[width=\textwidth]{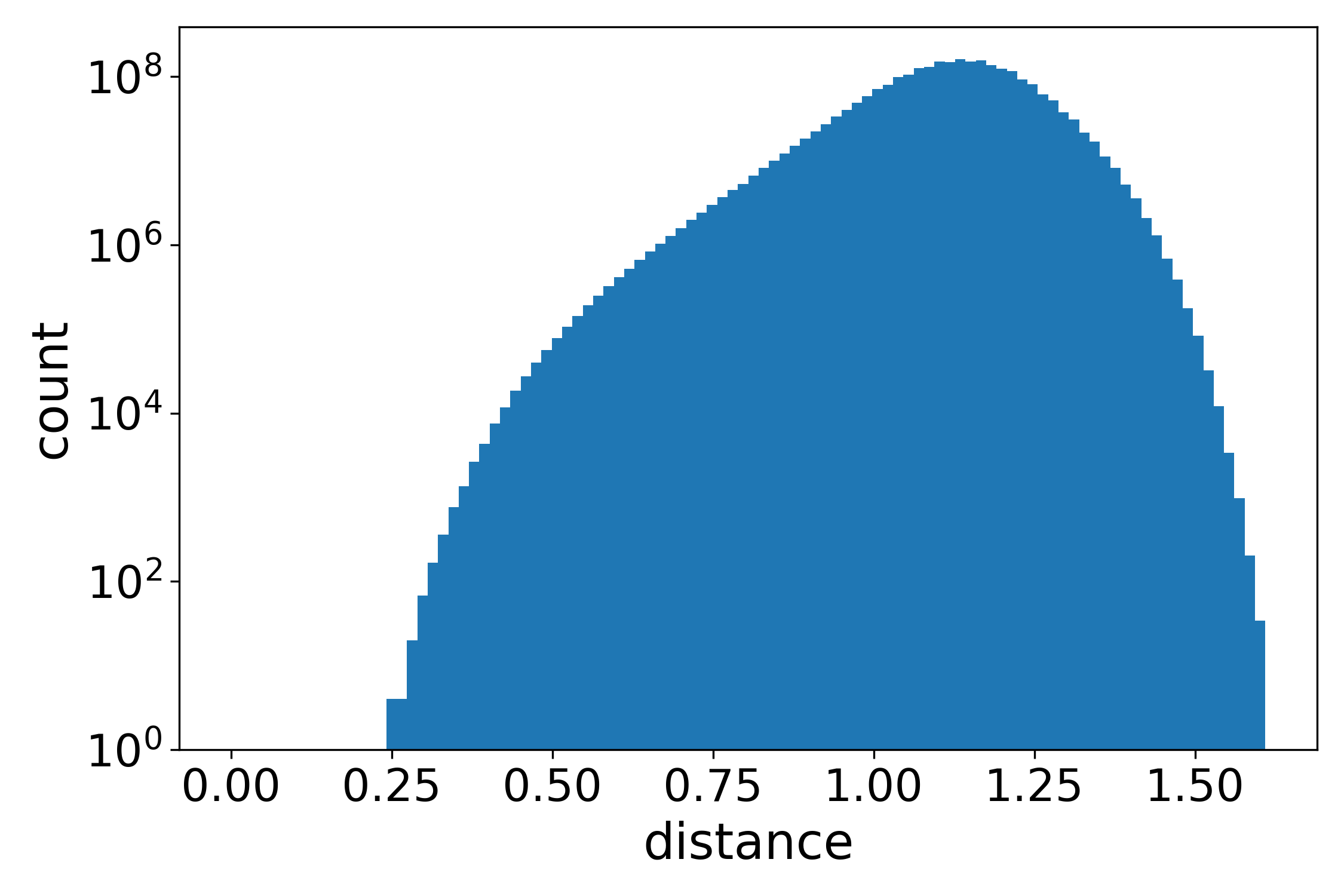}
\end{minipage}
\hspace{-0.1cm}
\begin{minipage}[b]{0.33\linewidth}
\centering
\includegraphics[width=\textwidth]{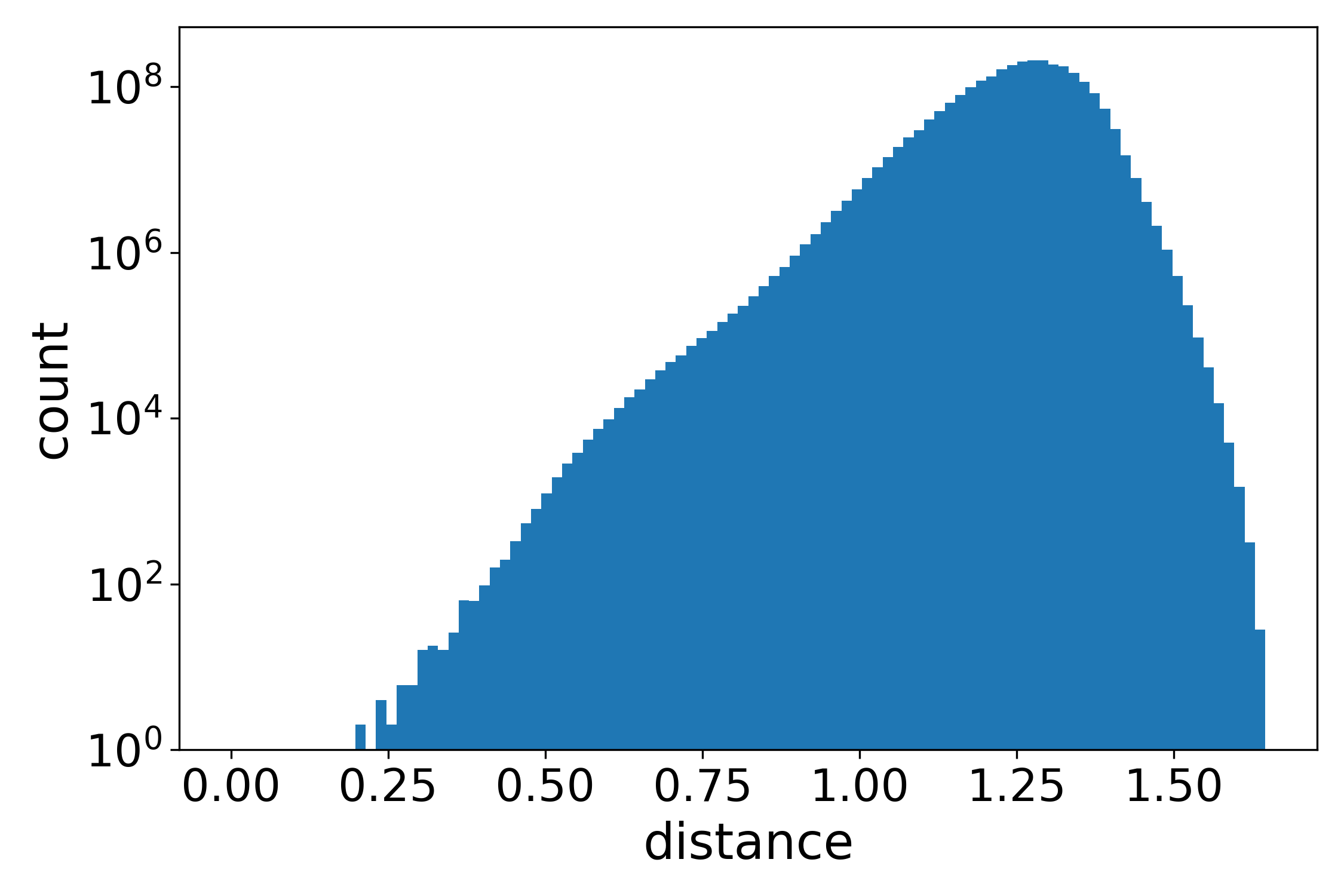}
\end{minipage}
\caption{Distribution of context-to-context normalized Euclidean distances for Llama on Pile10k (left), Bookcorpus (middle) and WikiText-103 (right). We see concentration of distances in the sense that there is a gap from zero to the lowest distance values. Here, we do not include the distance of a context to itself, since it will always be zero for this distance measure.}
\label{fig:cc_norm_euc_dist_distribution_llama}
\end{figure*}

\begin{figure*}[htb]
\begin{minipage}[b]{0.33\linewidth}
\centering
\includegraphics[width=\textwidth]{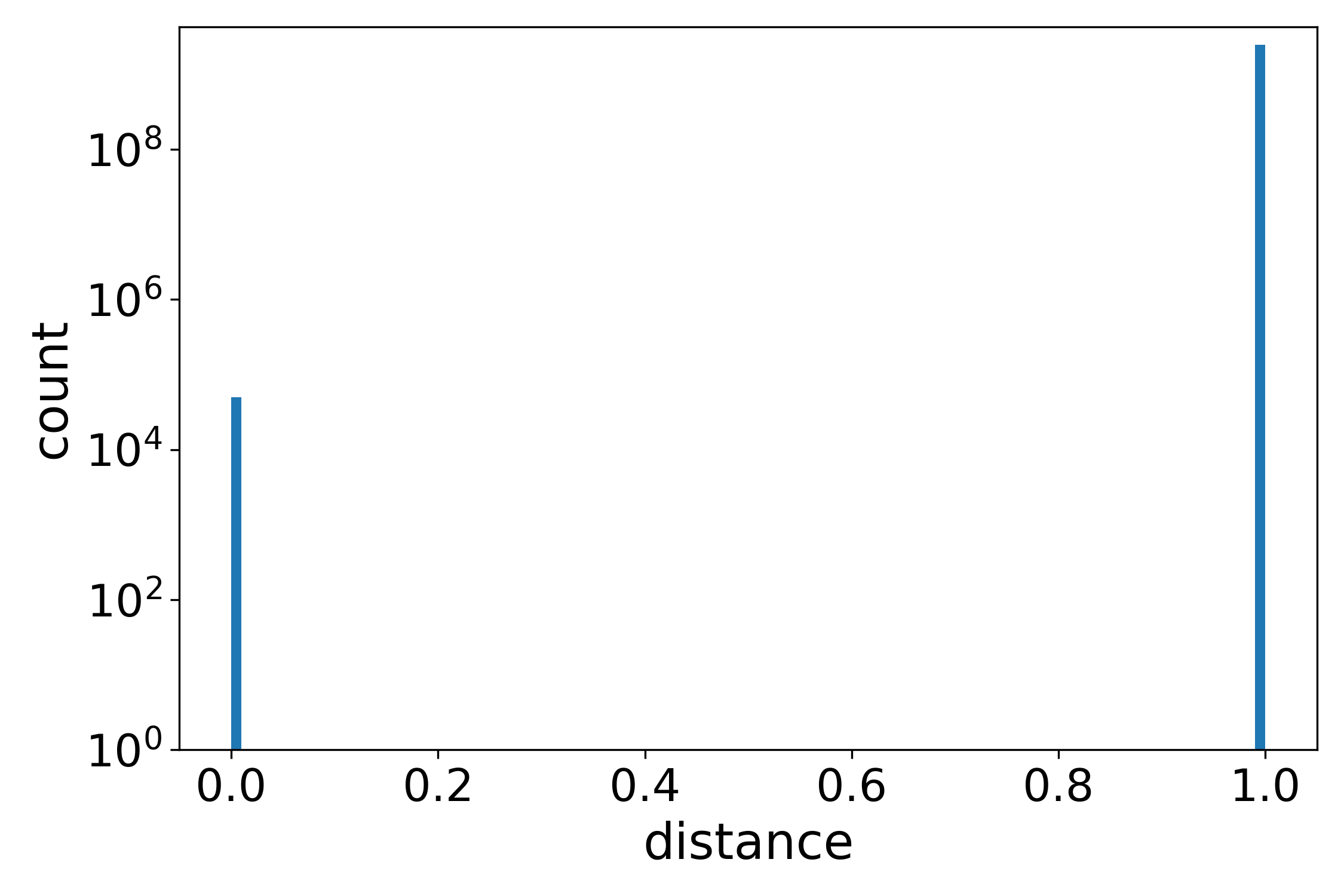}
\end{minipage}
\hspace{-0.1cm}
\begin{minipage}[b]{0.33\linewidth}
\centering
\includegraphics[width=\textwidth]{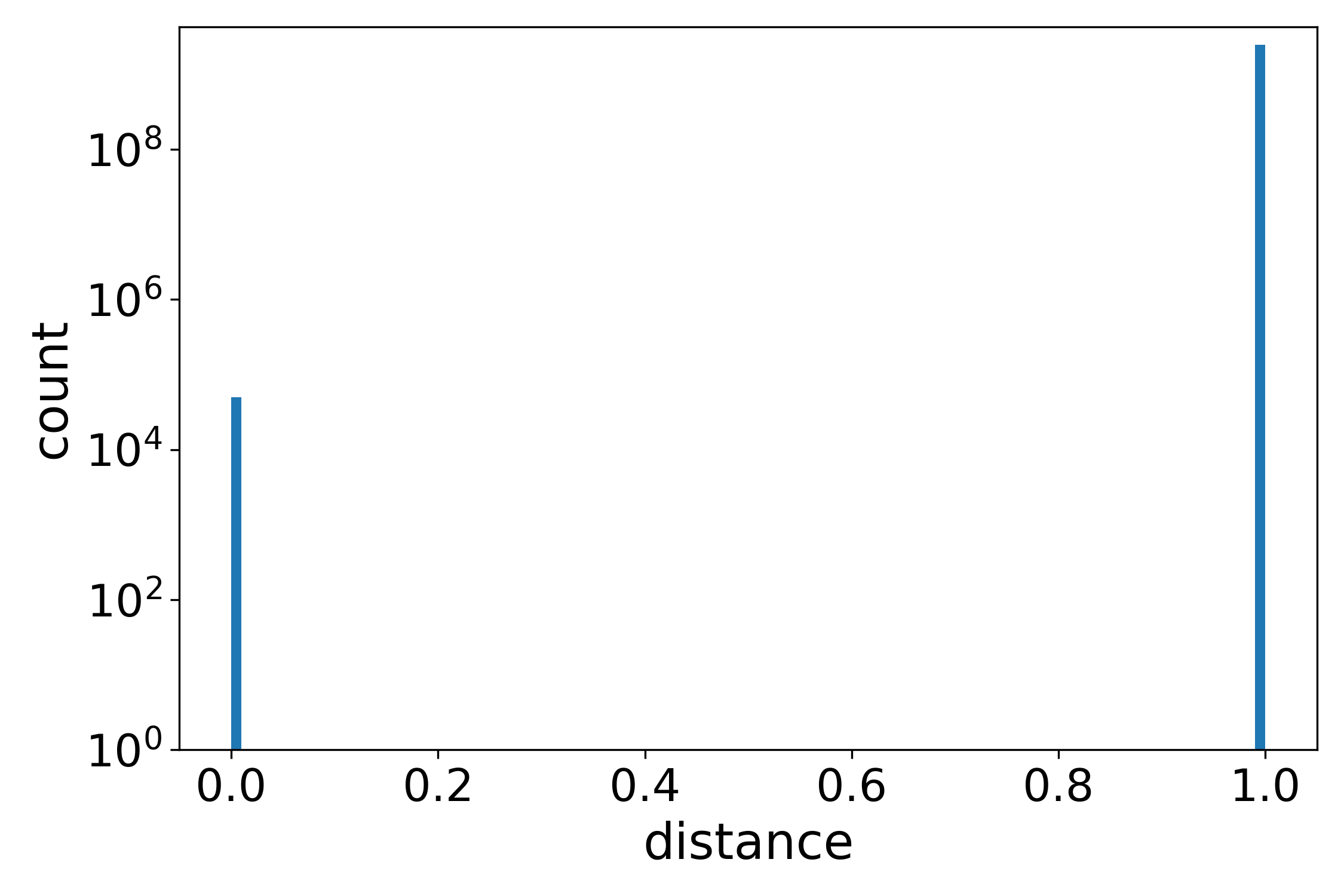}
\end{minipage}
\hspace{-0.1cm}
\begin{minipage}[b]{0.33\linewidth}
\centering
\includegraphics[width=\textwidth]{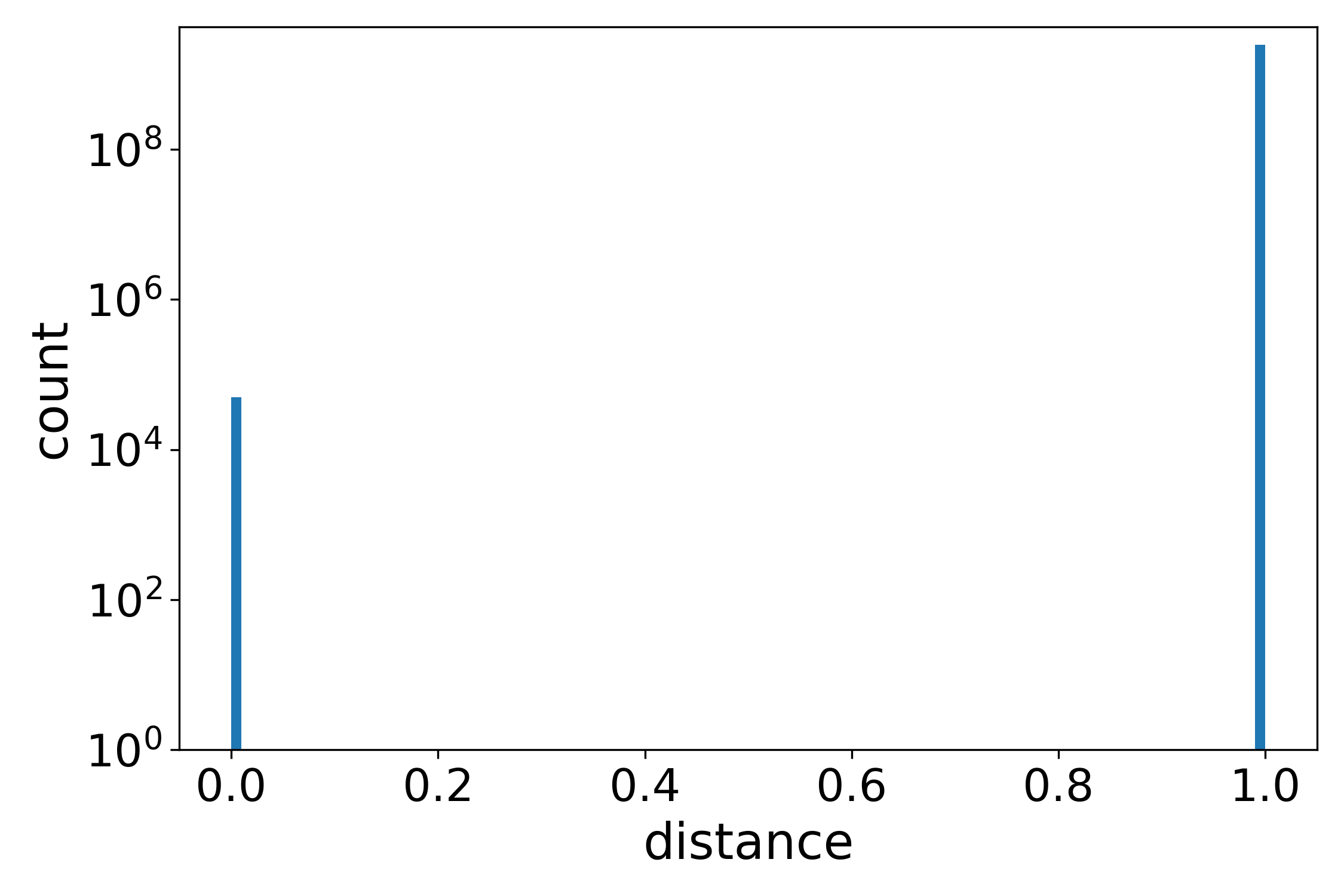}
\end{minipage}
\caption{Distribution of context-to-context softmaxed dot product distances for Llama on Pile10k (left), Bookcorpus (middle) and WikiText-103 (right). Here we have included the distance of a context to itself, which is the spike at zero. Note that, when using the dot product, there is no guarantee that a context will get the largest score with itself.}
\label{fig:cc_softmax_dot_distribution_llama}
\end{figure*}

\begin{figure*}[htb]
\begin{minipage}[b]{0.33\linewidth}
\centering
\includegraphics[width=\textwidth]{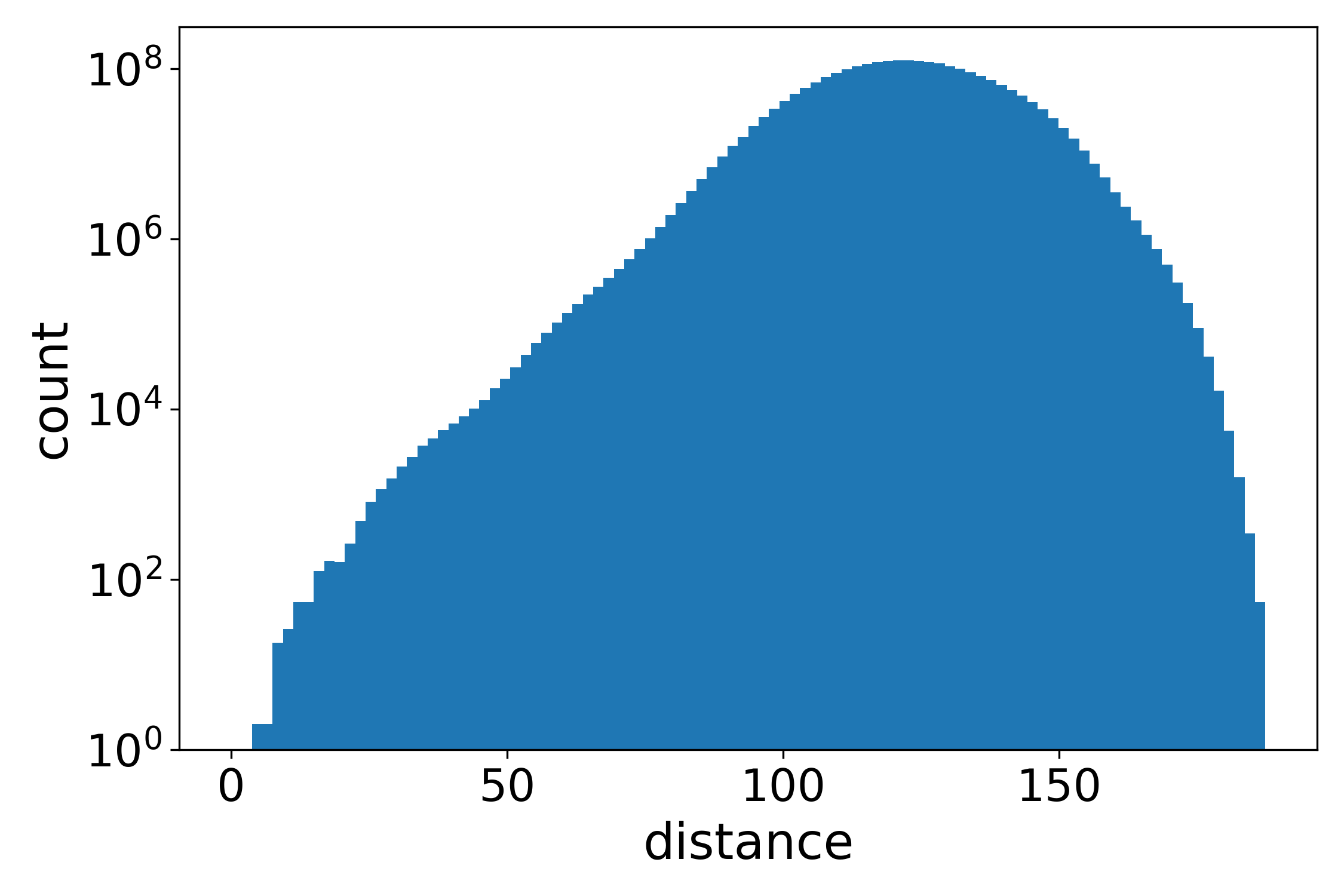}
\end{minipage}
\hspace{-0.1cm}
\begin{minipage}[b]{0.33\linewidth}
\centering
\includegraphics[width=\textwidth]{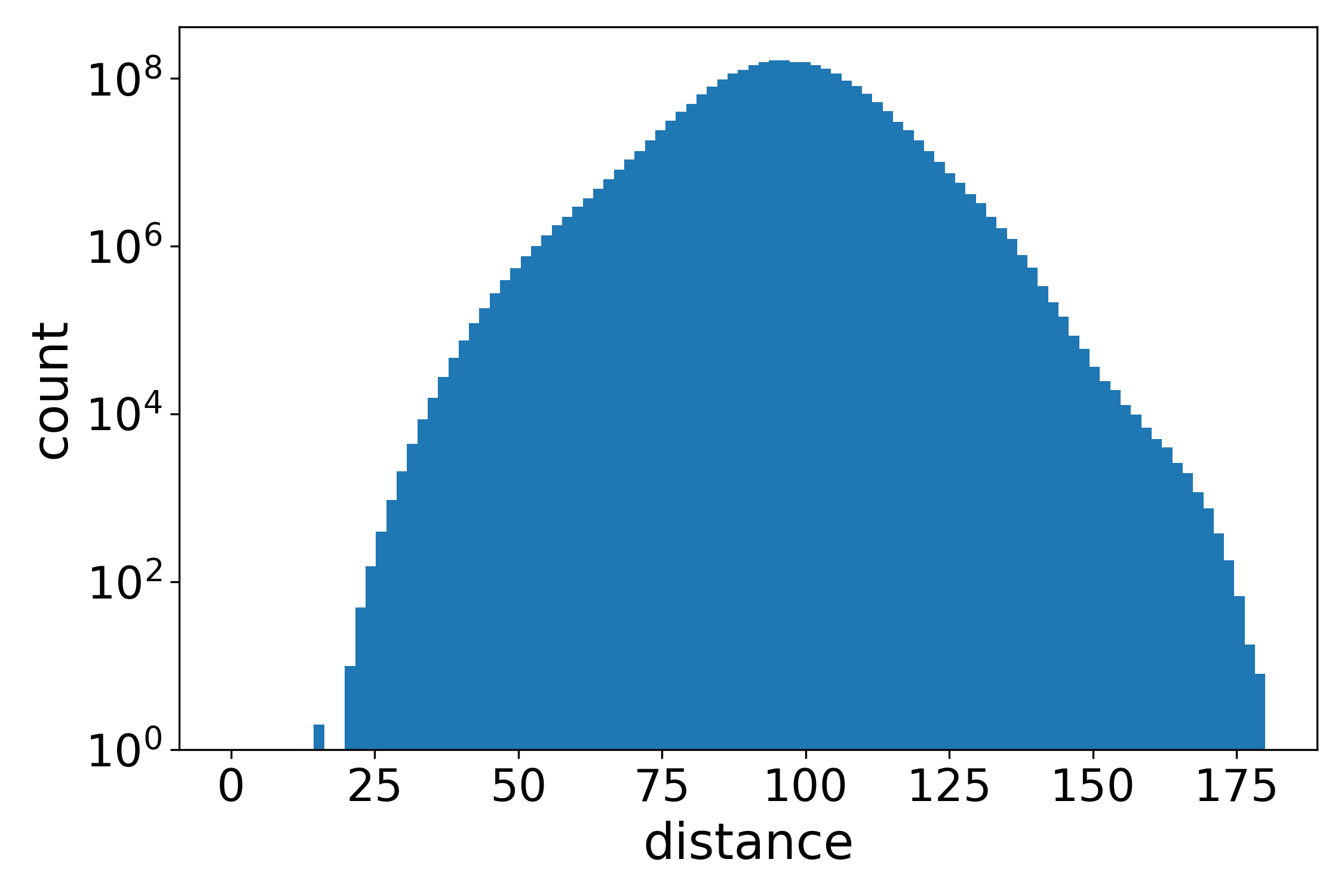}
\end{minipage}
\hspace{-0.1cm}
\begin{minipage}[b]{0.33\linewidth}
\centering
\includegraphics[width=\textwidth]{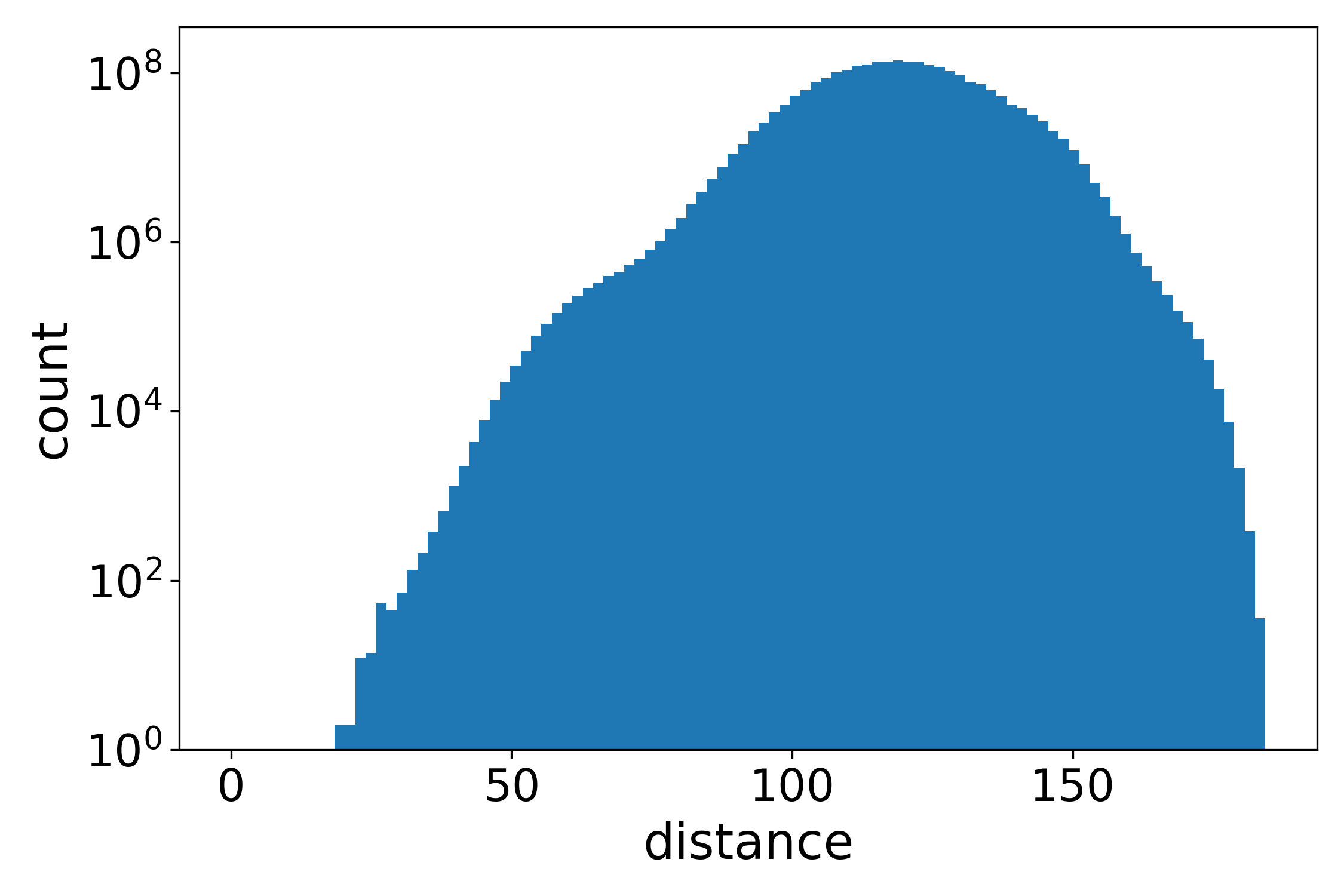}
\end{minipage}
\caption{Distribution of context-to-context Euclidean distances for Pythia on Pile10k (left), Bookcorpus (middle) and WikiText-103 (right). We see concentration of distances in the sense that there is a gap from zero to the lowest distance values. Here, we do not include the distance of a context to itself, since it will always be zero for this distance measure.}
\label{fig:cc_euc_dist_distribution_pythia}
\end{figure*}

\begin{figure*}[htb]
\begin{minipage}[b]{0.33\linewidth}
\centering
\includegraphics[width=\textwidth]{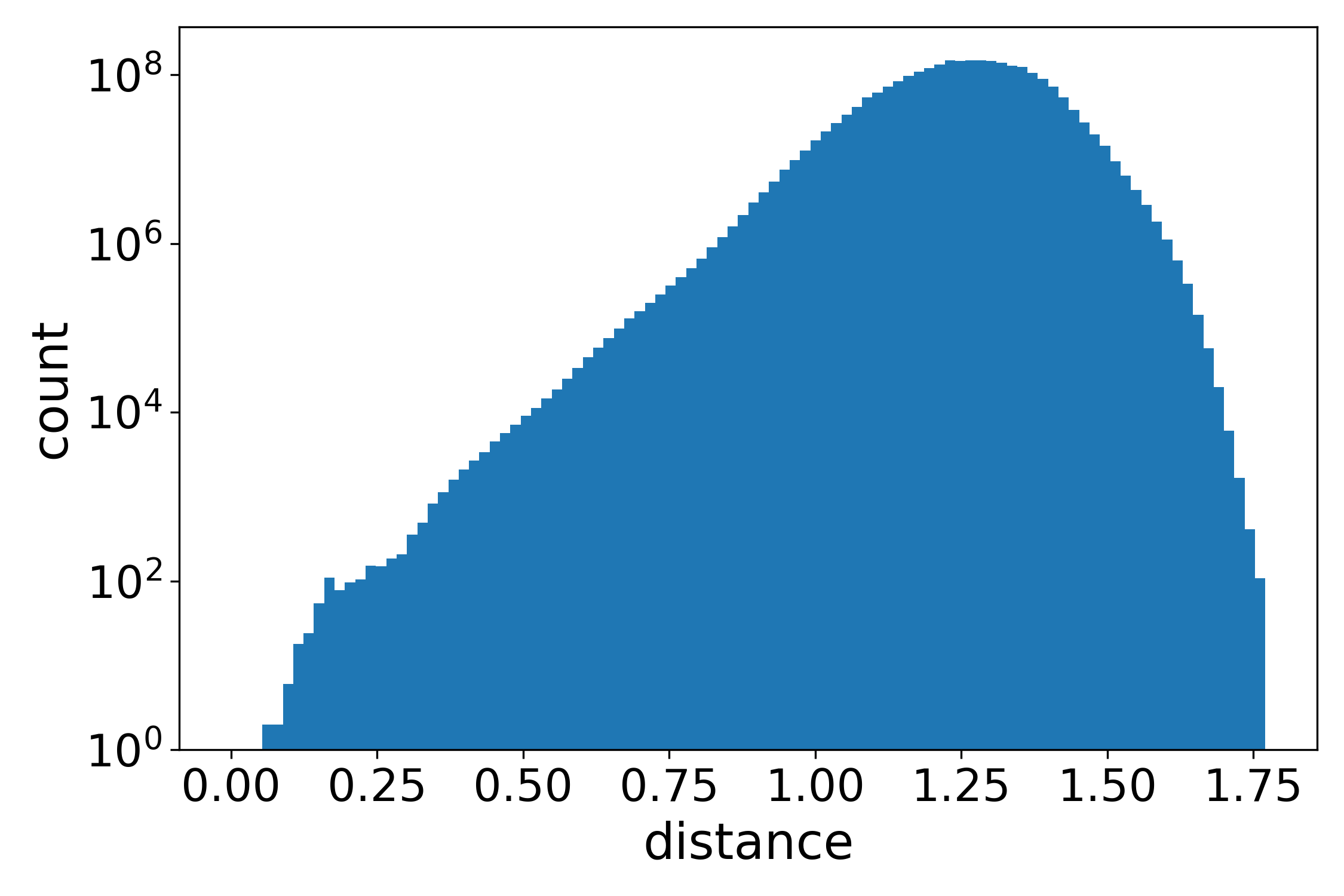}
\end{minipage}
\hspace{-0.1cm}
\begin{minipage}[b]{0.33\linewidth}
\centering
\includegraphics[width=\textwidth]{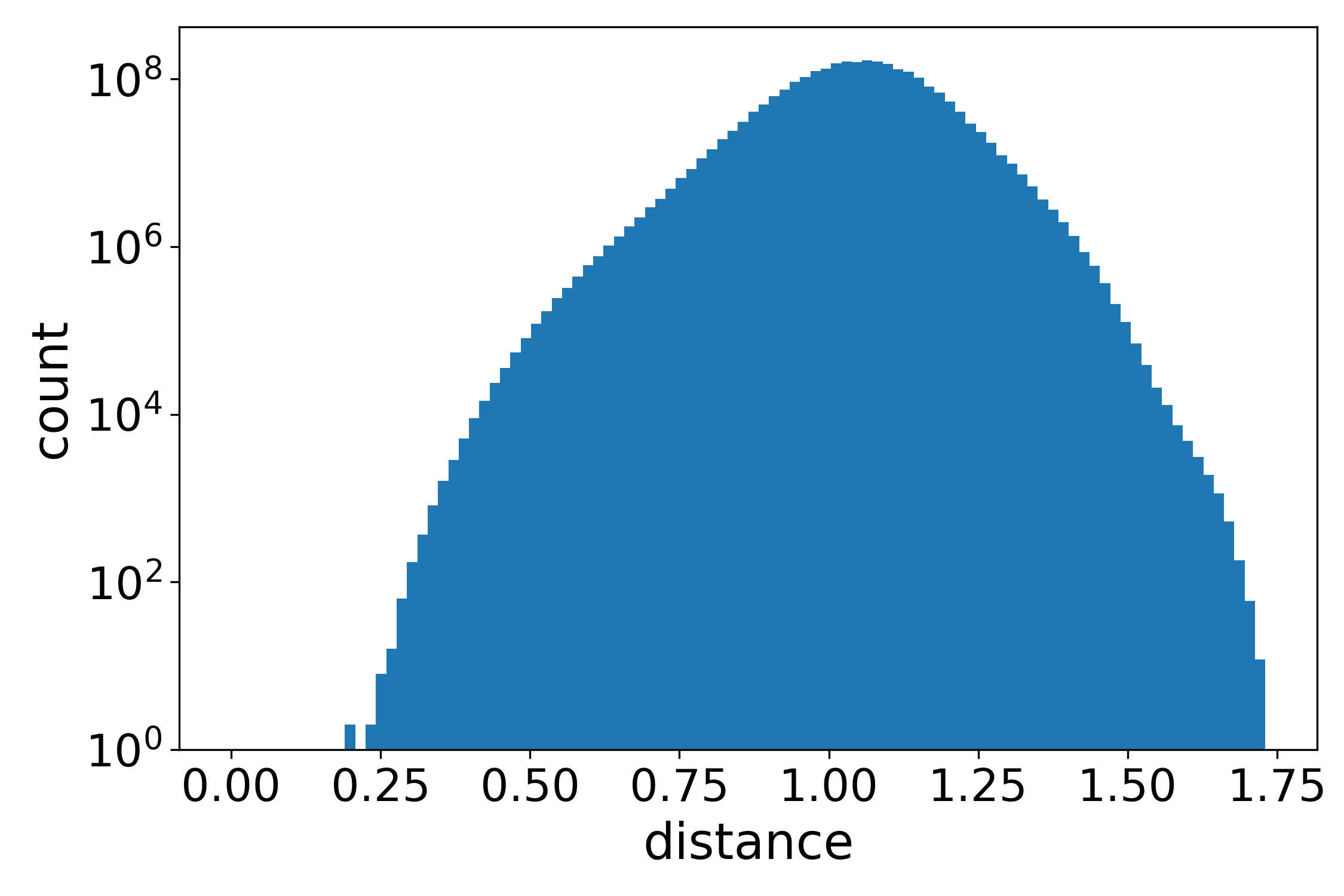}
\end{minipage}
\hspace{-0.1cm}
\begin{minipage}[b]{0.33\linewidth}
\centering
\includegraphics[width=\textwidth]{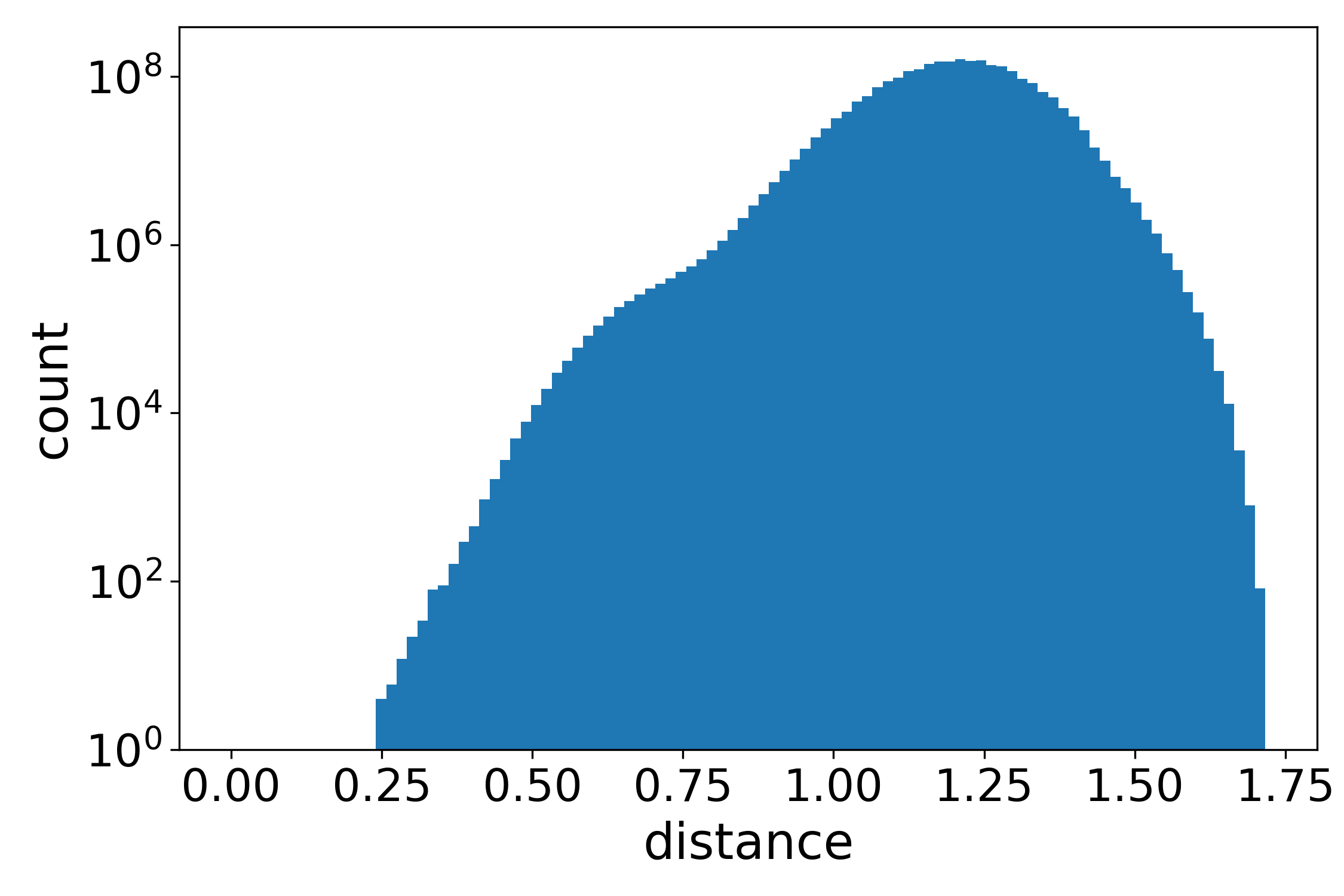}
\end{minipage}
\caption{Distribution of context-to-context normalized Euclidean distances for Pythia on Pile10k (left), Bookcorpus (middle) and WikiText-103 (right). We see concentration of distances in the sense that there is a gap from zero to the lowest distance values. Here, we do not include the distance of a context to itself, since it will always be zero for this distance measure.}
\label{fig:cc_norm_euc_dist_distribution_pythia}
\end{figure*}

\begin{figure*}[htb]
\begin{minipage}[b]{0.33\linewidth}
\centering
\includegraphics[width=\textwidth]{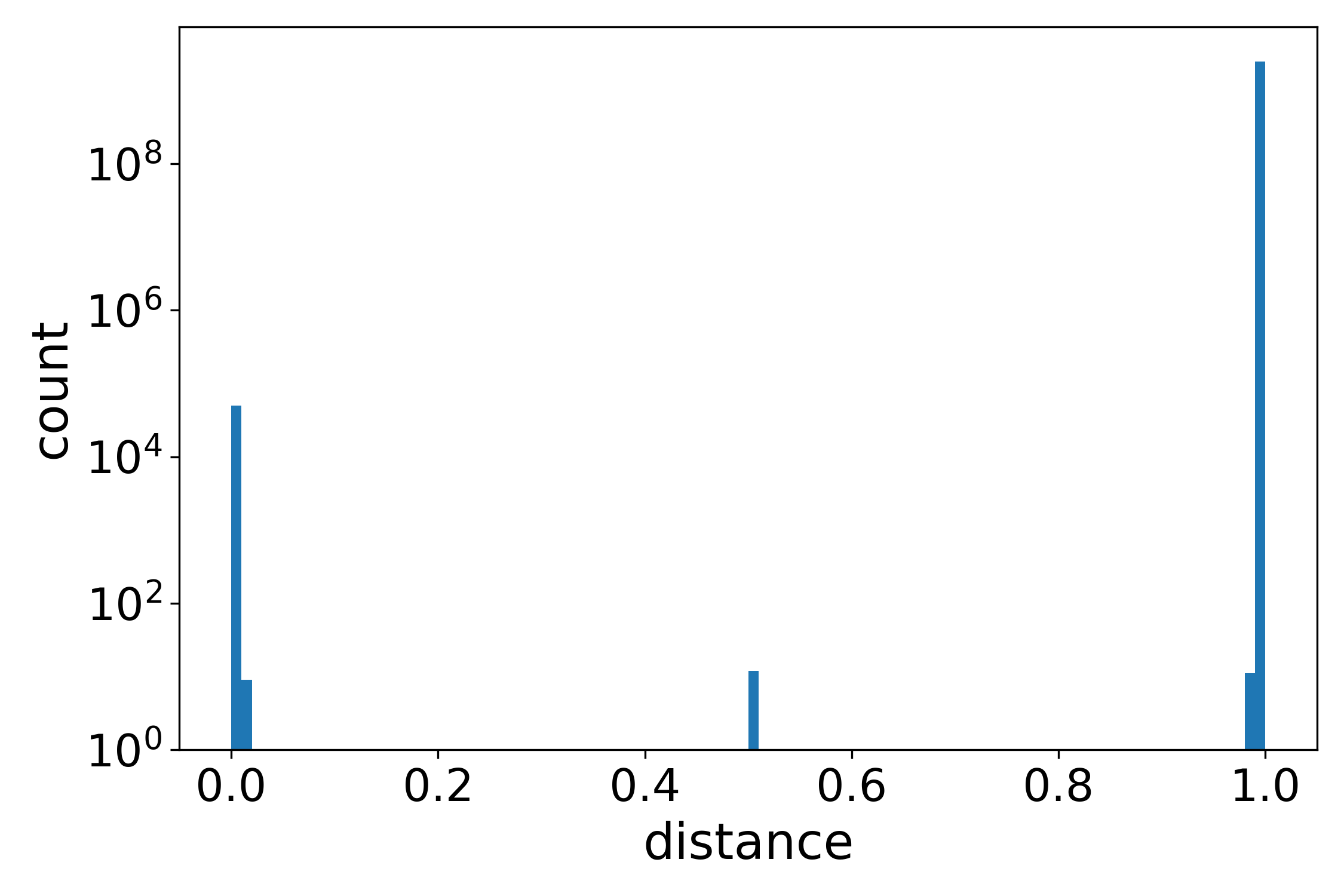}
\end{minipage}
\hspace{-0.1cm}
\begin{minipage}[b]{0.33\linewidth}
\centering
\includegraphics[width=\textwidth]{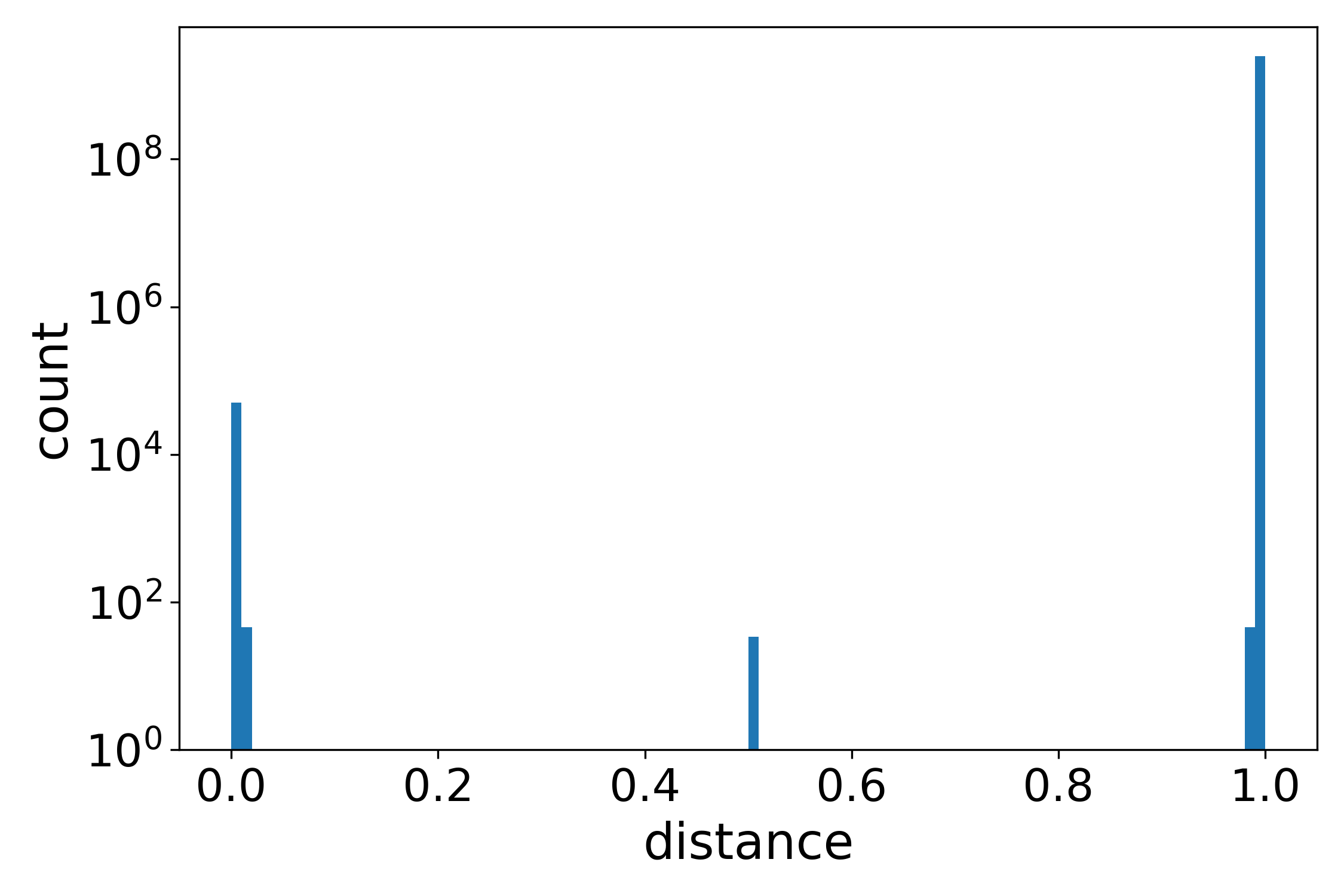}
\end{minipage}
\hspace{-0.1cm}
\begin{minipage}[b]{0.33\linewidth}
\centering
\includegraphics[width=\textwidth]{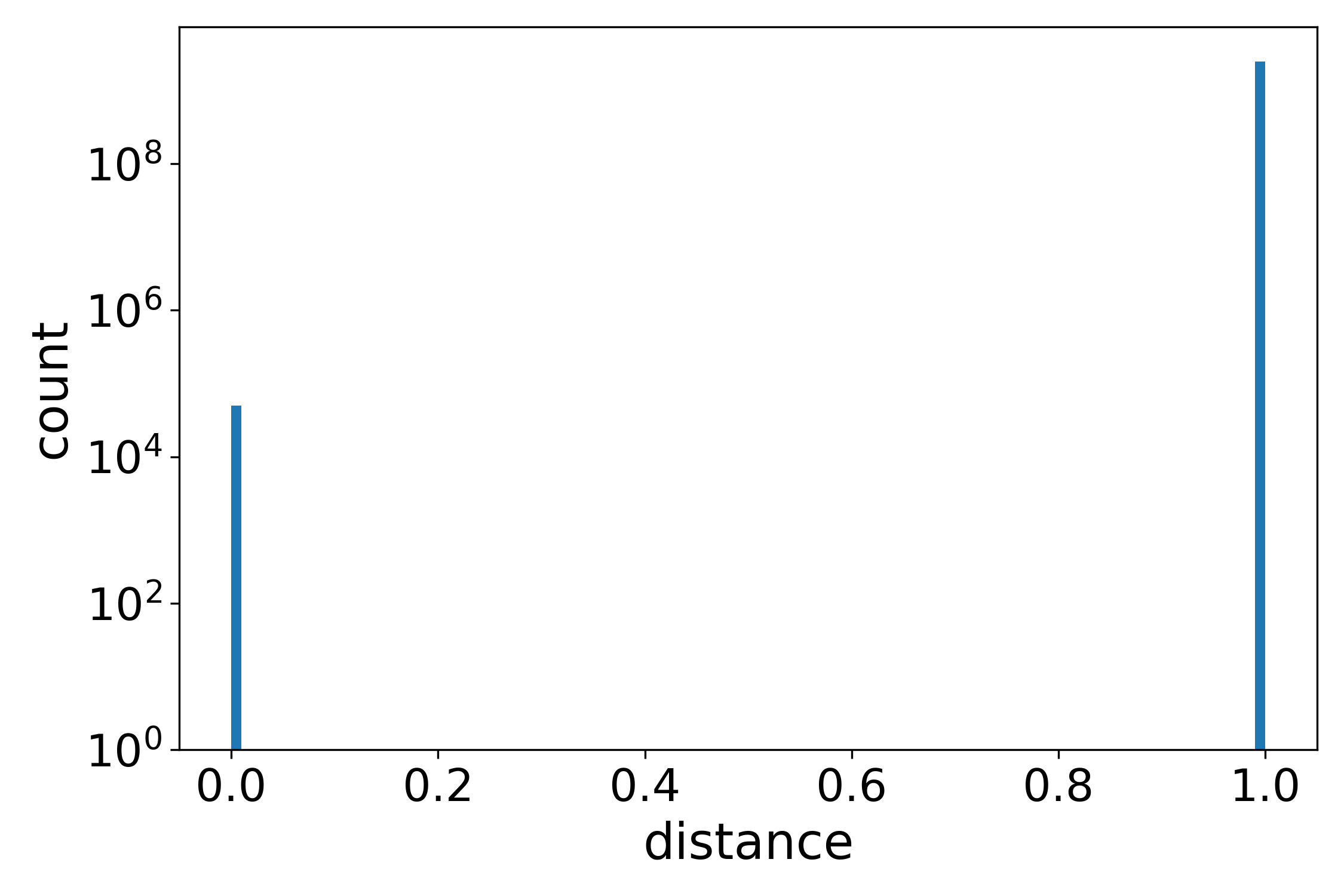}
\end{minipage}
\caption{Distribution of context-to-context softmaxed dot product distances for Pythia on Pile10k (left), Bookcorpus (middle) and WikiText-103 (right). Here we have included the distance of a context to itself. Note that, when using a dot product, there is no guarantee that a context will get the largest score with itself. Pythia is the only tested model which has distances between 0 and 1.}
\label{fig:cc_softmax_dot_distribution_pythia}
\end{figure*}

\begin{figure*}[htb]
\begin{minipage}[b]{0.33\linewidth}
\centering
\includegraphics[width=\textwidth]{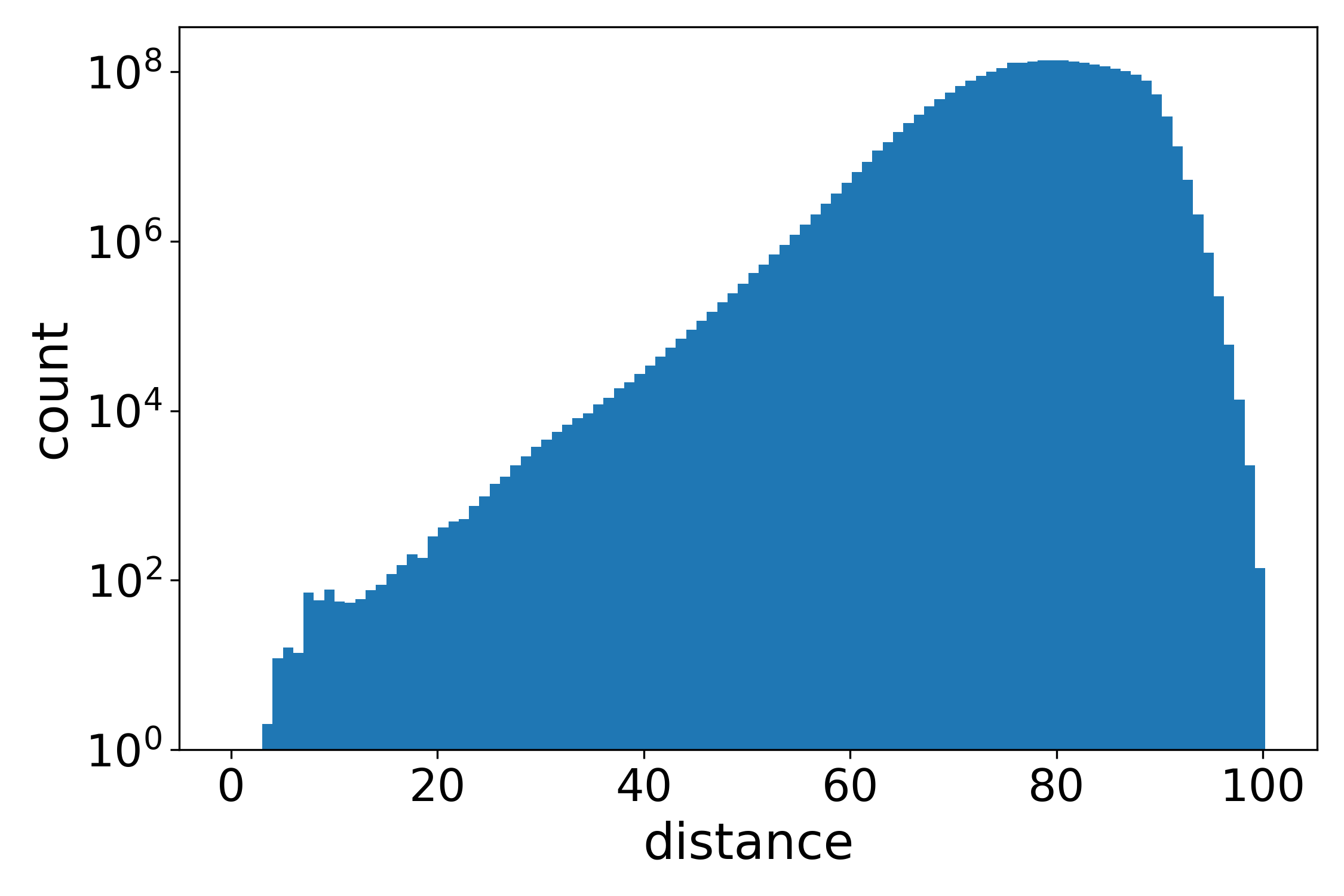}
\end{minipage}
\hspace{-0.1cm}
\begin{minipage}[b]{0.33\linewidth}
\centering
\includegraphics[width=\textwidth]{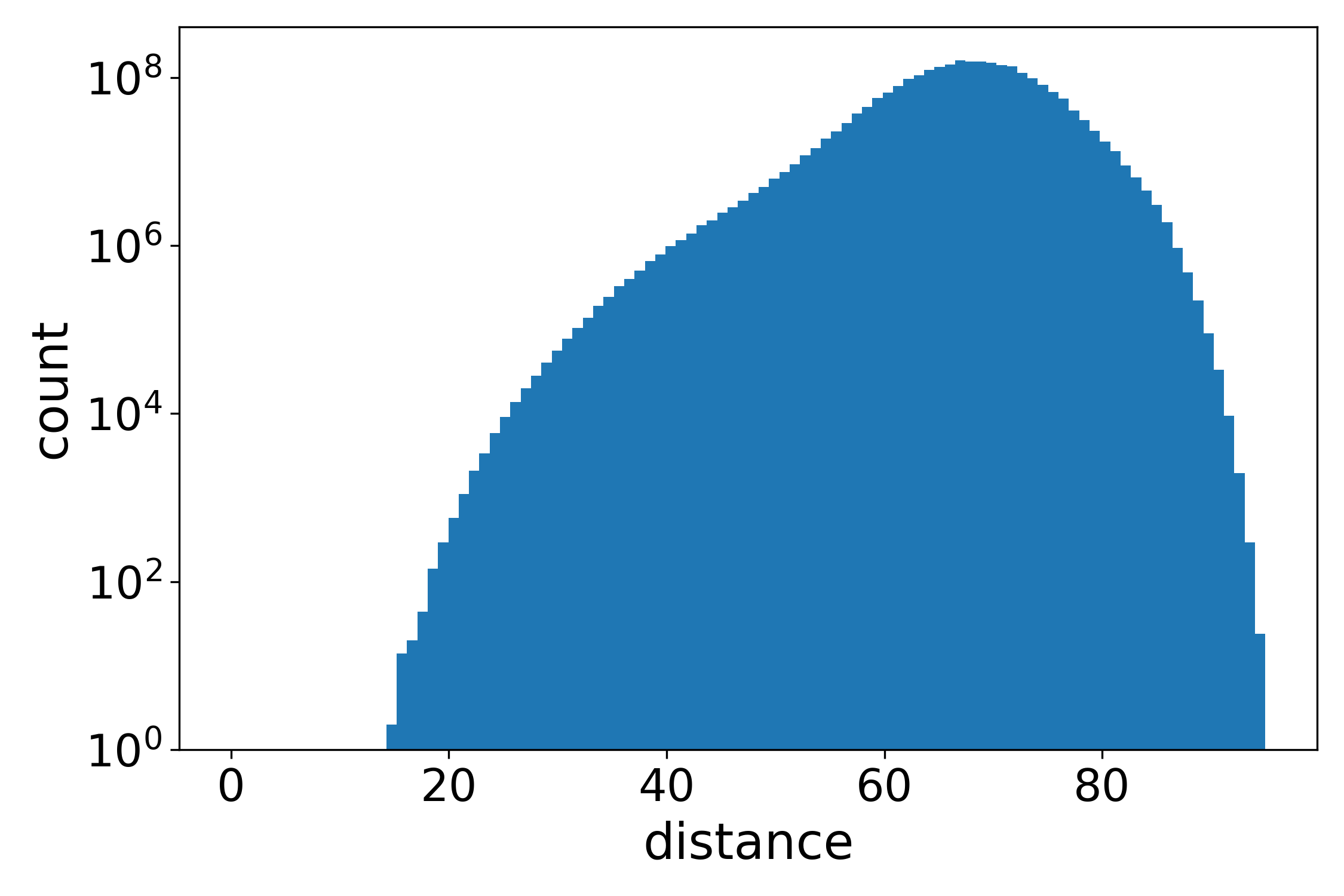}
\end{minipage}
\hspace{-0.1cm}
\begin{minipage}[b]{0.33\linewidth}
\centering
\includegraphics[width=\textwidth]{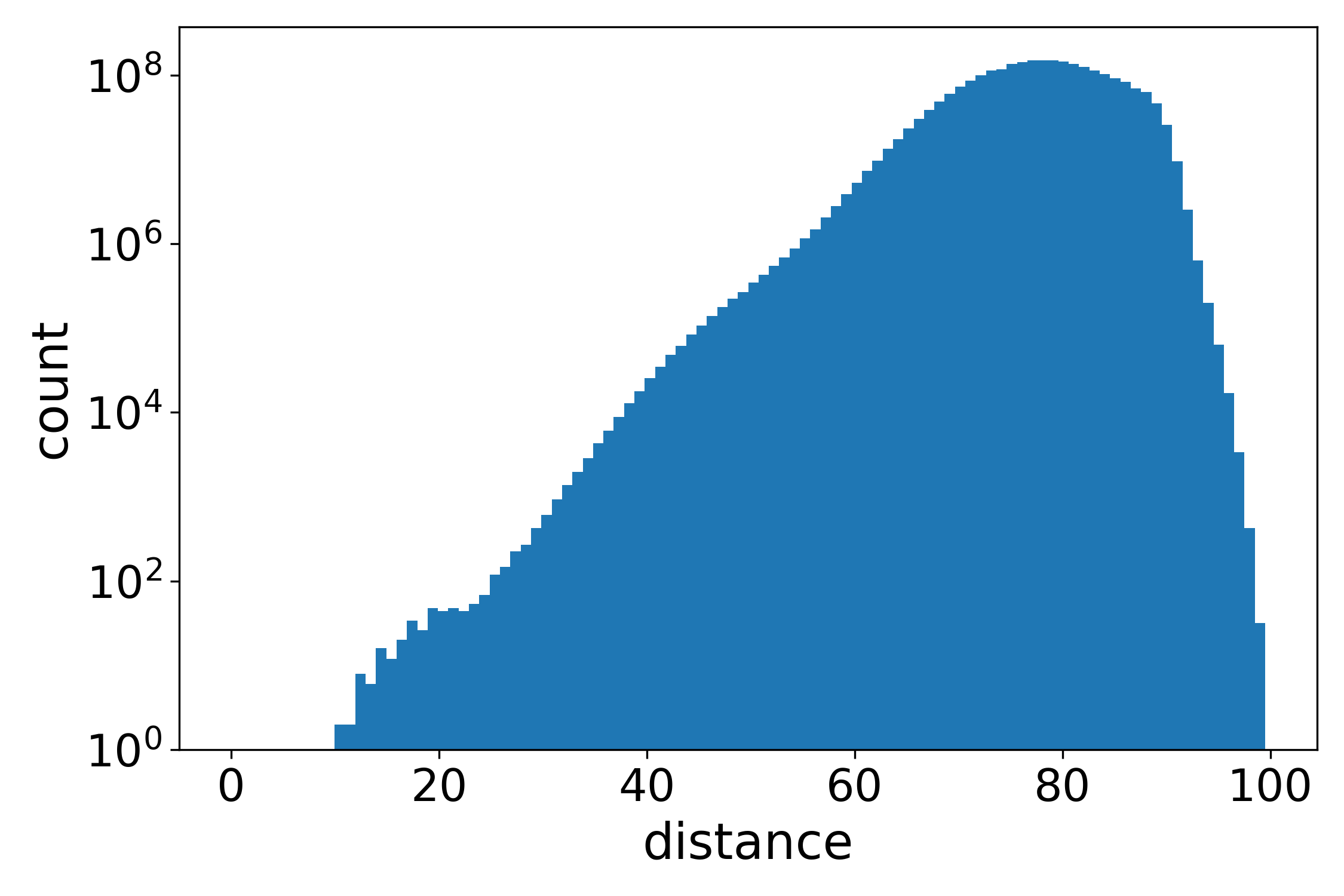}
\end{minipage}
\caption{Distribution of context-to-context Euclidean distances for Opt on Pile10k (left), Bookcorpus (middle) and WikiText-103 (right). We see concentration of distances in the sense that there is a gap from zero to the lowest distance values. Here, we do not include the distance of a context to itself, since it will always be zero for this distance measure.}
\label{fig:cc_euc_dist_distribution_opt}
\end{figure*}

\begin{figure*}[htb]
\begin{minipage}[b]{0.33\linewidth}
\centering
\includegraphics[width=\textwidth]{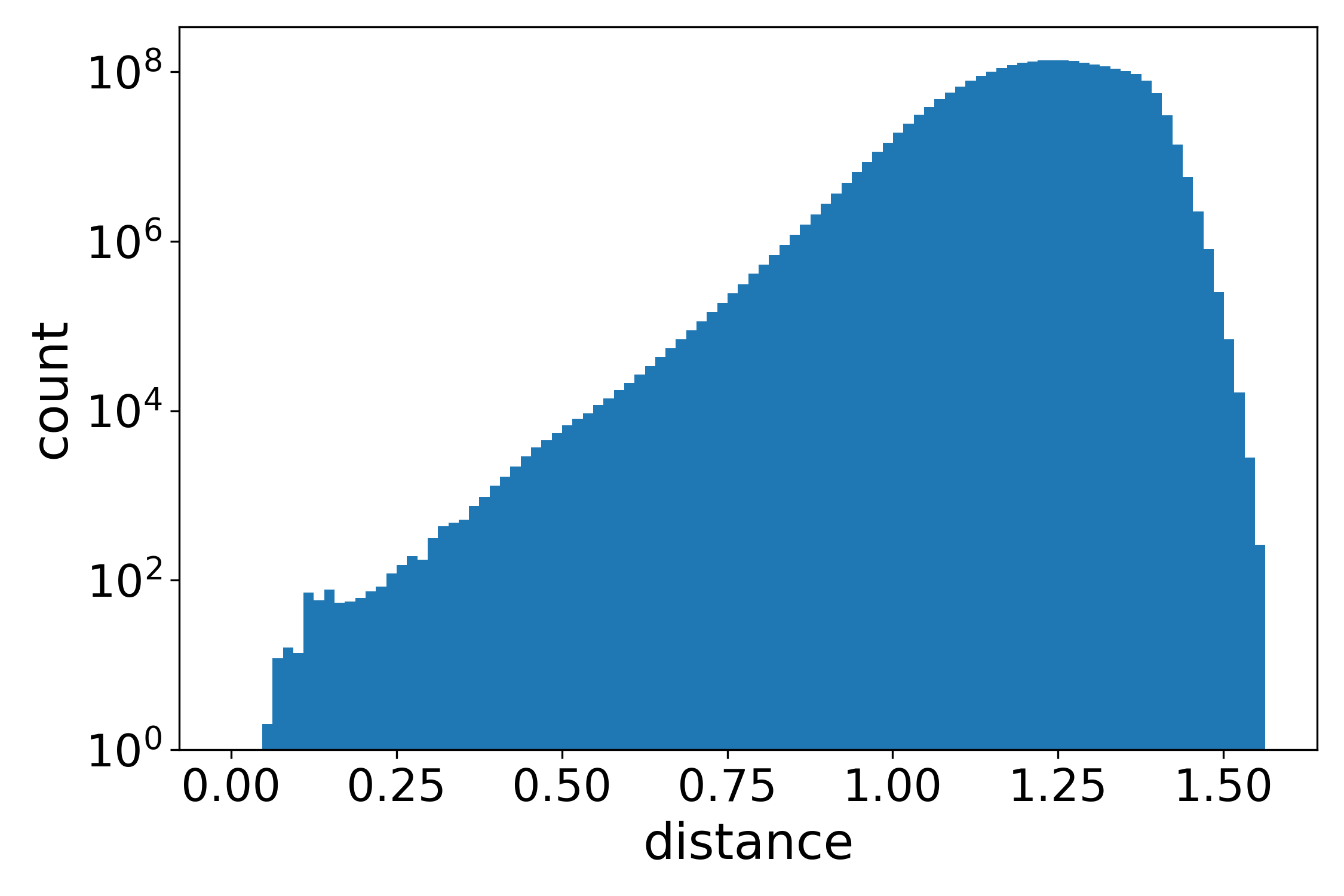}
\end{minipage}
\hspace{-0.1cm}
\begin{minipage}[b]{0.33\linewidth}
\centering
\includegraphics[width=\textwidth]{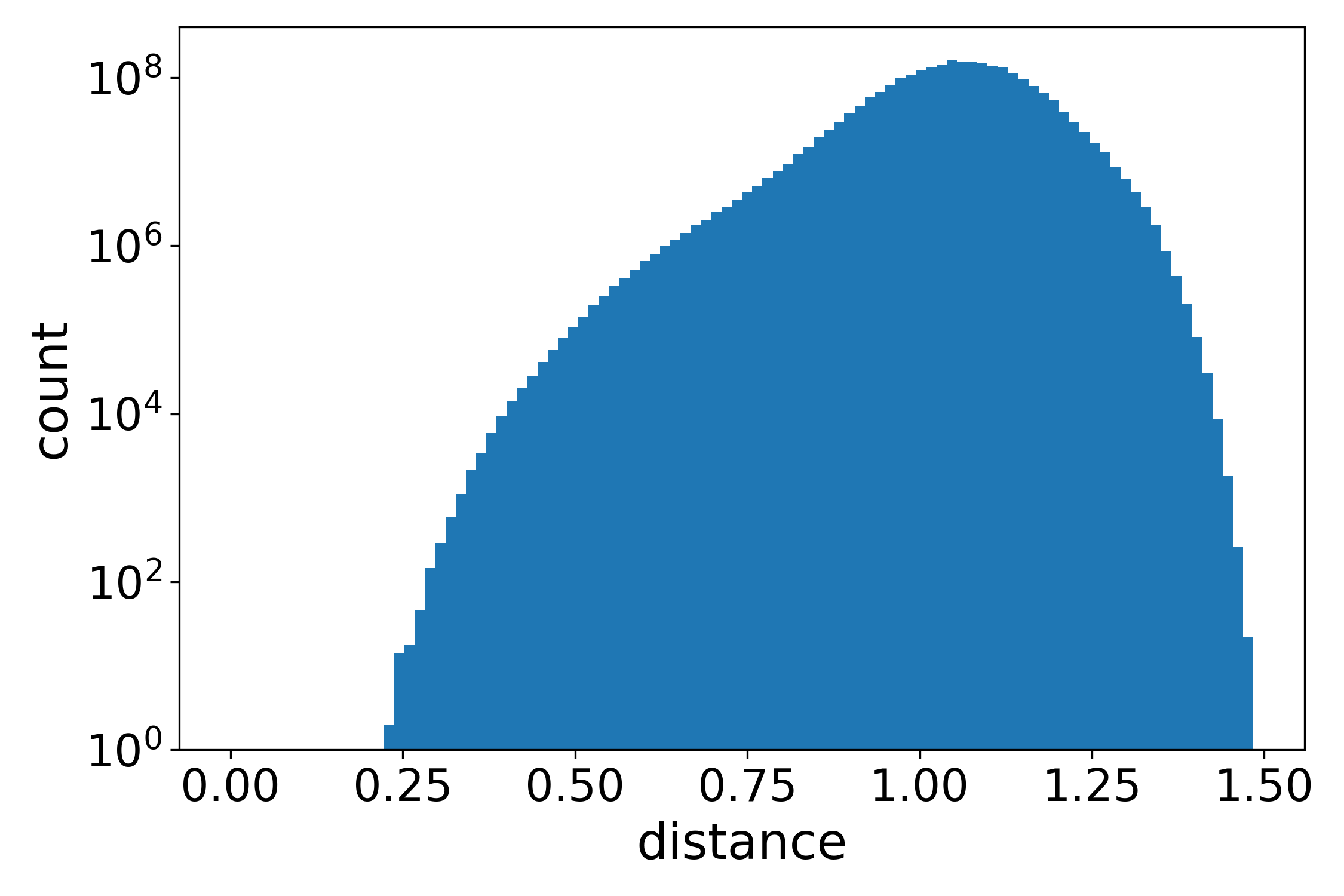}
\end{minipage}
\hspace{-0.1cm}
\begin{minipage}[b]{0.33\linewidth}
\centering
\includegraphics[width=\textwidth]{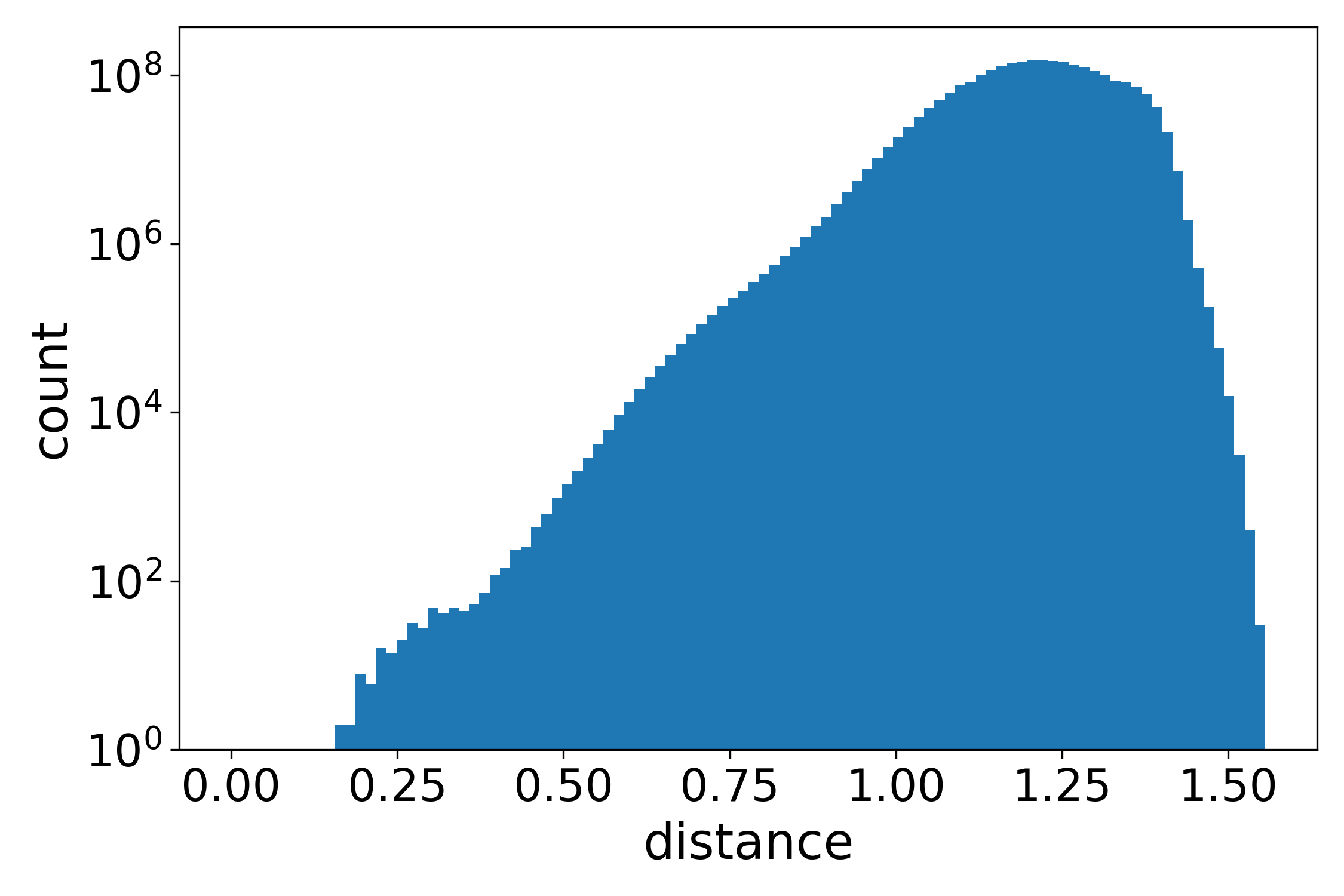}
\end{minipage}
\caption{Distribution of context-to-context normalized Euclidean distances for Opt on Pile10k (left), Bookcorpus (middle) and WikiText-103 (right). We see concentration of distances in the sense that there is a gap from zero to the lowest distance values. Here, we do not include the distance of a context to itself, since it will always be zero for this distance measure.}
\label{fig:cc_norm_euc_dist_distribution_opt}
\end{figure*}

\begin{figure*}[htb]
\begin{minipage}[b]{0.33\linewidth}
\centering
\includegraphics[width=\textwidth]{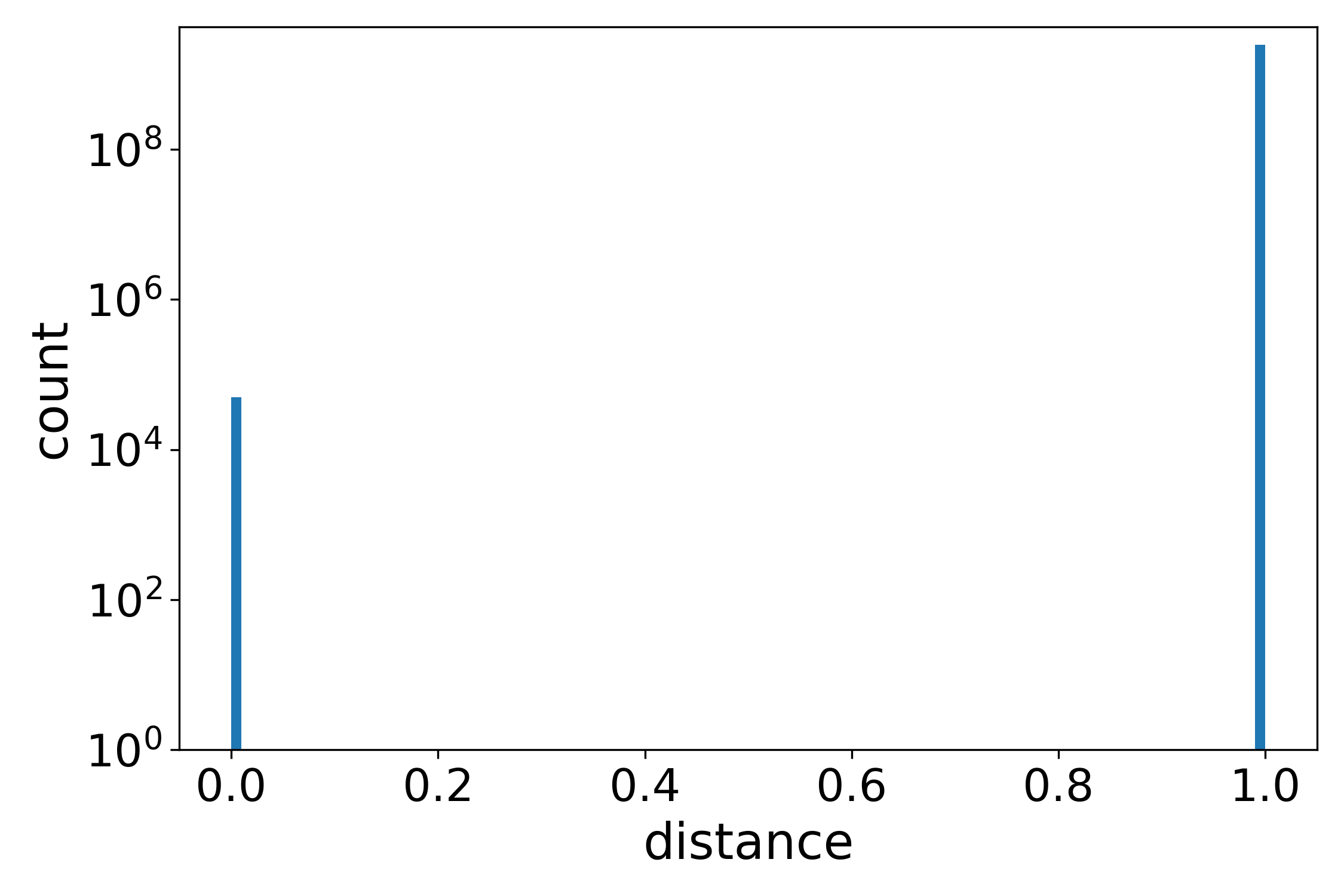}
\end{minipage}
\hspace{-0.1cm}
\begin{minipage}[b]{0.33\linewidth}
\centering
\includegraphics[width=\textwidth]{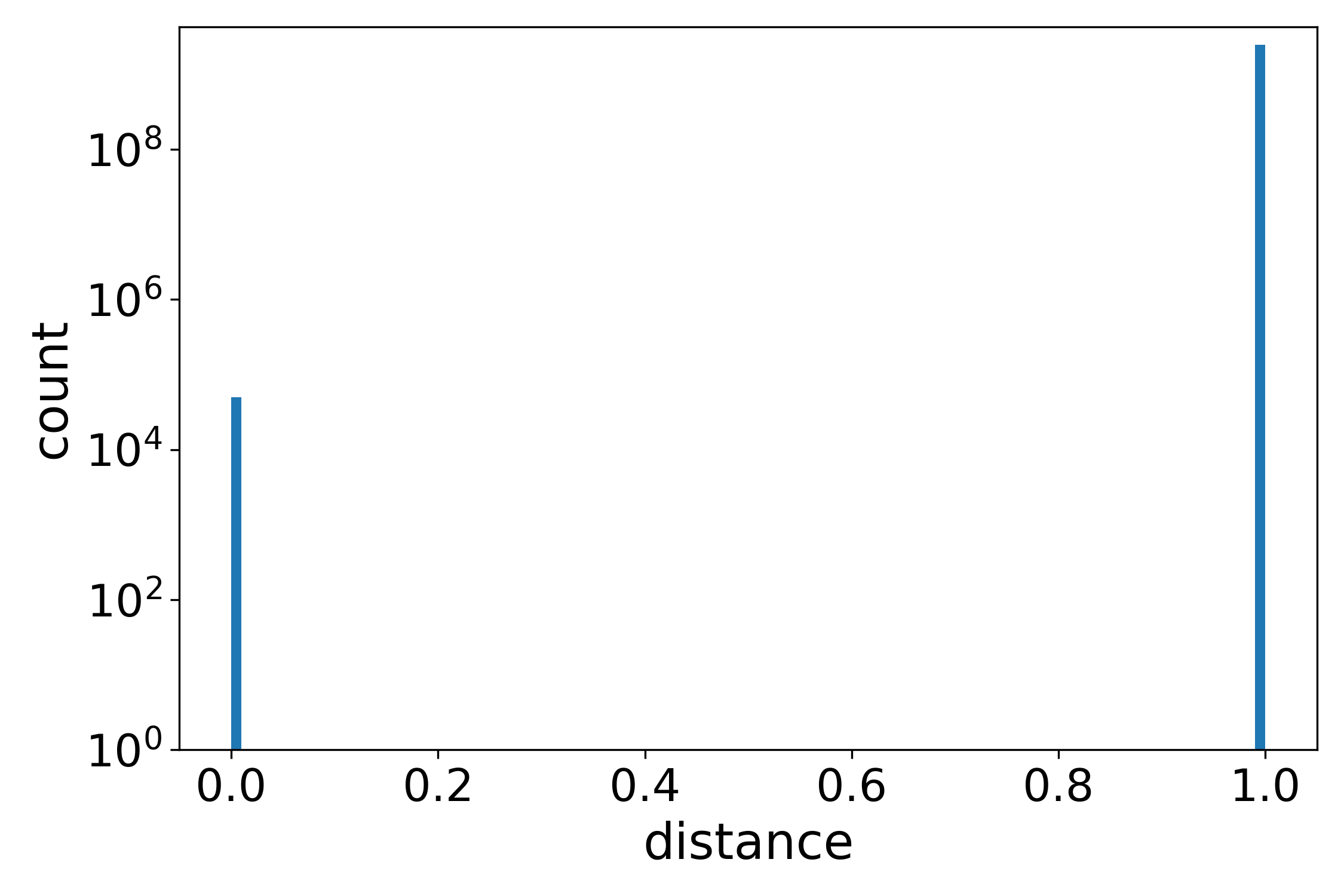}
\end{minipage}
\hspace{-0.1cm}
\begin{minipage}[b]{0.33\linewidth}
\centering
\includegraphics[width=\textwidth]{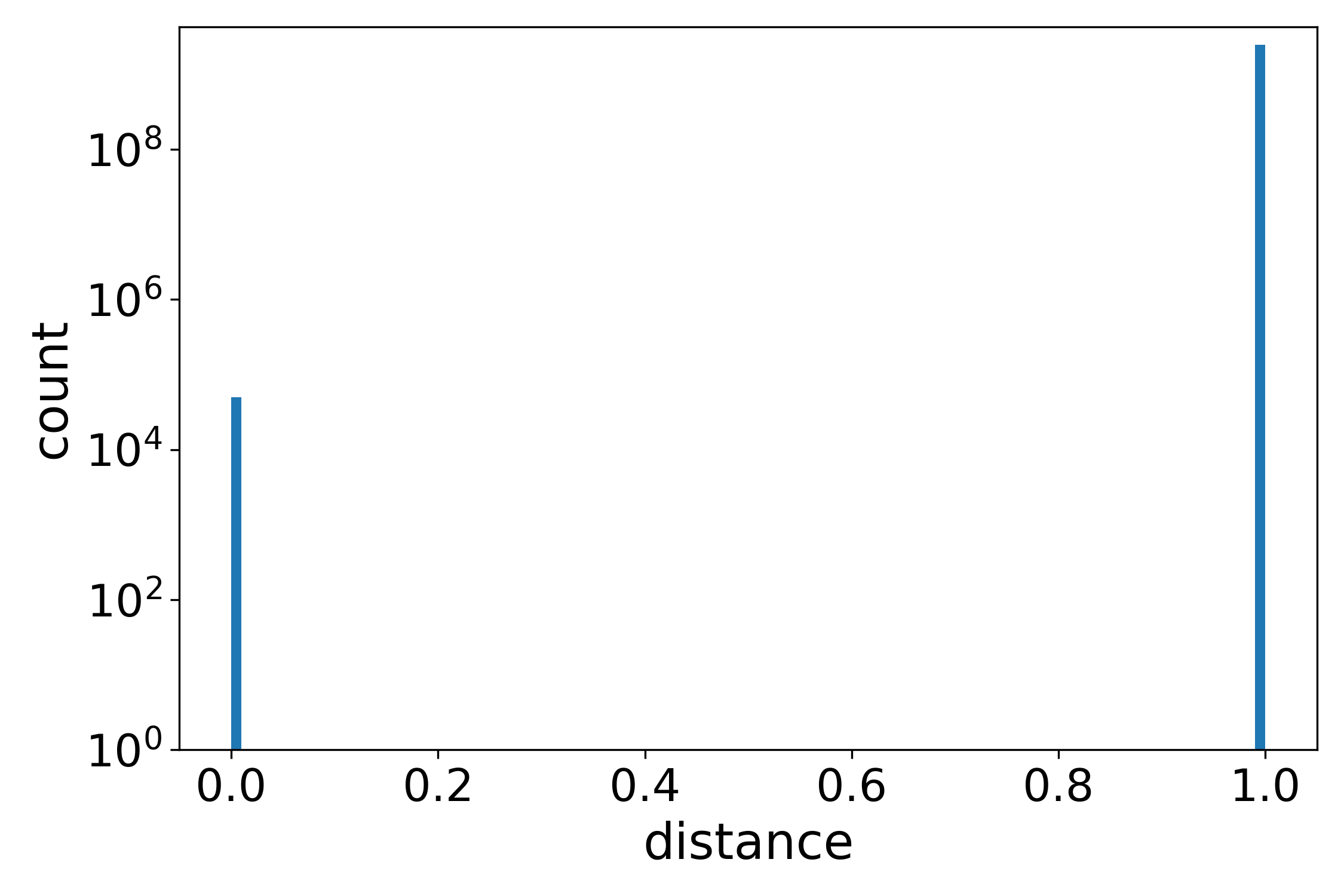}
\end{minipage}
\caption{Distribution of context-to-context softmaxed dot product distances for Opt on Pile10k (left), Bookcorpus (middle) and WikiText-103 (right). Here we have included the distance of a context to itself, which is the spike at zero. Note that, when using a dot product, there is no guarantee that a context will get the largest score with itself.}
\label{fig:cc_softmax_dot_distribution_opt}
\end{figure*}

\begin{figure*}[htb]
\begin{minipage}[b]{0.33\linewidth}
\centering
\includegraphics[width=\textwidth]{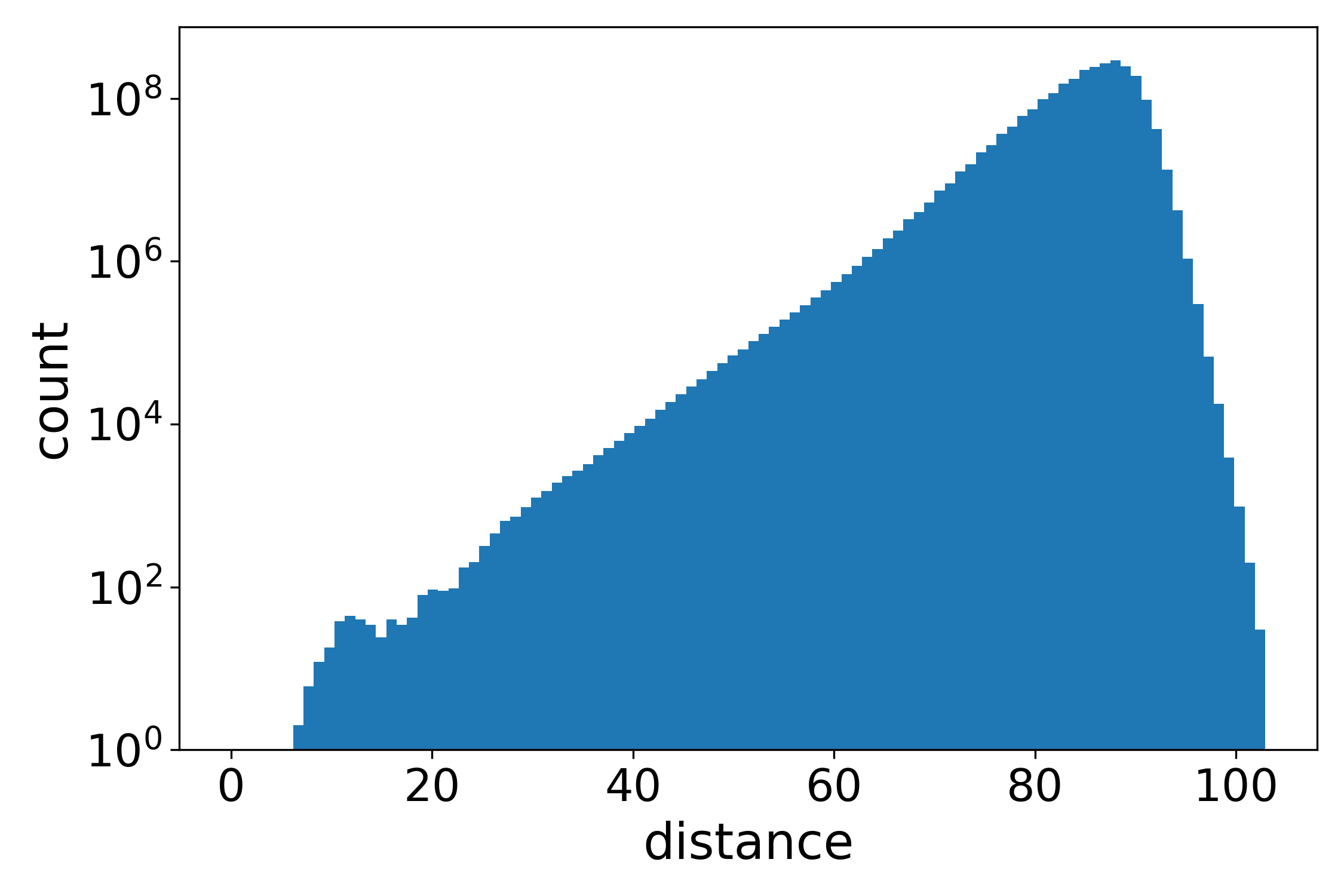}
\end{minipage}
\hspace{-0.1cm}
\begin{minipage}[b]{0.33\linewidth}
\centering
\includegraphics[width=\textwidth]{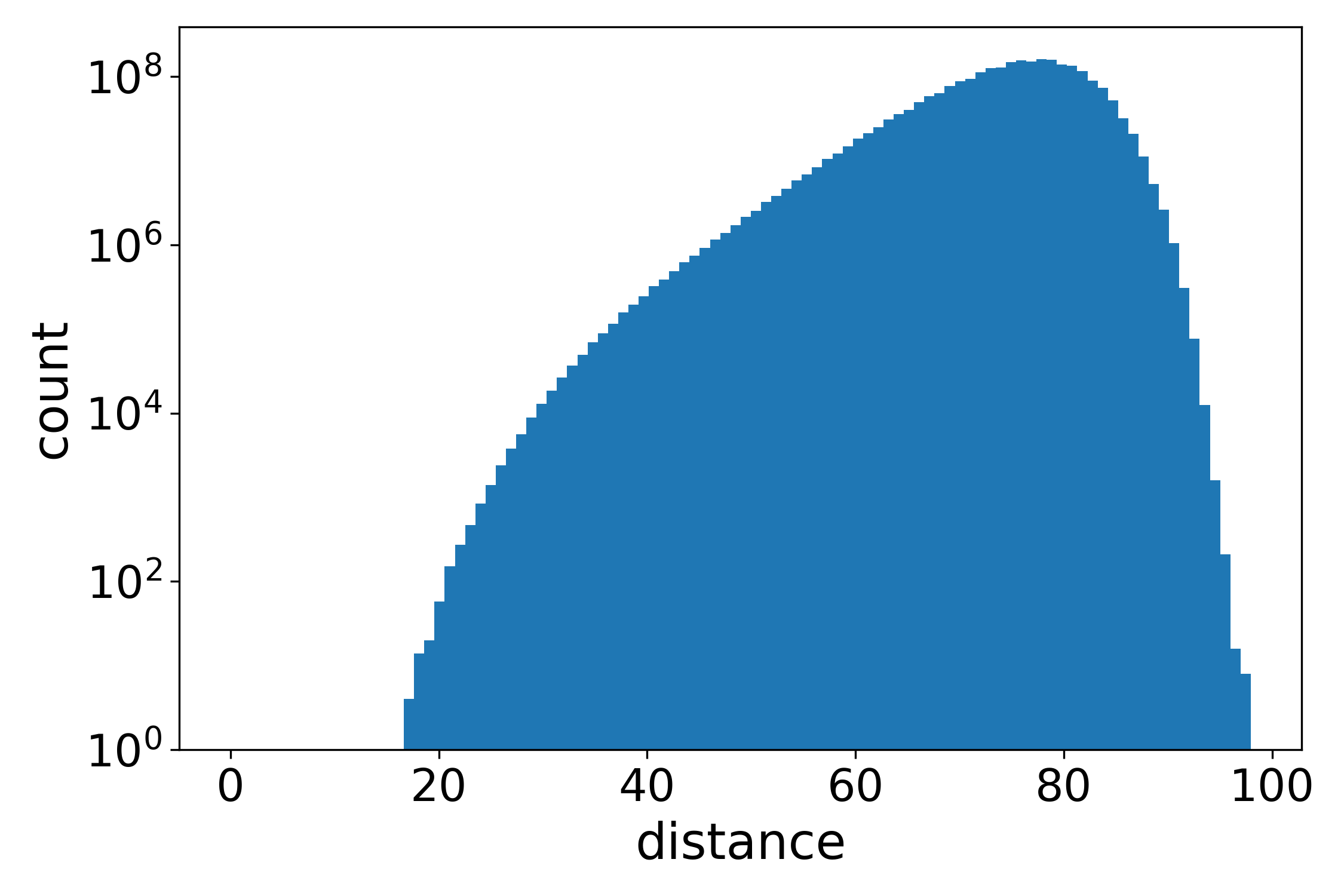}
\end{minipage}
\hspace{-0.1cm}
\begin{minipage}[b]{0.33\linewidth}
\centering
\includegraphics[width=\textwidth]{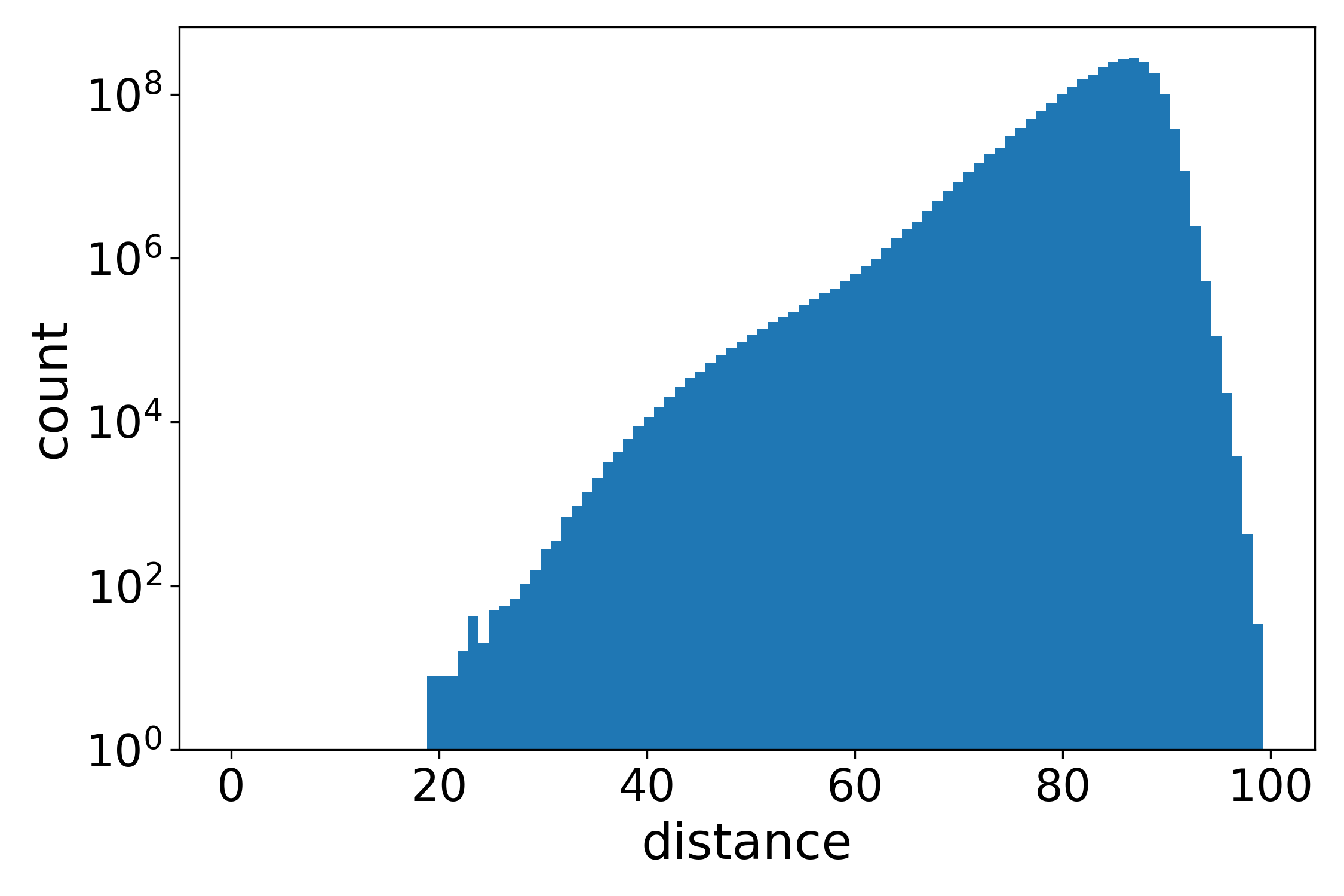}
\end{minipage}
\caption{Distribution of context-to-context Euclidean distances for Olmo on Pile10k (left), Bookcorpus (middle) and WikiText-103 (right). We see concentration of distances in the sense that there is a gap from zero to the lowest distance values. Here, we do not include the distance of a context to itself, since it will always be zero for this distance measure.}
\label{fig:cc_euc_dist_distribution_olmo}
\end{figure*}

\begin{figure*}[htb]
\begin{minipage}[b]{0.33\linewidth}
\centering
\includegraphics[width=\textwidth]{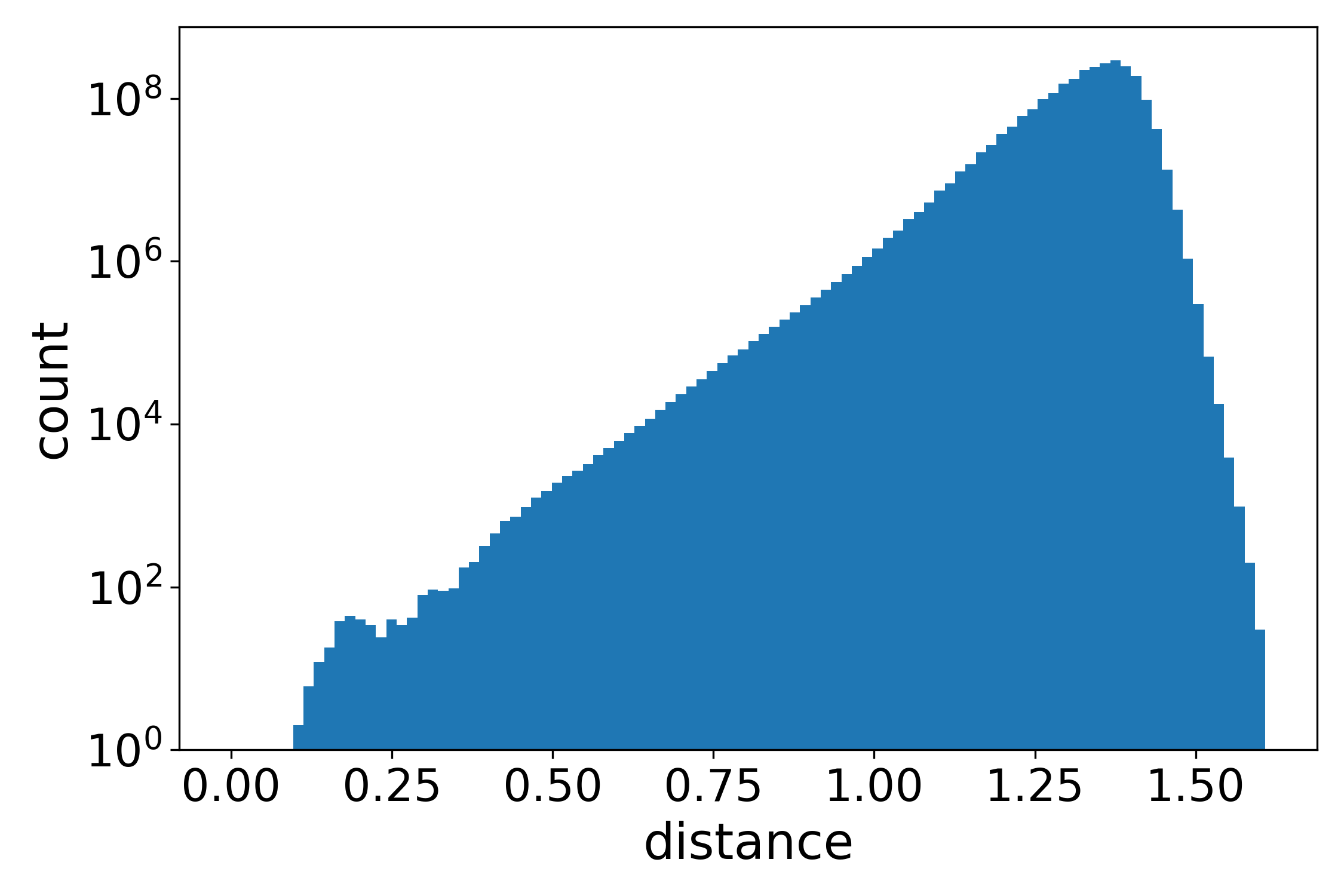}
\end{minipage}
\hspace{-0.1cm}
\begin{minipage}[b]{0.33\linewidth}
\centering
\includegraphics[width=\textwidth]{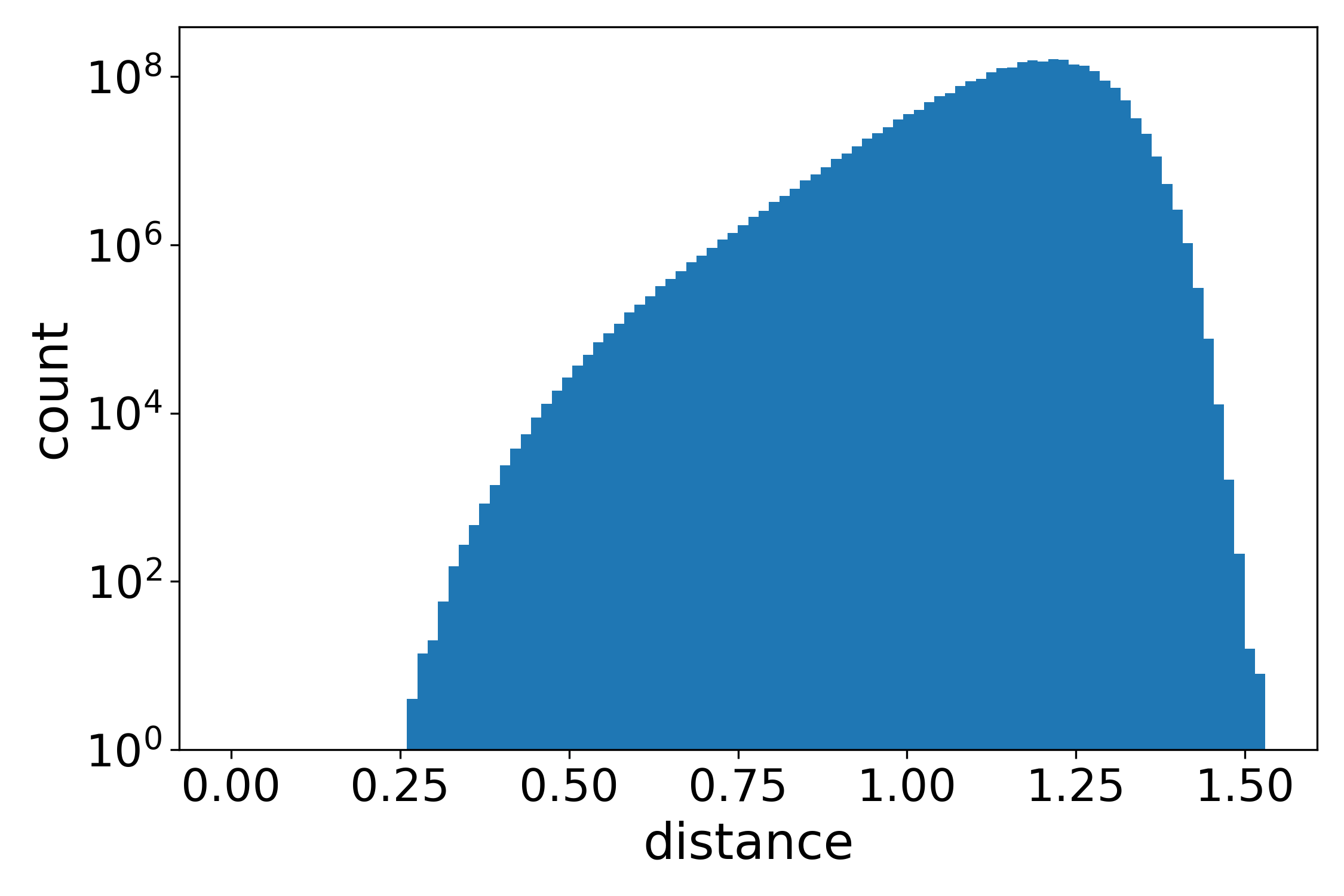}
\end{minipage}
\hspace{-0.1cm}
\begin{minipage}[b]{0.33\linewidth}
\centering
\includegraphics[width=\textwidth]{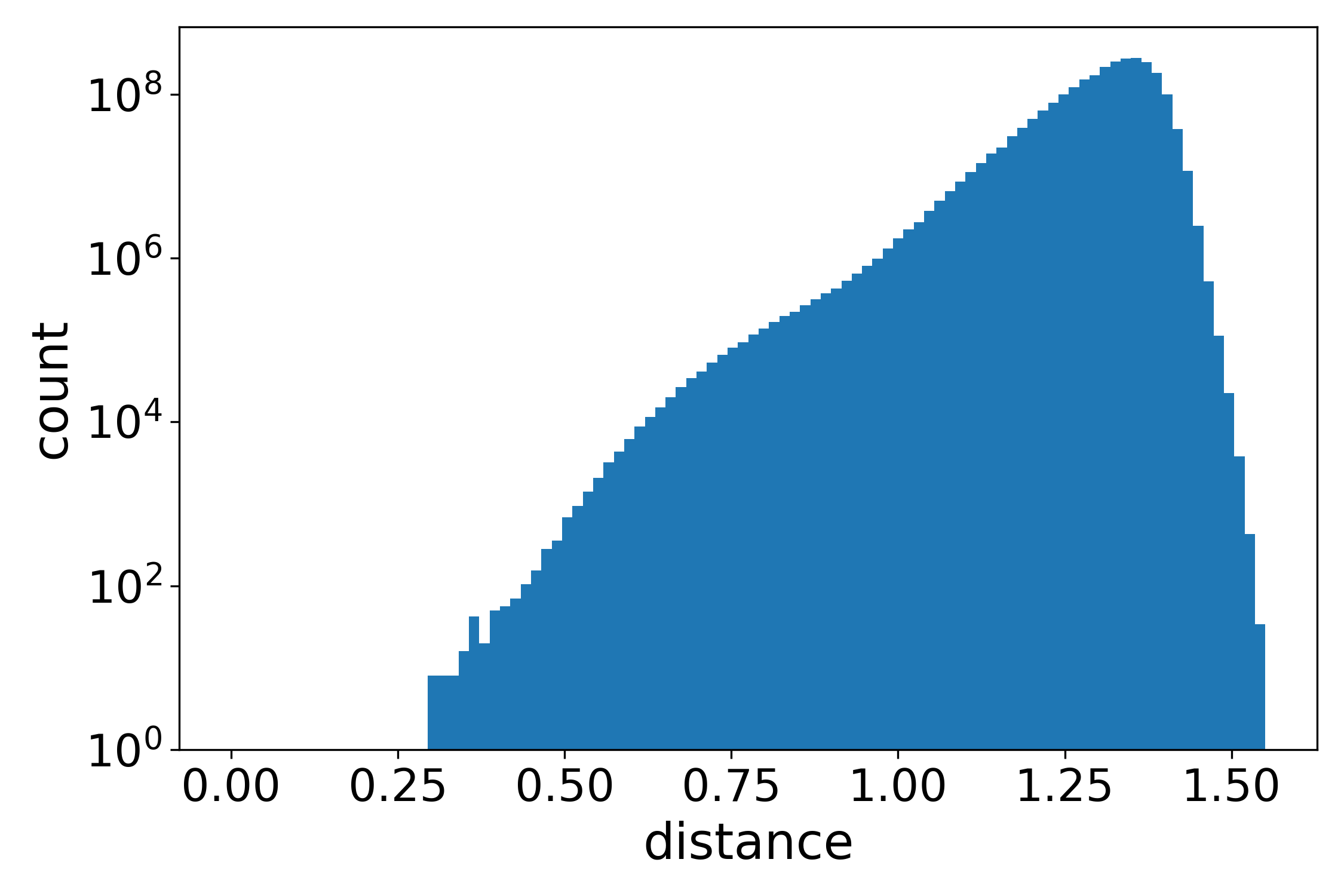}
\end{minipage}
\caption{Distribution of context-to-context normalized Euclidean distances for Olmo on Pile10k (left), Bookcorpus (middle) and WikiText-103 (right). We see concentration of distances in the sense that there is a gap from zero to the lowest distance values. Here, we do not include the distance of a context to itself, since it will always be zero for this distance measure.}
\label{fig:cc_norm_euc_dist_distribution_olmo}
\end{figure*}

\begin{figure*}[htb]
\begin{minipage}[b]{0.33\linewidth}
\centering
\includegraphics[width=\textwidth]{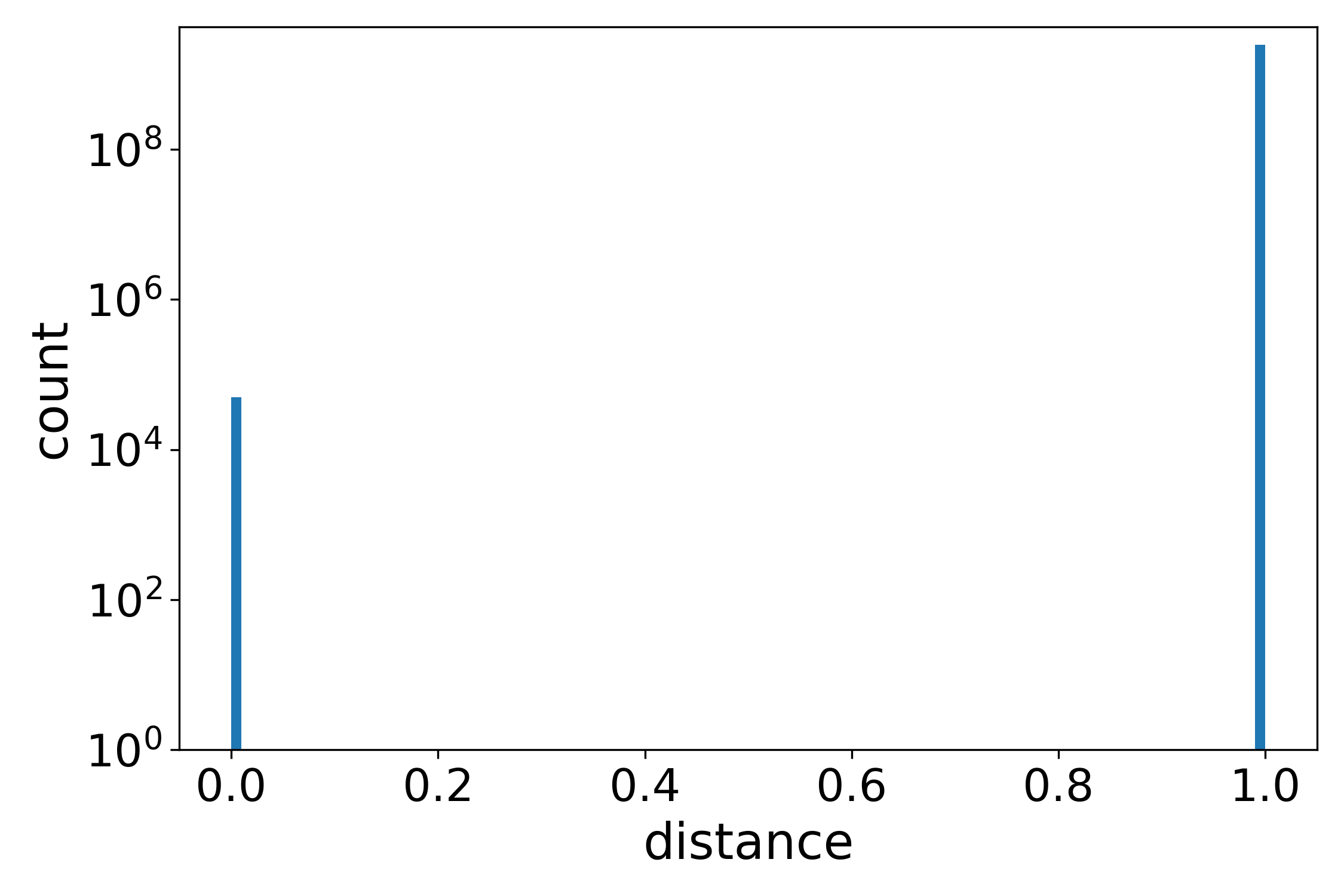}
\end{minipage}
\hspace{-0.1cm}
\begin{minipage}[b]{0.33\linewidth}
\centering
\includegraphics[width=\textwidth]{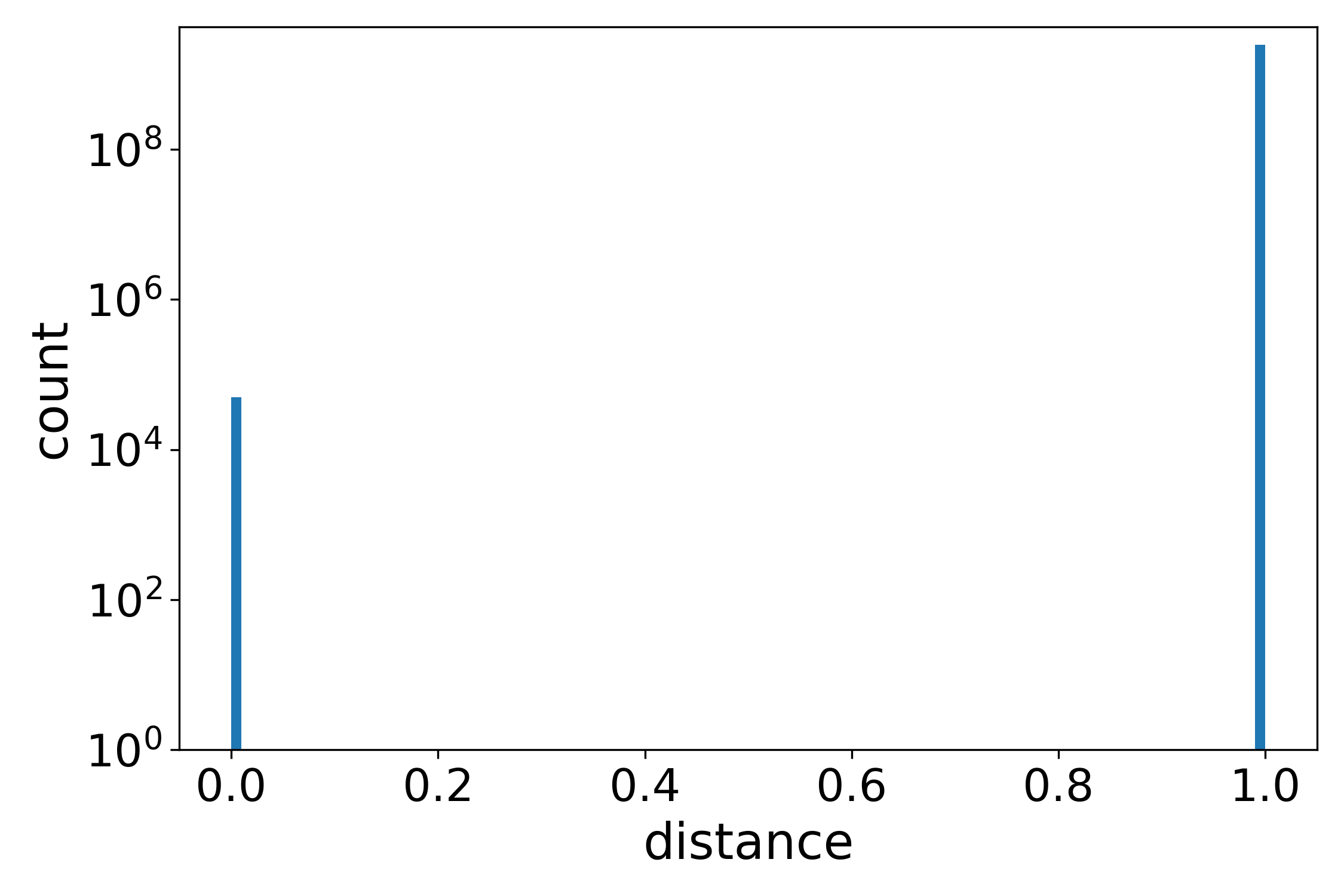}
\end{minipage}
\hspace{-0.1cm}
\begin{minipage}[b]{0.33\linewidth}
\centering
\includegraphics[width=\textwidth]{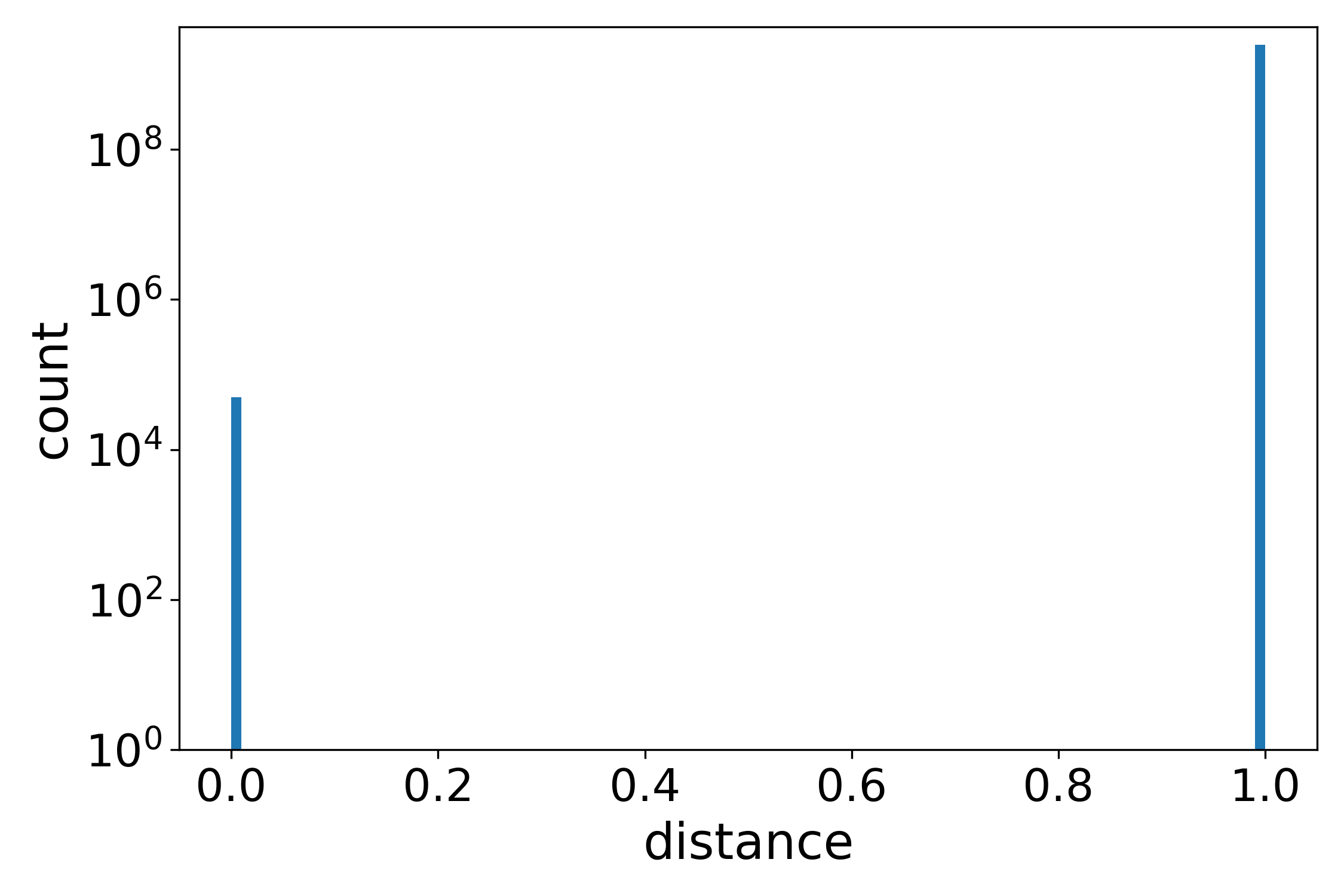}
\end{minipage}
\caption{Distribution of context-to-context softmaxed dot product distances for Olmo on Pile10k (left), Bookcorpus (middle) and WikiText-103 (right). Here we have included the distance of a context to itself, which is the spike at zero. Note that, when using a dot product, there is no guarantee that a context will get the largest score with itself.}
\label{fig:cc_softmax_dot_distribution_olmo}
\end{figure*}

\begin{figure*}[htb]
\begin{minipage}[b]{0.33\linewidth}
\centering
\includegraphics[width=\textwidth]{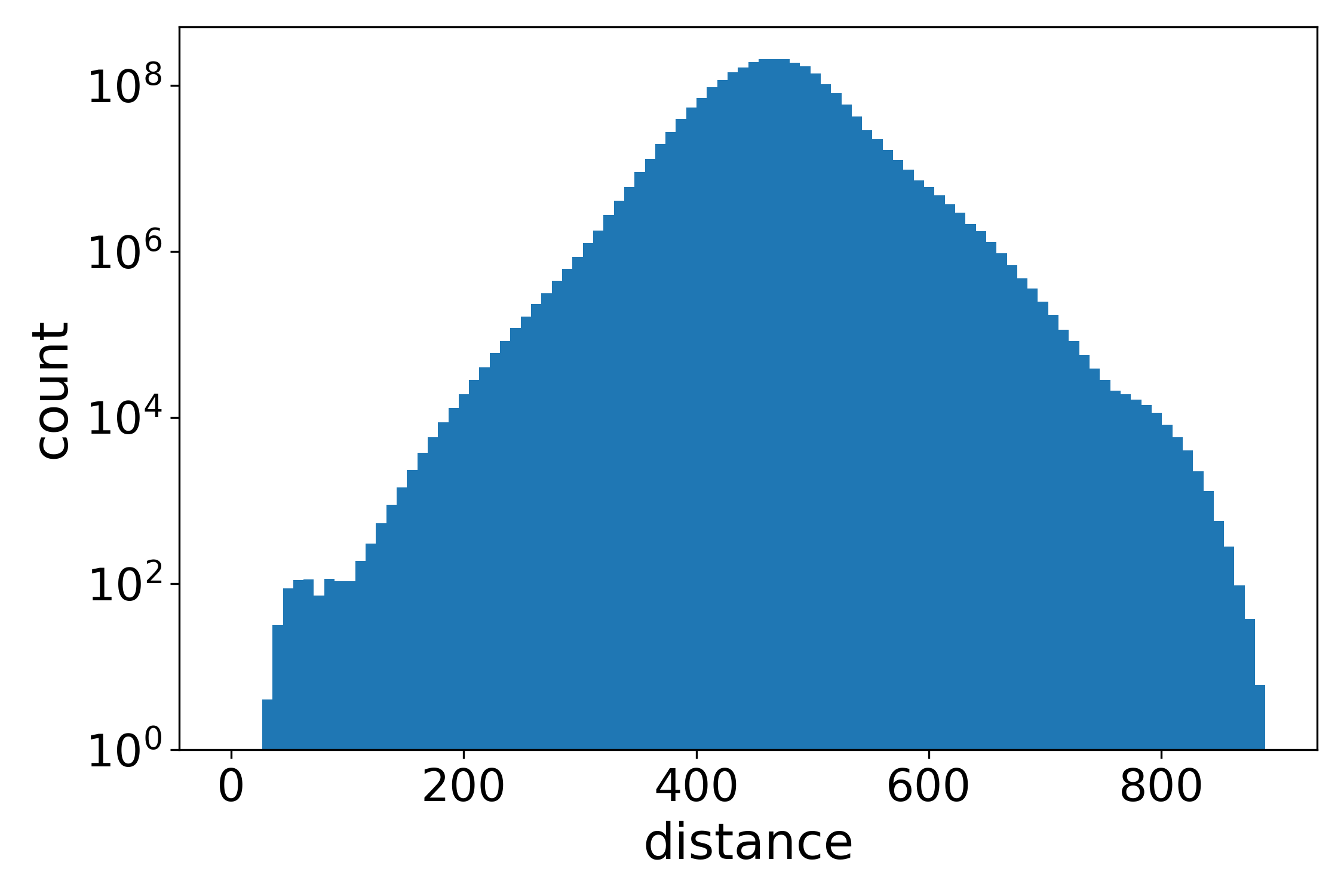}
\end{minipage}
\hspace{-0.1cm}
\begin{minipage}[b]{0.33\linewidth}
\centering
\includegraphics[width=\textwidth]{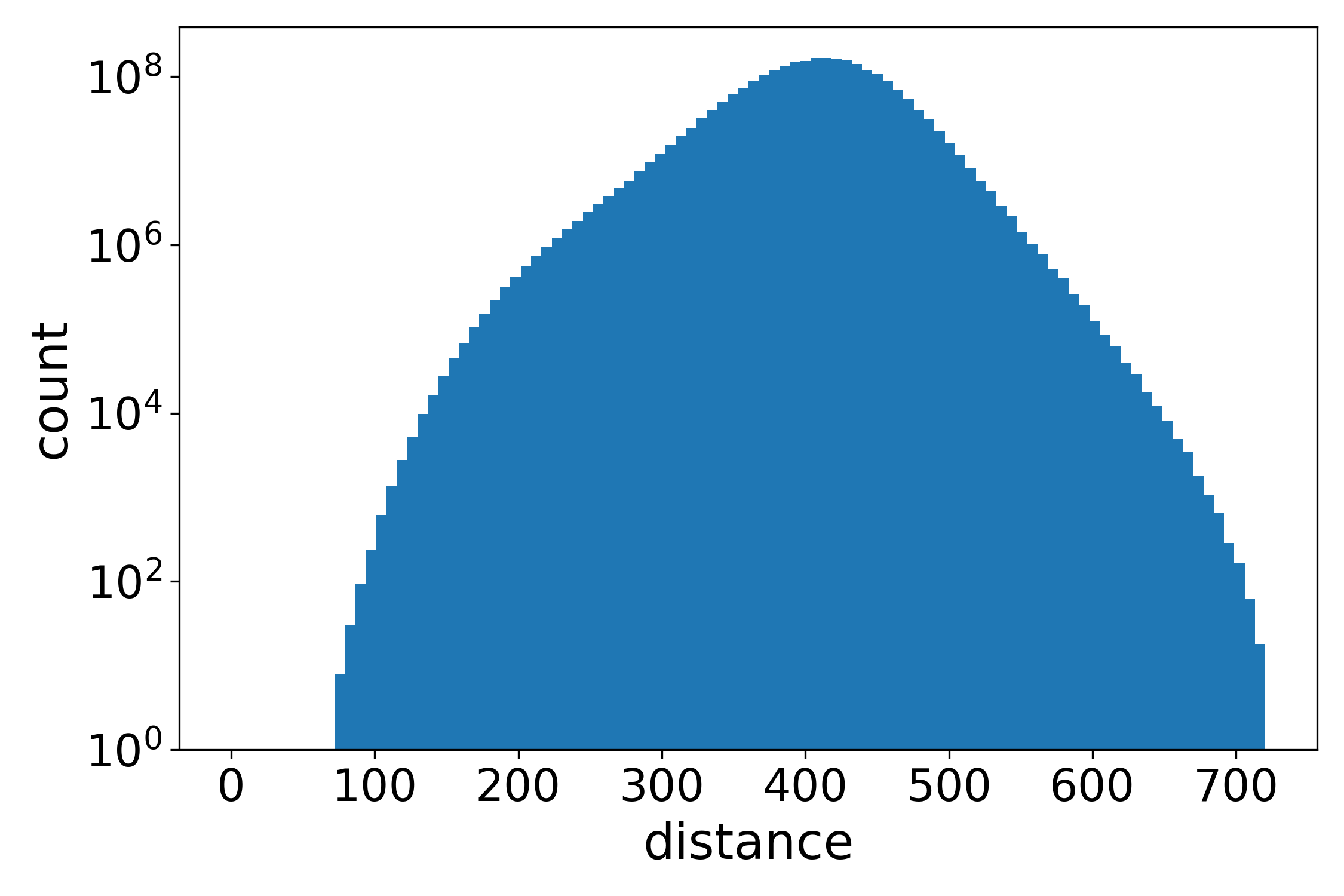}
\end{minipage}
\hspace{-0.1cm}
\begin{minipage}[b]{0.33\linewidth}
\centering
\includegraphics[width=\textwidth]{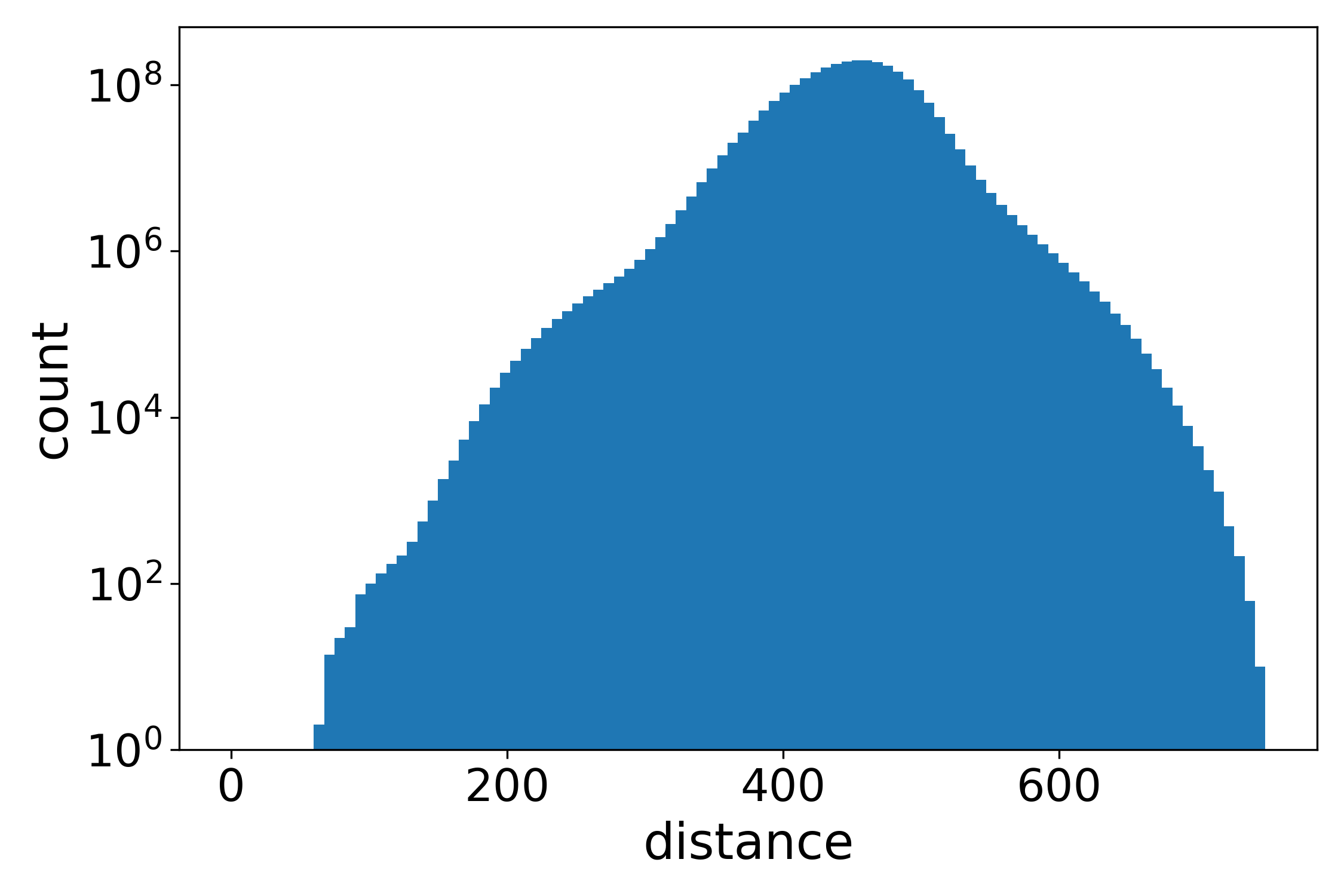}
\end{minipage}
\caption{Distribution of context-to-context Euclidean distances for Mistral on Pile10k (left), Bookcorpus (middle) and WikiText-103 (right). We see concentration of distances in the sense that there is a gap from zero to the lowest distance values. Here, we do not include the distance of a context to itself, since it will always be zero for this distance measure.}
\label{fig:cc_euc_dist_distribution_mistral}
\end{figure*}

\begin{figure*}[htb]
\begin{minipage}[b]{0.33\linewidth}
\centering
\includegraphics[width=\textwidth]{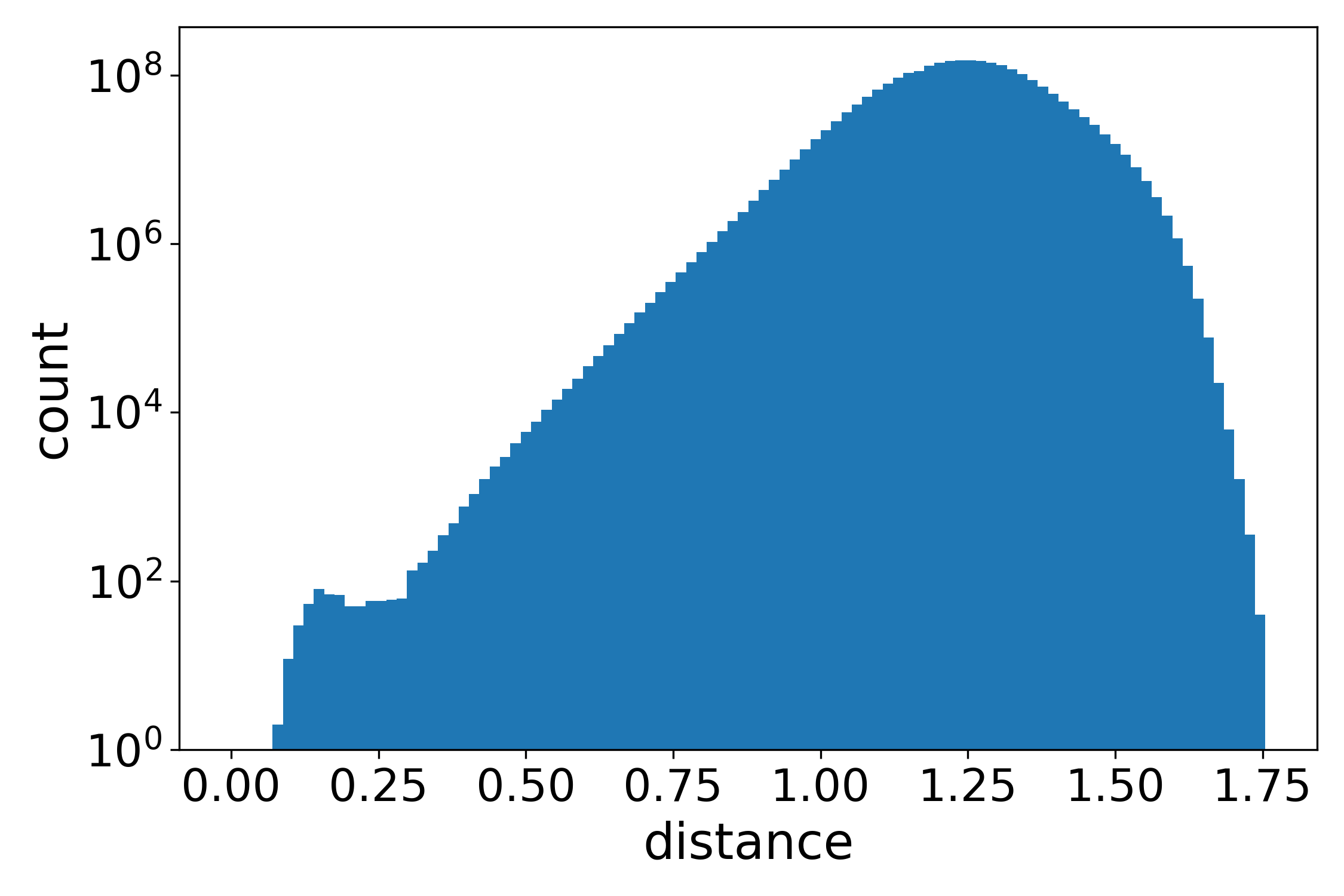}
\end{minipage}
\hspace{-0.1cm}
\begin{minipage}[b]{0.33\linewidth}
\centering
\includegraphics[width=\textwidth]{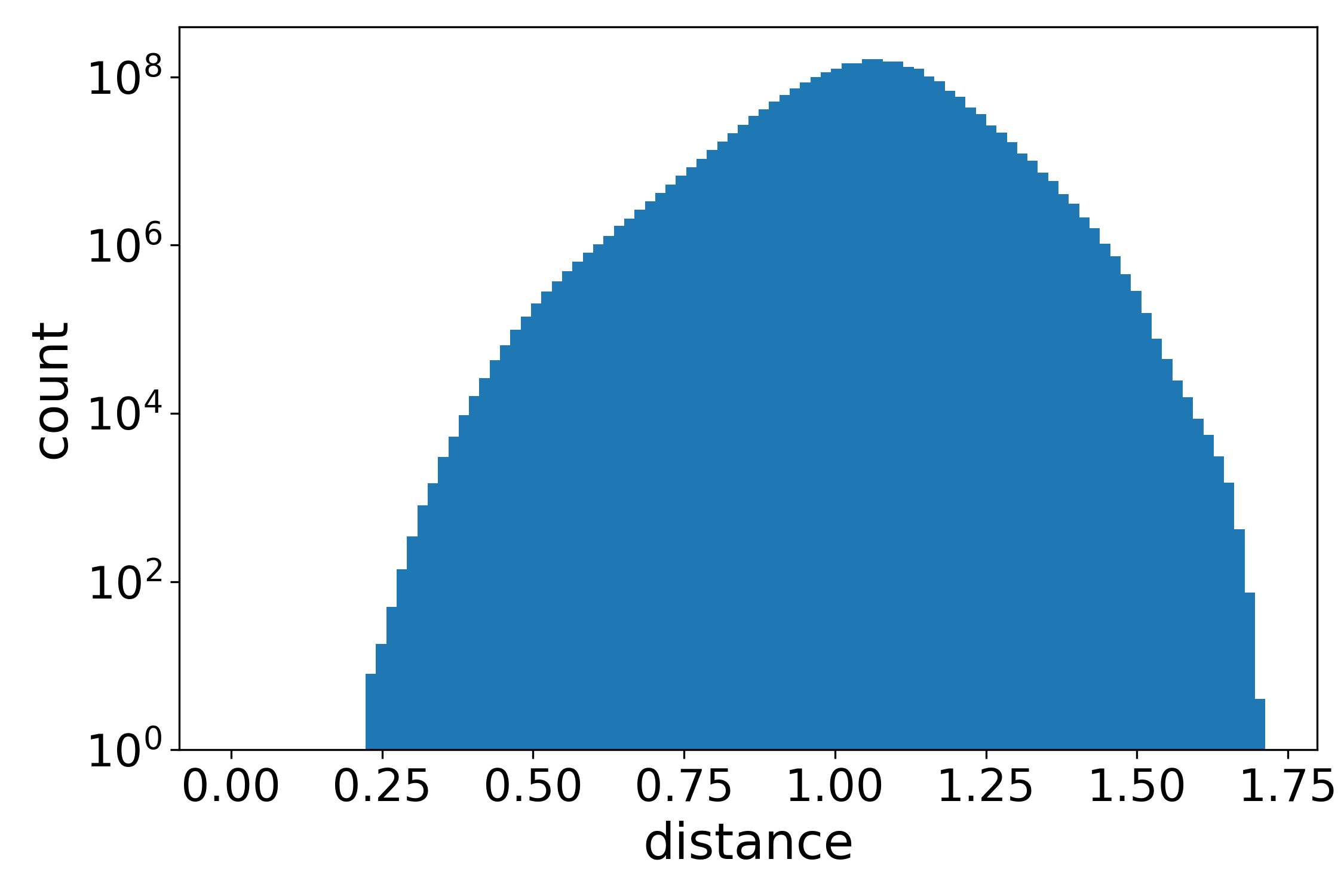}
\end{minipage}
\hspace{-0.1cm}
\begin{minipage}[b]{0.33\linewidth}
\centering
\includegraphics[width=\textwidth]{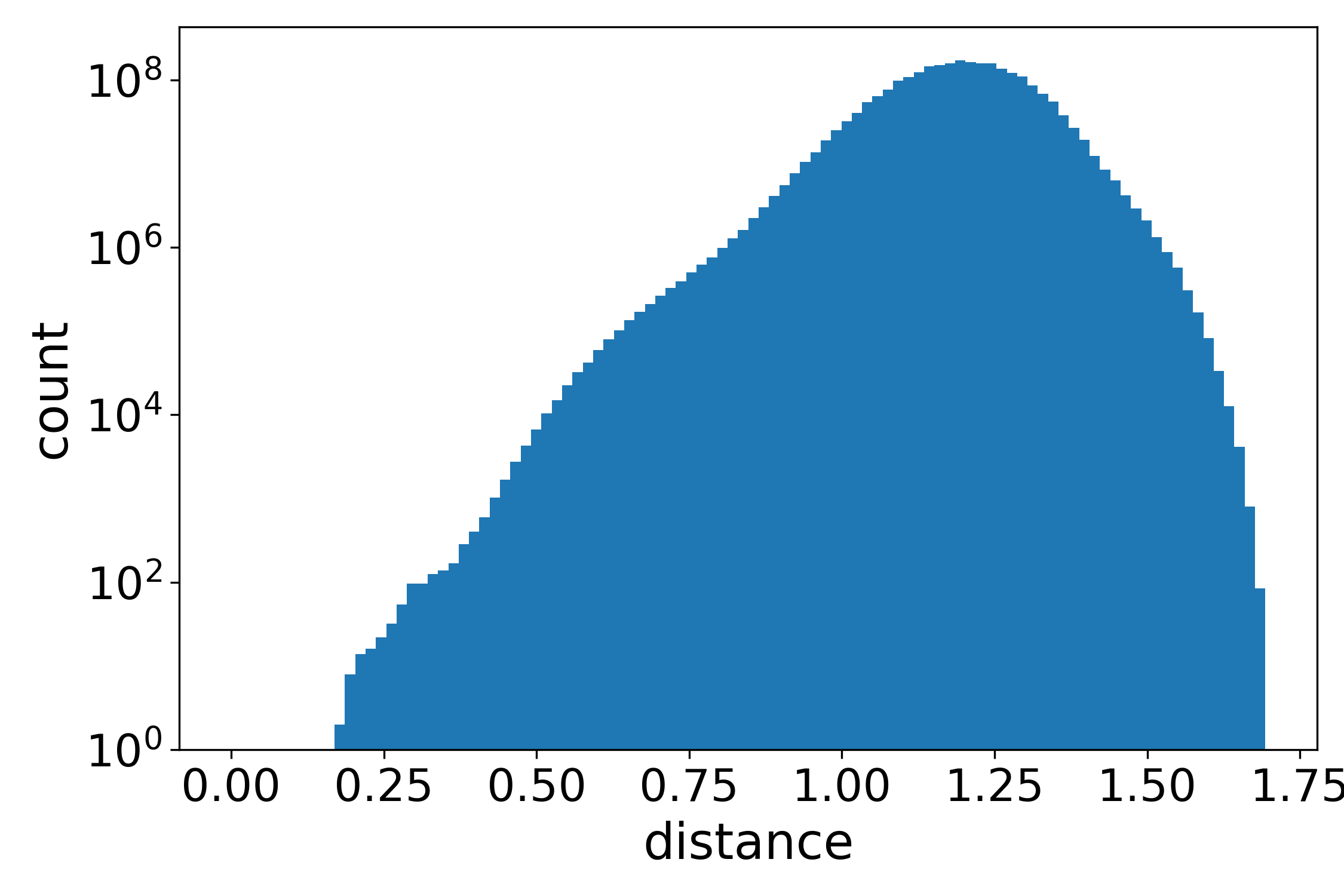}
\end{minipage}
\caption{Distribution of context-to-context normalized Euclidean distances for Mistral on Pile10k (left), Bookcorpus (middle) and WikiText-103 (right). We see concentration of distances in the sense that there is a gap from zero to the lowest distance values. Here, we do not include the distance of a context to itself, since it will always be zero for this distance measure.}
\label{fig:cc_norm_euc_dist_distribution_mistral}
\end{figure*}

\begin{figure*}[htb]
\begin{minipage}[b]{0.33\linewidth}
\centering
\includegraphics[width=\textwidth]{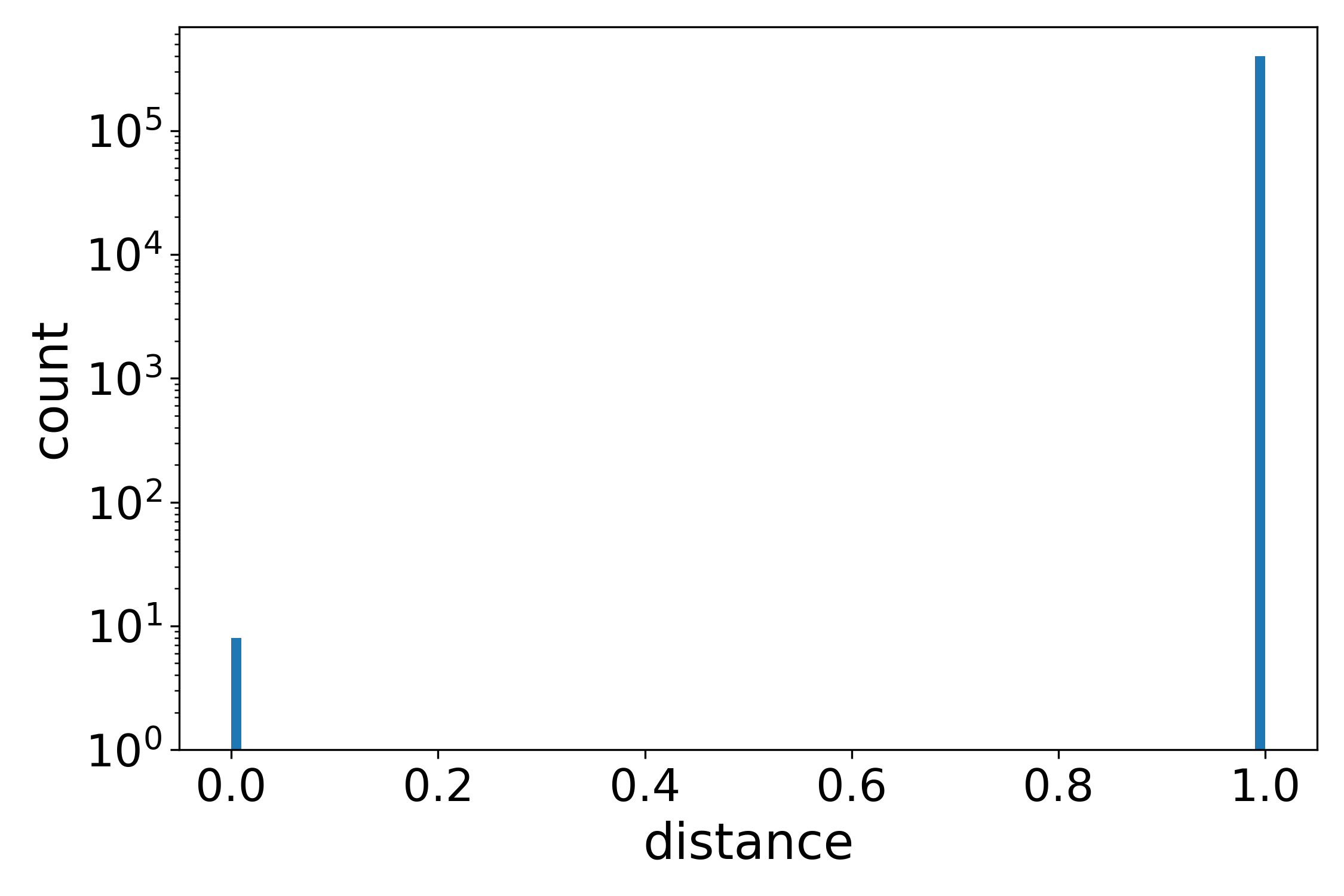}
\end{minipage}
\hspace{-0.1cm}
\begin{minipage}[b]{0.33\linewidth}
\centering
\includegraphics[width=\textwidth]{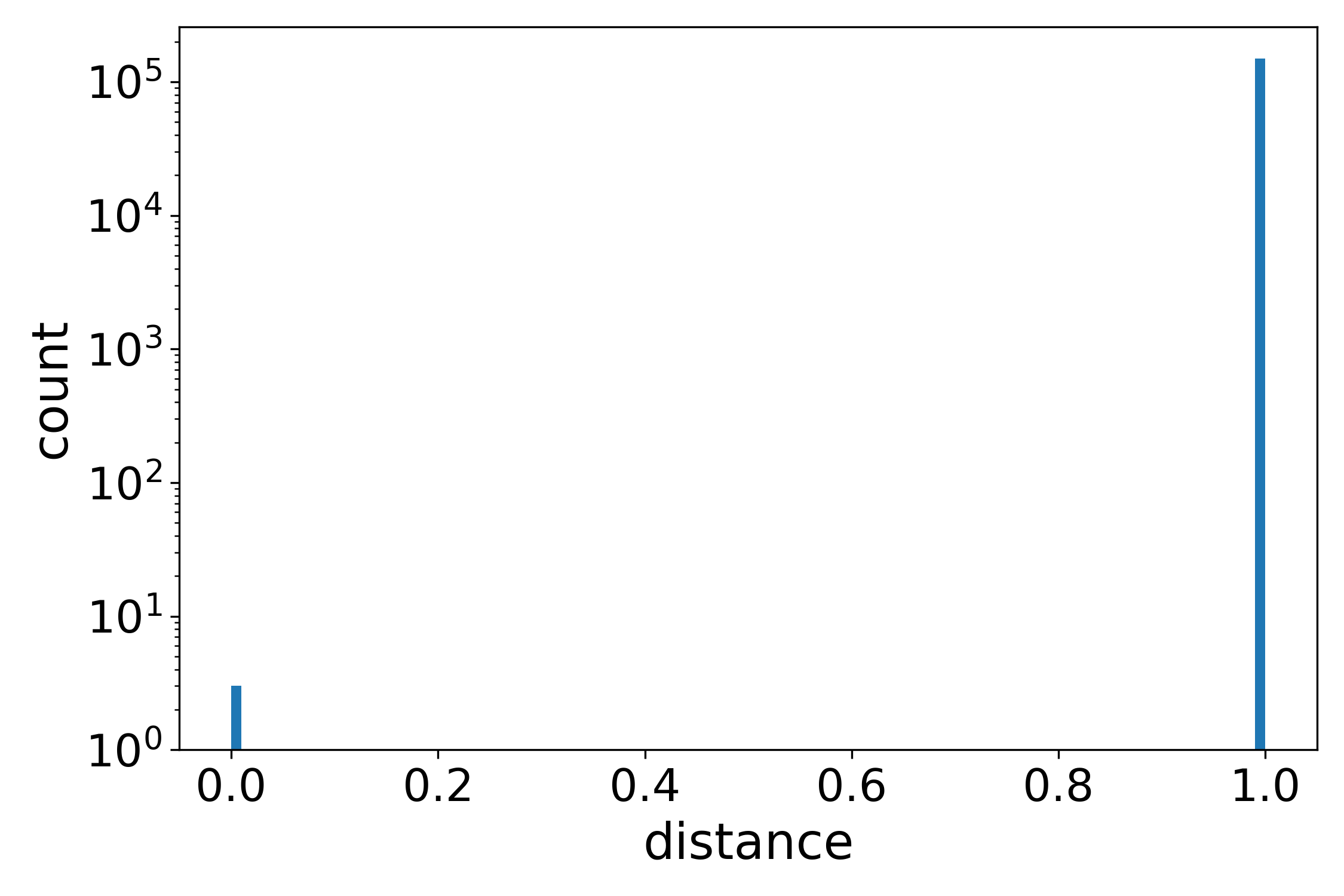}
\end{minipage}
\hspace{-0.1cm}
\begin{minipage}[b]{0.33\linewidth}
\centering
\includegraphics[width=\textwidth]{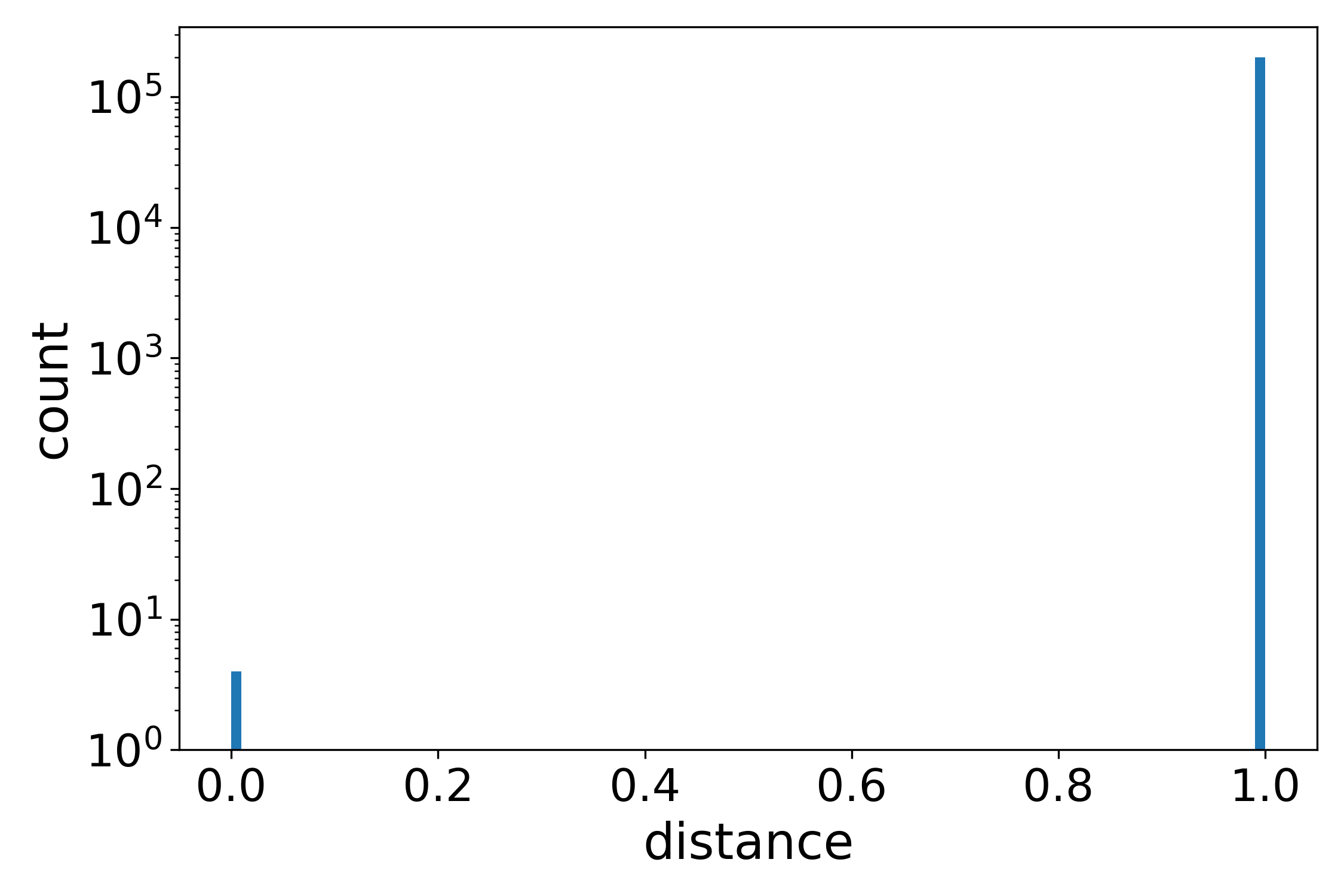}
\end{minipage}
\caption{Distribution of context-to-context softmaxed dot product distances for Mistral on Pile10k (left), Bookcorpus (middle) and WikiText-103 (right). Here we have included the distance of a context to itself. Note that, when using a dot product, there is no guarantee that a context will get the largest score with itself. For Mistral, most contexts do not have a significantly different dot product with themselves compared to that with other contexts.}
\label{fig:cc_softmax_dot_distribution_mistral}
\end{figure*}

\section{Distribution of vocabulary-item-to-vocabulary-item distances}
\label{app:distribution_of_tt_distances}
We present here plots showing the distribution of distances when comparing vocabulary item with vocabulary item for Llama (Fig.~\ref{fig:tt_dist_distribution_llama}), Pythia (Fig.~\ref{fig:tt_dist_distribution_pythia}), Opt (Fig.~\ref{fig:tt_dist_distribution_opt}), Olmo (Fig.~\ref{fig:tt_dist_distribution_olmo}) and Mistral (Fig.~\ref{fig:tt_dist_distribution_mistral}). In these plots we see a concentration of distances for all models when using softmaxed dot product, but for Euclidean and normalized Euclidean distance the behaviour is more varied.  

\begin{figure*}[htb]
\begin{minipage}[b]{0.33\linewidth}
\centering
\includegraphics[width=\textwidth]{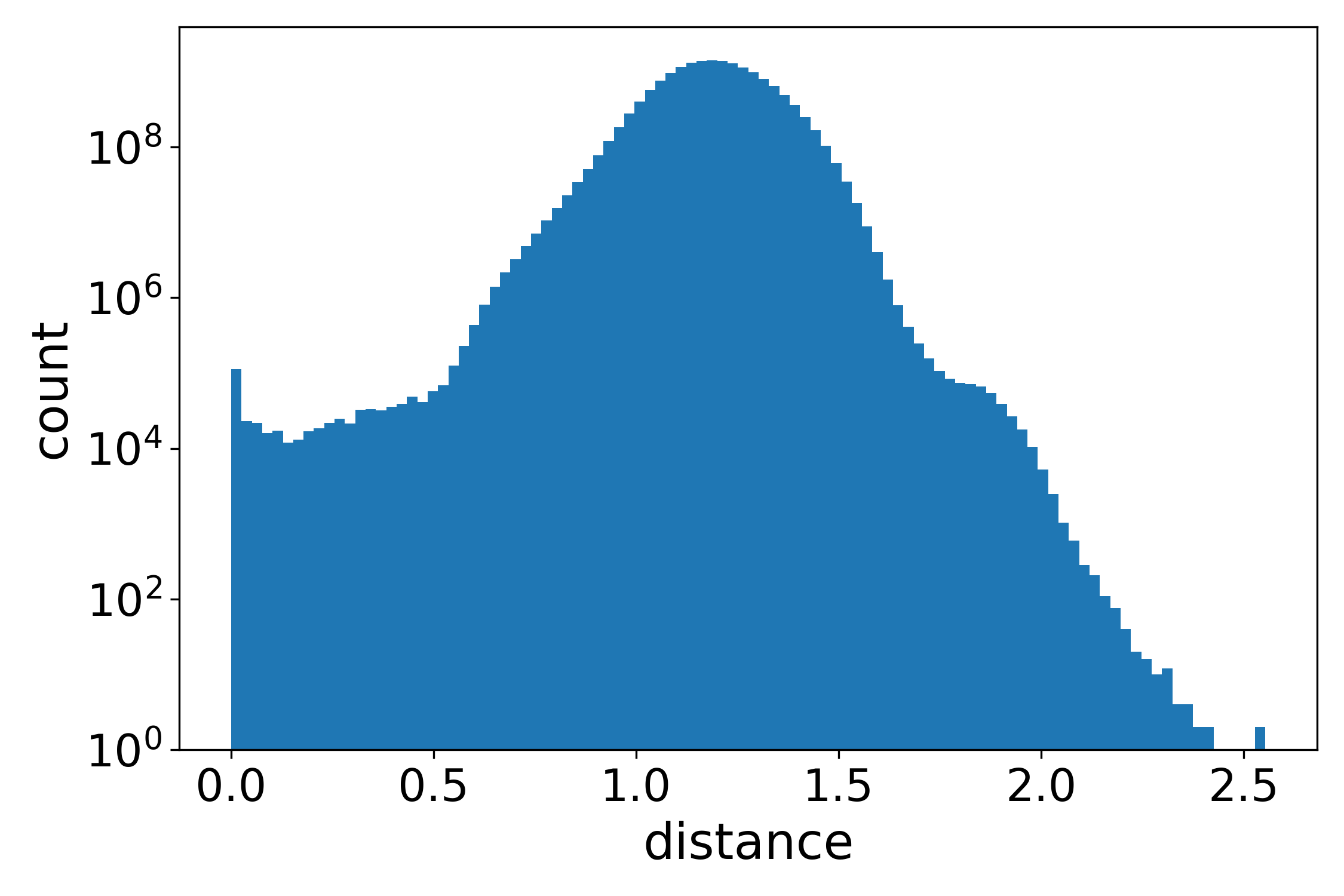}
\end{minipage}
\hspace{-0.1cm}
\begin{minipage}[b]{0.33\linewidth}
\centering
\includegraphics[width=\textwidth]{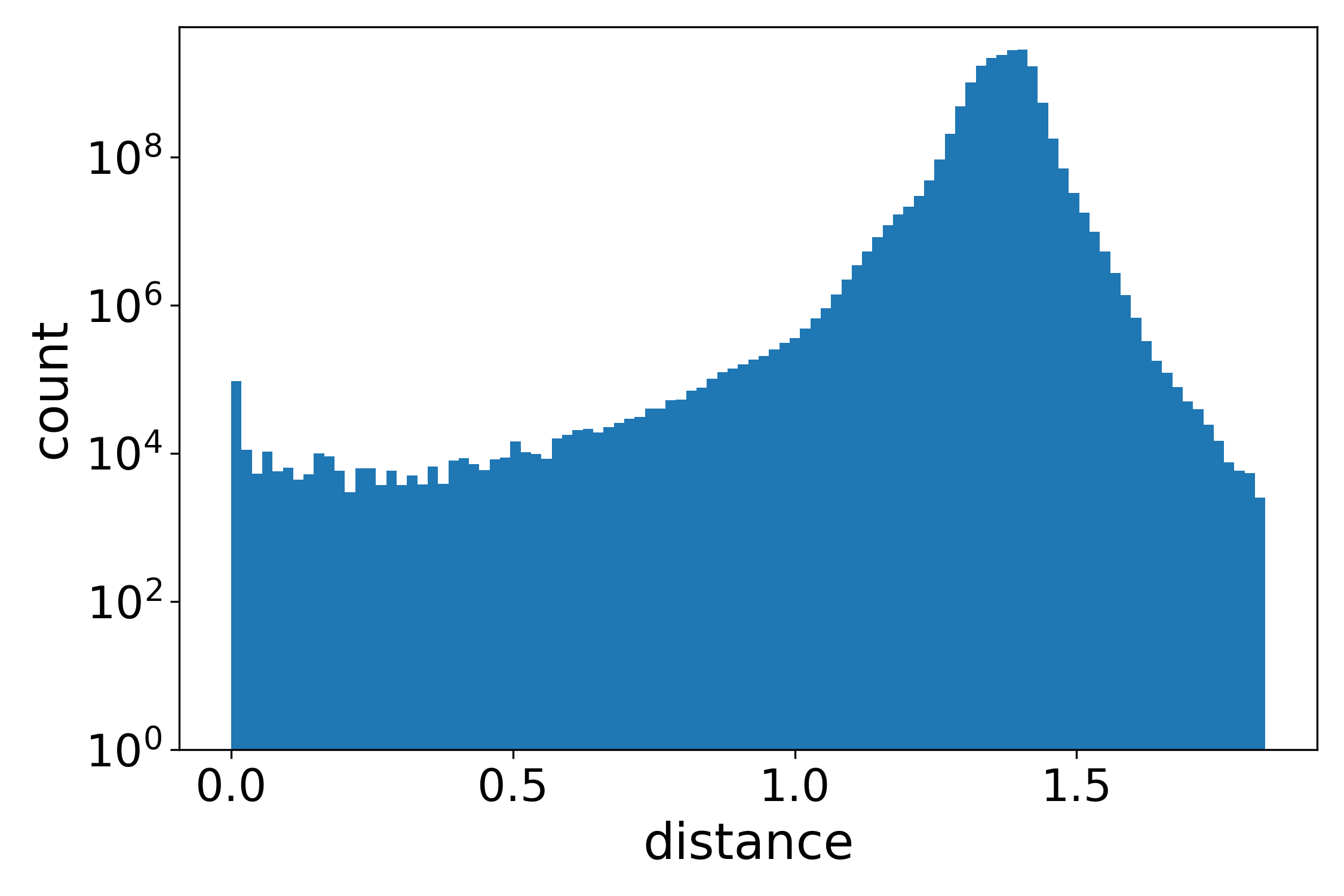}
\end{minipage}
\hspace{-0.1cm}
\begin{minipage}[b]{0.33\linewidth}
\centering
\includegraphics[width=\textwidth]{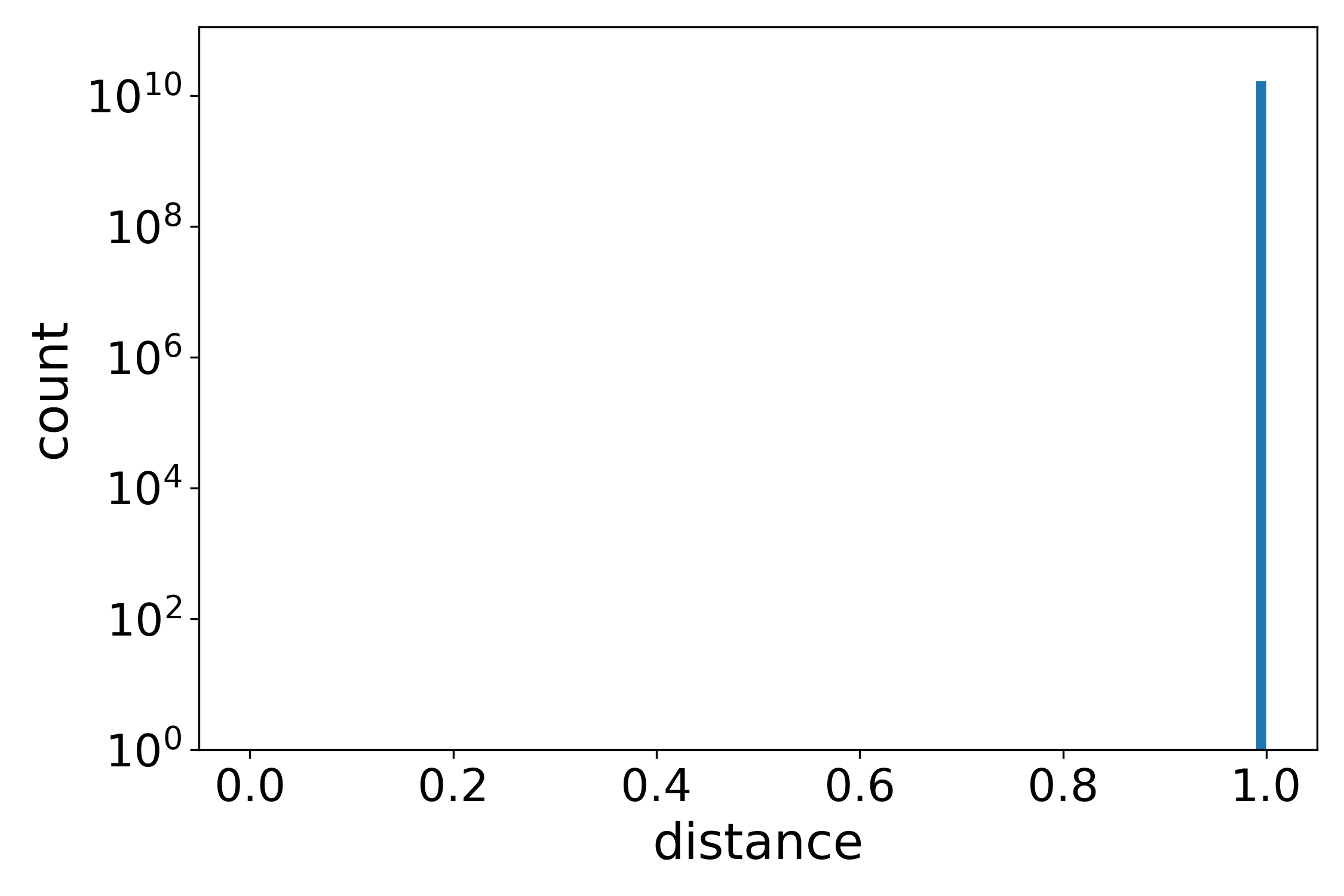}
\end{minipage}
\caption{Distribution of vocabulary to vocabulary distances for Llama using Euclidean (left), normalized Euclidean (middle) and softmaxed dot product (right) distances. For Euclidean and normalized Euclidean, we do not include the distance of an item to itself, since it will always be zero. The spread of distances goes all the way to zero for Euclidean and normalized Euclidean. However, we get a concentration of distances for the softmaxed dot product.} 
\label{fig:tt_dist_distribution_llama}
\end{figure*}

\begin{figure*}[htb]
\begin{minipage}[b]{0.33\linewidth}
\centering
\includegraphics[width=\textwidth]{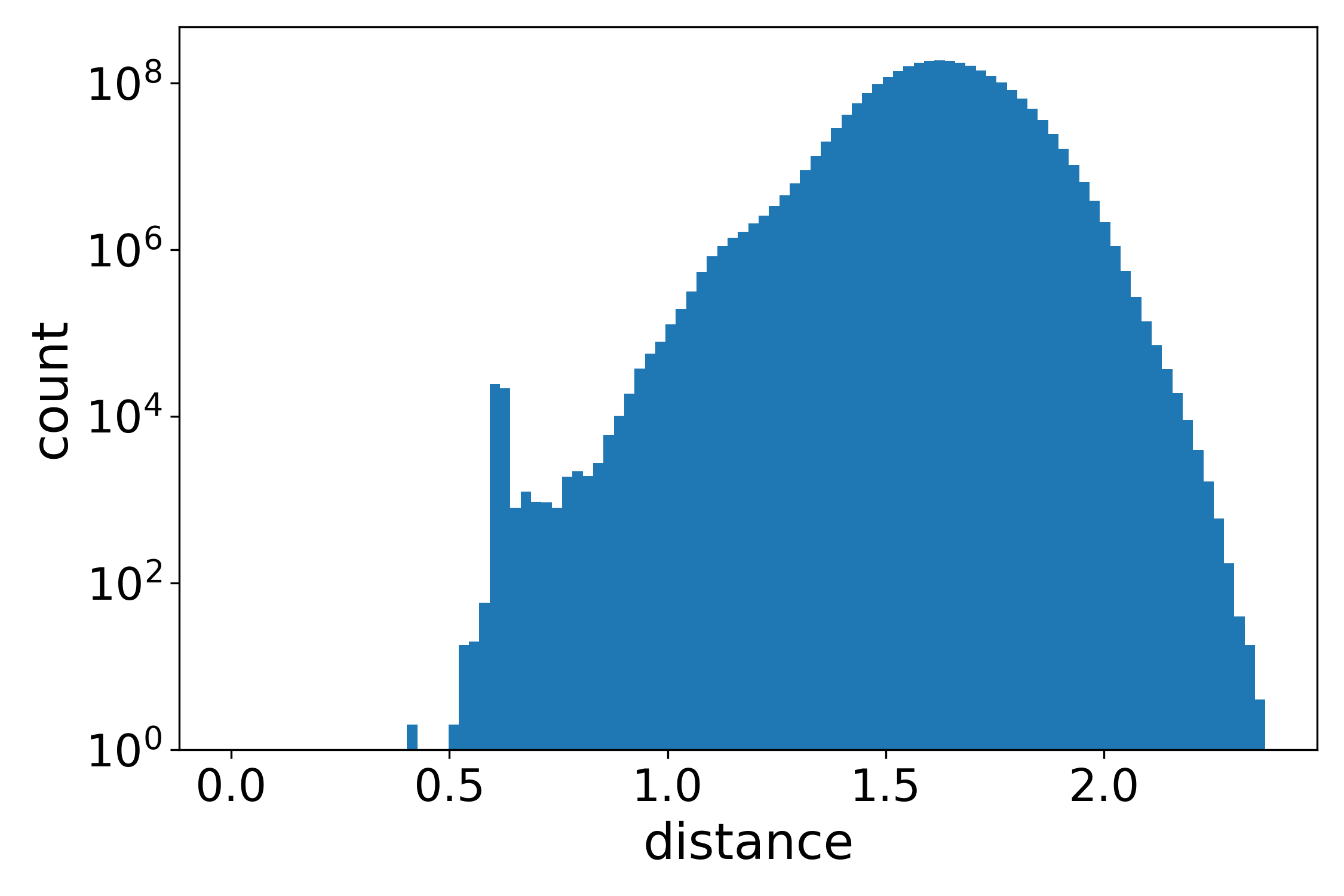}
\end{minipage}
\hspace{-0.1cm}
\begin{minipage}[b]{0.33\linewidth}
\centering
\includegraphics[width=\textwidth]{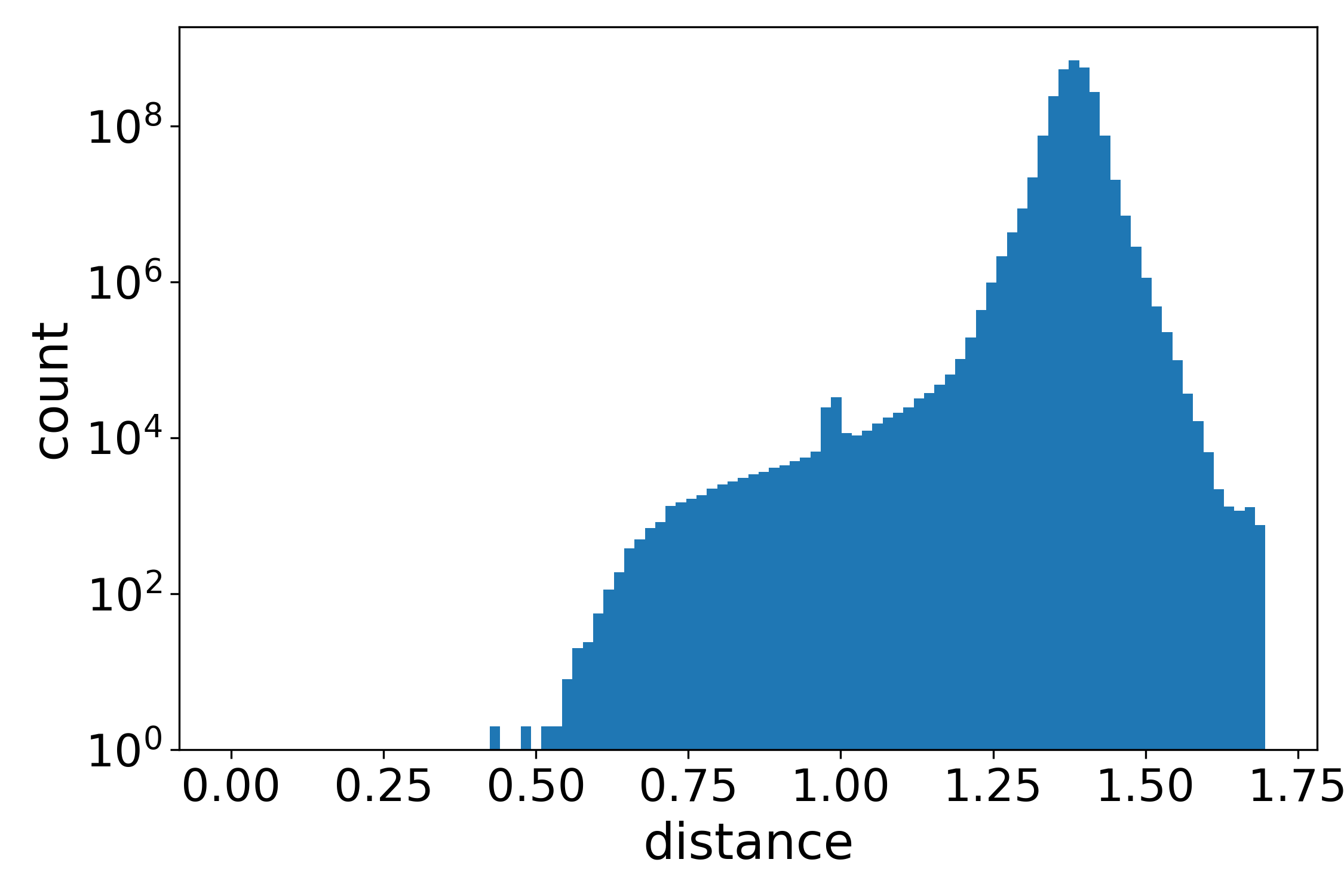}
\end{minipage}
\hspace{-0.1cm}
\begin{minipage}[b]{0.33\linewidth}
\centering
\includegraphics[width=\textwidth]{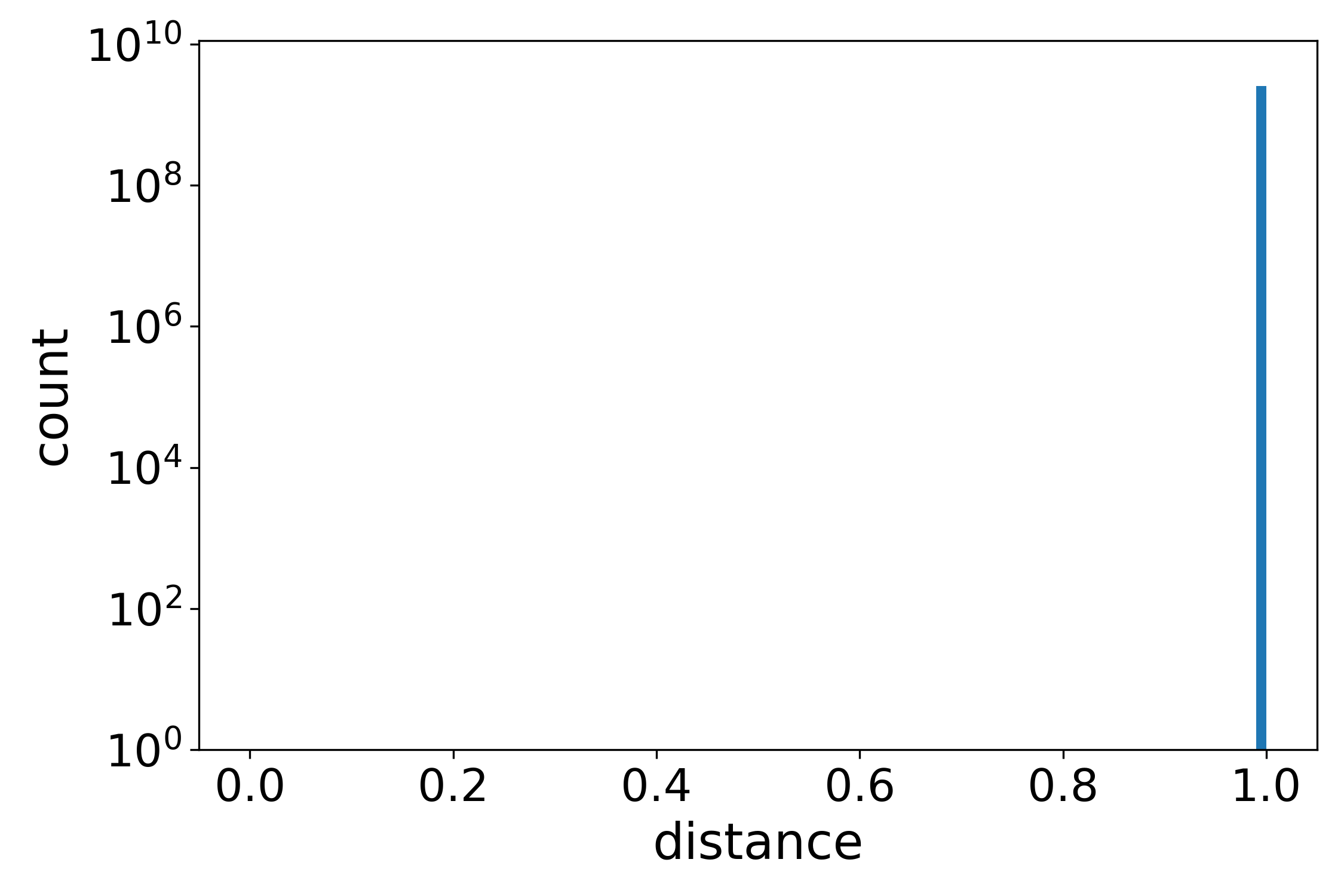}
\end{minipage}
\caption{Distribution of vocabulary to vocabulary distances for Pythia using Euclidean (left), normalized Euclidean (middle) and softmaxed dot product (right) distances. For Euclidean and normalized Euclidean, we do not include the distance of an item to itself, since it will always be zero. We get a concentration of distances for all distance measures.} 
\label{fig:tt_dist_distribution_pythia}
\end{figure*}

\begin{figure*}[htb]
\begin{minipage}[b]{0.33\linewidth}
\centering
\includegraphics[width=\textwidth]{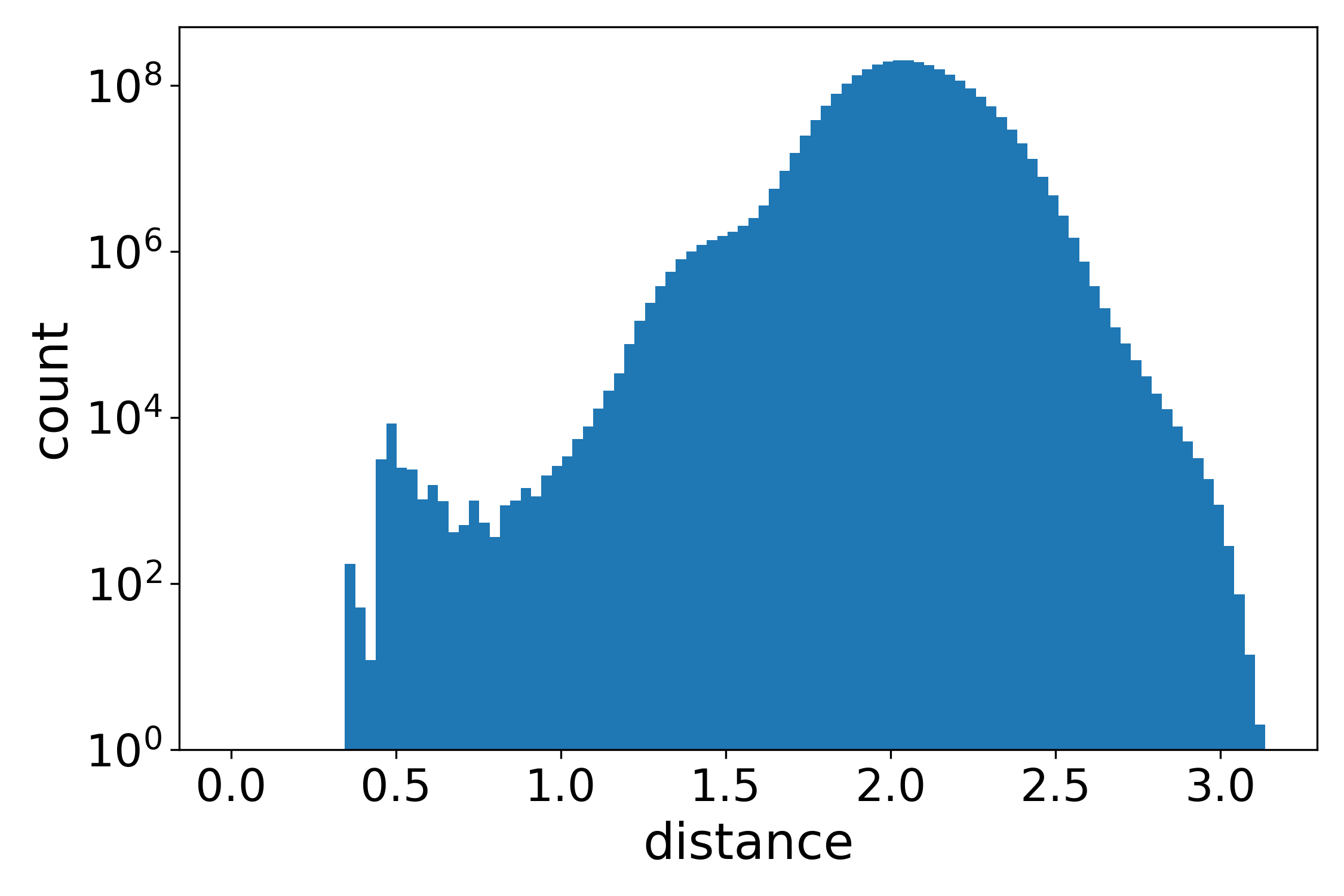}
\end{minipage}
\hspace{-0.1cm}
\begin{minipage}[b]{0.33\linewidth}
\centering
\includegraphics[width=\textwidth]{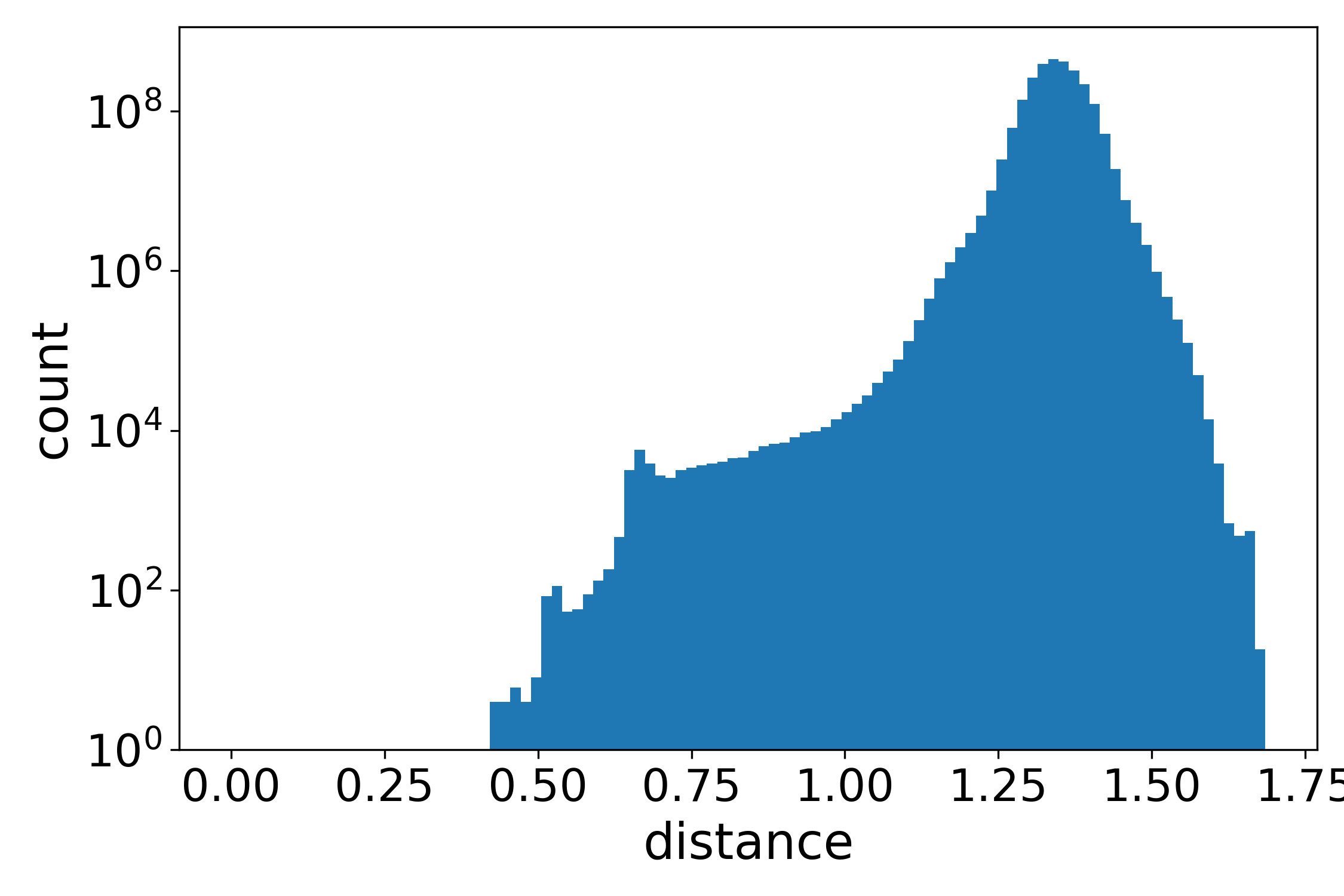}
\end{minipage}
\hspace{-0.1cm}
\begin{minipage}[b]{0.33\linewidth}
\centering
\includegraphics[width=\textwidth]{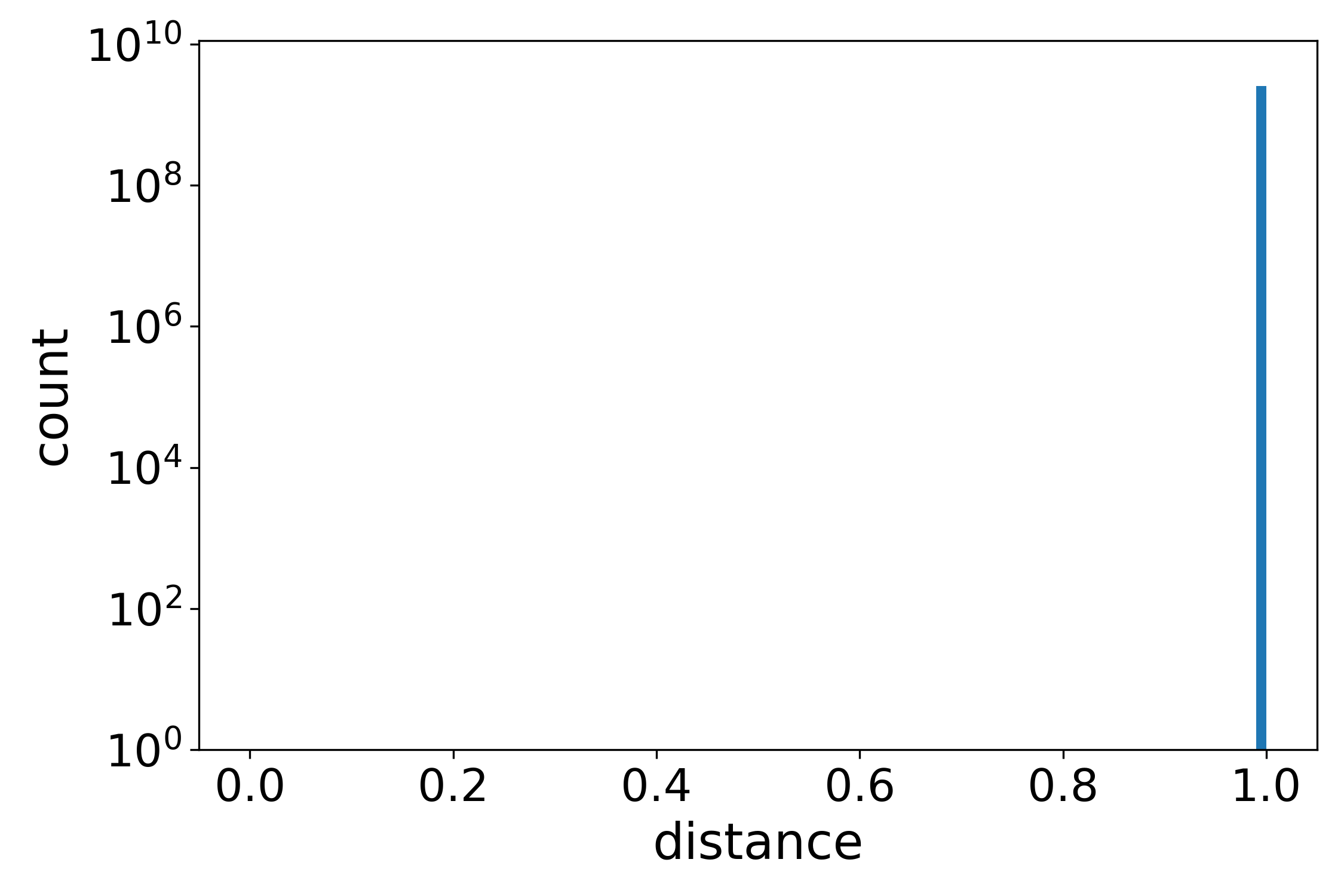}
\end{minipage}
\caption{Distribution of vocabulary to vocabulary distances for Opt using Euclidean (left), normalized Euclidean (middle) and softmaxed dot product (right) distances. For Euclidean and normalized Euclidean, we do not include the distance of an item to itself, since it will always be zero. We get a concentration of distances for all distance measures.} 
\label{fig:tt_dist_distribution_opt}
\end{figure*}

\begin{figure*}[htb]
\begin{minipage}[b]{0.33\linewidth}
\centering
\includegraphics[width=\textwidth]{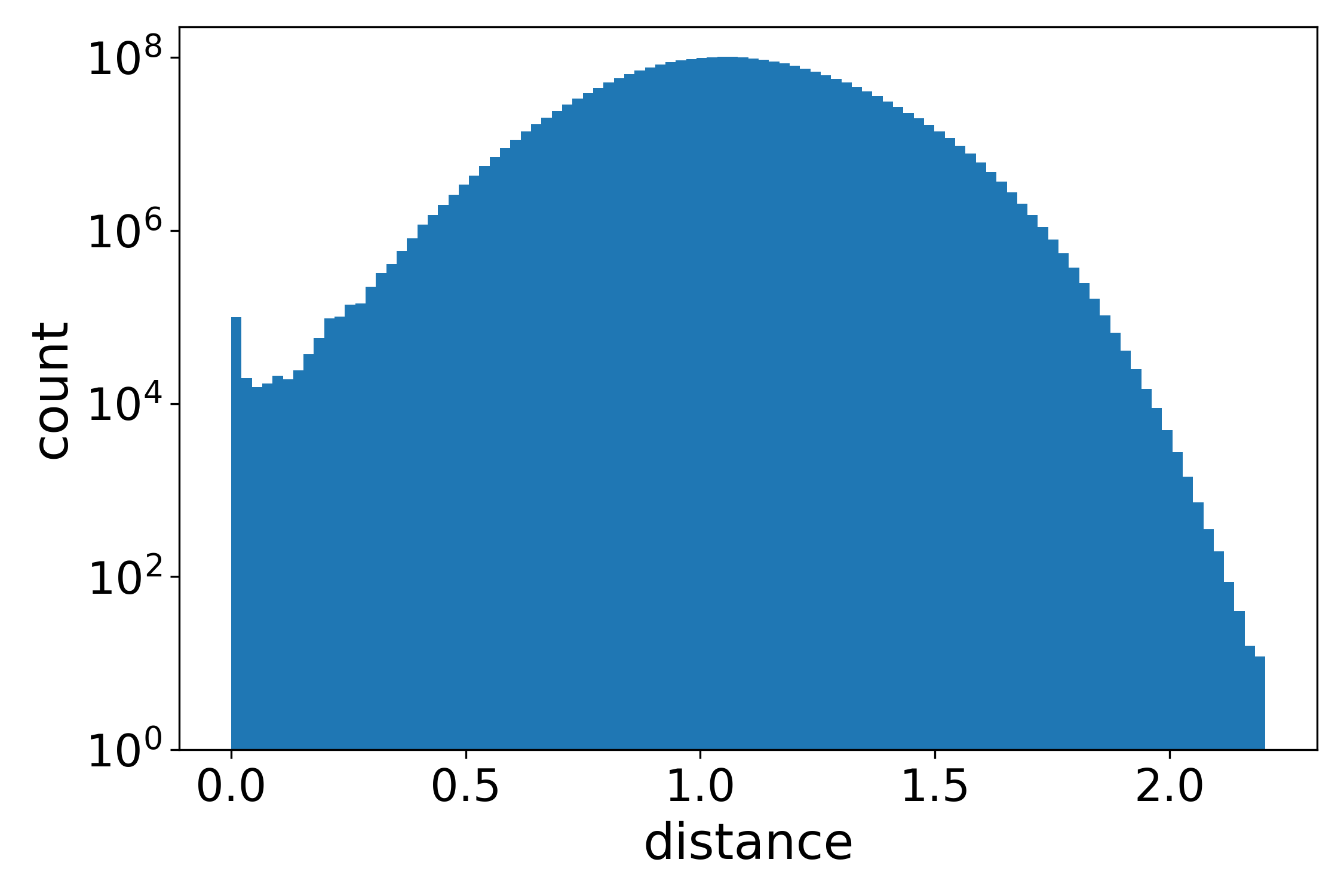}
\end{minipage}
\hspace{-0.1cm}
\begin{minipage}[b]{0.33\linewidth}
\centering
\includegraphics[width=\textwidth]{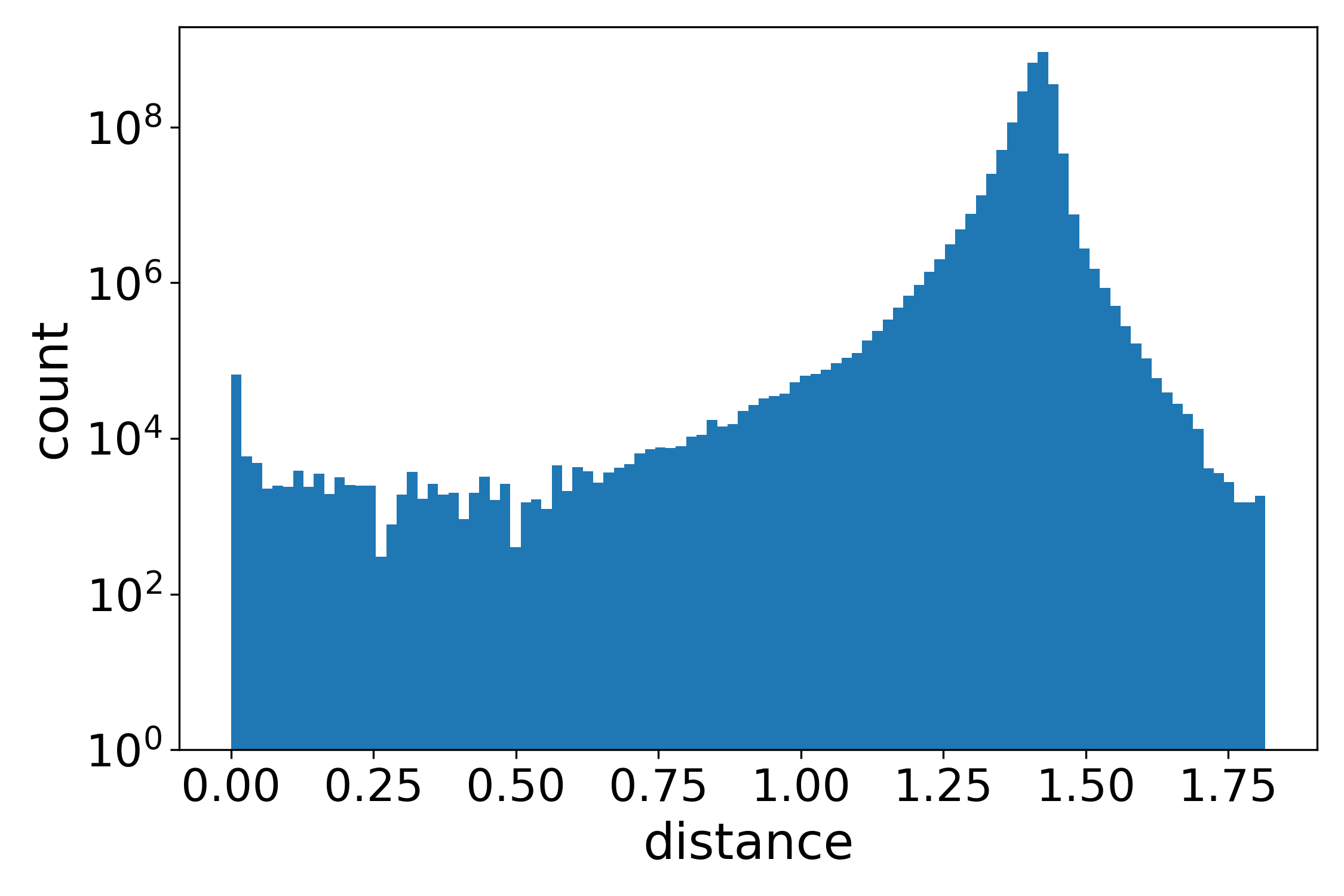}
\end{minipage}
\hspace{-0.1cm}
\begin{minipage}[b]{0.33\linewidth}
\centering
\includegraphics[width=\textwidth]{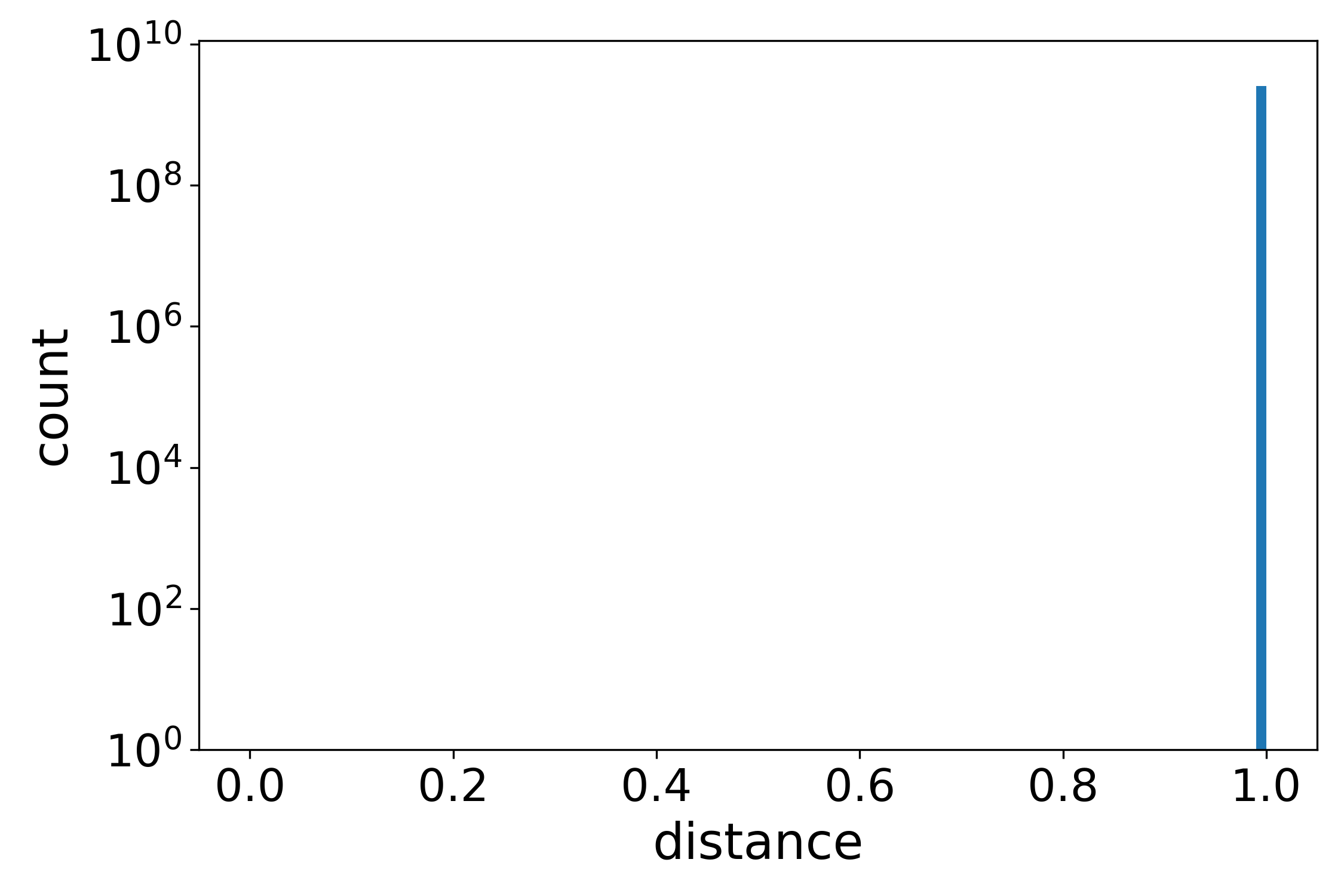}
\end{minipage}
\caption{Distribution of vocabulary to vocabulary distances for Olmo using Euclidean (left), normalized Euclidean (middle) and softmaxed dot product (right) distances. For Euclidean and normalized Euclidean, we do not include the distance of an item to itself, since it will always be zero. The spread of distances goes all the way to zero for Euclidean and normalized Euclidean. However, we get a concentration of distances for the softmaxed dot product.} 
\label{fig:tt_dist_distribution_olmo}
\end{figure*}

\begin{figure*}[htb]
\begin{minipage}[b]{0.33\linewidth}
\centering
\includegraphics[width=\textwidth]{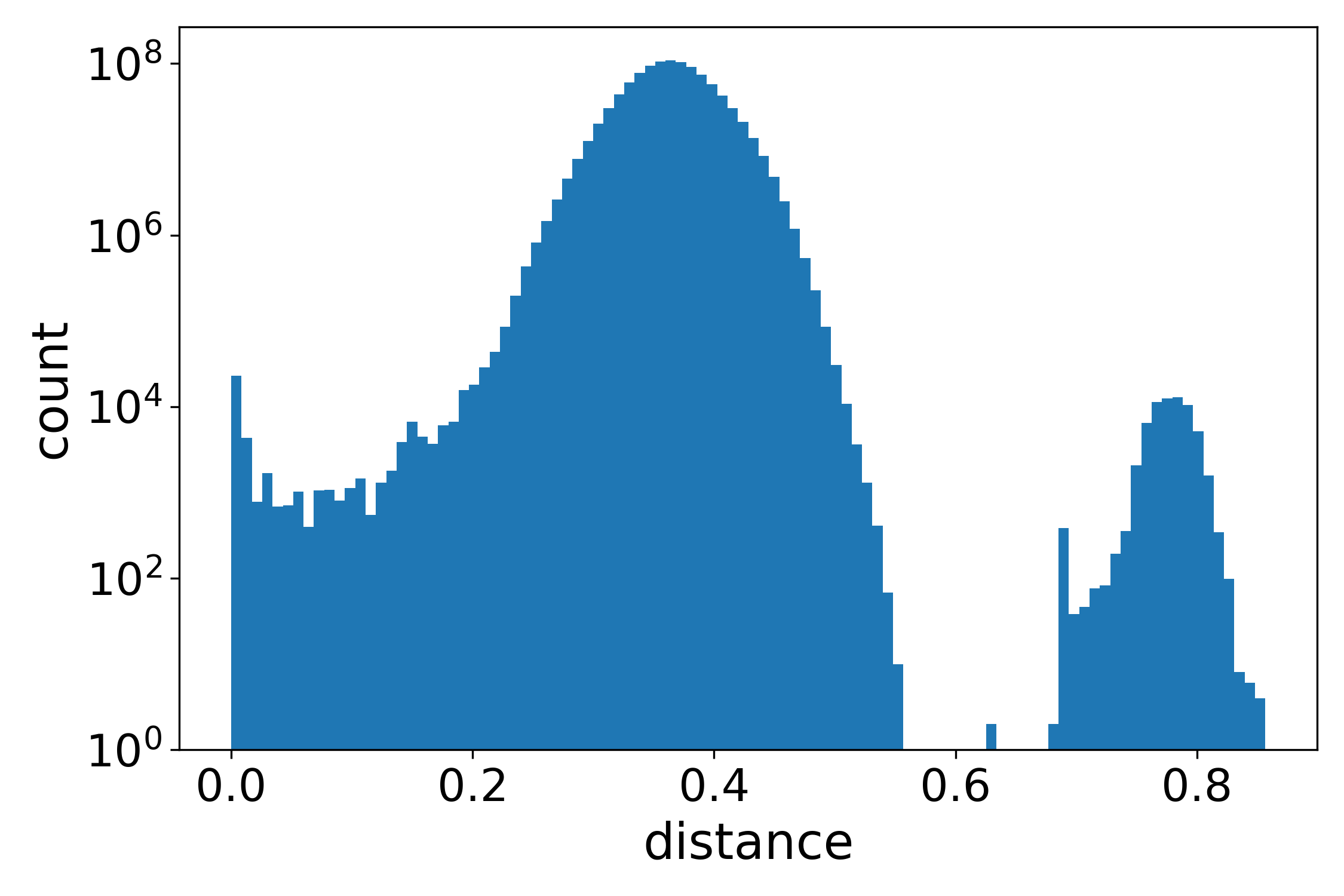}
\end{minipage}
\hspace{-0.1cm}
\begin{minipage}[b]{0.33\linewidth}
\centering
\includegraphics[width=\textwidth]{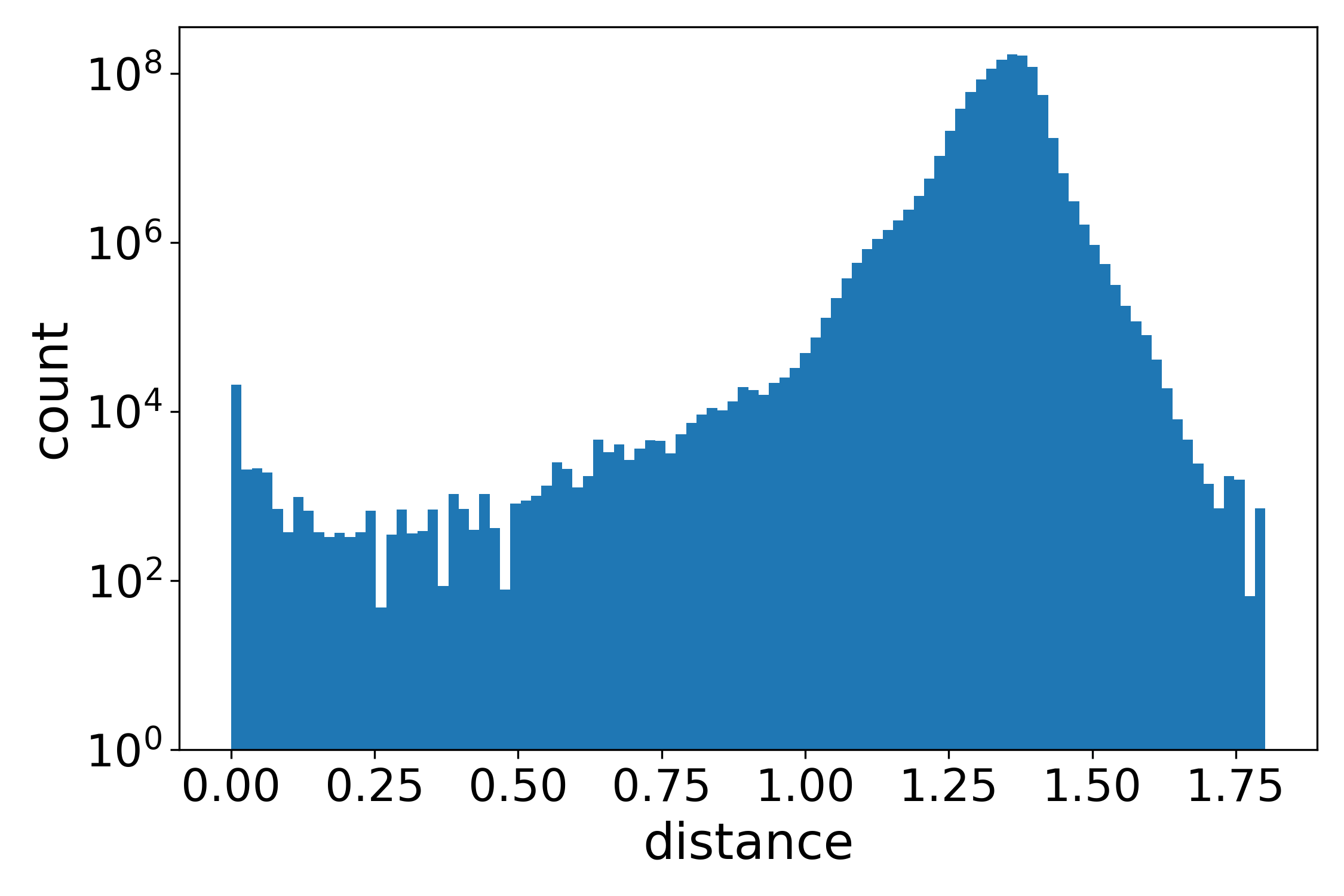}
\end{minipage}
\hspace{-0.1cm}
\begin{minipage}[b]{0.33\linewidth}
\centering
\includegraphics[width=\textwidth]{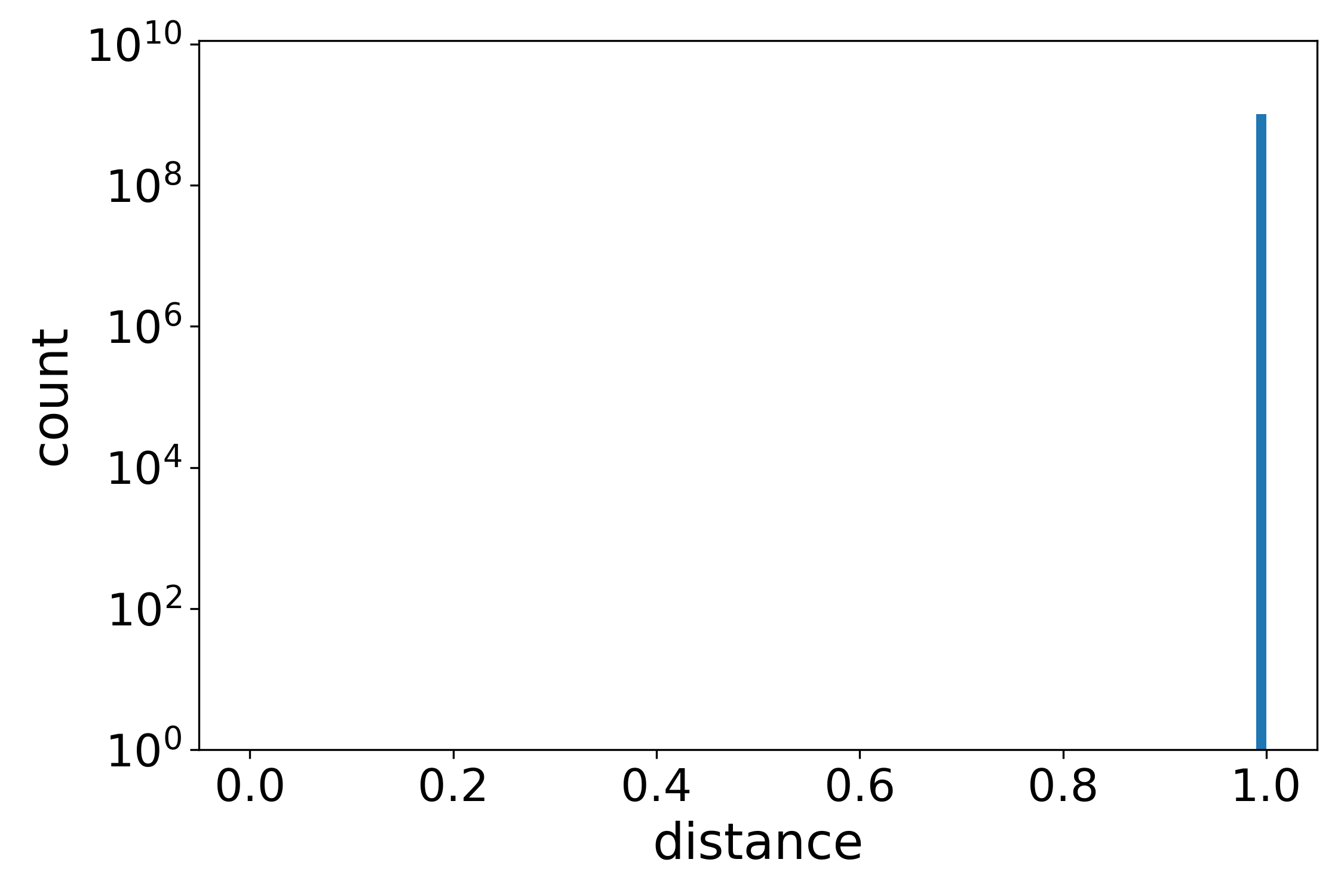}
\end{minipage}
\caption{Distribution of vocabulary to vocabulary distances for Mistral using Euclidean (left), normalized Euclidean (middle) and softmaxed dot product (right) distances. For Euclidean and normalized Euclidean, we do not include the distance of an item to itself, since it will always be zero. The spread of distances goes all the way to zero for Euclidean and normalized Euclidean. However, we get a concentration of distances for the softmaxed dot product. Mistral is the only model to display a second ``hump'' when using the Euclidean distance.} 
\label{fig:tt_dist_distribution_mistral}
\end{figure*}

\section{Hubs \textit{k}-occurrence correlation with frequency of tokens}
\label{app:k-occurrence-frequency-correlation}
In table \ref{table:pred_hub_corr} we see that the $k$-occurrence of prediction hubs is strongly correlated with the frequency of vocabulary items in the corpus the contexts come from. For Pythia and Olmo, we also have access to the original training corpora, namely the (full) Pile \citep{Gao:etal:2020} and Dolma \citep{Soldaini:etal:2024}, and we use them to compute their training token frequency distributions. These frequencies are used in the rows of the table where \textbf{freq from} is ``train dataset''. The correlations are also higher for frequencies based on the corpora the contexts come from than for frequencies from the training data. In Table \ref{table:tt_hub_corr}, we see that, when comparing vocabulary items with other vocabulary items, we do not get a good correlation between $k$-occurrence of the hubs and frequency of vocabulary items.

With respect to checkpoints from Pythia, we see in Table \ref{table:pred_pythia_steps_hub_corr} that correlation with frequencies from the relevant dataset increases as the model trains for longer. We also see that the correlation for Pile10k saturates quite fast, which is probably due to Pythia being trained on the Pile. In Table \ref{table:tt_pythia_steps_hub_corr} we see that there is no strong correlation for the hubs emerging from comparing vocabulary items with vocabulary items.   

\begin{table*}
    \small
    \centering
    \begin{tabular}{lllr}
        \hline
        \textbf{model} & \textbf{context} & \textbf{freq from} & \textbf{Spearman corr} \\
        \hline
        Pythia & Pile10k & Pile10k & 0.71 \\
        Pythia & Pile10k & WikiText-103 & 0.45 \\
        Pythia & Pile10k & Bookcorpus & 0.25 \\
        Pythia & Pile10k & train dataset & 0.70 \\
        Pythia & WikiText-103 & Pile10k & 0.64 \\
        Pythia & WikiText-103 & WikiText-103 & 0.70 \\
        Pythia & WikiText-103 & Bookcorpus & 0.28 \\
        Pythia & WikiText-103 & train dataset & 0.68 \\
        Pythia & Bookcorpus & Pile10k & 0.60 \\
        Pythia & Bookcorpus & WikiText-103 & 0.46 \\
        Pythia & Bookcorpus & Bookcorpus & 0.72 \\
        Pythia & Bookcorpus & train dataset & 0.66 \\
        Olmo & Pile10k & Pile10k & 0.74 \\
        Olmo & Pile10k & WikiText-103 & 0.45 \\
        Olmo & Pile10k & Bookcorpus & 0.27 \\
        Olmo & Pile10k & train dataset & 0.66 \\
        Olmo & WikiText-103 & Pile10k & 0.63 \\
        Olmo & WikiText-103 & WikiText-103 & 0.70 \\
        Olmo & WikiText-103 & Bookcorpus & 0.27 \\
        Olmo & WikiText-103 & train dataset & 0.65 \\
        Olmo & Bookcorpus & Pile10k & 0.59 \\
        Olmo & Bookcorpus & WikiText-103 & 0.45 \\
        Olmo & Bookcorpus & Bookcorpus & 0.70 \\
        Olmo & Bookcorpus & train dataset & 0.61 \\
        Opt & Pile10k & Pile10k & 0.76 \\
        Opt & Pile10k & WikiText-103 & 0.44 \\
        Opt & Pile10k & Bookcorpus & 0.31 \\
        Opt & WikiText-103 & Pile10k & 0.64 \\
        Opt & WikiText-103 & WikiText-103 & 0.69 \\
        Opt & WikiText-103 & Bookcorpus & 0.32 \\
        Opt & Bookcorpus & Pile10k & 0.61 \\
        Opt & Bookcorpus & WikiText-103 & 0.45 \\
        Opt & Bookcorpus & Bookcorpus & 0.73 \\
        Mistral & Pile10k & Pile10k & 0.79 \\
        Mistral & Pile10k & WikiText-103 & 0.49 \\
        Mistral & Pile10k & Bookcorpus & 0.29 \\
        Mistral & WikiText-103 & Pile10k & 0.62 \\
        Mistral & WikiText-103 & WikiText-103 & 0.73 \\
        Mistral & WikiText-103 & Bookcorpus & 0.28 \\
        Mistral & Bookcorpus & Pile10k & 0.64 \\
        Mistral & Bookcorpus & WikiText-103 & 0.47 \\
        Mistral & Bookcorpus & Bookcorpus & 0.70 \\
        Llama & Pile10k & Pile10k & 0.69 \\
        Llama & Pile10k & WikiText-103 & 0.43 \\
        Llama & Pile10k & Bookcorpus & 0.29 \\
        Llama & WikiText-103 & Pile10k & 0.57 \\
        Llama & WikiText-103 & WikiText-103 & 0.66 \\
        Llama & WikiText-103 & Bookcorpus & 0.29 \\
        Llama & Bookcorpus & Pile10k & 0.57 \\
        Llama & Bookcorpus & WikiText-103 & 0.43 \\
        Llama & Bookcorpus & Bookcorpus & 0.63 \\
        \hline
        \end{tabular}
    \caption{\label{table:pred_hub_corr}
    For prediction hubs: correlation of $k$-occurrence with frequencies of vocabulary items for all tested models on all tested datasets. Note correlation is strongest when the columns \textbf{context} and \textbf{freq from} agree. 
  }
\end{table*}

\begin{table*}
    \centering
    \begin{tabular}{lllr}
    \hline
    \textbf{model} & \textbf{similarity} & \textbf{freq from} & \textbf{Spearman corr} \\
    \hline
    Pythia & euc & Pile10k & -0.20 \\
    Pythia & euc & WikiText-103 & -0.20 \\
    Pythia & euc & Bookcorpus & -0.12 \\
    Pythia & norm euc & Pile10k & -0.11 \\
    Pythia & norm euc & WikiText-103 & -0.02 \\
    Pythia & norm euc & Bookcorpus & -0.04 \\
    Pythia & softmax dot & Pile10k & -0.07 \\
    Pythia & softmax dot & WikiText-103 & 0.04 \\
    Pythia & softmax dot & Bookcorpus & 0.29 \\
    Olmo & euc & Pile10k & -0.22 \\
    Olmo & euc & WikiText-103 & 0.03 \\
    Olmo & euc & Bookcorpus & 0.05 \\
    Olmo & norm euc & Pile10k & - \\
    Olmo & norm euc & WikiText-103 & - \\
    Olmo & norm euc & Bookcorpus & - \\
    Olmo & softmax dot & Pile10k & -0.59 \\
    Olmo & softmax dot & WikiText-103 & -0.67 \\
    Olmo & softmax dot & Bookcorpus & - \\
    Opt & euc & Pile10k & -0.00 \\
    Opt & euc & WikiText-103 & -0.14 \\
    Opt & euc & Bookcorpus & 0.01 \\
    Opt & norm euc & Pile10k & -0.01 \\
    Opt & norm euc & WikiText-103 & -0.13 \\
    Opt & norm euc & Bookcorpus & -0.00 \\
    Opt & softmax dot & Pile10k & -0.14 \\
    Opt & softmax dot & WikiText-103 & -0.16 \\
    Opt & softmax dot & Bookcorpus & -0.12 \\
    Mistral & euc & Pile10k & -0.45 \\
    Mistral & euc & WikiText-103 & -0.29 \\
    Mistral & euc & Bookcorpus & -0.23 \\
    Mistral & norm euc & Pile10k & - \\
    Mistral & norm euc & WikiText-103 & - \\
    Mistral & norm euc & Bookcorpus & -0.18 \\
    Mistral & softmax dot & Pile10k & -0.17 \\
    Mistral & softmax dot & WikiText-103 & -0.30 \\
    Mistral & softmax dot & Bookcorpus & -0.14 \\
    Llama & euc & Pile10k & -0.22 \\
    Llama & euc & WikiText-103 & - \\
    Llama & euc & Bookcorpus & - \\
    Llama & norm euc & Pile10k & -0.13 \\
    Llama & norm euc & WikiText-103 & -0.13 \\
    Llama & norm euc & Bookcorpus & -0.13 \\
    Llama & softmax dot & Pile10k & -0.12 \\
    Llama & softmax dot & WikiText-103 & -0.14 \\
    Llama & softmax dot & Bookcorpus & -0.14 \\
    \hline
    \end{tabular}
        \caption{\label{table:tt_hub_corr}
    For hubs in comparisons of vocabulary with vocabulary: $k$-occurrence correlation with frequencies of vocabulary items for all tested models and three different distance measures. We write ``-''  in cases where the correlation coefficient is not well-defined. In the case of OLMo and normalized Euclidean distance, it is because there are only two hubs. In the rest of the cases, it is because all the frequencies are the same.  
  }
\end{table*}

\begin{table*}
    \centering
    \begin{tabular}{lllr}
        \hline
        \makecell{\textbf{Pythia} \\ \textbf{train step}} & \textbf{context} & \textbf{freq from} & \textbf{Spearman corr} \\
        \hline
        512 & Pile10k & Pile10k & 0.59 \\
        512 & Pile10k & WikiText-103 & 0.43 \\
        512 & Pile10k & Bookcorpus & 0.30 \\
        512 & WikiText-103 & Pile10k & 0.47 \\
        512 & WikiText-103 & WikiText-103 & 0.44 \\
        512 & WikiText-103 & Bookcorpus & 0.24 \\
        512 & Bookcorpus & Pile10k & 0.47 \\
        512 & Bookcorpus & WikiText-103 & 0.35 \\
        512 & Bookcorpus & Bookcorpus & 0.39 \\
        4000 & Pile10k & Pile10k & 0.70 \\
        4000 & Pile10k & WikiText-103 & 0.42 \\
        4000 & Pile10k & Bookcorpus & 0.26 \\
        4000 & WikiText-103 & Pile10k & 0.61 \\
        4000 & WikiText-103 & WikiText-103 & 0.64 \\
        4000 & WikiText-103 & Bookcorpus & 0.28 \\
        4000 & Bookcorpus & Pile10k & 0.54 \\
        4000 & Bookcorpus & WikiText-103 & 0.42 \\
        4000 & Bookcorpus & Bookcorpus & 0.62 \\
        16000 & Pile10k & Pile10k & 0.72 \\
        16000 & Pile10k & WikiText-103 & 0.44 \\
        16000 & Pile10k & Bookcorpus & 0.27 \\
        16000 & WikiText-103 & Pile10k & 0.64 \\
        16000 & WikiText-103 & WikiText-103 & 0.70 \\
        16000 & WikiText-103 & Bookcorpus & 0.31 \\
        16000 & Bookcorpus & Pile10k & 0.61 \\
        16000 & Bookcorpus & WikiText-103 & 0.47 \\
        16000 & Bookcorpus & Bookcorpus & 0.66 \\
        64000 & Pile10k & Pile10k & 0.71 \\
        64000 & Pile10k & WikiText-103 & 0.45 \\
        64000 & Pile10k & Bookcorpus & 0.26 \\
        64000 & WikiText-103 & Pile10k & 0.63 \\
        64000 & WikiText-103 & WikiText-103 & 0.71 \\
        64000 & WikiText-103 & Bookcorpus & 0.28 \\
        64000 & Bookcorpus & Pile10k & 0.59 \\
        64000 & Bookcorpus & WikiText-103 & 0.46 \\
        64000 & Bookcorpus & Bookcorpus & 0.71 \\
        \hline
        \end{tabular}
    \caption{\label{table:pred_pythia_steps_hub_corr}
    For prediction hubs in Pythia training checkpoints: correlation of $k$-occurrence with frequencies of vocabulary items on all three datasets. Correlation where the columns \textbf{context} and \textbf{freq from} agree increases with the training step. The correlation saturates faster for Pile10k, probably because Pythia was trained on the Pile. 
  }
\end{table*}

\begin{table*}
    \centering
    \begin{tabular}{lllr}
        \hline
        \makecell{\textbf{Pythia} \\ \textbf{train step}} & \textbf{context} & \textbf{freq from} & \textbf{Spearman corr} \\
        \hline
        512 & euc & Pile10k & -0.03 \\
        512 & euc & WikiText-103 & 0.02 \\
        512 & euc & Bookcorpus & -0.05 \\
        512 & norm euc & Pile10k & - \\
        512 & norm euc & WikiText-103 & - \\
        512 & norm euc & Bookcorpus & - \\
        512 & softmax dot & Pile10k & - \\
        512 & softmax dot & WikiText-103 & - \\
        512 & softmax dot & Bookcorpus & - \\
        4000 & euc & Pile10k & 0.06 \\
        4000 & euc & WikiText-103 & 0.09 \\
        4000 & euc & Bookcorpus & -0.04 \\
        4000 & norm euc & Pile10k & - \\
        4000 & norm euc & WikiText-103 & - \\
        4000 & norm euc & Bookcorpus & - \\
        4000 & softmax dot & Pile10k & - \\
        4000 & softmax dot & WikiText-103 & - \\
        4000 & softmax dot & Bookcorpus & - \\
        16000 & euc & Pile10k & 0.04 \\
        16000 & euc & WikiText-103 & 0.04 \\
        16000 & euc & Bookcorpus & -0.19 \\
        16000 & norm euc & Pile10k & - \\
        16000 & norm euc & WikiText-103 & - \\
        16000 & norm euc & Bookcorpus & - \\
        16000 & softmax dot & Pile10k & -1.00 \\
        16000 & softmax dot & WikiText-103 & - \\
        16000 & softmax dot & Bookcorpus & -1.00 \\
        64000 & euc & Pile10k & -0.16 \\
        64000 & euc & WikiText-103 & -0.16 \\
        64000 & euc & Bookcorpus & -0.11 \\
        64000 & norm euc & Pile10k & -0.51 \\
        64000 & norm euc & WikiText-103 & 0.20 \\
        64000 & norm euc & Bookcorpus & -0.47 \\
        64000 & softmax dot & Pile10k & -0.34 \\
        64000 & softmax dot & WikiText-103 & -0.27 \\
        64000 & softmax dot & Bookcorpus & 0.38 \\
        \hline
        \end{tabular}
    \caption{\label{table:tt_pythia_steps_hub_corr}
    For vocabulary to vocabulary hubs in training checkpoints of Pythia: correlation of $k$-occurrence with frequencies of vocabulary items on all three datasets. There is no general correlation with frequent tokens. We write ``-''  in cases where the correlation coefficient is not well-defined. 
  }
\end{table*}

\section{Computing resources}
All experiments were run using a single NVIDIA A30 GPU. Extracting context representations took about 2 hours. Calculating probabilities for all models took about 2 days. Calculations of distance distributions (with precomputed probabilities) took about 10 hours. Calculations for comparing prediction hubs with frequent tokens about 2 hours. Calculations for vocabulary to vocabulary hubs took about 3 hours. Calculations for context to context hubs, about 1 hour. Calculations for plotting k-occurence distributions took about 8 hours. Getting hub examples took less than a minute.  All in all, about 3 days of compute time were needed to run all experiments. 

\section{Assets}
\label{app:assets}
Besides standard tools such as Python (version 3.10.14) and its main libraries, we used the following tools and datasets, in accordance with their respective terms and licenses. 
\begin{description}
    \item[Bookcorpus] \url{https://huggingface.co/datasets/bookcorpus}; license: unknown
    \item[Pile-10k] \url{https://huggingface.co/datasets/NeelNanda/pile-10k}; license: bigscience-bloom-rail-1.0
    \item[Wikitext] \url{https://huggingface.co/datasets/wikitext}; license: Creative Commons Attribution Share Alike 3.0
    \item[Llama] \url{https://huggingface.co/meta-llama/Meta-Llama-3-8B}; license: llama3
    \item[Mistral] \url{https://huggingface.co/mistralai/Mistral-7B-v0.1}; license: apache-2.0
    \item[OLMo] \url{https://huggingface.co/allenai/OLMo-7B}; license: apache-2.0 
    \item[OPT] \url{https://huggingface.co/facebook/OPT-6.7b}; license: OPT-175B license 
    \item[Pythia] \url{https://huggingface.co/EleutherAI/pythia-6.9b-deduped}; license: apache-2.0
    \item[scikit-learn] \url{https://scikit-learn.org/}; license: bsd; scikit-learn 1.5.1 py310h1128e8f\_0
    \item[PyTorch] \url{https://pytorch.org/}; license: bsd; pytorch 2.4.1 py3.10\_cuda12.1\_cudnn9.1.0\_0
    \item[Dolma] \url{https://huggingface.co/datasets/allenai/dolma}; license: ODC-By
    \item [The Pile] \url{https://pile.eleuther.ai/}; license: MIT
    \item [Huggingface Transformers] \url{https://github.com/huggingface/transformers}; license:apache-2.0; transformers 4.45.2 pyhd8ed1ab\_1
\end{description}

\section{AI use disclosure}
Microsoft Copilot has been used for minor auto completions in the code.

\end{document}